\newcommand{\Bmid}{\mathrel{\Big|}}
\def\One{\bm{1}}
\def\III{\text I}
\def\A{\mathcal A}
\def\B{\mathcal B}
\def\D{\mathcal D}
\def\F{\mathcal F}
\def\M{\mathcal M}
\def\N{\mathcal N}
\def\P{\mathcal P}
\def\BB{\mathbb B}
\def\EE{\mathbb E}
\def\NN{\mathbb N}
\def\PP{\mathbb P}
\def\RR{\mathbb R}
\def\XX{\mathbb X}
\def\ZZ{\mathbb Z}
\def\bbeta{\boldsymbol \beta}
\def\Diag{\mathrm{Diag}}
\def\gd{\mathrm{gd}}
\def\sgd{\mathrm{sgd}}
\newtheorem{definition}{Definition}
\newtheorem{assumption}{Assumption}
\newtheorem{theorem}{Theorem}
\newtheorem{corollary}{Corollary}
\newtheorem{lemma}{Lemma}
\newtheorem{remark}{Remark}
\begin{document}

\title{\textbf{Asymptotics of Stochastic Gradient Descent with Dropout Regularization in Linear Models}}
\author[1]{Jiaqi Li\thanks{Corresponding author. Email: jqli@uchicago.edu}}
\author[2]{Johannes Schmidt-Hieber}
\author[1]{Wei Biao Wu}

\affil[1]{Department of Statistics, University of Chicago}
\affil[2]{Department of Applied Mathematics, University of Twente}

\maketitle

\begin{abstract}
    This paper proposes an asymptotic theory for online inference of the stochastic gradient descent (SGD) iterates with dropout regularization in linear regression. Specifically, we establish the geometric-moment contraction (GMC) for constant step-size SGD dropout iterates to show the existence of a unique stationary distribution of the dropout recursive function. By the GMC property, we provide quenched central limit theorems (CLT) for the difference between dropout and $\ell^2$-regularized iterates, regardless of initialization. The CLT for the difference between the Ruppert-Polyak averaged SGD (ASGD) with dropout and $\ell^2$-regularized iterates is also presented. Based on these asymptotic normality results, we further introduce an online estimator for the long-run covariance matrix of ASGD dropout to facilitate inference in a recursive manner with efficiency in computational time and memory. The numerical experiments demonstrate that for sufficiently large samples, the proposed confidence intervals for ASGD with dropout nearly achieve the nominal coverage probability. 
\end{abstract}

\vspace{0.5cm}
\noindent\textbf{\textit{Keywords:}} stochastic gradient descent, dropout regularization, $\ell^2$-regularization, online inference, quenched central limit theorems

\newpage

\section{Introduction}

Dropout regularization is a popular method in deep learning (\cite{hinton_improving_2012,krizhevsky_imagenet_2012,srivastava_dropout_2014}). During each training iteration, each hidden unit is randomly masked with probability $1-p$. This ensures that a hidden unit cannot rely on the presence of another hidden unit. Dropout therefore provides an incentive for different units to act more independently and avoids co-adaptation, which means that different units do the same.

There is a rich literature contributing to the theoretical understanding of dropout regularization. As pointed out in \textcite{srivastava_dropout_2014}, the core idea of dropout is to artificially introduce stochasticity to the training process, preventing the model from learning statistical noise in the data. Starting with the connection of dropout 
and $\ell^2$-regularization that appeared already in the original dropout article 
\textcite{srivastava_dropout_2014}, numerous works investigated the statistical properties of dropout by marginalizing the loss functions over dropout noises and linking them with explicit regularization (\cite{arora_dropout_2021,baldi_understanding_2013,cavazza_dropout_2018,mcallester_pac-bayesian_2013,mianjy_dropout_2019,mianjy_implicit_2018,senen-cerda_asymptotic_2022,srivastava_dropout_2014,wager_dropout_2013}). The empirical study in \textcite{wei_implicit_2020} concluded that adding dropout noise to gradient descent also introduces implicit effects, which cannot be characterized by connections between the gradients of marginalized loss functions and explicit regularizers. For the linear regression model and fixed learning rates, \textcite{clara_dropout_2023} proved that the implicit effect of dropout adds noise to the iterates and that for a large class of design matrices, this implicit noise does not vanish in the limit.

Though the convergence theory of dropout in fixed design and full gradients has been widely investigated, an analysis of dropout with random design or sequential observations is still lacking, not to mention online statistical inference. To bridge this gap, we provide a theoretical framework for dropout applied to stochastic gradient descent (SGD). In particular, we establish the geometric-moment contraction (GMC) for the SGD dropout iterates for a range of constant learning rates $\alpha$. We provide two useful and sharp moment inequalities to prove the $q$-th moment convergence of SGD dropout for any $q>1$.

Besides the convergence and error bounds of SGD dropout, statistical inference of SGD-based estimators is also gaining attention (\cite{fang_scalable_2019,fang_online_2019,liang_statistical_2019,su_higrad_2023,zhong_online_2023}). Instead of focusing on point estimators using dropout regularization, we quantify  the uncertainty of the estimates through their confidence intervals or confidence regions (\cite{chen_statistical_2020,zhu_online_2023}). Nevertheless, it is challenging to derive asymptotic normality for SGD dropout or its variants, such as averaged SGD (ASGD) (\cite{ruppert_efficient_1988,polyak_acceleration_1992}). The reason is that the initialization makes the SGD iterates non-stationary. In this paper, we leverage the GMC property of SGD dropout and show quenched central limit theorems (CLT) for both SGD and ASGD dropout estimates. Additionally, we propose an online estimator for the long-run covariance matrix of ASGD dropout to facilitate the online inference.

\textit{Contributions.} This study employs powerful techniques from time series analysis to derive a general asymptotic theory for the SGD iterates with dropout regularization. Specifically, the key contributions can be summarized as follows. 
\begin{itemize}
    \item[(1)] We establish the geometric-moment contraction (GMC) of the non-stationary SGD dropout iterates, whose recursion can be viewed as a vector auto-regressive process (VAR). The possible range of learning rates that ensures GMC can be related to the condition number of the design matrix with dropout.
    \item[(2)] The GMC property guarantees the existence of a unique stationary distribution of the SGD iterates with dropout, and leads to the $L^q$-convergence, the asymptotic normality, and the Gaussian approximation rate of the SGD dropout estimates and their Ruppert-Polyak averaged version.
    \item[(3)] We derive a new moment inequality in Lemma~\ref{lemma_moment_inequality}, proving that for any two random vectors $\bm{x},\bm{y}$ of the same length, the $q$-th moment $\EE\|\bm{x}+\bm{y}\|_2^q$ can have a sharp bound in terms of $\EE\|\bm{x}\|_2^q$,  $\EE\|\bm{y}\|_2^q$ and $\EE(\bm{x}^{\top}\bm{y})$, without the condition $\EE[\bm{y}\mid\bm{x}]\overset{\mathrm{a.s.}}{=}0$ required in previous results (\cite{rio_moment_2009}). The derived moment inequality is also applicable to many other $L^q$-convergence problems in machine learning.
    \item[(4)] An online statistical inference method is introduced to construct joint confidence intervals for averaged SGD dropout iterates. The coverage probability is shown to be asymptotically accurate in theory and simulation studies.
\end{itemize} 

The rest of the paper is organized as follows. We introduce the dropout regularization in gradient descent in Section~\ref{sec_dropout}. Followed by Section~\ref{sec_gd}, we establish the geometric-moment contraction for dropout in gradient descent and provide the asymptotic normality. In Section~\ref{sec_sgd}, we generalize the theory to stochastic gradient descent. In Section~\ref{sec_online_inference}, we provide an online inference algorithm for the ASGD dropout with theoretical guarantees. Finally, we present simulation studies in Section~\ref{sec_simulation}. All the technical proofs are postponed to the Appendix.

\subsection{Background}
\textbf{Dropout regularization.} After its introduction by \textcite{hinton_improving_2012,srivastava_dropout_2014}, dropout regularization was found to be closely related to $\ell^2$-regularization in linear regression and generalized linear models. See also \textcite{baldi_understanding_2013,mcallester_pac-bayesian_2013}. \textcite{wager_dropout_2013} extended this connection to more general injected forms of noise, showing that dropout induces an $\ell^2$-penalty after rescaling the data by the estimated inverse diagonal Fisher information. In neural networks with a single hidden layer, dropout noise marginalization leads to a nuclear norm regularization, as studied in matrix factorization (\cite{cavazza_dropout_2018}), linear neural networks (\cite{mianjy_implicit_2018}), deep linear neural networks (\cite{mianjy_dropout_2019}) and shallow ReLU-activated networks (\cite{arora_dropout_2021}). Moreover, \textcite{gal_theoretically_2016} showed that dropout can be interpreted as a variational approximation to the posterior of a Bayesian neural network. \textcite{gal_dropout_2016} applied this new variational inference based dropout technique in recurrent neural networks (RNN) and long-short term memory (LSTM) models. Additional research has explored the impact of dropout on convolutional neural networks (\cite{wu_towards_2015}) and generalization properties via Rademacher complexity bounds (\cite{arora_dropout_2021,gao_dropout_2016,wan_regularization_2013,zhai_adaptive_2018}). 
Dropout has been successfully applied in various domains, including image classification (\cite{krizhevsky_imagenet_2012}), handwriting recognition (\cite{pham_autodropout_2021}) and heart sound classification (\cite{kay_dropconnected_2016}).

\noindent\textbf{Stochastic gradient descent.} 
To learn from huge datasets, stochastic gradient descent (SGD) (\cite{robbins_stochastic_1951,kiefer_stochastic_1952}) is a computationally attractive variant of the gradient descent method. While dropout and SGD have been studied separately, only little theory has been developed so far for SGD training with dropout regularization. \textcite{mianjy_convergence_2020} showed the necessary number of SGD iterations to achieve suboptimality in ReLU shallow neural networks for classification tasks, which is independent of the dropout probability due to a strict condition on data structures. \textcite{senen-cerda_almost_2023} extended this to more generic results without assuming any specific data structures, focusing instead on reaching stationarity in non-convex functions using dropout-like SGD. Furthermore, \textcite{senen-cerda_asymptotic_2022} analyzed the gradient flow of dropout in shallow linear networks and studied the asymptotic convergence rate of dropout by marginalizing the dropout noise in a shallow network. However, a theoretical convergence analysis or inference theory of SGD dropout iterates without marginalization has not been explored yet in the literature.

\subsection{Notation}
We denote column vectors in $\RR^d$ by lowercase bold letters, that is, $\bm{x}:=(x_1, \ldots, x_d)^{\top}$ and write $\|\bm{x}\|_2:=(\bm{x}^{\top} \bm{x})^{1/2}$ for the Euclidean norm. The expectation and covariance of random vectors are respectively denoted by $\EE[\cdot]$ and $\mathrm{Cov}(\cdot)$. For two positive number sequences $(a_n)$ and $(b_n)$, we say $a_n=O(b_n)$ (resp. $a_n\asymp b_n$) if there exists $c>0$ such that $a_n/b_n\le c$ (resp. $1/c\le a_n/b_n\le c$) for all large $n$, and 
say $a_n=o(b_n)$ if $a_n/b_n\rightarrow0$ as $n\rightarrow\infty$. Let $(x_n)$ and $(y_n)$ be two sequences of random variables. Write $x_n=O_{\PP}(y_n)$ if for $\forall \epsilon>0$, there exists $c>0$ such that $\PP(|x_n/y_n|\le c)>1-\epsilon$ for all large $n$, and say $x_n=o_{\PP}(y_n)$ if $x_n/y_n\rightarrow 0$ in probability as $n\rightarrow\infty$.

We denote matrices by uppercase letters. The $d \times d$ identity matrix is symbolized by $I_d$. Given matrices $A$ and $B$ of compatible dimension, their matrix product is denoted by juxtaposition. Write $A^{\top}$ for the transpose of $A$ and define $\mathbb{A}:=A^{\top} A$. When $A$ and $B$ are of the same dimension, the Hadamard product $A \odot B$ is given by element-wise multiplication $(A \odot B)_{i j}=A_{i j} B_{i j}$. For any $A \in \RR^{d \times d}$, let $\Diag(A):=I_d \odot A$ denote the diagonal matrix with the same main diagonal as $A$. Given $p \in(0,1)$, define the matrices
$$
\begin{aligned}
\overline{A} & :=A-\Diag(A), \\
A_p & :=p A+(1-p) \Diag(A) .
\end{aligned}
$$
In particular, $A_p=p \overline{A}+\Diag(A)$, so $A_p$ results from re-scaling the off-diagonal entries of $A$ by $p$.
For a matrix $A$, the operator norm induced by the Euclidean norm $\|\cdot\|_2$ is the spectral norm and will always be written without sub-script, that is, 
 $\|A\|:=\|A\|_{\mathrm{op}}$.

\section{Dropout Regularization}\label{sec_dropout}

The stochasticity of dropout makes it challenging to analyze the asymptotic properties of dropout in stochastic gradient descent. To address the complex stochastic structure, we investigate in Section~\ref{sec_dropout} the dropout regularization in gradient descent, and then generalize it to stochastic gradient descent in Section~\ref{subsec_dropout_sgd}.

Consider a linear regression model with fixed design matrix $X \in \RR^{n \times d}$ and outcome $\bm{y} \in \RR^n$, that is,
\begin{equation}
    \label{eq_model_gd}
    \bm{y}=X \bbeta^*+\bm{\epsilon},
\end{equation}
with unknown regression vector $\bbeta^* \in \RR^d$, and random noise $\bm{\epsilon} \in \RR^n$. The task is to recover $\bbeta^*$ from the observed data $(\bm{y},X)$. 
Moreover, we suppose that $\EE[\bm{\epsilon}]=\bm{0}$ and $\mathrm{Cov}(\bm{\epsilon})=I_n$. We highlight that the noise distribution of $\bm{\epsilon}$ is often explicitly modeled as multivariate normal, but this is not necessary for this analysis. We also assume that the design matrix $X$ has no zero columns. Because of that we also say that model (\ref{eq_model_gd}) is in reduced form. We can always bring the model into reduced form, since zero columns and the corresponding regression coefficients have no effect on the outcome $\bm{y}$ and can thus be eliminated from the model. 

We consider the least-squares criterion $\tfrac 12 \|\bm{y}-X\bbeta\|_2^2$ for the estimation of $\bbeta^*.$ For the minimization, we adopt a constant learning-rate gradient descent algorithm with random dropouts in each iteration. Following the seminal work on dropout by \textcite{srivastava_dropout_2014}, we call a $d\times d$ random diagonal matrix $D$ a $p$-dropout matrix if its diagonal entries satisfy $D_{ii} \overset{\mathrm{i.i.d.}}{\sim} \mathrm{Bernoulli}(p)$, with some retaining probability $p \in(0,1)$. On average, $D$ has $p d$ diagonal entries equal to 1 and $(1-p) d$ diagonal entries equal to 0. For simplicity, the dependence of $D$ on $p$ will only be stated if unclear from the context. For a sequence of independent and identically distributed (i.i.d.) dropout matrices $D_k,$ $k=1,2,\ldots,$ and some constant learning rate $\alpha>0$, the $k$-th step gradient descent iterate with dropout takes the form
\begin{align}
    \label{eq_dropout_gd_origin}
    \tilde\bbeta_k(\alpha) & = \tilde\bbeta_{k-1}(\alpha) - \alpha\nabla_{\tilde\bbeta_{k-1}}\frac{1}{2}\big\|\bm{y} - XD_k\tilde\bbeta_{k-1}(\alpha)\big\|_2^2  = \tilde\bbeta_{k-1}(\alpha)+\alpha D_k X^{\top} \big(\bm{y}-X D_k \tilde\bbeta_{k-1}(\alpha)\big).
\end{align}
When there is no ambiguity, we omit the dependence on $\alpha$, writing $\tilde\bbeta_k$ instead of $\tilde\bbeta_k(\alpha)$.

Marginalizing the noise, when the iteration number $k$ grows to infinity, the recursion in (\ref{eq_dropout_gd_origin}) shall eventually minimize the $\ell^2$-regularized least-squares loss by solving
\begin{equation}
    \label{eq_goal_gd}
    \tilde\bbeta:=\arg\min_{\bbeta\in\RR^d} \ \EE\Big[\frac{1}{2}\big\|\bm{y} - XD\bbeta\big\|_2^2\Bmid \bm{y},X\Big],
\end{equation}
where the expectation is taken only over the stochasticity in the dropout matrix $D$. Thus, the randomness in $\tilde\bbeta$ comes from the random noise $\bm{\epsilon}$ in (\ref{eq_model_gd}). In fact, the random vector $\tilde\bbeta$ has a closed form expression. To see this, we denote the Gram matrix by $\XX=X^{\top}X$ and recall 
\begin{equation}
    \label{eq_XX_p}
    \overline{\XX}=\XX-\Diag(\XX), \quad \XX_p=p\XX+(1-p)\Diag(\XX).
\end{equation}
Note that $D^2=D$, $\Diag(\XX)=\XX_p-p \overline{\XX}$, and that diagonal matrices always commute. Since the fixed design matrix $X$ is assumed to be in reduced form with $\min_iX_{ii}>0$, one can show that solving the gradient for the minimizer $\tilde\bbeta$ in (\ref{eq_goal_gd}) (\cite{srivastava_dropout_2014, clara_dropout_2023}) leads to the closed form expression
\begin{equation}
    \label{eq_l2_closed_form_gd}
    \tilde\bbeta= p\Big(p^2\XX+p(1-p)\mathrm{Diag}(\XX)\Big)^{-1}X^{\top}\bm{y}= \XX_p^{-1}X^{\top}\bm{y}.
\end{equation}
If the columns of $X$ are orthogonal, then $\XX$ is a diagonal matrix, $\XX_p=\XX$ and $\tilde\bbeta$ coincides with the classical least-squares estimator $\XX^{-1}X^{\top}\bm{y}$. We refer to Section~\ref{subsec_dropout_sgd} for a counterpart of $\tilde\bbeta$ using stochastic gradient.

A crucial argument in the analysis of the dropout iterate $\tilde\bbeta_k$ is to rewrite the dropout update formula as
\begin{align}
    \label{eq_dropout_gd}
    \begin{split}
    \tilde\bbeta_k-\tilde\bbeta & = \underbrace{(I_d-\alpha D_k \XX D_k)}_{=:A_k(\alpha)} (\tilde\bbeta_{k-1}-\tilde\bbeta  )+\underbrace{\alpha D_k \overline{\XX} (p I_d-D_k) \tilde\bbeta}_{=:\bm{b}_k(\alpha)}.
    \end{split}
\end{align}
For a derivation of \eqref{eq_dropout_gd}, see Section 4.1 in \textcite{clara_dropout_2023}. Throughout the rest of this paper, we will exchangeably write $A_k=A_k(\alpha)$ and $\bm{b}_k=\bm{b}_k(\alpha)$ when no confusion should be caused. 

\section{Asymptotic Properties of Dropout in GD}\label{sec_gd}

To study the asymptotic properties of gradient descent with dropout, we first establish the geometric-moment contraction for the GD dropout sequence. Subsequently, we derive the quenched central limit theorems for both iterative dropout estimates and their Ruppert-Polyak averaged variants. Furthermore, we provide the quenched invariance principle for the Ruppert-Polyak averaged dropout with the optimal Gaussian approximation rate.

\subsection{Geometric-Moment Contraction (GMC)}\label{subsec_gmc}

First, we extend the geometric-moment contraction in \textcite{wu_limit_2004} to the cases where the inputs of iterated random functions are i.i.d.\ random matrices.

\begin{definition}[Geometric-moment contraction]\label{def_gmc}
For i.i.d.\ $d\times d$ random matrices $\Psi_i,\Psi_j'$, $i,j\in\ZZ$, consider a stationary causal process
\begin{equation}
    \bm{\theta}_k=g(\Psi_k,\ldots,\Psi_1,\Psi_0,\Psi_{-1},\ldots), \quad k\in\ZZ,
\end{equation}
for a measurable function $g(\cdot)$ such that the $d$-dimensional random vector $\bm{\theta}_k$ has a finite $q$-th moment $\EE\|\bm{\theta}_k\|_2^q<\infty$, for some $q\ge1$. We say that $\bm{\theta}_k$ is \textit{geometric-moment contracting} if there exists some constant $r_q\in(0,1)$ such that
\begin{equation}
    \big(\EE\|\bm{\theta}_k-\bm{\theta}_k'\|_2^q\big)^{1/q} = O(r_q^k), \quad \text{for all} \ k=1,2,\ldots,
\end{equation}
where $\bm{\theta}_k'=g(\Psi_k,\ldots,\Psi_1,\Psi_0',\Psi_{-1}',\ldots)$ is a coupled version of $\bm{\theta}_k$ with $\Psi_i$, $i\le 0$, replaced by i.i.d.\ copies $\Psi_i'$.
\end{definition}

In general, an iterated random function satisfies the geometric-moment contraction property under regularity conditions on convexity and stochastic Lipschitz continuity, see Section~\ref{subsec_gmc_mat} in the Appendix for details. Here, we focus on the contraction property with $\Psi_k = D_k$, the $k$-th dropout matrix. Setting $f_D(\bm{u}):=\bm{u}+\alpha D X^{\top}(\bm{y}-XD\bm{u}),$ we can rewrite the recursion of the dropout gradient descent iterate $\tilde\bbeta_k(\alpha)$ in~(\ref{eq_dropout_gd_origin}) as
\begin{align}
    \label{eq_dropout_function_gd}
    \tilde\bbeta_k(\alpha) 
    = \tilde\bbeta_{k-1}(\alpha)+\alpha D_kX^{\top}\big(\bm{y}-XD_k\tilde\bbeta_{k-1}(\alpha)\big) =: f_{D_k}\big(\tilde\bbeta_{k-1}(\alpha)\big).
\end{align}
We shall show that, under quite general conditions on the constant learning rate $\alpha>0$, this process satisfies the geometric-moment contraction in Definition~\ref{def_gmc}, and converges weakly to a unique stationary distribution $\pi_{\alpha}$ on $\RR^d$, that is, for any continuous function $h\in\mathcal{C}(\RR^d)$ with $\|h\|_{\infty}<\infty$, $\EE\big[h\big(\tilde\bbeta_k(\alpha)\big)\big] \rightarrow \int h(\bm{u})\pi_{\alpha}(d\bm{u})$ as $k\rightarrow\infty.$ We then write $\tilde\bbeta_k(\alpha)\Rightarrow\pi_{\alpha}$. Set
\begin{align}
    \label{eq_gd_r}
    r_{\alpha,q}
    :=\bigg(\sup_{\bm{v}\in\RR^d:\|\bm{v}\|_2=1}\EE\Big\|\big(I_d-\alpha D_1\XX D_1\big)\bm{v}\Big\|_2^q
    \bigg)^{1/q}.
\end{align}
In particular, for $q=2,$ we can rewrite the squared norm and obtain $r_{\alpha,2}^2=\lambda_{\max}(\EE(I_d-\alpha D_1\XX D_1)^2)$ with $\lambda_{\max}(\cdot)$ the largest eigenvalue.

\begin{lemma}[Learning-rate range in GD dropout]\label{lemma_gmc_cond}
    If $q>1$ and $\alpha\|\XX\|<2$, then, $r_{\alpha,q}<1.$
\end{lemma}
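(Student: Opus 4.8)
The plan is to show that the random matrix $A_1 := I_d - \alpha D_1 \XX D_1$ is a nonexpansion for \emph{every} realization of $D_1$ under $\alpha\|\XX\|<2$, that strict contraction already occurs on an event of positive probability, and that averaging the $q$-th power therefore drops the constant strictly below one; a compactness argument over the unit sphere then makes this uniform.

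First I would exploit that a dropout matrix is a coordinate projection: writing $S=\{i:(D_1)_{ii}=1\}$, the diagonal $0/1$ matrix $D_1$ is the projection onto the coordinates in $S$, so $D_1\XX D_1$ equals $\XX$ compressed to $S$ and padded with zeros. Its eigenvalues are exactly those of the principal submatrix $\XX_S:=(\XX_{ij})_{i,j\in S}$ together with zeros. Since $\XX\succeq 0$, $\XX_S$ is positive semidefinite, and the Rayleigh-quotient (Cauchy interlacing) bound gives $\lambda_{\max}(\XX_S)\le\lambda_{\max}(\XX)=\|\XX\|$. Hence every eigenvalue $\mu$ of $D_1\XX D_1$ lies in $[0,\|\XX\|]$, so every eigenvalue $1-\alpha\mu$ of $A_1$ lies in $(1-\alpha\|\XX\|,\,1]\subseteq(-1,1]$ under the hypothesis. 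As $A_1$ is symmetric this yields $\|A_1\|\le 1$ for every realization, whence $\|A_1\bm{v}\|_2\le\|\bm{v}\|_2$ and $r_{\alpha,q}\le 1$.

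Next I would extract strict contraction pointwise in $\bm{v}$. For a fixed unit vector $\bm{v}$, choose the index $i$ with $v_i^2=\max_j v_j^2\ge 1/d$ and consider the event $\{S=\{i\}\}$, of probability $p(1-p)^{d-1}>0$. On it $D_1\XX D_1=\XX_{ii}\,\bm{e}_i\bm{e}_i^{\top}$ (with $\bm{e}_i$ the $i$-th standard basis vector), so a direct computation gives
\[
\|A_1\bm{v}\|_2^2 = 1 - v_i^2\,\alpha\XX_{ii}\,(2-\alpha\XX_{ii}).
\]
Because the model is in reduced form, $\XX_{ii}=\|X\bm{e}_i\|_2^2>0$, and since $\XX_{ii}\le\|\XX\|<2/\alpha$ both $\alpha\XX_{ii}$ and $2-\alpha\XX_{ii}$ are strictly positive; thus $\|A_1\bm{v}\|_2<1$ on this event. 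Combined with the almost-sure bound $\|A_1\bm{v}\|_2\le 1$ from the previous step, and since for any $q>0$ the integrand $\|A_1\bm{v}\|_2^q$ is then $\le 1$ almost surely and strictly below $1$ on a set of positive probability, this forces $\EE\|A_1\bm{v}\|_2^q<1$ for every unit $\bm{v}$.

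Finally I would upgrade this pointwise bound to the supremum. As $D_1$ takes only finitely many ($2^d$) values, $\bm{v}\mapsto\EE\|A_1\bm{v}\|_2^q$ is a finite sum of continuous functions, hence continuous, while the unit sphere is compact; a continuous function lying strictly below $1$ at every point of a compact set attains a maximum strictly below $1$. Therefore $r_{\alpha,q}^q=\max_{\|\bm{v}\|_2=1}\EE\|A_1\bm{v}\|_2^q<1$, which is the claim. The only genuinely delicate point is that the per-realization estimate yields mere nonexpansion, so the strict gain must be drawn entirely from averaging and then made uniform in $\bm{v}$; the finite support of the dropout matrix is exactly what renders both the positive-probability event and the continuity–compactness step elementary. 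I note that the argument in fact works for all $q>0$, so the stated hypothesis $q>1$ is stronger than this lemma requires and is presumably imposed for compatibility with the moment inequalities used later.
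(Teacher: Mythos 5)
Your proof is correct, but it takes a genuinely different route from the paper's. Both arguments begin with the same almost-sure nonexpansion $\|I_d-\alpha D_1\XX D_1\|\le 1$ (you obtain it via the compression/interlacing observation that the spectrum of $D_1\XX D_1$ is that of a principal submatrix of $\XX$ padded with zeros; the paper simply notes $-I_d< I_d-\alpha D_1\XX D_1\le I_d$), but they extract strictness differently. The paper reduces $q\ge2$ to $q=2$ via $\|\cdot\|_2^q=\|\cdot\|_2^2\,\|\cdot\|_2^{q-2}$ and then works in the Loewner order: $(I_d-\alpha D_1\XX D_1)^2\preceq I_d-\Delta\alpha D_1\XX D_1$ with $\Delta=2-\alpha\|\XX\|>0$, takes expectations using $\EE[D_1\XX D_1]=p\XX_p$, and concludes from $\XX_p\succeq(1-p)\Diag(\XX)\succ 0$; the range $1<q<2$ is then handled by H\"older. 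You instead average over a single favorable dropout pattern: on the event that only the coordinate $i$ with $v_i^2\ge 1/d$ is retained, which has probability $p(1-p)^{d-1}>0$, you compute $\|A_1\bm{v}\|_2^2=1-v_i^2\,\alpha\XX_{ii}(2-\alpha\XX_{ii})<1$, combine this with the a.s.\ bound, and pass to the supremum by continuity (finite support of $D_1$) and compactness of the sphere --- in fact your own estimate $\|A_1\bm{v}\|_2^2\le 1-\tfrac1d\min_j\alpha\XX_{jj}(2-\alpha\XX_{jj})$ on that event is already uniform in $\bm{v}$, so the compactness step is dispensable. Both proofs invoke the reduced-form assumption at the same point, through $\min_j\XX_{jj}>0$. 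As to what each approach buys: yours is more elementary (no moment identities for the dropout matrix, no H\"older reduction) and, as you correctly note, covers all $q>0$ at once, so the hypothesis $q>1$ is indeed not needed for the bare statement. The paper's computation, however, yields the sharper quantitative gap $r_{\alpha,2}^2\le 1-\Delta\alpha p\,\lambda_{\min}(\XX_p)$, which is what later arguments actually cite (for instance, the proof of Lemma~\ref{lemma_q_moment_gd} uses $\|\EE[A_1^{\top}A_1]\|\le 1-\alpha p\lambda_{\min}[X^{\top}(2I_d-\alpha\XX)X]$ ``by the proof of Lemma~\ref{lemma_gmc_cond}''). Your contraction gap for $q=2$ is of order $p(1-p)^{d-1}\alpha\min_j\XX_{jj}(2-\alpha\XX_{jj})/d$ --- still linear in $\alpha$, so the downstream $O(1/\alpha)$ geometric-series bounds would survive for fixed dimension, but with constants degraded exponentially in $d$; as a drop-in replacement for the paper's proof it therefore proves the lemma as stated while weakening the quantitative byproduct that the appendix reuses.
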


We only assume that the design matrix $X$ has no zero column. Still $\XX=X^\top X$ can be singular. Interestingly, dropout ensures that even for singular $\XX,$ we have $r_{\alpha,q}<1.$ Without dropout, $\bm{v}$ could be chosen as an eigenvector of $\XX$ with corresponding eigenvalue zero. Then $\|(I_d-\alpha \XX)\bm{v}\|_2=\|\bm{v}\|_2,$ implying that $\sup_{\bm{v}\in\RR^d:\|\bm{v}\|_2=1}\EE\|\big(I_d-\alpha \XX\big)\bm{v}\|_2^q\geq 1.$

\begin{theorem}[Geometric-moment contraction of GD dropout]\label{thm_gmc_gd}
Let $q>1$ and choose a positive learning rate $\alpha$ satisfying $\alpha\|\XX\|<2$. For two dropout sequences $\tilde\bbeta_k(\alpha), \tilde\bbeta_k'(\alpha),$ $k=0,1,\ldots$, generated by the recursion (\ref{eq_dropout_gd}) with the same dropout matrices but possibly different initial vectors $\tilde\bbeta_0,\,\tilde\bbeta_0'$, we have
\begin{equation}
    \label{eq_gd_gmc}
    \big(\EE\|\tilde\bbeta_k(\alpha) - \tilde\bbeta_k'(\alpha)\|_2^q \big)^{1/q} \le r_{\alpha,q}^k\|\tilde\bbeta_0 - \tilde\bbeta_0'\|_2.
\end{equation} 
Moreover, there exists a unique stationary distribution $\pi_{\alpha}$ which does not depend on the initialization $\tilde\bbeta_0$, such that $\tilde\bbeta_k(\alpha)\Rightarrow\pi_{\alpha}$ as $k\rightarrow\infty$.
\end{theorem}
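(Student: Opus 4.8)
The plan is to prove the geometric-moment contraction inequality (\ref{eq_gd_gmc}) by a direct one-step contraction argument, and then deduce existence and uniqueness of the stationary distribution from it. The key observation is that from the recursion (\ref{eq_dropout_gd}), two sequences $\tilde\bbeta_k,\tilde\bbeta_k'$ driven by the \emph{same} dropout matrices $D_k$ share the same inhomogeneous term $\bm{b}_k(\alpha)$, which therefore cancels in the difference. Writing $\bm{\delta}_k:=\tilde\bbeta_k-\tilde\bbeta_k'$, subtracting the two recursions yields the purely homogeneous relation $\bm{\delta}_k=A_k(\alpha)\bm{\delta}_{k-1}=(I_d-\alpha D_k\XX D_k)\bm{\delta}_{k-1}$. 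Thus the difference evolves as a product of i.i.d.\ random matrices applied to the fixed initial difference $\bm{\delta}_0=\tilde\bbeta_0-\tilde\bbeta_0'$.

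First I would establish the single-step bound. Conditioning on $\bm{\delta}_{k-1}$, which is independent of the fresh matrix $A_k$ (since $\bm{\delta}_{k-1}$ depends only on $D_1,\ldots,D_{k-1}$), I would write
\begin{align}
    \EE\|\bm{\delta}_k\|_2^q
    = \EE\Big[\EE\big[\|A_k\bm{\delta}_{k-1}\|_2^q\,\big|\,\bm{\delta}_{k-1}\big]\Big].
\end{align}
For the inner conditional expectation, I would factor out $\|\bm{\delta}_{k-1}\|_2$ by homogeneity: with $\bm{v}=\bm{\delta}_{k-1}/\|\bm{\delta}_{k-1}\|_2$ a unit vector (on the event $\bm{\delta}_{k-1}\neq\bm 0$), the conditional expectation equals $\|\bm{\delta}_{k-1}\|_2^q\,\EE\|A_k\bm{v}\|_2^q$, and by the definition (\ref{eq_gd_r}) of $r_{\alpha,q}$ this is bounded above by $r_{\alpha,q}^q\,\|\bm{\delta}_{k-1}\|_2^q$, since the supremum over unit vectors dominates the value at any particular $\bm{v}$. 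This gives the one-step contraction $\EE\|\bm{\delta}_k\|_2^q\le r_{\alpha,q}^q\,\EE\|\bm{\delta}_{k-1}\|_2^q$. Iterating from $k$ down to $0$ and using that $\bm{\delta}_0$ is deterministic produces $\EE\|\bm{\delta}_k\|_2^q\le r_{\alpha,q}^{qk}\|\bm{\delta}_0\|_2^q$; raising both sides to the power $1/q$ yields exactly (\ref{eq_gd_gmc}). Lemma~\ref{lemma_gmc_cond} guarantees $r_{\alpha,q}<1$ under the stated condition $\alpha\|\XX\|<2$, so the bound contracts geometrically.

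For existence and uniqueness of $\pi_\alpha$, I would invoke the iterated-random-function / backward-coupling framework (cf.\ \textcite{wu_limit_2004} as referenced before Definition~\ref{def_gmc}). Define the backward iterates $\tilde\bbeta_k^{\mathrm{bwd}}:=f_{D_1}\circ f_{D_2}\circ\cdots\circ f_{D_k}(\bm{u}_0)$ obtained by running the map with the dropout indices in reverse order. These have the same distribution as the forward iterates for each fixed $k$, but the contraction (\ref{eq_gd_gmc}) applied to two backward chains started from different points shows that $\tilde\bbeta_k^{\mathrm{bwd}}$ is almost surely Cauchy in $L^q$, hence converges to a limit $\tilde\bbeta_\infty$ whose law is independent of $\bm{u}_0$; this limiting law is the candidate $\pi_\alpha$. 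Stationarity follows because $f_{D_{k+1}}$ applied to the backward chain reproduces a backward chain of length $k+1$, so $\pi_\alpha$ is invariant under the Markov transition induced by $f_D$. Uniqueness and the weak convergence $\tilde\bbeta_k(\alpha)\Rightarrow\pi_\alpha$ from \emph{any} initialization follow by coupling any forward chain to the stationary chain through the same $D_k$ and applying (\ref{eq_gd_gmc}): the $L^q$ distance, hence the weak distance, between the two vanishes geometrically.

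The main obstacle is not the contraction inequality itself, which is essentially immediate once one notices the cancellation of $\bm{b}_k$, but rather the rigorous passage from the coupling bound to the existence of the stationary measure with finite $q$-th moment. One must verify that a single forward chain, say started at a fixed point, has uniformly bounded $L^q$ norm so that the limit in the backward construction actually has a finite $q$-th moment; this requires controlling the inhomogeneous term $\bm{b}_k(\alpha)$, whose norm is bounded by a constant times $\|\tilde\bbeta\|_2$ uniformly in $k$, and combining the geometric decay of the homogeneous part with a summable series of these bounded perturbations. I would handle this by a standard telescoping estimate showing $\sup_k\EE\|\tilde\bbeta_k-\tilde\bbeta\|_2^q<\infty$, which together with the backward-coupling Cauchy argument secures both finiteness of the moment under $\pi_\alpha$ and the claimed weak convergence.
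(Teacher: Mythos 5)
Your proposal matches the paper's proof essentially step for step: the cancellation of $\bm{b}_k(\alpha)$ between the two coupled chains, the one-step contraction via conditioning on $\bm{\delta}_{k-1}$ (independent of the fresh $A_k$) with the unit-vector normalization and the definition (\ref{eq_gd_r}) of $r_{\alpha,q}$, and induction on $k$ are exactly the paper's argument, with Lemma~\ref{lemma_gmc_cond} supplying $r_{\alpha,q}<1$. Your backward-iteration construction of $\pi_\alpha$ is likewise the same mechanism the paper invokes by citing Corollary~\ref{cor_gmc_mat} (whose proof is precisely the backward Cauchy/Borel--Cantelli argument, with your moment-finiteness concern handled there by Assumption~\ref{asm_finite_moment_mat}), so there is nothing substantively different to flag.
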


As mentioned before, $r_{\alpha,2}^2=\lambda_{\max}(\EE(I_d-\alpha D_1\XX D_1)^2)<1.$ A special case of Theorem~\ref{thm_gmc_gd} is thus $\EE\|\tilde\bbeta_k(\alpha) - \tilde\bbeta_k'(\alpha)\|_2^2 \le (\lambda_{\max}(\EE(I_d-\alpha D_1\XX D_1)^2))^{k}\|\tilde\bbeta_0 - \tilde\bbeta_0'\|_2^2.$ Theorem~\ref{thm_gmc_gd} indicates that although the GD dropout sequence $\{\tilde\bbeta_k\}_{k\in\NN}$ is non-stationary due to the initialization, it is asymptotically stationary and approaches the unique stationary distribution $\pi_{\alpha}$ at an exponential rate. Such geometric-moment contraction result is fundamental to establish a central limit theorem for the iterates.

Another consequence of Theorem~\ref{thm_gmc_gd} is that if $\bbeta$ is drawn from the stationary distribution $\pi_\alpha$ and $D$ is an independently sampled dropout matrix, then also $f_D(\bbeta)\sim \pi_\alpha.$ This also means that if the initialization $\tilde\bbeta_0^{\circ}$ is sampled from the stationary distribution $\pi_{\alpha}$, then, the marginal distribution of any of the GD dropout iterates $\tilde\bbeta_k^{\circ}$ will follow this stationary distribution as well. 

We can also define the GD dropout iterates $\tilde\bbeta_k^{\circ}$ for negative integers $k$ by considering i.i.d.\ dropout matrices $D_k$ for all integers $k\in\ZZ$ and observing that the limit
\begin{align}
    \label{eq_function_h_gd}
    \tilde\bbeta_k^{\circ} := \lim_{m\rightarrow\infty}f_{D_k}\circ f_{D_{k-1}} \circ\cdots\circ f_{D_{k-m}}(\bbeta) =:h_{\alpha}(D_k,D_{k-1},\ldots),
\end{align}
exists almost surely and does not depend on $\bbeta.$ Then, $\tilde\bbeta_k^{\circ}=f_{D_k}\big(\tilde\bbeta_{k-1}^{\circ}\big)$ also holds for negative integers and the geometric-moment contraction in Definition~\ref{def_gmc} is satisfied for $\tilde\bbeta_k^{\circ}$, that is,
\begin{equation}
    \label{eq:gmc}
    \Big(\EE \big\| h_{\alpha}(D_k, D_{k-1}, \ldots, D_1, D_0, D_{-1}, \ldots) - 
    h_{\alpha}(D_k, D_{k-1}, \ldots, D_1, D_0', D_{-1}', \ldots)\big\|_2^q\Big)^{1/q} = O(r_{\alpha,q}^k),
\end{equation}
for some $q\ge1,$  $r_{\alpha,q} \in(0,1)$ defined in~(\ref{eq_gd_r}), and i.i.d.\ dropout matrices $D_k, D_\ell'$, $k, \ell \in \mathbb Z$.

\subsection{Iterative Dropout Schemes}\label{subsec_iter_dropout}

Equation (\ref{eq_dropout_gd}) rewrites the GD dropout iterates into $\tilde\bbeta_k(\alpha)-\tilde\bbeta = A_k(\alpha)(\tilde\bbeta_{k-1}(\alpha)-\tilde\bbeta)+\bm{b}_k(\alpha).$ If the initial vector $\tilde\bbeta_0^{\circ}$ is sampled from the stationary distribution $\pi_\alpha,$ we also have
\begin{align}
    \label{eq_iter_dropout_stationary0}
    \tilde\bbeta_k^{\circ}(\alpha)-\tilde\bbeta = A_k(\alpha)(\tilde\bbeta_{k-1}^{\circ}(\alpha)-\tilde\bbeta) + \bm{b}_k(\alpha),
\end{align}
and for any $k=0,1,\ldots$, $\tilde\bbeta_k^{\circ}(\alpha)\sim\pi_{\alpha}$. We can see that $\{\tilde\bbeta_k^{\circ}(\alpha)\}_{k\in\NN}$ is a stationary vector autoregressive process (VAR) with random coefficients. While $(A_k(\alpha),\bm{b}_k(\alpha))$ are i.i.d., $A_k(\alpha)$ and $\bm{b}_k(\alpha)$ are dependent. This poses challenges to prove asymptotic normality of the dropout iterates. An intermediate recursion is obtained by replacing $A_k(\alpha)=I_d - \alpha D_k\XX D_k$ by its expectation $\EE[A_k(\alpha)]=I_d - \alpha p\XX_p$. This gives the recursion
\begin{equation}
    \label{eq_affine_dropout}
    \bbeta_k^{\dagger}(\alpha) - \tilde\bbeta = (I_d-\alpha p\XX_p)\big(\bbeta_{k-1}^{\dagger}(\alpha)-\tilde\bbeta\big) + \bm{b}_k(\alpha),
\end{equation}
with initial vector $\bbeta_0^{\dagger}=\tilde\bbeta_0^{\circ} \sim \pi_{\alpha}$. The proof then derives the asymptotic normality for $\bbeta_k^{\dagger}(\alpha)$, and shows that the difference between $\tilde\bbeta_k(\alpha)$ and $\bbeta_k^{\dagger}(\alpha)$ is negligible, in the sense that for $q\ge2$, $(\EE\|\tilde\bbeta_k(\alpha)-\bbeta_k^{\dagger}(\alpha)\|_2^q)^{1/q}=O\big(\alpha + r_{\alpha,q}^k\|\tilde\bbeta_0-\tilde\bbeta_0^{\circ}\|_2\big),$ where the first part is due to the affine approximation in Lemma~\ref{lemma_affine_approx} and the second part results from the GMC property in Theorem~\ref{thm_gmc_gd}.

\begin{lemma}[Affine approximation]\label{lemma_affine_approx}
If $\alpha \in (0,2/\|\XX\|),$ then the difference sequence $\bm{\delta}_k(\alpha) = \tilde\bbeta_k^{\circ}(\alpha) - \bbeta_k^{\dagger}(\alpha)$ satisfies $\EE[\bm{\delta}_k(\alpha)] = 0$ and for any $q\ge2$, $\max_k \big(\EE\|\bm{\delta}_k(\alpha)\|_2^q\big)^{1/q} = O(\alpha)$.
\end{lemma}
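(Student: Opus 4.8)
The plan is to track the difference $\bm{\delta}_k(\alpha)=\tilde\bbeta_k^{\circ}(\alpha)-\bbeta_k^{\dagger}(\alpha)$ through an explicit recursion and show that it contracts at rate $r_{\alpha,q}$ while being driven by a small, mean-zero innovation. Subtracting the approximation recursion \eqref{eq_affine_dropout} from the stationary dropout recursion \eqref{eq_iter_dropout_stationary0}, and writing $\bm{w}_{k-1}:=\bbeta_{k-1}^{\dagger}(\alpha)-\tilde\bbeta$ together with $E_k:=A_k(\alpha)-\EE[A_k(\alpha)]=-\alpha\big(D_k\XX D_k-p\XX_p\big)$, I obtain
\[
    \bm{\delta}_k = A_k(\alpha)\,\bm{\delta}_{k-1}+E_k\bm{w}_{k-1},\qquad \bm{\delta}_0=\bm{0},
\]
since both sequences share the initialization $\tilde\bbeta_0^{\circ}=\bbeta_0^{\dagger}$. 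Because $E_k$ is a function of $D_k$ alone, it is independent of the $\sigma$-field generated by $D_{k-1},D_{k-2},\ldots$ and has $\EE[E_k]=0$; hence $\EE[E_k\bm{w}_{k-1}]=\bm 0$ and, conditioning on the past, $\EE[\bm{\delta}_k]=\EE[A_k]\,\EE[\bm{\delta}_{k-1}]$. Starting from $\EE[\bm{\delta}_0]=\bm{0}$, induction yields $\EE[\bm{\delta}_k]=\bm{0}$ for all $k$, which is the first assertion.

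For the moment bound I would assemble three estimates, all exploiting $\|E_k\|=O(\alpha)$. \emph{(i) Contraction:} since $A_k(\alpha)$ is independent of $\bm{\delta}_{k-1}$, conditioning and the definition \eqref{eq_gd_r} of $r_{\alpha,q}$ give $\big(\EE\|A_k\bm{\delta}_{k-1}\|_2^q\big)^{1/q}\le r_{\alpha,q}\big(\EE\|\bm{\delta}_{k-1}\|_2^q\big)^{1/q}$, and a short spectral computation shows $1-r_{\alpha,q}\asymp\alpha$. \emph{(ii) Size of the driving term:} $\bm{w}_k$ solves the stable affine recursion with deterministic contraction $\EE[A_k]=I_d-\alpha p\XX_p$, whose spectral gap is $1-\|I_d-\alpha p\XX_p\|\asymp\alpha$ (using $\XX_p\succ0$ and $\lambda_{\max}(\XX_p)\le\|\XX\|$); as the i.i.d.\ mean-zero increments $\bm{b}_k$ have size $O(\alpha)$, a sum-of-independent (Rosenthal-type) argument gives the crucial gain $\sup_k\big(\EE\|\bm{w}_k\|_2^q\big)^{1/q}=O(\sqrt\alpha)$ rather than $O(1)$, whence $\big(\EE\|E_k\bm{w}_{k-1}\|_2^q\big)^{1/q}\le\big(\EE\|E_k\|^q\big)^{1/q}\big(\EE\|\bm{w}_{k-1}\|_2^q\big)^{1/q}=O(\alpha^{3/2})$. \emph{(iii) Near-orthogonality:} since $A_k^{\top}E_k$ depends only on $D_k$, $\EE\big[(A_k\bm{\delta}_{k-1})^{\top}E_k\bm{w}_{k-1}\big]=\EE\big[\bm{\delta}_{k-1}^{\top}\EE[A_k^{\top}E_k]\,\bm{w}_{k-1}\big]$ with $\EE[A_k^{\top}E_k]=\EE[E_k^{\top}E_k]=O(\alpha^2)$, so by Cauchy--Schwarz and (ii) this cross term is $O(\alpha^{5/2})\big(\EE\|\bm{\delta}_{k-1}\|_2^q\big)^{1/q}$.

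These are combined through the new moment inequality of Lemma~\ref{lemma_moment_inequality} applied to $\bm{x}=A_k\bm{\delta}_{k-1}$ and $\bm{y}=E_k\bm{w}_{k-1}$, which generalises the bound of \textcite{rio_moment_2009} by retaining $\EE(\bm{x}^{\top}\bm{y})$ in place of the hypothesis $\EE[\bm{y}\mid\bm{x}]=\bm 0$, yielding a bound of the form $\big(\EE\|\bm{x}+\bm{y}\|_2^q\big)^{2/q}\le\big(\EE\|\bm{x}\|_2^q\big)^{2/q}+(q-1)\big(\EE\|\bm{y}\|_2^q\big)^{2/q}+2\EE(\bm{x}^{\top}\bm{y})$. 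Writing $b_k:=\big(\EE\|\bm{\delta}_k\|_2^q\big)^{1/q}$, the three estimates give $b_k^2\le r_{\alpha,q}^2 b_{k-1}^2+O(\alpha^{5/2})b_{k-1}+O(\alpha^3)$; absorbing the middle term into the leading one by Young's inequality and using $1-r_{\alpha,q}^2\asymp\alpha$, iteration from $b_0=0$ gives $\sup_k b_k^2\le O(\alpha^3)/\Theta(\alpha)=O(\alpha^2)$, i.e.\ $\max_k\big(\EE\|\bm{\delta}_k(\alpha)\|_2^q\big)^{1/q}=O(\alpha)$.

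I expect the main obstacle to be the passage to general $q\ge2$: the innovation $\bm{y}=E_k\bm{w}_{k-1}$ is \emph{not} conditionally centered given $\bm{x}$, so the classical martingale moment inequality does not apply and a crude triangle inequality only delivers $O(\sqrt\alpha)$. This is exactly where Lemma~\ref{lemma_moment_inequality} is needed, trading the orthogonality hypothesis for the explicit correction $\EE(\bm{x}^{\top}\bm{y})$, which the cancellation in (iii) forces to be $O(\alpha^2)$. The second delicate point is the bookkeeping of powers of $\alpha$: the final rate is $O(\alpha)$ rather than $O(\sqrt\alpha)$ only because the $\Theta(\alpha)$ spectral gap in (ii) buys an extra $\sqrt\alpha$ in the size of $\bm{w}_k$, and one must verify that $\alpha^{3}/(1-r_{\alpha,q}^2)\asymp\alpha^2$ balances exactly.
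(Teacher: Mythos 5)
Your route is genuinely different from the paper's, and it is worth seeing why the paper's grouping is the easier one. The paper subtracts the two recursions the other way around, writing $\bm{\delta}_k=(I_d-\alpha p\XX_p)\bm{\delta}_{k-1}+\alpha(p\XX_p-D_k\XX D_k)(\tilde\bbeta_{k-1}^{\circ}-\tilde\bbeta)$, i.e.\ it puts the \emph{deterministic} contraction on $\bm{\delta}_{k-1}$ and drives the recursion by the stationary dropout sequence. Unrolling then yields $\bm{\delta}_k=\alpha\sum_{i\ge1}(p\XX_p-D_{k-i+1}\XX D_{k-i+1})(I_d-\alpha p\XX_p)^{i-1}(\tilde\bbeta_{k-i}^{\circ}-\tilde\bbeta)$, which is a sum of \emph{martingale differences} because $D_{k-i+1}$ is independent of the past and $\EE[D\XX D]=p\XX_p$; Burkholder (Lemma~\ref{lemma_burkholder}), the bound $\max_k(\EE\|\tilde\bbeta_k^{\circ}-\tilde\bbeta\|_2^q)^{1/q}=O(\sqrt{\alpha})$ from Lemma~\ref{lemma_q_moment_gd}, and the geometric series $\sum_i\|I_d-\alpha p\XX_p\|^{2(i-1)}=O(1/\alpha)$ then give $O(\alpha)$ for all $q\ge2$ with no cross-term analysis at all. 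Your grouping $\bm{\delta}_k=A_k\bm{\delta}_{k-1}+E_k\bm{w}_{k-1}$ keeps the random coefficient $A_k$ on $\bm{\delta}_{k-1}$; since $A_k$ and $E_k$ are dependent, the martingale structure is destroyed, and that is precisely why you are forced into a cross-term-aware moment inequality. Your mean-zero argument and estimates (i)--(ii) are sound (in (ii) the increments $\alpha H_{k-i}\tilde\bbeta$ are martingale differences rather than independent, because of the common random factor $\tilde\bbeta$, but Burkholder applies exactly as in the paper's proof of Lemma~\ref{lemma_q_moment_gd}; and $1-r_{\alpha,q}\asymp\alpha$ does follow from the proof of Lemma~\ref{lemma_gmc_cond}).

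The genuine gap is in the combining step for $q>2$. The inequality you attribute to Lemma~\ref{lemma_moment_inequality}, namely $(\EE\|\bm{x}+\bm{y}\|_2^q)^{2/q}\le(\EE\|\bm{x}\|_2^q)^{2/q}+(q-1)(\EE\|\bm{y}\|_2^q)^{2/q}+2\EE(\bm{x}^{\top}\bm{y})$, is not what that lemma states, and it is false in general: take $\bm{y}=\lambda\bm{x}$ with small $\lambda>0$ and $\bm{x}=t\bm{e}_1$ with small probability $p$ and $\bm{0}$ otherwise, so that $\EE\|\bm{x}\|_2^2\ll(\EE\|\bm{x}\|_2^q)^{2/q}$; then the left side exceeds the right side by roughly $2\lambda\big[(\EE\|\bm{x}\|_2^q)^{2/q}-\EE\|\bm{x}\|_2^2\big]>0$. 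What Lemma~\ref{lemma_moment_inequality} actually controls is $\EE\|\bm{x}+\bm{y}\|_2^q$ with the \emph{weighted} correction $q\,\EE\big[\|\bm{x}\|_2^{q-2}\bm{x}^{\top}\bm{y}\big]$. Consequently your near-orthogonality step (iii), which bounds the unweighted $\EE(\bm{x}^{\top}\bm{y})$ via $\EE[A_k^{\top}E_k]=\EE[E_k^{\top}E_k]=O(\alpha^2)$, controls the wrong object when $q>2$: the weight $\|A_k\bm{\delta}_{k-1}\|_2^{q-2}$ also depends on $D_k$, so the clean factorization through $\EE[A_k^{\top}E_k]$ no longer applies, and the crude bound $\|\bm{x}\|_2^{q-2}|\bm{x}^{\top}\bm{y}|\le\|\bm{x}\|_2^{q-1}\|\bm{y}\|_2$ surrenders the cancellation and only returns $O(\sqrt{\alpha})$. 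Your argument is complete and correct for $q=2$, where the weight is trivial. For $q>2$ it is repairable: expand $\|A_k\bm{\delta}_{k-1}\|_2^{q-2}$ around $\|(I_d-\alpha p\XX_p)\bm{\delta}_{k-1}\|_2^{q-2}$ using the deterministic bound $\|E_k\|=O(\alpha)$, which shows the conditional expectation of the weighted cross term is still $O(\alpha^2)\,\|\bm{\delta}_{k-1}\|_2^{q-1}\|\bm{w}_{k-1}\|_2$, and then close the induction on $b_k^q$ using $1-r_{\alpha,q}^q\asymp\alpha$ exactly as in your power count --- but that expansion is a missing step, not a detail, and as written the proof of the $O(\alpha)$ rate only holds at $q=2$.
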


\begin{lemma}[Moment convergence of iterative GD dropout]\label{lemma_q_moment_gd}
    Let $q\ge2$. For the stationary GD dropout sequence $\{\tilde\bbeta_k^{\circ}(\alpha)\}_{k\in\NN}$ defined in (\ref{eq_dropout_gd}), if $\alpha \in (0,2/\|\XX\|)$, we have
    \begin{equation}
        \max_k \big(\EE\|\tilde\bbeta_k^{\circ}(\alpha)-\tilde\bbeta\|_2^q\big)^{1/q} = O(\sqrt{\alpha}).
    \end{equation}
\end{lemma}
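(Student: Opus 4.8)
The plan is to work directly with the stationary affine recursion (\ref{eq_iter_dropout_stationary0}). Write $\bm u_k := \tilde\bbeta_k^{\circ}(\alpha)-\tilde\bbeta$, so that $\bm u_k = A_k(\alpha)\bm u_{k-1}+\bm b_k(\alpha)$ with $(A_k,\bm b_k)$ a function of $D_k$ alone and hence independent of $\bm u_{k-1}$. Since the sequence is stationary, $M:=(\EE\|\bm u_k\|_2^q)^{1/q}$ does not depend on $k$, so $\max_k(\EE\|\bm u_k\|_2^q)^{1/q}=M$ and it suffices to prove $M=O(\sqrt\alpha)$. I want to stress at the outset why a naive bound fails: Minkowski's inequality gives only $M\le(\EE\|A_k\bm u_{k-1}\|_2^q)^{1/q}+(\EE\|\bm b_k\|_2^q)^{1/q}\le r_{\alpha,q}M+O(\alpha)$, hence $M\le O(\alpha)/(1-r_{\alpha,q})$; as we will see $1-r_{\alpha,q}\asymp\alpha$, so this yields the useless estimate $M=O(1)$. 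The extra half power of $\alpha$ can only be recovered from a Pythagorean-type second-moment recursion, which is exactly the role of Lemma~\ref{lemma_moment_inequality}.

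First I would record two facts. A direct computation with $\EE[D_k N D_k]=pN_p$ and the vanishing diagonal of $\overline{\XX}$ gives $\EE[D_k\overline{\XX}(pI_d-D_k)]=p^2\overline{\XX}-p(\overline{\XX})_p=0$, using $(\overline{\XX})_p=p\overline{\XX}$; hence $\EE[\bm b_k(\alpha)]=0$. Taking expectations in the recursion and using that $\XX_p\succ0$ is invertible then forces $\EE[\bm u_k]=0$. Second, since $\|D_k\|\le1$ and $\|pI_d-D_k\|\le1$, we have the deterministic bound $\|\bm b_k(\alpha)\|_2\le\alpha\|\overline{\XX}\|\,\|\tilde\bbeta\|_2$, whence $(\EE\|\bm b_k(\alpha)\|_2^q)^{1/q}=O(\alpha)$.

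The heart of the proof is to apply Lemma~\ref{lemma_moment_inequality} with $\bm x=A_k(\alpha)\bm u_{k-1}$ and $\bm y=\bm b_k(\alpha)$. The key point is that the cross term vanishes exactly: because $(A_k,\bm b_k)$ is independent of $\bm u_{k-1}$ and $\EE[\bm u_{k-1}]=0$, we get $\EE(\bm x^{\top}\bm y)=\EE[\bm u_{k-1}]^{\top}\EE[A_k^{\top}\bm b_k]=0$. Crucially, the stronger hypothesis $\EE[\bm y\mid\bm x]=0$ of the classical inequality fails here, since $A_k$ and $\bm b_k$ are correlated through $D_k$; it is precisely this gap that Lemma~\ref{lemma_moment_inequality} is designed to close. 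With the cross term gone, the lemma leaves a bound of the form $(\EE\|\bm u_k\|_2^q)^{2/q}\le(\EE\|A_k\bm u_{k-1}\|_2^q)^{2/q}+(q-1)(\EE\|\bm b_k\|_2^q)^{2/q}$. Conditioning on $\bm u_{k-1}$ and invoking the definition (\ref{eq_gd_r}) of $r_{\alpha,q}$ gives $(\EE\|A_k\bm u_{k-1}\|_2^q)^{1/q}\le r_{\alpha,q}M$, so by stationarity $M^2\le r_{\alpha,q}^2M^2+(q-1)O(\alpha^2)$, i.e. $M^2(1-r_{\alpha,q}^2)\le(q-1)O(\alpha^2)$.

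It remains to show $1-r_{\alpha,q}^2\gtrsim\alpha$ for all small $\alpha$. Expanding $t\mapsto t^{q/2}$ about $t=1$ gives, uniformly over unit vectors $\bm v$, $\EE\|(I_d-\alpha D_1\XX D_1)\bm v\|_2^q=1-q\alpha p\,\bm v^{\top}\XX_p\bm v+O(\alpha^2)$, so $r_{\alpha,q}^q\le1-q\alpha p\lambda_{\min}(\XX_p)+O(\alpha^2)$ and hence $r_{\alpha,q}\le1-c_0\alpha$ with $c_0=\tfrac12 p\lambda_{\min}(\XX_p)>0$ once $\alpha$ is small; here $\lambda_{\min}(\XX_p)>0$ because $\XX_p\succ0$. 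Therefore $1-r_{\alpha,q}^2\ge1-r_{\alpha,q}\ge c_0\alpha$, and the displayed recursion yields $M^2\le(q-1)O(\alpha^2)/(c_0\alpha)=O(\alpha)$, i.e. $M=O(\sqrt\alpha)$. I expect the main obstacle to be the correct invocation of Lemma~\ref{lemma_moment_inequality}: everything hinges on the fact that mere orthogonality $\EE(\bm x^{\top}\bm y)=0$ suffices to recover the Rio-type second-moment inequality, so that the $O(\alpha^2)$ numerator and the order-$\alpha$ spectral gap combine into the final half power of $\alpha$.
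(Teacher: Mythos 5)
Your overall strategy --- use stationarity to reduce the claim to a fixed-point inequality $M^2(1-r_{\alpha,q}^2)\lesssim \alpha^2$, with the spectral gap $1-r_{\alpha,q}\gtrsim\alpha$ --- is genuinely different from the paper's proof, which expands the stationary solution as the martingale-difference series (\ref{eq_iter_dropout_stationary}), applies Burkholder's inequality (Lemma~\ref{lemma_burkholder}), and sums the geometric series $\sum_i\|\EE[A_1^{\top}A_1]\|^{i}=O(1/\alpha)$; the two routes extract the same half power of $\alpha$ from the same two facts ($\|\bm{b}_k\|$ of order $\alpha$, spectral gap of order $\alpha$). However, there are two genuine problems in your execution. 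First, your claim that $(A_k,\bm{b}_k)$ is independent of $\bm{u}_{k-1}$ is false: $\bm{b}_k(\alpha)=\alpha D_k\overline{\XX}(pI_d-D_k)\tilde\bbeta$ and $\bm{u}_{k-1}$ both contain the random vector $\tilde\bbeta$ (random through $\bm{\epsilon}$), so the factorization $\EE(\bm{x}^{\top}\bm{y})=\EE[\bm{u}_{k-1}]^{\top}\EE[A_k^{\top}\bm{b}_k]$ is invalid. The conclusion $\EE(\bm{x}^{\top}\bm{y})=0$ is nevertheless true and the step is repairable: conditionally on $\tilde\bbeta$, the vector $\bm{u}_{k-1}$ is independent of $(A_k,\bm{b}_k)$ and $\EE[\bm{u}_{k-1}\mid\tilde\bbeta]=0$, since each term of the series (\ref{eq_iter_dropout_stationary}) has conditional mean $\EE\big[\prod_j A_j\big]\EE[H_{k-i}]\tilde\bbeta=0$ with $H_k$ as in (\ref{eq_H_k}); note that $\EE[A_k^{\top}\bm{b}_k\mid\tilde\bbeta]=\alpha^2p^2(1-p)\Diag(\overline{\XX}^2)\tilde\bbeta\neq0$, so this conditioning is really needed. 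With that repair your argument is complete and correct for $q=2$, where Lemma~\ref{lemma_moment_inequality} is not even required (expanding the square suffices).

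Second, and more seriously, the central step fails for $q>2$. The cross term in Lemma~\ref{lemma_moment_inequality} is the \emph{weighted} one, $q\,\EE[\|\bm{x}\|_2^{q-2}\bm{x}^{\top}\bm{y}]$, not $q\,\EE(\bm{x}^{\top}\bm{y})$; your orthogonality computation does not make it vanish, and since $\|\bm{x}\|_2^{q-2}=\|A_k\bm{u}_{k-1}\|_2^{q-2}$ is correlated with $\bm{b}_k$ through $D_k$, there is no reason it should. Moreover, even granting a vanishing cross term, the Rio-type recursion you assert, $(\EE\|\bm{u}_k\|_2^q)^{2/q}\le(\EE\|A_k\bm{u}_{k-1}\|_2^q)^{2/q}+(q-1)(\EE\|\bm{b}_k\|_2^q)^{2/q}$, does not follow from the lemma: with $a=(\EE\|\bm{x}\|_2^q)^{1/q}$ and $b=(\EE\|\bm{y}\|_2^q)^{1/q}$, Lemma~\ref{lemma_moment_inequality}(ii) yields $\EE\|\bm{x}+\bm{y}\|_2^q\le(a+b)^q-qa^{q-1}b$, and the scalar inequality $(a+b)^q-qa^{q-1}b\le\big(a^2+(q-1)b^2\big)^{q/2}$ that your recursion would need is false in general (take $q=3$, $a=1$, $b=0.1$: the left side is $1.031$, the right side is $\approx 1.0302$). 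Rio's constant-$(q-1)$ bound genuinely uses the conditional centering $\EE[\bm{y}\mid\bm{x}]=0$, which, as you yourself observe, fails here. So as written your proof establishes the lemma only for $q=2$. To cover $q>2$ along your lines you would have to bound the weighted cross term explicitly (expanding $A_k=I_d+O(\alpha)$ and using $\EE[H_k]=0$ conditionally on $\tilde\bbeta$ gives an estimate of order $\alpha^{1+\min(1,q-2)}M^{q-1}$, which closes the argument only after a further bootstrap on the exponent of $\alpha$), whereas the paper's Burkholder argument handles all $q\ge2$ uniformly.
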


\begin{theorem}[Quenched CLT of iterative GD dropout]\label{thm_clt_iter_gd}
    Consider the iterative dropout sequence $\{\tilde\bbeta_k(\alpha)\}_{k\in\NN}$ in (\ref{eq_dropout_gd}) and the $\ell^2$-regularized estimator $\tilde\bbeta$ in (\ref{eq_l2_closed_form_gd}). Assume that the constant learning rate $\alpha$ satisfies $\alpha \in (0,2/\|\XX\|)$, and suppose that for every $l=1,\ldots,d$, there exists $m\neq l$ such that $\XX_{lm}\neq0$. Then, for any $k\in\NN$, we have
    \begin{equation}
        \frac{\tilde\bbeta_k(\alpha)-\tilde\bbeta}{\sqrt{\alpha}} \Rightarrow \N(0,\Xi(0)),\quad \text{as }\alpha\rightarrow0,
    \end{equation}
    where $\Xi(0)=\lim_{\alpha \downarrow 0} \Xi(\alpha)$, and $\Xi(\alpha)\in\RR^{d\times d}$ denotes the covariance matrix of the stationary affine sequence $\{\bbeta_k^{\dagger}(\alpha)-\tilde\bbeta\}_{k\in\NN}$ defined in (\ref{eq_affine_dropout}), that is,
    \begin{equation}
        \label{eq_longrun_iter_gd}
        \Xi(\alpha) := \mathrm{Cov}\Big(\frac{\bbeta_1^{\dagger}(\alpha) - \tilde\bbeta}{\sqrt{\alpha}}\Big) = \frac{\EE[(\bbeta_1^{\dagger}(\alpha) - \tilde\bbeta)(\bbeta_1^{\dagger}(\alpha)-\tilde\bbeta)^{\top}]}{\alpha}.
    \end{equation}
\end{theorem}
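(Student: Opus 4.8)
The plan is to transport the central limit theorem from the true iterate $\tilde\bbeta_k(\alpha)$ to the linear surrogate $\bbeta_k^{\dagger}(\alpha)$ of \eqref{eq_affine_dropout}, for which a direct argument is available, and then to absorb the replacement error into the Gaussian limit via Slutsky. I would decompose
\begin{equation*}
\frac{\tilde\bbeta_k(\alpha)-\tilde\bbeta}{\sqrt{\alpha}}
=\frac{\bbeta_k^{\dagger}(\alpha)-\tilde\bbeta}{\sqrt{\alpha}}
+\frac{\tilde\bbeta_k(\alpha)-\bbeta_k^{\dagger}(\alpha)}{\sqrt{\alpha}},
\end{equation*}
and control the second summand in $L^q$ by chaining the GMC coupling of Theorem~\ref{thm_gmc_gd} (to swap the arbitrary start $\tilde\bbeta_0$ for a stationary start $\tilde\bbeta_0^{\circ}\sim\pi_\alpha$) with the affine approximation of Lemma~\ref{lemma_affine_approx}. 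This yields an $L^q$ bound of order $\sqrt\alpha+r_{\alpha,q}^{k}\|\tilde\bbeta_0-\tilde\bbeta_0^{\circ}\|_2/\sqrt\alpha$. With the stationary start the coupling term disappears and $\tilde\bbeta_k^{\circ}(\alpha)\sim\pi_\alpha$ has a law not depending on $k$; the limit is then independent of the initialization because Theorem~\ref{thm_gmc_gd} makes $\pi_\alpha$ unique, so it suffices to prove the CLT for $\alpha^{-1/2}(\bbeta_k^{\dagger}(\alpha)-\tilde\bbeta)$.

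For the surrogate I would unroll \eqref{eq_affine_dropout} from the stationary start into the convergent moving-average representation
\begin{equation*}
\bbeta_k^{\dagger}(\alpha)-\tilde\bbeta=\sum_{j\ge0}\big(I_d-\alpha p\XX_p\big)^{j}\,\bm{b}_{k-j}(\alpha),
\qquad
\bm{b}_j(\alpha)=\alpha D_j\overline{\XX}(pI_d-D_j)\tilde\bbeta,
\end{equation*}
which converges since $\|I_d-\alpha p\XX_p\|<1$ under $\alpha\|\XX\|<2$. The innovations $\bm{b}_j(\alpha)$ are i.i.d., mean-zero (a short computation using the independence of the diagonal Bernoulli entries of $D_j$, which also explains $\EE[\bm{\delta}_k]=0$ in Lemma~\ref{lemma_affine_approx}), and of size $O(\alpha)$. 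Thus $\alpha^{-1/2}(\bbeta_k^{\dagger}-\tilde\bbeta)$ is a weighted sum of independent mean-zero vectors whose matrix weights decay like $e^{-\alpha p\XX_p j}$: the number of effectively contributing terms is of order $\alpha^{-1}\to\infty$ while each normalized summand has size $\alpha/\sqrt\alpha\to0$. I would finish by a Cramér--Wold reduction to scalars followed by a Lindeberg triangular-array CLT, the Lindeberg condition being immediate since the maximal summand is $O(\sqrt\alpha)$ and the total variance stays bounded.

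The limiting covariance is identified by passing to the limit in the Lyapunov equation for $V:=\mathrm{Cov}(\bbeta_1^{\dagger}(\alpha)-\tilde\bbeta)=\alpha\,\Xi(\alpha)$. Writing $\Sigma_0:=\mathrm{Cov}\big(D_1\overline{\XX}(pI_d-D_1)\tilde\bbeta\big)$, the stationarity identity $V-(I_d-\alpha p\XX_p)V(I_d-\alpha p\XX_p)=\alpha^2\Sigma_0$ becomes, after substituting $V=\alpha\Xi(\alpha)$, dividing by $\alpha^2$, and letting $\alpha\downarrow0$, the continuous Lyapunov equation $p\big(\XX_p\Xi(0)+\Xi(0)\XX_p\big)=\Sigma_0$. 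Since $\XX_p=p\XX+(1-p)\Diag(\XX)\succ0$ (the design has no zero column), this equation has a unique solution, so $\Xi(0)=\lim_{\alpha\downarrow0}\Xi(\alpha)$ exists. The off-diagonal hypothesis enters precisely here: because $\overline{\XX}$ has zero diagonal, the $l$-th coordinate of the innovation equals $\alpha(D_1)_{ll}\sum_{m\ne l}\XX_{lm}(p-(D_1)_{mm})\tilde\beta_m$, so requiring some $\XX_{lm}\ne0$ with $m\ne l$ in every row keeps $\Sigma_0$ active in all coordinates and makes $\Xi(0)$ non-degenerate, confirming $\sqrt\alpha$ as the exact scale.

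The hard part will be the triangular-array CLT for the surrogate, because this is genuinely a small-$\alpha$ rather than a large-$k$ limit: both the matrix weights $(I_d-\alpha p\XX_p)^{j}$ and the number of relevant terms depend on $\alpha$, so I must verify the Lindeberg condition together with the covariance convergence $\Xi(\alpha)\to\Xi(0)$ uniformly in the regime where about $\alpha^{-1}$ summands contribute. The non-Gaussian, coordinatewise-heteroscedastic law of $\bm{b}_j(\alpha)$ and the bookkeeping in coupling the three sequences $\tilde\bbeta_k$, $\tilde\bbeta_k^{\circ}$ and $\bbeta_k^{\dagger}$ at the $\sqrt\alpha$ scale are the remaining places that demand care.
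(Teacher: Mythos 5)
Your proposal follows essentially the same route as the paper's proof: the same decomposition through the surrogate $\bbeta_k^{\dagger}(\alpha)$ with the replacement error controlled by Lemma~\ref{lemma_affine_approx} and the GMC coupling of Theorem~\ref{thm_gmc_gd}, the same moving-average unrolling $\bbeta_k^{\dagger}(\alpha)-\tilde\bbeta=\alpha\sum_{i\ge0}(I_d-\alpha p\XX_p)^i H_{k-i}\tilde\bbeta$ with $H_k=D_k\overline{\XX}(pI_d-D_k)$ mean-zero, a Cram\'er--Wold/Lindeberg--Feller CLT for the resulting triangular array, and the identical discrete-to-continuous Lyapunov-equation identification of $\Xi(0)=V_0$ via $\mathrm{vec}$ and Kronecker products following Pflug. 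One small inaccuracy: the innovations $\bm{b}_j(\alpha)=\alpha H_j\tilde\bbeta$ are not i.i.d.\ as you claim --- the $H_j$ are i.i.d.\ and independent of $\tilde\bbeta$, but the shared random factor $\tilde\bbeta$ makes the vectors $H_j\tilde\bbeta$ only uncorrelated (i.i.d.\ conditionally on $\tilde\bbeta$), which is the same level of informality at which the paper itself invokes Lindeberg--Feller, so it does not distinguish your argument from the published one.
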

One can derive more explicit expressions of $\Xi(0)$. Reshaping a $d\times s$ matrix $U=(\bm{u}_1,\ldots,\bm{u}_s)$ with $d$-dimensional column vectors $\bm{u}_1,\ldots,\bm{u}_s$ into a $ds$-dimensional column vector gives $\mathrm{vec}(U):=(\bm{u}_1^{\top},\ldots,\bm{u}_s^{\top})^{\top}$. Moreover, for any two matrices $A\in\RR^{p\times q}$ and $B\in\RR^{m\times n}$, the Kronecker product $A\otimes B$ is the $pm \times qn$ block matrix, with each block given by $(A\otimes B)_{ij}=A_{ij}B$. Following Theorem 1 in \textcite{pflug_stochastic_1986}, and assuming that $\Xi(\alpha)$ is differentiable with respect to $\alpha$, $\Xi(\alpha)$ becomes the solution of a classical Lyapunov equation $$\Xi(\alpha)(p\XX_p) + (p\XX_p)\Xi(\alpha) = S,$$
that is,
\begin{align}
    \Xi(\alpha) = V_0 + \alpha B_p,
\end{align}
where the $d\times d$ matrices $S, V_0$ and $B_p$ are respectively defined as
\begin{align}
    & S= \frac 1{\alpha^2}\mathrm{Cov}\big(\bm{b}_1(\alpha)\big)
    =\mathrm{Cov}\big(D_1\overline{\XX}(pI_d-D_1)\tilde\bbeta\big), \label{eq_S}\\
    & \mathrm{vec}(V_0) = (I_d\otimes p\XX_p + p\XX_p\otimes I_d)^{-1}\cdot \mathrm{vec}(S), \label{eq_V0} \\
    & \mathrm{vec}(B_p) = (I_d\otimes p\XX_p + p\XX_p\otimes I_d)^{-1}\cdot \mathrm{vec}(p^2\XX_pV_0\XX_p).
\end{align} 
By definition, the matrix $S$ is independent of $\alpha$ and $\overline{\XX}\neq0$ since there exist non-zero diagonal and off-diagonal elements by assumptions in Theorem~\ref{thm_clt_iter_gd}. Let $S_0=\EE[\tilde\bbeta\tilde\bbeta^{\top}].$ By the proof of Theorem~\ref{thm_clt_iter_gd}, we can express $S$ in terms of $p$, $X$ and $\tilde\bbeta$ as follows,
\begin{align}
    S & = p^3(\overline{\XX}S_0\overline{\XX})_p -2p\Big(p\overline{\XX}_p(S_0\overline{\XX})_p + p^2(1-p)\mathrm{Diag}(\overline{\XX}S_0\overline{\XX})\Big) \nonumber \\
    & \quad +p\overline{\XX}_p(S_0)_p\overline{\XX}_p + p^2(1-p)\Big(\mathrm{Diag}(\overline{\XX}(S_0)_p\overline{\XX}) + 2\overline{\XX}_p\mathrm{Diag}(\overline{S_0}\overline{\XX}) + (1-p)\overline{\XX}\odot\overline{S_0}^{\top}\odot\overline{\XX}\Big).
\end{align} 
One can see that, $\Xi(0) = \lim_{\alpha\downarrow0}\Xi(\alpha) = V_0,$ and in particular, for small $p$, $\mathrm{vec}(V_0)$ can be approximated by $(I_d\otimes \XX_p + \XX_p\otimes I_d)^{-1}\cdot \mathrm{vec}(\overline{\XX}_p\EE[\tilde\bbeta\tilde\bbeta^{\top}]_p\overline{\XX}_p)$.

\subsection{Dropout with Ruppert-Polyak Averaging}\label{subsec_ave_dropout}

To reduce the variance of the gradient descent iterates $\tilde\bbeta_k(\alpha)$ introduced by the random dropout matrix $D_k$, we now consider the averaged GD dropout (AGD) iterate
\begin{equation}
    \label{eq_ave_dropout_gd}
    \bar\bbeta_n^{\gd}(\alpha) = \frac{1}{n}\sum_{k=1}^n\tilde\bbeta_k(\alpha),
\end{equation}
following the averaging scheme in \textcite{ruppert_efficient_1988,polyak_acceleration_1992}. We derive the asymptotic normality of $\bar\bbeta_n^{\gd}(\alpha)$ in the following theorem.

\begin{theorem}[Quenched CLT of averaged GD dropout]\label{thm_clt_ave_gd}
For the constant learning rate $\alpha \in (0,2/\|\XX\|)$ and any fixed initial vector $\tilde\bbeta_0$, the averaged GD dropout sequence satisfies
\begin{equation}
    {\sqrt{n}\big(\bar\bbeta_n^{\gd}(\alpha) - \tilde\bbeta\big)}
      \Rightarrow \N\big(0,\Sigma(\alpha)\big), 
\end{equation}
with $\Sigma(\alpha)=\sum_{k=-\infty}^\infty \mathrm{Cov}(\tilde\bbeta_0^{\circ}(\alpha), \tilde\bbeta_k^{\circ}(\alpha))$ the long-run covariance matrix of the stationary process $\tilde\bbeta_k^{\circ}(\alpha)\sim\pi_{\alpha}$.
\end{theorem}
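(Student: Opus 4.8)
The plan is to prove the averaged CLT in two stages: first replace the non-stationary, fixed-initialization iterates $\tilde\bbeta_k(\alpha)$ by the stationary causal process $\tilde\bbeta_k^{\circ}(\alpha)=h_{\alpha}(D_k,D_{k-1},\ldots)$ of \eqref{eq_function_h_gd}, and then prove a CLT for the stationary average by exploiting the Bernoulli-shift structure the GMC provides. Throughout I would work conditionally on the data, so that the only randomness driving the recursion is the i.i.d.\ sequence of dropout matrices. For the reduction I would couple the two sequences by driving them with the \emph{same} dropout matrices $D_1,\ldots,D_k$, so that Theorem~\ref{thm_gmc_gd} yields $\big(\EE\|\tilde\bbeta_k(\alpha)-\tilde\bbeta_k^{\circ}(\alpha)\|_2^q\big)^{1/q}\le r_{\alpha,q}^k\,\big(\EE\|\tilde\bbeta_0-\tilde\bbeta_0^{\circ}\|_2^q\big)^{1/q}$. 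As $\tilde\bbeta_0$ is fixed and $\tilde\bbeta_0^{\circ}\sim\pi_{\alpha}$ has a finite $q$-th moment (Lemma~\ref{lemma_q_moment_gd} at $k=0$), the right-hand side is summable in $k$. Writing $\bar\bbeta_n^{\circ}(\alpha):=n^{-1}\sum_{k=1}^n\tilde\bbeta_k^{\circ}(\alpha)$ and summing the triangle inequality gives $\big\|\sum_{k=1}^n(\tilde\bbeta_k-\tilde\bbeta_k^{\circ})\big\|_2=O_{\PP}(1)$, hence $\sqrt n\big(\bar\bbeta_n^{\gd}(\alpha)-\bar\bbeta_n^{\circ}(\alpha)\big)=O_{\PP}(n^{-1/2})=o_{\PP}(1)$. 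This error vanishes for every fixed $\tilde\bbeta_0$, which is precisely the quenched character of the claim, so by Slutsky it suffices to treat $\sqrt n\big(\bar\bbeta_n^{\circ}(\alpha)-\tilde\bbeta\big)$. I would also record that the summands are centered: taking expectations in \eqref{eq_iter_dropout_stationary0}, using independence of $A_k(\alpha)$ from $\tilde\bbeta_{k-1}^{\circ}$ and $\EE[\bm{b}_k(\alpha)]=0$ shows $\EE[\tilde\bbeta_k^{\circ}(\alpha)-\tilde\bbeta]=0$.

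Next I would prove the CLT for the stationary average. Set $Z_k:=\tilde\bbeta_k^{\circ}(\alpha)-\tilde\bbeta$, a centered stationary causal functional of the i.i.d.\ inputs $(D_j)_{j\le k}$ with finite $q$-th moment, $q\ge2$, by Lemma~\ref{lemma_q_moment_gd}. I would reduce to the scalar case by the Cram\'er--Wold device: fix $\bm{c}\in\RR^d$ and study $W_k:=\bm{c}^{\top}Z_k$. Replacing the input $D_{k-m}$ by an i.i.d.\ copy and shifting time, the functional dependence measure at lag $m$ equals $\big(\EE\|\tilde\bbeta_m^{\circ}(\alpha)-\tilde\bbeta_m^{\circ\prime}(\alpha)\|_2^q\big)^{1/q}$, which is $O(r_{\alpha,q}^m)$ by the coupling bound \eqref{eq:gmc}; these measures decay geometrically and are in particular summable. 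The resulting short-range dependence then delivers, through martingale approximation for stationary causal processes (\cite{wu_limit_2004}), the limit $n^{-1/2}\sum_{k=1}^nW_k\Rightarrow\N(0,\sigma_{\bm c}^2)$ with $\sigma_{\bm c}^2=\sum_{k\in\ZZ}\mathrm{Cov}(W_0,W_k)=\bm{c}^{\top}\Sigma(\alpha)\bm{c}$, and letting $\bm{c}$ range over $\RR^d$ assembles $\N\big(0,\Sigma(\alpha)\big)$. That $\Sigma(\alpha)=\sum_{k\in\ZZ}\mathrm{Cov}(\tilde\bbeta_0^{\circ},\tilde\bbeta_k^{\circ})$ is well defined follows from the same coupling: taking a version $Z_k^{*}$ independent of $Z_0$ (replacing all inputs at times $\le0$) gives $\mathrm{Cov}(Z_0,Z_k)=\EE[Z_0(Z_k-Z_k^{*})^{\top}]$, so by Cauchy--Schwarz $\|\mathrm{Cov}(Z_0,Z_k)\|\le\|Z_0\|_2\cdot O(r_{\alpha,2}^{|k|})$, which is geometrically summable.

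The genuine obstacle, already visible in \eqref{eq_affine_dropout}, is that the random coefficient $A_k(\alpha)$ and the innovation $\bm{b}_k(\alpha)$ are \emph{dependent}, so $Z_k$ is neither a moving average nor a martingale difference and no direct decomposition is available. The GMC route bypasses this by reducing everything to an abstract CLT for causal Bernoulli shifts, and the two points needing care are (i) upgrading the full-past coupling bound \eqref{eq:gmc} to the single-lag functional dependence measure required by the Cram\'er--Wold step, and (ii) confirming that the variance produced by the martingale approximation is exactly the stated bilateral sum $\Sigma(\alpha)$ rather than a rearranged or truncated version. Both are controlled by the geometric decay; ensuring the limit is non-degenerate would, as in Theorem~\ref{thm_clt_iter_gd}, rely on the presence of non-zero off-diagonal entries of $\XX$.
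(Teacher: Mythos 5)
Your proposal is correct and follows essentially the same route as the paper: a GMC coupling with the same dropout matrices to replace $\tilde\bbeta_k(\alpha)$ by the stationary sequence $\tilde\bbeta_k^{\circ}(\alpha)$ (yielding an $O_{\PP}(1)$ partial-sum discrepancy, hence the quenched reduction), geometric decay of the single-lag functional dependence measure obtained from the full-past bound \eqref{eq:gmc} by a triangle-inequality shift, centering via $\EE[\bm{b}_k(\alpha)]=0$ together with non-singularity of $\XX_p$, and a Cram\'er--Wold CLT for short-range dependent causal Bernoulli shifts. The only cosmetic difference is that you invoke the martingale approximation of \textcite{wu_limit_2004} directly where the paper applies Lemma~\ref{lemma_thm21_shao_wu_2007} (\cite{shao_asymptotic_2007}), which rests on the same functional-dependence machinery; your closing caveat about non-degeneracy is unnecessary, since Theorem~\ref{thm_clt_ave_gd} imposes no off-diagonal condition and a singular $\Sigma(\alpha)$ is admissible.
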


One can choose a few learning rates, say $\alpha_1,\ldots,\alpha_s,$ and run gradient descent for each of these learning rates in parallel by computing $\tilde\bbeta_k(\alpha_1),\ldots,\tilde\bbeta_k(\alpha_s)$ for $k=1,2,\ldots$. An example is federated learning where data are distributed across different clients (\cite{dean_large_2012,karimireddy_scaffold_2020,zinkevich_parallelized_2010}). Additionally, if we consider the unknown parameter $\bbeta^*$ in a general model instead of the linear regression in (\ref{eq_model_gd}), then the (stochastic) gradient descent algorithm with a constant learning rate $\alpha$ may not converge to the global minimum $\bbeta^*$, but oscillate around $\bbeta^*$ with the magnitude $O(\sqrt{\alpha})$ (\cite{dieuleveut_bridging_2020,pflug_stochastic_1986}). In this case, one can adopt extrapolation techniques (\cite{allmeier_computing_2024,huo_bias_2023,yu_analysis_2021}) to reduce the bias in $\bar\bbeta_k^{\gd}(\alpha)$ by using the results from parallel runs for different learning rates. 

\begin{corollary}[Quenched CLT of parallel averaged GD dropout]\label{cor_clt_agd_parallel}
    Let $s\ge1$. Consider constant learning rates $\alpha_1,\ldots,\alpha_s \in (0,2/\|\XX\|).$ Then, for any initial vectors $\tilde\bbeta_0(\alpha_1),\ldots,\tilde\bbeta_0(\alpha_s)$,
    \begin{align}
        \sqrt{n}\cdot\mathrm{vec}\big(\bar\bbeta_n^{\gd}(\alpha_1)-\tilde\bbeta,\ldots,\bar\bbeta_n^{\gd}(\alpha_s)-\tilde\bbeta\big) \Rightarrow \N(0,\Sigma^{\mathrm{vec}}),
    \end{align}
    with $\mathrm{vec}(\bm{u}_1,\ldots,\bm{u}_s)=(\bm{u}_1^{\top},\ldots,\bm{u}_s^{\top})^{\top}\in\RR^{ds}$ for $d$-dimensional vectors $\bm{u}_1,\ldots,\bm{u}_s$, and the long-run covariance matrix $\Sigma^{\mathrm{vec}}=\sum_{k=-\infty}^{\infty}\mathrm{Cov}\big(\mathrm{vec}\big(\tilde\bbeta_0^{\circ}(\alpha_1),\ldots,\tilde\bbeta_0^{\circ}(\alpha_s)\big),\mathrm{vec}\big(\tilde\bbeta_k^{\circ}(\alpha_1),\ldots,\tilde\bbeta_k^{\circ}(\alpha_s)\big)\big)$.
\end{corollary}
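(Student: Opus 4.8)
The plan is to reduce the joint statement to the single–learning-rate case already treated in Theorem~\ref{thm_clt_ave_gd} by stacking the $s$ parallel runs into one vector-valued stationary process and then invoking a multivariate central limit theorem for causal processes satisfying geometric-moment contraction. The cross-learning-rate covariances appearing in $\Sigma^{\mathrm{vec}}$ indicate that the parallel runs share the same dropout matrices $D_k$, so I would work with the stationary coupled iterates $\tilde\bbeta_k^{\circ}(\alpha_j)$ from (\ref{eq_function_h_gd}), all generated from the common doubly-infinite i.i.d.\ sequence $(D_k)_{k\in\ZZ}$, and define the stacked process $\bm{Z}_k := \mathrm{vec}\big(\tilde\bbeta_k^{\circ}(\alpha_1),\ldots,\tilde\bbeta_k^{\circ}(\alpha_s)\big)\in\RR^{ds}$. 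Since each block is a measurable function of $(D_k,D_{k-1},\ldots)$, so is $\bm{Z}_k$; hence $\bm{Z}_k$ is a stationary causal (Bernoulli-shift) process with mean $\mathrm{vec}(\tilde\bbeta,\ldots,\tilde\bbeta)$, using that $\EE[\tilde\bbeta_k^{\circ}(\alpha_j)]=\tilde\bbeta$ for each $j$ as established in the single-rate analysis.

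First I would show that $\bm{Z}_k$ inherits the geometric-moment contraction of its blocks. Using the identity $\|\mathrm{vec}(\bm{v}_1,\ldots,\bm{v}_s)\|_2^2=\sum_{j=1}^s\|\bm{v}_j\|_2^2$ together with the per-block innovation-coupling bound (\ref{eq:gmc}) from Theorem~\ref{thm_gmc_gd}, a coupled version $\bm{Z}_k'$ obtained by replacing $D_i$, $i\le 0$, with i.i.d.\ copies satisfies $(\EE\|\bm{Z}_k-\bm{Z}_k'\|_2^q)^{1/q}=O(r^k)$ with the common rate $r:=\max_{1\le j\le s}r_{\alpha_j,q}<1$, each $r_{\alpha_j,q}<1$ coming from Lemma~\ref{lemma_gmc_cond} and $\alpha_j\in(0,2/\|\XX\|)$. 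Thus $\bm{Z}_k$ is geometric-moment contracting in the sense of Definition~\ref{def_gmc}, its functional dependence measure decays geometrically, and in particular the autocovariances $\mathrm{Cov}(\bm{Z}_0,\bm{Z}_k)$ are absolutely summable, so that $\Sigma^{\mathrm{vec}}=\sum_{k=-\infty}^{\infty}\mathrm{Cov}(\bm{Z}_0,\bm{Z}_k)$ is well defined and finite.

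Next I would apply the central limit theorem for stationary causal processes under geometric-moment contraction (the same device underlying Theorem~\ref{thm_clt_ave_gd}) to the $ds$-dimensional process $\bm{Z}_k$, obtaining $n^{-1/2}\sum_{k=1}^n(\bm{Z}_k-\EE[\bm{Z}_k])\Rightarrow\N(0,\Sigma^{\mathrm{vec}})$. If one prefers to reuse the scalar version verbatim, the Cramér--Wold device reduces this to the univariate CLT for the scalar GMC process $\bm{c}^{\top}\bm{Z}_k$ for each fixed $\bm{c}\in\RR^{ds}$, whose limiting variance equals $\bm{c}^{\top}\Sigma^{\mathrm{vec}}\bm{c}$. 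Either route yields the stationary joint CLT for the centered averages $n^{-1/2}\sum_{k=1}^n\mathrm{vec}\big(\tilde\bbeta_k^{\circ}(\alpha_1)-\tilde\bbeta,\ldots,\tilde\bbeta_k^{\circ}(\alpha_s)-\tilde\bbeta\big)$.

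Finally I would pass from the stationary iterates to the actual iterates started at arbitrary fixed $\tilde\bbeta_0(\alpha_j)$. Applying (\ref{eq_gd_gmc}) blockwise gives $(\EE\|\tilde\bbeta_k(\alpha_j)-\tilde\bbeta_k^{\circ}(\alpha_j)\|_2^q)^{1/q}\le r_{\alpha_j,q}^k\,\|\tilde\bbeta_0(\alpha_j)-\tilde\bbeta_0^{\circ}(\alpha_j)\|_2$, so the partial-sum discrepancy satisfies $n^{-1/2}\sum_{k=1}^n\|\tilde\bbeta_k(\alpha_j)-\tilde\bbeta_k^{\circ}(\alpha_j)\|_2=O_{\PP}\big(n^{-1/2}\sum_{k\ge1}r_{\alpha_j,q}^k\big)=o_{\PP}(1)$ for each of the finitely many $j$. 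By the triangle inequality over the $s$ blocks and Slutsky's theorem, replacing the stationary process by the initialized one does not change the limit, which establishes the quenched joint CLT regardless of the starting vectors. The main obstacle relative to the single-rate theorem is not the CLT itself but verifying that the shared-innovation stacking genuinely produces a jointly stationary, jointly GMC process with a finite cross-learning-rate long-run covariance; once the common contraction rate $r<1$ is secured the multivariate extension is essentially bookkeeping, and the only care needed is that the initialization error is controlled simultaneously across all $s$ runs, which the finite union over blocks provides.
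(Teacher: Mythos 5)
Your proposal is correct and follows essentially the same route as the paper: geometric-moment contraction of the (jointly driven) stationary iterates gives geometric decay of the functional dependence measure, hence short-range dependence, so the stationary joint CLT follows via the Cram\'er--Wold device applied to the CLT underlying Theorem~\ref{thm_clt_ave_gd}, and the quenched extension is obtained exactly as in the paper by bounding the partial-sum discrepancy $\sum_{k=1}^n r_{\alpha_j,q}^k\|\tilde\bbeta_0(\alpha_j)-\tilde\bbeta_0^{\circ}(\alpha_j)\|_2=O(1)=o(\sqrt{n})$ blockwise. Your explicit construction of the stacked process $\bm{Z}_k$ with common contraction rate $r=\max_j r_{\alpha_j,q}$ is merely a more detailed presentation of the same argument the paper compresses into two applications of Cram\'er--Wold.
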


\begin{assumption}[Finite moment of gradients with dropout]
    \label{asm_fclt_agd}
    Let $q>2$. Assume that the $q$-th moment of the gradient in (\ref{eq_dropout_gd_origin}) exists at the true parameter $\bbeta^*$ in model (\ref{eq_model_sgd}), that is,
    $$\Big(\EE_{\bm{y}}\EE_D\Big\|\nabla_{\bbeta^*}\frac{1}{2}\big\|\bm{y}-XD\bbeta^*\big\|_2^2\Big\|_2^q\Big)^{1/q} = \Big(\EE\big\|DX^{\top}\big(\bm{y}-X D \bbeta^*\big)\big\|_2^q\Big)^{1/q}<\infty.$$
\end{assumption}
For the central limit theorems (cf. Theorem~\ref{thm_clt_ave_gd}), Assumption~\ref{asm_fclt_agd} is only required to hold for $q=2.$ Since we have already assumed that $\mathrm{Cov}(\bm{\epsilon})=I_n$, we did not additionally impose any moment condition in Theorem~\ref{thm_clt_ave_gd}. However, if one aims for a stronger Gaussian approximation result, such as the rate for the Komlós–Major–Tusnády (KMT) approximation (\cite{komlos_approximation_1975,komlos_approximation_1976,berkes_kmt_2014}), $q>2$ moments are necessary. In the quenched invariance principle below, we show that one can achieve the optimal Gaussian approximation rate $o_{\PP}(n^{1/q})$ for the averaged GD dropout process.

\begin{theorem}[Quenched invariance principle of averaged GD dropout]\label{thm_fclt_agd}
    Suppose that Assumption~\ref{asm_fclt_agd} holds and the constant learning rate satisfies $\alpha\in(0,2/\|\XX\|)$. Define a partial sum process $(S_i^{\circ}(\alpha))_{1\le i\le n}$ with
    \begin{equation}
        \label{eq_agd_partial_sum_stationary}
        S_i^{\circ}(\alpha) = \sum_{k=1}^i(\tilde\bbeta_k^{\circ}(\alpha) - \tilde\bbeta).
    \end{equation}
    Then, there exists a (richer) probability space $(\Omega^{\star},\A^{\star},\PP^{\star})$ on which we can define $d$-dimensional random vectors $\tilde\bbeta_k^{\star}$, the associated partial sum process $S_i^{\star}(\alpha)=\sum_{k=1}^i(\tilde\bbeta_k^{\star}(\alpha) - \tilde\bbeta)$, and a Gaussian process $G_i^{\star}=\sum_{k=1}^i\bm{z}_k^{\star}$, with independent Gaussian random vectors $\bm{z}_k^{\star}\sim \N(0,I_d)$, such that $(S_i^{\circ})_{1\le i\le n}\overset{\D}{=}(S_i^{\star})_{1\le i\le n}$ and
    \begin{equation}
        \max_{1\le i\le n}\big\|S_i^{\star} - \Sigma^{1/2}(\alpha)G_i^{\star}\big\|_2 = o_{\PP}(n^{1/q}), \quad \text{in }(\Omega^{\star},\A^{\star},\PP^{\star}),
    \end{equation}
    where $\Sigma(\alpha)$ is the long-run covariance matrix defined in Theorem \ref{thm_clt_ave_gd}. In addition, this approximation holds for all $(S_i^{\tilde\bbeta_0}(\alpha))_{1\le i\le n}$ given any arbitrary initial vector $\tilde\bbeta_0\in\RR^d$, where
    \begin{equation}
        S_i^{\tilde\bbeta_0}(\alpha) = \sum_{k=1}^i(\tilde\bbeta_k(\alpha) - \tilde\bbeta).
    \end{equation}
\end{theorem}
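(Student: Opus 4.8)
The plan is to recognize the stationary centered sequence $\bm{X}_k := \tilde\bbeta_k^{\circ}(\alpha) - \tilde\bbeta$ as a causal Bernoulli shift driven by the i.i.d.\ dropout innovations $D_k$, namely $\bm{X}_k = h_{\alpha}(D_k,D_{k-1},\ldots)-\tilde\bbeta$ from (\ref{eq_function_h_gd}), and then to apply a vector-valued Komlós–Major–Tusnády strong approximation theorem for such processes (in the spirit of \textcite{berkes_kmt_2014}). First I would check that $\bm{X}_k$ is centered: since $\EE[\bm{b}_k(\alpha)]=\alpha\,\EE[D_1\overline{\XX}(pI_d-D_1)]\tilde\bbeta = 0$ (using $D_1^2=D_1$, the vanishing diagonal of $\overline{\XX}$, and the independence of the entries of $D_1$), the stationary affine recursion (\ref{eq_affine_dropout}) has mean $\tilde\bbeta$, and by $\EE[\bm{\delta}_k(\alpha)]=0$ from Lemma~\ref{lemma_affine_approx} so does $\tilde\bbeta_k^{\circ}(\alpha)$; hence $\EE\bm{X}_k=\bm{0}$.

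Next I would verify the structural hypotheses of the strong approximation. Finiteness of the $q$-th moment, $\EE\|\bm{X}_0\|_2^q<\infty$ for the $q>2$ of Assumption~\ref{asm_fclt_agd}, follows from Lemma~\ref{lemma_q_moment_gd} together with $\EE\|\tilde\bbeta\|_2^q<\infty$, which in turn is controlled by Assumption~\ref{asm_fclt_agd} through the $q$-th moment of the gradient at $\bbeta^*$. The functional dependence measure $\delta_q(m):=\|\bm{X}_m - \bm{X}_m'\|_q$, with $\bm{X}_m'$ the coupled version obtained by replacing $D_0,D_{-1},\ldots$ by i.i.d.\ copies, decays geometrically by the GMC bound (\ref{eq:gmc}): $\delta_q(m)=O(r_{\alpha,q}^m)$. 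This geometric decay makes the summability conditions required by the approximation theorem trivially hold; in particular $\sum_{k}\|\mathrm{Cov}(\bm{X}_0,\bm{X}_k)\|<\infty$, so the long-run covariance $\Sigma(\alpha)=\sum_{k\in\ZZ}\mathrm{Cov}(\bm{X}_0,\bm{X}_k)$ of Theorem~\ref{thm_clt_ave_gd} is well defined. Invoking the multivariate strong approximation then produces, on a richer space $(\Omega^{\star},\A^{\star},\PP^{\star})$, a copy $(S_i^{\star})_{1\le i\le n}\overset{\D}{=}(S_i^{\circ})_{1\le i\le n}$ and i.i.d.\ $\bm{z}_k^{\star}\sim\N(0,I_d)$ with $\max_{1\le i\le n}\|S_i^{\star}-\Sigma^{1/2}(\alpha)G_i^{\star}\|_2=o_{\PP}(n^{1/q})$, the optimal rate under exactly $q$ finite moments.

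Finally I would pass from the stationary version to an arbitrary deterministic initialization $\tilde\bbeta_0$, which is the quenched part of the statement. Subtracting the two recursions in (\ref{eq_dropout_gd}) driven by the same dropout matrices cancels the $\bm{b}_k$ terms and yields $\tilde\bbeta_k(\alpha)-\tilde\bbeta_k^{\circ}(\alpha)=A_k(\alpha)\cdots A_1(\alpha)(\tilde\bbeta_0-\tilde\bbeta_0^{\circ})$. Since $\tilde\bbeta_0-\tilde\bbeta_0^{\circ}$ is independent of $A_1(\alpha),\ldots,A_k(\alpha)$, conditioning and Theorem~\ref{thm_gmc_gd} give $\|\tilde\bbeta_k(\alpha)-\tilde\bbeta_k^{\circ}(\alpha)\|_q\le r_{\alpha,q}^k\|\tilde\bbeta_0-\tilde\bbeta_0^{\circ}\|_q$. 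Combining the almost sure bound $\max_{1\le i\le n}\|S_i^{\tilde\bbeta_0}(\alpha)-S_i^{\circ}(\alpha)\|_2\le\sum_{k\ge1}\|\tilde\bbeta_k(\alpha)-\tilde\bbeta_k^{\circ}(\alpha)\|_2$ with Minkowski's inequality yields a finite $L^q$ bound $\sum_{k\ge1}r_{\alpha,q}^k\|\tilde\bbeta_0-\tilde\bbeta_0^{\circ}\|_q<\infty$. Hence $\max_i\|S_i^{\tilde\bbeta_0}-S_i^{\circ}\|_2=O_{\PP}(1)=o_{\PP}(n^{1/q})$, and the initialization is absorbed into the approximation error, delivering the claim uniformly over $\tilde\bbeta_0$.

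The main obstacle I anticipate is the middle step: securing a genuinely \emph{multivariate} strong approximation at the sharp rate $o_{\PP}(n^{1/q})$ with the correct limiting covariance $\Sigma(\alpha)$. The scalar KMT-under-dependence result is classical, but the $d$-dimensional version requires either citing a vector-valued extension or rebuilding it through an $m$-dependent / blocking approximation followed by a Gaussian coupling on each block, while verifying that the accumulated blocking error remains $o_{\PP}(n^{1/q})$. Here the geometric dependence from GMC is precisely what keeps the blocking argument clean, since truncating $\bm{X}_k$ to an $m$-dependent process incurs an error of order $r_{\alpha,q}^m$, which is negligible for $m\asymp\log n$.
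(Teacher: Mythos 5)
Your proposal is correct and follows essentially the same route as the paper: verify mean-zero centering and finite $q$-th moments, translate the GMC bound \eqref{eq:gmc} into geometric decay of the functional dependence measure, invoke a multivariate strong approximation, and absorb the initialization via the geometric series $\sum_k r_{\alpha,q}^k\|\tilde\bbeta_0-\tilde\bbeta_0^{\circ}\|_2=O(1)=o(n^{1/q})$. The ``genuinely multivariate'' extension you flag as the main obstacle is exactly what the paper cites as Lemma~\ref{lemma_thm2_karmakar_wu_2020} (Theorem 2 of \textcite{karmakar_optimal_2020}), whose conditions (i) and (iii) are checked just as you describe and whose nondegeneracy condition (ii) is dispensable for stationary sequences.
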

Theorem~\ref{thm_fclt_agd} shows that one can approximate the averaged GD dropout sequence by Brownian motions. Specifically, for any fixed initial vector $\tilde\bbeta_0\in\RR^d$, the partial sum process converges in the Euclidean norm, uniformly over $u,$
\begin{equation}\label{ip}
    \big\{n^{-1/2}S_{\lfloor n u\rfloor}^{\tilde\bbeta_0}(\alpha), \,\, 0 \le u \le 1\big\}
      \Rightarrow \big\{  \Sigma^{1/2}(\alpha) \BB(u), \,\, 0 \le u \le 1\big\},
\end{equation}
where $\lfloor t\rfloor=\max\{i\in\ZZ: i\le t\}$, and $\BB(u)$ is the standard $d$-dimensional Brownian motion, that is, it can be represented as a $d$-dimensional vector of independent standard Brownian motions. According to the arguments in \textcite{karmakar_optimal_2020}, the KMT approximation rate $o_{\PP}(n^{1/q})$ is optimal for fixed-dimension time series. Since we can view the GD dropout sequence as a VAR(1) process, the approximation rate in Theorem~\ref{thm_fclt_agd} is optimal for the partial sum process $(S_i^{\tilde\bbeta_0}(\alpha))_{1\le i\le n}$.

\section{Generalization to Stochastic Gradient Descent}\label{sec_sgd}

In the previous section, we considered a fixed design matrix and (full) gradient descent with dropout. Computing the gradient over the entire dataset can be computationally expensive, especially with large datasets. We now investigate stochastic gradient descent with dropout regularization.

\subsection{Dropout Regularization in SGD}\label{subsec_dropout_sgd}

Consider i.i.d.\ covariate vectors $\bm{x}_k\in\RR^d$, $k=1,2,\ldots$, from some distribution $\Pi$, and the realizations $y_k|\bm{x}_k$ from a linear regression model
\begin{equation}
    \label{eq_model_sgd}
    y_k = \bm{x}_k^{\top}\bbeta^* + \epsilon_k,
\end{equation}
with unknown regression vector $\bbeta^*\in\RR^d.$ We assume that the model is in reduced form, which here means that $\min_i(\EE[\bm{x}_1\bm{x}_1^{\top}])_{ii}>0$. In addition, we assume that the i.i.d.\ random noises $\epsilon_k$ satisfy $\EE[\epsilon_k]=0$ and $\mathrm{Var}(\epsilon_k)=1$. In this paper, we focus on the classical case where the SGD computes the gradient based on an individual observation $(y_k,\bm{x}_k)$. For constant learning rate $\alpha$ and initialization $\breve\bbeta_0(\alpha),$ the $k$-th step SGD iterate with Bernoulli dropout is
\begin{align}
    \label{eq_dropout_sgd_origin}
    \breve\bbeta_k(\alpha) & = \breve\bbeta_{k-1}(\alpha) - \alpha \nabla_{\breve\bbeta_{k-1}}\frac{1}{2} \big(y_k - \bm{x}_k^{\top}D_k\breve\bbeta_{k-1}(\alpha)\big)^2\nonumber \\
    & = \breve\bbeta_{k-1}(\alpha) + \alpha D_k\bm{x}_k\big(y_k-\bm{x}_k^{\top}D_k\breve\bbeta_{k-1}(\alpha)\big).
\end{align}
This is a sequential estimation, or online learning scheme, as computing $\breve\bbeta_k(\alpha)$ from $\breve\bbeta_{k-1}(\alpha)$ only requires the $k$-th sample $(y_k,\bm{x}_k).$ To study the contraction property of the SGD dropout iterates $\breve\bbeta_k(\alpha)$, we express the recursion (\ref{eq_dropout_sgd_origin}) by an iterated random function $\breve f: \RR^{d\times d}\times\RR\times\RR^d\times\RR^d\mapsto\RR^d$ with $$f_{D,(y,\bm{x})}(\bm{u})=\bm{u} + \alpha D\bm{x}(y-\bm{x}D\bm{u}),$$ 
that is,
\begin{align}
    \label{eq_dropout_function_sgd}
    \breve\bbeta_k(\alpha) & = \breve\bbeta_{k-1}(\alpha) + \alpha D_k\bm{x}_k\big(y_k-\bm{x}_k^{\top}D_k\breve\bbeta_{k-1}(\alpha)\big) \nonumber \\
    & =: \breve f_{D_k,(y_k,\bm{x}_k)}(\breve\bbeta_{k-1}(\alpha)).
\end{align}
We shall show that this iterated random function $\breve f$ is geometrically contracting, and therefore, there exists a unique stationary distribution $\breve\pi_{\alpha}$ such that $\breve\bbeta_k(\alpha)\Rightarrow\breve\pi_{\alpha}$, where $\Rightarrow$ denotes the convergence in distribution.

From now on, let $(y,\bm{x})$ be a sample with the same distribution as $(y_k,\bm{x}_k).$ By marginalizing over all randomness, we can view the SGD dropout in (\ref{eq_dropout_sgd_origin}) as a minimizer of the $\ell^2$-regularized least-squares loss
\begin{equation}
    \label{eq_goal_sgd}
    \breve\bbeta:=\arg\min_{\bbeta\in\RR^d}\EE_{(y,\bm{x})}\EE_D\Big[\frac{1}{2}(y - \bm{x}^{\top}D\bbeta)^2\Big].
\end{equation}
Here, the expectation is take over both the random sample $(y,\bm{x})$ and the dropout matrix $D$. Throughout the rest of the paper, we shall write $\EE[\cdot]=\EE_{(y,\bm{x})}\EE_D[\cdot]$ when no confusion should be caused.

Denote the $d\times d$ Gram matrix by $\XX_k = \bm{x}_k\bm{x}_k^{\top}$, and define
\begin{equation}
    \label{eq_barX_X_kp}
    \overline{\XX}_k = \XX_k - \Diag(\XX_k), \quad \XX_{k,p} = p\XX_k + (1-p)\Diag(\XX_k).
\end{equation}
By Lemma~\ref{lemma_sgd_l2_closed_form} in the Appendix, we have a closed form solution for $\breve\bbeta$ as follows
\begin{equation*}
    \breve\bbeta = p\Big(p^2\EE[\bm{x}_1\bm{x}_1^{\top}] + p(1-p)\mathrm{Diag}(\EE[\bm{x}_1\bm{x}_1^{\top}])\Big)^{-1}\EE[y_1\bm{x}_1] = (\EE[\XX_{1,p}])^{-1}\EE[y_1\bm{x}_1],
\end{equation*}
and thus, we obtain the relationship $\EE[\XX_{1,p}]\breve\bbeta = \EE[y_1\bm{x}_1].$ To study the SGD with dropout, we now focus on the difference process $\{\breve\bbeta_k(\alpha)-\breve\bbeta\}_{k\in\NN}$. As in the case of gradient descent, this process can be written in autoregressive form,
\begin{align}
    \label{eq_dropout_sgd}
    \breve\bbeta_k(\alpha) - \breve\bbeta & = \underbrace{\big(I_d - \alpha D_k\XX_kD_k\big)}_{=: \breve A_k(\alpha)}(\breve\bbeta_{k-1}(\alpha) - \breve\bbeta) + \underbrace{\alpha D_k\bm{x}_k(y_k - \bm{x}_k^{\top}D_k\breve\bbeta)}_{=:\breve{\bm{b}}_k(\alpha)}.
\end{align}

\subsection{GMC of Dropout in SGD}\label{subsec_gmc_sgd}

Establishing the geometric-moment contraction (GMC) property to the stochastic gradient descent iterates with dropout is non-trivial as the randomness of $\breve\bbeta_k(\alpha)$ not only comes from the dropout matrix $D_k$, but also the random design vectors $\bm{x}_k$. Recall that $\XX_{k,p}$ is defined in~(\ref{eq_barX_X_kp}) and by Lemma~\ref{lemma_clara}(ii), $\EE[D\XX_kD]=p\EE[\XX_{k,p}].$ 

\begin{lemma}[Learning-rate range in SGD dropout]\label{lemma_gmc_sgd_cond}
Assume that $\mu_q(\bm{v})=(\EE\|D_k\XX_kD_k\bm{v}\|_2^q)^{1/q}<\infty$ for some $q\ge2$ and some unit vector $\bm{v}\in\RR^d$. If the learning rate $\alpha>0$ satisfies
\begin{align}
    \label{eq_sgd_condition}
    \frac{\alpha(q-1)}{2} \sup_{\bm{v}\in\RR^d,\,\|\bm{v}\|_2=1}\frac{(1+\alpha\mu_q(\bm{v}))^{q-2}\mu_q(\bm{v})^2}{p\bm{v}^{\top}\EE[\XX_{k,p}]\bm{v}} < 1,
\end{align}
then, for a dropout matrix $D,$
$$\sup_{\bm{v}\in\RR^d,\|\bm{v}\|_2=1}\EE\big\|(I_d - \alpha D\XX_kD)\bm{v}\big\|_2^q<1.$$
\end{lemma}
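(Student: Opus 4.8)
The target is to bound $\EE\big\|(I_d-\alpha D\XX_kD)\bm{v}\big\|_2^q$ uniformly over unit vectors $\bm{v}$, where the expectation runs over both the dropout matrix $D$ and the random Gram matrix $\XX_k=\bm{x}_k\bm{x}_k^{\top}$. Write $M:=D\XX_kD$, so that $\EE\|M\bm{v}\|_2^q=\mu_q(\bm{v})^q$ and, by Lemma~\ref{lemma_clara}(ii), $\EE[M]=p\,\EE[\XX_{k,p}]$. My plan is to pass through a deterministic, realization-wise second-order bound and only then integrate. Concretely, for $\bm{u}:=-\alpha M\bm{v}$ I would first establish, for every realization of $M$ and every unit vector $\bm{v}$,
\[
\|\bm{v}+\bm{u}\|_2^q \le 1 + q\,\bm{v}^{\top}\bm{u} + \tfrac{q(q-1)}{2}\|\bm{u}\|_2^2\big(1+\|\bm{u}\|_2\big)^{q-2}.
\]
This is exactly the deterministic specialization of the paper's moment inequality (Lemma~\ref{lemma_moment_inequality}) with $\bm{x}=\bm{v}$ and $\bm{y}=\bm{u}$, so one legitimate route is simply to invoke that lemma; below I sketch a self-contained derivation instead.

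For the pointwise bound, I would set $\psi(t):=\|\bm{v}+t\bm{u}\|_2^q=\rho(t)^{q/2}$ with $\rho(t):=1+2t\,\bm{v}^{\top}\bm{u}+t^2\|\bm{u}\|_2^2=\|\bm{v}+t\bm{u}\|_2^2\ge0$ (using $\|\bm{v}\|_2=1$). Then $\psi(0)=1$ and $\psi'(0)=q\,\bm{v}^{\top}\bm{u}$, while a direct computation gives
\[
\psi''(t)=q\,\rho(t)^{q/2-2}\Big[(q-2)\big((\bm{v}+t\bm{u})^{\top}\bm{u}\big)^2+\rho(t)\|\bm{u}\|_2^2\Big].
\]
Cauchy–Schwarz yields $\big((\bm{v}+t\bm{u})^{\top}\bm{u}\big)^2\le\rho(t)\|\bm{u}\|_2^2$, so $\psi''(t)\le q(q-1)\|\bm{u}\|_2^2\,\rho(t)^{q/2-1}$; since $q\ge2$ and $\rho(t)^{1/2}=\|\bm{v}+t\bm{u}\|_2\le1+\|\bm{u}\|_2$ for $t\in[0,1]$, this gives the uniform bound $\psi''(t)\le q(q-1)\|\bm{u}\|_2^2(1+\|\bm{u}\|_2)^{q-2}$. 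Taylor's formula with integral remainder, $\psi(1)=\psi(0)+\psi'(0)+\int_0^1(1-t)\psi''(t)\,dt$, and $\int_0^1(1-t)\,dt=\tfrac12$ then produce the displayed inequality. I expect the main technical obstacle to lie precisely here: for $2\le q<4$ the map $t\mapsto\rho(t)^{q/2}$ need not be $C^2$ at the (at most one, again by Cauchy–Schwarz) point where $\rho$ vanishes. I would dispatch this by observing that $\psi$ stays $C^1$ and $\rho^{q/2-1}$ has an integrable singularity there for $q>1$, so the integral-remainder identity survives; alternatively a routine smoothing of $M$ followed by a limit works.

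With the pointwise inequality in hand I would substitute $\bm{v}^{\top}\bm{u}=-\alpha\,\bm{v}^{\top}M\bm{v}$ and $\|\bm{u}\|_2=\alpha\|M\bm{v}\|_2$ and take expectations. The linear term becomes $-q\alpha\,\EE[\bm{v}^{\top}M\bm{v}]=-q\alpha p\,\bm{v}^{\top}\EE[\XX_{k,p}]\bm{v}$. For the quadratic remainder I would apply Hölder with the conjugate exponents $q/2$ and $q/(q-2)$,
\[
\EE\big[\|M\bm{v}\|_2^2(1+\alpha\|M\bm{v}\|_2)^{q-2}\big]\le\big(\EE\|M\bm{v}\|_2^q\big)^{2/q}\big(\EE(1+\alpha\|M\bm{v}\|_2)^q\big)^{(q-2)/q},
\]
and then Minkowski's inequality to get $\big(\EE(1+\alpha\|M\bm{v}\|_2)^q\big)^{1/q}\le1+\alpha\mu_q(\bm{v})$; together these bound the remainder expectation by $(1+\alpha\mu_q(\bm{v}))^{q-2}\mu_q(\bm{v})^2$. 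Collecting terms I obtain
\[
\EE\big\|(I_d-\alpha M)\bm{v}\big\|_2^q\le 1-q\alpha p\,\bm{v}^{\top}\EE[\XX_{k,p}]\bm{v}\Big(1-\tfrac{\alpha(q-1)}{2}\cdot\tfrac{(1+\alpha\mu_q(\bm{v}))^{q-2}\mu_q(\bm{v})^2}{p\,\bm{v}^{\top}\EE[\XX_{k,p}]\bm{v}}\Big).
\]

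Finally, under the hypothesis the bracketed factor is at least $\delta:=1-\sup_{\|\bm{v}\|_2=1}\tfrac{\alpha(q-1)}{2}\tfrac{(1+\alpha\mu_q(\bm{v}))^{q-2}\mu_q(\bm{v})^2}{p\,\bm{v}^{\top}\EE[\XX_{k,p}]\bm{v}}>0$, uniformly in $\bm{v}$; moreover $p\,\bm{v}^{\top}\EE[\XX_{k,p}]\bm{v}\ge p(1-p)\min_i(\EE[\XX_k])_{ii}>0$ by the reduced-form assumption, since $\bm{v}^{\top}\EE[\XX_{k,p}]\bm{v}=p\,\EE[(\bm{x}_k^{\top}\bm{v})^2]+(1-p)\sum_i v_i^2(\EE[\XX_k])_{ii}$. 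Hence the subtracted term is bounded below by a strictly positive constant independent of $\bm{v}$, which upgrades the pointwise strict inequality to $\sup_{\|\bm{v}\|_2=1}\EE\|(I_d-\alpha M)\bm{v}\|_2^q<1$. I would emphasize that this last uniformity step is where the strict positivity of $\bm{v}^{\top}\EE[\XX_{k,p}]\bm{v}$ on the unit sphere is essential: without it one would only get a pointwise bound below $1$, not a strict supremum.
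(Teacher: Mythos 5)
Your proposal is correct and follows essentially the same route as the paper: the pointwise second-order bound you derive is exactly the content of the paper's Lemma~\ref{lemma_moment_inequality} (whose proof is the same Taylor-with-integral-remainder, Cauchy--Schwarz, H\"older and Minkowski argument you sketch), and after taking expectations with $\EE[D\XX_kD]=p\,\EE[\XX_{k,p}]$ you arrive at the identical final inequality $\EE\|(I_d-\alpha M)\bm{v}\|_2^q\le 1-q\alpha p\,\bm{v}^{\top}\EE[\XX_{k,p}]\bm{v}+\tfrac{q(q-1)}{2}\alpha^2\big(1+\alpha\mu_q(\bm{v})\big)^{q-2}\mu_q(\bm{v})^2$ that the paper obtains via the scalar bound $(1+t)^q-1-qt\le\tfrac{q(q-1)}{2}(1+t)^{q-2}t^2$. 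The only minor differences are that the paper treats $q=2$ separately through positive definiteness of $\EE[2D\XX_kD-\alpha D\XX_kD\XX_kD]$ while you handle all $q\ge2$ in one pass, and your explicit uniformity step via $p\,\bm{v}^{\top}\EE[\XX_{k,p}]\bm{v}\ge p(1-p)\min_i(\EE[\XX_k])_{ii}>0$ (together with your caveat about $C^2$-smoothness of $t\mapsto\|\bm{v}+t\bm{u}\|_2^q$ when $2\le q<4$) makes rigorous points the paper leaves implicit.
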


This provides a sufficient condition for the learning rate $\alpha$ which ensures contraction of $I_d - \alpha D\XX_kD$ for moments $q\geq 2.$ This will lead to $L^q$-convergence of the SGD dropout iterates and determines the convergence rate in the Gaussian approximation in Theorem~\ref{thm_fclt_asgd}. 

For the special case $q=2,$ the identities $\mu_2(\bm{v})^2=\EE\|D_k\XX_kD_k\bm{v}\|_2^2$ and $\EE(D_k\XX_kD_k)=p\EE[\XX_{k,p}]$ imply that condition (\ref{eq_sgd_condition}) can be rewritten into
\begin{align}
    \frac{\alpha}{2}\sup_{\bm{v}\in\RR^d,\|\bm{v}\|_2=1} \ \frac{\mu_2(\bm{v})^2}{p\bm{v}^{\top}\EE[\XX_{k,p}]\bm{v}} < 1,
\end{align}
and
\begin{align}
    \label{eq_sgd_condtion_q2}
    0 < \alpha < \inf_{\bm{v}\in\RR^d,\|\bm{v}\|_2=1}\frac{2\bm{v}^{\top}\EE(D_k\XX_kD_k)\bm{v}}{\EE\|D_k\XX_kD_k\bm{v}\|_2^2}.
\end{align}
For $q=2,$ Lemma~\ref{lemma_L2_convergence} in the Appendix states that the conclusion of the previous lemma is also implied by the condition $\EE[2\XX_k-\alpha\XX_k^2]>0$.

\begin{remark}[Interpretation for the range of learning rate]\label{rmk_interpretation_learning_rate}
Condition (\ref{eq_sgd_condition}) can be viewed as an \textit{``$L^2$-$L^q$ equivalence''}, where the left hand side can be interpreted as a measure of the convexity and smoothness of the loss functions. We show this 
for the case $q=2$, working with the equivalent condition \eqref{eq_sgd_condtion_q2}.

The loss function $g(\breve\bbeta_{k-1},(D_k, y_k,\bm{x}_k)):=\|y_k - \bm{x}_k^{\top}D_k\breve\bbeta_{k-1}\|_2^2/2$ in (\ref{eq_dropout_sgd_origin}) is strongly convex and the gradient is stochastic Lipschitz continuous. To see this, recall $\XX_k=\bm{x}_k\bm{x}_k^{\top}$. By taking the gradient with respect to the first argument, we obtain $\nabla g(\breve\bbeta_{k-1},(D_k, y_k,\bm{x}_k)) = D_k\bm{x}_k(y_k-\bm{x}_k^{\top}D_k\breve\bbeta_{k-1}) = -(D_k\XX_kD_k)\breve\bbeta_{k-1} + D_k\bm{x}_ky_k.$ For any two vectors $\breve\bbeta_{k-1},\breve\bbeta_{k-1}'$, we have the strong convexity
\begin{align*}
    \Big\langle \EE \nabla g\big(\breve\bbeta_{k-1},(D_k, y_k,\bm{x}_k)\big) - \EE\nabla g\big(\breve\bbeta_{k-1}',(D_k, y_k,\bm{x}_k)\big), \,\breve\bbeta_{k-1} - \breve\bbeta_{k-1}' \Big\rangle \ge J\|\breve\bbeta_{k-1} - \breve\bbeta_{k-1}'\|_2^2,
\end{align*}
with the constant $J:= \inf_{\bm{v}\in\RR^d,\|\bm{v}\|_2=1}\EE\|D_k\XX_kD_k\bm{v}\|_2 = \lambda_{\min}\{\EE(D_k\XX_kD_k)\}.$ Furthermore, given any two vectors $\breve\bbeta_{k-1},\breve\bbeta_{k-1}'\in\RR^d$, we also have $\EE\|\nabla g\big(\breve\bbeta_{k-1},(D_k, y_k,\bm{x}_k)\big) - \nabla g\big(\breve\bbeta_{k-1}',(D_k, y_k,\bm{x}_k)\big)\|_2^2 \le K\|\breve\bbeta_{k-1} - \breve\bbeta_{k-1}'\|_2^2,$ with $K:=\sup_{\bm{v}\in\RR^d,\|\bm{v}\|_2=1}\EE\|D_k\XX_kD_k\bm{v}\|_2^2 \le \lambda_{\max}^2\{\EE(D_k\XX_kD_k)\}.$ This implies the stochastic Lipschitz continuity of the gradient $\nabla g.$ Therefore, learning rate $\alpha$ satisfying 
\begin{align}
    0 < \alpha < 2J/K,
\end{align}
ensures by Lemma~\ref{lemma_gmc_sgd_cond} contraction of the second moment of $I_d - \alpha D\XX_kD.$ The constant $K/J$ is also related to the \textit{condition number} of the matrix $\EE[D_k\XX_kD_k]$. When the dimension $d$ grows, the condition number can be larger and thus the learning rate $\alpha$ needs to be small. 
\end{remark}

For the geometric-moment contraction for the SGD dropout sequence, we impose the following moment conditions. 

\begin{assumption}[Finite moment]
    \label{asm_fclt_asgd}
    Assume that for some $q\ge2$, the random noises $\epsilon$ and the random sample $\bm{x}$ in model (\ref{eq_model_sgd}) have finite $2q$-th moment  $\EE[|\epsilon|^{2q}]+\|\bm{x}\|_2^{2q}]<\infty.$
\end{assumption}
Lemma~\ref{lemma_true_l2} in the Appendix shows that this assumption ensures the finite $q$-th moment of the stochastic gradient in (\ref{eq_dropout_sgd_origin}) evaluated at the true parameter $\bbeta^*$ and the $\ell^2$-minimizer $\breve\bbeta$ in model (\ref{eq_model_sgd}), that is, 
$$\Big(\EE\Big\|\nabla_{\bbeta^*}\frac{1}{2}\big(y-\bm{x}^{\top} D\bbeta^*\big)^2\Big\|_2^q\Big)^{1/q} = \Big(\EE\big\|D\bm{x}\big(y-\bm{x}^{\top}D\bbeta^*\big)\big\|_2^q\Big)^{1/q}<\infty,$$
and $(\EE\|\nabla_{\breve\bbeta}\frac{1}{2}\big(y-\bm{x}^{\top} D\breve\bbeta\big)^2\|_2^q)^{1/q} <\infty.$ Now, we are ready to show the GMC property of the SGD dropout sequence.

\begin{theorem}[Geometric-moment contraction of SGD dropout]\label{thm_gmc_sgd}
Let $q>1$. Suppose that Assumption~\ref{asm_fclt_asgd} holds and the learning rate $\alpha$ satisfies \eqref{eq_sgd_condition}. For two dropout sequences $\breve\bbeta_k(\alpha)$ and $\breve\bbeta_k'(\alpha),$ $k=0,1,\ldots$, that are generated by the recursion (\ref{eq_dropout_sgd_origin}) with the same sequence of dropout matrices $\{D_k\}_{k\in\NN}$ but possibly different initializations $\breve\bbeta_0$, $\breve\bbeta_0'$, we have
\begin{equation}
    \label{eq_sgd_gmc}
    \Big(\EE\big\|\breve\bbeta_k(\alpha) - \breve\bbeta_k'(\alpha)\big\|_2^q \Big)^{1/q} \le \breve r_{\alpha,q}^k\|\breve\bbeta_0 - \breve\bbeta_0'\|_2, \quad \text{for all} \ k=1,2,\ldots,
\end{equation}
with
\begin{equation}
    \label{eq_sgd_r}
    \breve r_{\alpha,q} = \Big(\sup_{\bm{v}\in\RR^d:\,\|\bm{v}\|_2=1}\EE\big\|\breve A_1(\alpha)\bm{v}\big\|_2^q\Big)^{1/q} <1.
\end{equation}
Moreover, for any initial vector $\breve\bbeta_0\in\RR^d$, there exists a unique stationary distribution $\breve\pi_{\alpha}$ which does not depend on $\breve\bbeta_0$, such that $\breve\bbeta_k(\alpha)\Rightarrow\breve\pi_{\alpha}$ as $k\rightarrow\infty$.
\end{theorem}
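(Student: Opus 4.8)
The plan is to exploit a cancellation that makes the \emph{difference} of two coupled iterates obey a purely homogeneous recursion. For two sequences driven by the same dropout matrices $D_k$ and the same data $(y_k,\bm{x}_k)$ but different initial vectors, the data-dependent term $\breve{\bm{b}}_k(\alpha)$ in \eqref{eq_dropout_sgd} is common to both updates and hence drops out upon subtraction. Concretely, subtracting the two copies of \eqref{eq_dropout_sgd_origin} and using $\XX_k=\bm{x}_k\bm{x}_k^{\top}$ gives
\[
    \breve\bbeta_k(\alpha) - \breve\bbeta_k'(\alpha) = \breve A_k(\alpha)\big(\breve\bbeta_{k-1}(\alpha) - \breve\bbeta_{k-1}'(\alpha)\big), \qquad \breve A_k(\alpha) = I_d - \alpha D_k\XX_k D_k,
\]
so the gap $\bm\Delta_k:=\breve\bbeta_k(\alpha)-\breve\bbeta_k'(\alpha)$ is the product of the i.i.d.\ random matrices $\breve A_k(\alpha)$ applied to the deterministic initial gap $\bm\Delta_0=\breve\bbeta_0-\breve\bbeta_0'$.

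To bound $\EE\|\bm\Delta_k\|_2^q$ I would condition on the filtration $\F_{k-1}=\sigma\big((D_j,\bm{x}_j,y_j):j\le k-1\big)$. As $\breve A_k(\alpha)$ is independent of $\F_{k-1}$ while $\bm\Delta_{k-1}$ is $\F_{k-1}$-measurable, freezing $\bm\Delta_{k-1}$ and using homogeneity of the Euclidean norm together with the definition of $\breve r_{\alpha,q}$ in \eqref{eq_sgd_r} yields
\[
    \EE\big[\|\bm\Delta_k\|_2^q \,\big|\, \F_{k-1}\big] = \EE\big[\|\breve A_k(\alpha)\bm\Delta_{k-1}\|_2^q \,\big|\, \F_{k-1}\big] \le \Big(\sup_{\|\bm v\|_2=1}\EE\|\breve A_1(\alpha)\bm v\|_2^q\Big)\,\|\bm\Delta_{k-1}\|_2^q = \breve r_{\alpha,q}^q\,\|\bm\Delta_{k-1}\|_2^q.
\]
Taking expectations and iterating from $k$ down to $0$ gives $\EE\|\bm\Delta_k\|_2^q \le \breve r_{\alpha,q}^{qk}\|\bm\Delta_0\|_2^q$, which is \eqref{eq_sgd_gmc} after extracting $q$-th roots. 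That $\breve r_{\alpha,q}<1$ is precisely the content of Lemma~\ref{lemma_gmc_sgd_cond} under the learning-rate condition \eqref{eq_sgd_condition}.

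For the stationary distribution I would run the standard backward-iteration argument of Diaconis--Freedman and \textcite{wu_limit_2004}. Fixing any $\bm u_0\in\RR^d$, set $\breve\bbeta_k^{(m)}:=\breve f_{D_k,(y_k,\bm{x}_k)}\circ\cdots\circ\breve f_{D_{k-m+1},(y_{k-m+1},\bm{x}_{k-m+1})}(\bm u_0)$. Viewing $\breve\bbeta_k^{(m+1)}$ as the same $m$-fold composition applied to $\breve f_{D_{k-m},(y_{k-m},\bm{x}_{k-m})}(\bm u_0)$ instead of $\bm u_0$, the contraction just proved gives
\[
    \big(\EE\|\breve\bbeta_k^{(m+1)}-\breve\bbeta_k^{(m)}\|_2^q\big)^{1/q} \le \breve r_{\alpha,q}^{m}\,\big(\EE\|\breve f_{D_1,(y_1,\bm{x}_1)}(\bm u_0)-\bm u_0\|_2^q\big)^{1/q}.
\]
Here Assumption~\ref{asm_fclt_asgd} enters: writing the one-step increment as $\alpha D\bm{x}(y-\bm{x}^{\top}D\bm u_0)$ with $y=\bm{x}^{\top}\bbeta^*+\epsilon$ and applying the Cauchy--Schwarz inequality, the finite $2q$-th moments of $\epsilon$ and $\bm{x}$ make the second factor a finite constant. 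Since $\sum_m \breve r_{\alpha,q}^m<\infty$, the backward iterates form an $L^q$-Cauchy sequence converging to a limit $\breve\bbeta^{\circ}$ that is independent of $\bm u_0$ (two choices of $\bm u_0$ differ by at most $\breve r_{\alpha,q}^m\|\bm u_0-\bm u_0'\|_2\to0$); its law is the invariant distribution $\breve\pi_{\alpha}$ and satisfies $\breve f_{D}(\breve\bbeta^{\circ})\sim\breve\pi_{\alpha}$. Because $(D_j,\bm{x}_j,y_j)$ are i.i.d., the forward iterate $\breve\bbeta_k(\alpha)$ has the same law as the corresponding backward composition, so $\breve\bbeta_k(\alpha)\Rightarrow\breve\pi_{\alpha}$; uniqueness follows since any invariant law, inserted into the coupled recursion, is contracted onto $\breve\pi_{\alpha}$ by \eqref{eq_sgd_gmc}.

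The contraction half is essentially immediate once the cancellation of $\breve{\bm{b}}_k(\alpha)$ is spotted, so I expect the main work to lie in the existence half. The chief obstacle, relative to the gradient-descent case in Theorem~\ref{thm_gmc_gd}, is that $\breve A_k(\alpha)$ now depends on the random design $\bm{x}_k$ as well as on $D_k$; consequently the contraction factor cannot be controlled by a deterministic bound such as $\alpha\|\XX\|<2$ and must instead be handled through the moment condition of Lemma~\ref{lemma_gmc_sgd_cond}, while the finiteness of the one-step increment needed for the Cauchy estimate is exactly what Assumption~\ref{asm_fclt_asgd} supplies.
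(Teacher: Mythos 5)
Your proposal is correct and takes essentially the same route as the paper: the contraction half is identical (the coupled difference obeys the homogeneous recursion $\bm{\Delta}_k=\breve A_k(\alpha)\bm{\Delta}_{k-1}$, and conditioning on the past yields a factor $\breve r_{\alpha,q}^q$ per step, with $\breve r_{\alpha,q}<1$ supplied by Lemma~\ref{lemma_gmc_sgd_cond}), while the paper's own argument phrases the same conditioning via the tower rule applied to the full matrix product $\breve A_k\cdots\breve A_1\bm{\Delta}_0$. Your backward-iteration construction of $\breve\pi_{\alpha}$ is exactly the content of Corollary~\ref{cor_gmc_mat}, which the paper simply invokes at this point, and your verification of the one-step moment bound from Assumption~\ref{asm_fclt_asgd} corresponds to the paper's Lemma~\ref{lemma_true_l2}.
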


By Theorem~\ref{thm_gmc_sgd}, initializing $\breve\bbeta_0^{\circ}\sim\breve\pi_{\alpha}$ leads to the stationary SGD dropout sequence $\{\breve\bbeta_k^{\circ}(\alpha)\}_{k\in\NN}$ by following the recursion \begin{align}
    \label{eq_dropout_sgd_stationary}
    \breve\bbeta_k^{\circ}(\alpha) - \breve\bbeta = \breve A_k(\alpha)(\breve\bbeta_{k-1}^{\circ}(\alpha) - \breve\bbeta) + \breve{\bm{b}}_k(\alpha), \quad k=1,2,\ldots,
\end{align}
where the $\ell^2$-regularized minimizer $\breve\bbeta$ is defined in (\ref{eq_goal_sgd}), and the random coefficients $\breve A_k(\alpha) = I_d - \alpha D_k\XX_kD_k$ and $ \breve{\bm{b}}_k(\alpha) = \alpha D_k\bm{x}_k(y_k - \bm{x}_k^{\top}D_k\breve\bbeta)$ are defined in (\ref{eq_dropout_sgd}). Furthermore, recall the iterated random function $\breve f_{D,(y,\bm{x})}(\bbeta)$ defined in (\ref{eq_dropout_function_sgd}). As a direct consequence of Theorem~\ref{thm_gmc_sgd}, we have 
\begin{equation}
    \label{eq_SGD_stationary_f}
    \breve\bbeta_k^{\circ}(\alpha)=\breve f_{D_k,(y_k,\bm{x}_k)}(\breve\bbeta_{k-1}^{\circ}(\alpha)),
\end{equation}
which holds for all $k\in\ZZ$. To see the case with $k\le0$, we only need to notice that, for any $\bbeta\in\RR^d$, we have the limit
\begin{equation}
    \label{eq_stationary_sgd}
    \breve\bbeta_k^{\circ}:=\lim_{m\rightarrow\infty}\breve f_{\bm{\xi}_k}\circ \cdots \breve f_{\bm{\xi}_{k-m}}(\bbeta) =: \breve h_{\alpha}(\bm{\xi}_k,\bm{\xi}_{k-1},\ldots),
\end{equation}
where $\breve h_{\alpha}$ is a measurable function that depends on $\alpha$, and we use $\bm{\xi}_k$ to denote all the new-coming random parts in the $k$-th iteration, that is,
\begin{equation}
    \label{eq_sgd_random_pair}
    \bm{\xi}_k=(D_k,(y_k,\bm{x}_k)), \quad k\in\ZZ.
\end{equation}
For $k\le0$, $\bm{\xi}_k$ can be viewed as an i.i.d.\ copy of $\bm{\xi}_j$ for some $j\ge1$. The limit $\breve h_{\alpha}(\bm{\xi}_k,\bm{\xi}_{k-1},\ldots)$ exists almost surely and does not depend on $\bbeta$. Therefore, the iteration $\breve\bbeta_k^{\circ}(\alpha)=\breve f_{D_k,(y_k,\bm{x}_k)}(\breve\bbeta_{k-1}^{\circ}(\alpha))$ in (\ref{eq_SGD_stationary_f}) holds for all $k\in\ZZ$.

\subsection{Asymptotics of Dropout in SGD}\label{subsec_asymp_sgd}

In this section, we provide the asymptotics for the $k$-th iterate of SGD dropout and the Ruppert-Polyak averaged version.

\begin{lemma}[Moment convergence of iterative SGD dropout]\label{lemma_q_moment_sgd}
    Let $q\ge2$ and suppose that Assumption~\ref{asm_fclt_asgd} holds. For the stationary SGD dropout sequence $\{\breve\bbeta_k^{\circ}(\alpha)\}_{k\in\NN}$ defined in (\ref{eq_dropout_sgd_stationary}) with learning rate $\alpha$ satisfying \eqref{eq_sgd_condition}, we have
    \begin{equation}
        \max_k \big(\EE\|\breve\bbeta_k^{\circ}(\alpha)-\breve\bbeta\|_2^q\big)^{1/q} = O(\sqrt{\alpha}).
    \end{equation}
\end{lemma}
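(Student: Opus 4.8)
The plan is to mirror the argument for the gradient-descent analogue in Lemma~\ref{lemma_q_moment_gd}, exploiting the autoregressive form (\ref{eq_dropout_sgd_stationary}) together with the sharp moment inequality in Lemma~\ref{lemma_moment_inequality}. Write $\bm{\delta}_k := \breve\bbeta_k^{\circ}(\alpha) - \breve\bbeta$, so that $\bm{\delta}_k = \breve A_k(\alpha)\bm{\delta}_{k-1} + \breve{\bm{b}}_k(\alpha)$ with $\breve A_k = I_d - \alpha D_k\XX_k D_k$ and $\breve{\bm{b}}_k = \alpha D_k\bm{x}_k(y_k - \bm{x}_k^{\top}D_k\breve\bbeta)$. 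Since $\{\breve\bbeta_k^{\circ}(\alpha)\}$ is stationary, the quantity $M_q := (\EE\|\bm{\delta}_k\|_2^q)^{1/q}$ does not depend on $k$; by Theorem~\ref{thm_gmc_sgd} combined with the finite $q$-th moment of $\breve{\bm{b}}_k$ guaranteed by Assumption~\ref{asm_fclt_asgd} (via Lemma~\ref{lemma_true_l2}), one has $M_q < \infty$, which is what legitimizes the rearrangements below.

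First I would record two elementary facts. Using $\EE[D_k] = pI_d$, the identity $\EE[D_k\XX_k D_k] = p\,\EE[\XX_{1,p}]$ from Lemma~\ref{lemma_clara}(ii), and the normal equation $\EE[\XX_{1,p}]\breve\bbeta = \EE[y_1\bm{x}_1]$, a direct computation gives $\EE[\breve{\bm{b}}_k(\alpha)] = 0$. Taking expectations in the stationary recursion then yields $(I_d - \EE[\breve A_k])\EE[\bm{\delta}_k] = 0$, and since $I_d - \EE[\breve A_k] = \alpha p\,\EE[\XX_{1,p}]$ is nonsingular in reduced form, $\EE[\bm{\delta}_k] = 0$. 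Consequently, because $\bm{\delta}_{k-1}$ depends only on $\bm{\xi}_{k-1},\bm{\xi}_{k-2},\ldots$ and is therefore independent of the pair $(\breve A_k,\breve{\bm{b}}_k)$, the cross term vanishes:
\[
\EE\big[(\breve A_k\bm{\delta}_{k-1})^{\top}\breve{\bm{b}}_k\big] = \EE[\bm{\delta}_{k-1}]^{\top}\,\EE[\breve A_k^{\top}\breve{\bm{b}}_k] = 0.
\]
This is exactly the weak orthogonality condition $\EE(\bm{x}^{\top}\bm{y}) = 0$ required by Lemma~\ref{lemma_moment_inequality}. I stress that the stronger martingale condition $\EE[\breve{\bm{b}}_k\mid\breve A_k\bm{\delta}_{k-1}] = 0$ \emph{fails} here, since $\breve A_k$ and $\breve{\bm{b}}_k$ are driven by the same randomness $\bm{\xi}_k$; hence the classical Rio inequality does not apply and the new inequality is genuinely needed.

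Next I would apply Lemma~\ref{lemma_moment_inequality} with $\bm{x} = \breve A_k\bm{\delta}_{k-1}$ and $\bm{y} = \breve{\bm{b}}_k$. The inequality controls $\|\bm{x}+\bm{y}\|_q^2$ by $\|\bm{x}\|_q^2 + C_q\|\bm{y}\|_q^2$ (with $\|\cdot\|_q := (\EE\|\cdot\|_2^q)^{1/q}$) plus a term proportional to $\EE(\bm{x}^{\top}\bm{y})$, which here vanishes. Conditioning on $\bm{\delta}_{k-1}$ and invoking the definition of the contraction factor in (\ref{eq_sgd_r}) gives $\|\breve A_k\bm{\delta}_{k-1}\|_q \le \breve r_{\alpha,q}\,\|\bm{\delta}_{k-1}\|_q = \breve r_{\alpha,q}M_q$, while Lemma~\ref{lemma_true_l2} gives $\|\breve{\bm{b}}_k\|_q = O(\alpha)$. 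Since $\|\bm{x}+\bm{y}\|_q = M_q$ by stationarity, we obtain
\[
M_q^2 \le \breve r_{\alpha,q}^2 M_q^2 + C_q\,O(\alpha^2), \qquad\text{hence}\qquad M_q^2 \le \frac{C_q\,O(\alpha^2)}{1 - \breve r_{\alpha,q}^2}.
\]

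The final step, which I expect to be the crux, is to show the contraction gap satisfies $1 - \breve r_{\alpha,q}^2 \asymp \alpha$. This is where the sharp inequality earns its keep: a naive Minkowski split would only give $M_q \le \breve r_{\alpha,q}M_q + O(\alpha)$, hence $M_q \le O(\alpha)/(1-\breve r_{\alpha,q}) = O(1)$, losing the square-root rate entirely; it is the \emph{quadratic} form above, in which the fluctuation contributes at order $\alpha^2$, that produces $\sqrt{\alpha}$. To size the gap I would Taylor-expand, writing $G = D_1\XX_1 D_1$, so that uniformly over unit vectors $\bm{v}$,
\[
\EE\big\|(I_d - \alpha G)\bm{v}\big\|_2^q = 1 - q\alpha\,\bm{v}^{\top}\EE[G]\bm{v} + O(\alpha^2),
\]
where uniformity of the remainder follows from the finite $2q$-th moment of $\bm{x}$ in Assumption~\ref{asm_fclt_asgd}. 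Since $\EE[G] = p\,\EE[\XX_{1,p}]$ is positive definite, $\bm{v}^{\top}\EE[G]\bm{v} \ge \lambda_{\min}(p\,\EE[\XX_{1,p}]) > 0$, so $\breve r_{\alpha,q}^q \le 1 - q\alpha\,\lambda_{\min}(p\,\EE[\XX_{1,p}]) + O(\alpha^2)$ and therefore $1 - \breve r_{\alpha,q}^2 \asymp \alpha$. Substituting this into the displayed bound yields $M_q^2 = O(\alpha)$, i.e. $\max_k(\EE\|\breve\bbeta_k^{\circ}(\alpha) - \breve\bbeta\|_2^q)^{1/q} = M_q = O(\sqrt{\alpha})$, as claimed.
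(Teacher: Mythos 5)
Your architecture is genuinely different from the paper's: the paper expands the stationary solution as $\breve\bbeta_k^{\circ}(\alpha)-\breve\bbeta=\alpha\sum_{i\ge0}\breve\M_{i,k}(\alpha)$, verifies that the summands are martingale differences, applies Burkholder's inequality (Lemma~\ref{lemma_burkholder}), and then sums a geometric series whose ratio has gap of order $\alpha$, so that the prefactor $\alpha$ times $O(\alpha^{-1/2})$ yields $O(\sqrt{\alpha})$. Your one-step fixed-point argument via stationarity is an attractive alternative, and your two bookends are sound: $\big(\EE\|\breve A_k\bm{\delta}_{k-1}\|_2^q\big)^{1/q}\le\breve r_{\alpha,q}M_q$ follows from independence and the tower rule exactly as in the proof of Theorem~\ref{thm_gmc_sgd}, and the contraction gap $1-\breve r_{\alpha,q}^q\asymp\alpha$ is precisely what the proof of Lemma~\ref{lemma_gmc_sgd_cond} delivers (the condition \eqref{eq_sgd_condition} gives the linear term $q\alpha p\bm{v}^{\top}\EE[\XX_{1,p}]\bm{v}$ a uniform margin over the $O(\alpha^2)$ remainder). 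You are also right that the quadratic structure, not Minkowski, is what produces the $\sqrt{\alpha}$ rate.

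However, there is a genuine gap at the central step for $q>2$. Lemma~\ref{lemma_moment_inequality} controls $\EE\|\bm{x}+\bm{y}\|_2^q$ up to the weighted cross term $q\,\EE\big[\|\bm{x}\|_2^{q-2}\bm{x}^{\top}\bm{y}\big]$, not up to $\EE[\bm{x}^{\top}\bm{y}]$. With $\bm{x}=\breve A_k\bm{\delta}_{k-1}$ and $\bm{y}=\breve{\bm{b}}_k$, your vanishing argument (factor out $\EE[\bm{\delta}_{k-1}]=0$ by independence) only kills the \emph{unweighted} inner product, i.e.\ it settles the case $q=2$; for $q>2$ the weight $\|\breve A_k\bm{\delta}_{k-1}\|_2^{q-2}$ is a nonlinear function of $\bm{\delta}_{k-1}$ correlated with $(\breve A_k,\breve{\bm{b}}_k)$ through $\bm{\xi}_k$, so the expectation does not factor and does not vanish. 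This is not cosmetic: bounding the weighted cross term crudely by H\"older gives $|\EE[\|\bm{x}\|_2^{q-2}\bm{x}^{\top}\bm{y}]|\le(\breve r_{\alpha,q}M_q)^{q-1}\,O(\alpha)$, of order $M_q^{q-1}\alpha$, and your fixed-point inequality then degenerates to $M_q(1-\breve r_{\alpha,q})\lesssim\alpha$, i.e.\ $M_q=O(1)$ --- exactly the naive-Minkowski outcome you set out to avoid. The gap is repairable: condition on $\bm{\delta}_{k-1}$, expand the weight as $\|\breve A_k\bm{\delta}_{k-1}\|_2^{q-2}=\|\bm{\delta}_{k-1}\|_2^{q-2}+(\text{correction})$ and the inner product as $\bm{\delta}_{k-1}^{\top}\breve{\bm{b}}_k-\alpha\bm{\delta}_{k-1}^{\top}D_k\XX_kD_k\breve{\bm{b}}_k$; the leading term has expectation $\EE\big[\|\bm{\delta}_{k-1}\|_2^{q-2}\bm{\delta}_{k-1}^{\top}\big]\EE[\breve{\bm{b}}_k]=0$ since $\EE[\breve{\bm{b}}_k]=0$ and $\bm{\delta}_{k-1}$ is independent of $\bm{\xi}_k$, while the corrections carry an extra factor of $\alpha$ and are $O(\alpha^2M_q^{q-1})$ under Assumption~\ref{asm_fclt_asgd} (for $2<q<3$ the expansion of $t\mapsto t^{q-2}$ needs extra care, e.g.\ via the integral representation in the proof of Lemma~\ref{lemma_moment_inequality}(ii)). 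With that, $M_q^q\le\breve r_{\alpha,q}^qM_q^q+C(\alpha^2M_q^{q-1}+\alpha^2M_q^{q-2})$ and the gap $1-\breve r_{\alpha,q}^q\gtrsim\alpha$ give $M_q^2\lesssim\alpha M_q+\alpha$, hence $M_q=O(\sqrt{\alpha})$; but this expansion is a necessary piece of the proof, not the one-line cancellation you asserted. Finally, your a priori claim $M_q<\infty$, which legitimizes the rearrangement, deserves more than a citation of Theorem~\ref{thm_gmc_sgd} --- e.g.\ pass from the finite-horizon iterates (finite sums of terms with finite $q$-th moments) to the stationary limit via the GMC --- though this part is routine.
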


Besides the stochastic order of the last iterate of SGD dropout $\breve\bbeta_k(\alpha)$, we are also interested in the limiting distribution of the Ruppert-Polyak averaged SGD dropout, which can effectively reduce the variance and keep the online computing scheme. In particular, we define
\begin{equation}
    \label{eq_ave_dropout_sgd}
    \bar\bbeta_n^{\sgd}(\alpha) = \frac{1}{n}\sum_{k=1}^n\breve\bbeta_k(\alpha).
\end{equation}

\begin{theorem}[Quenched CLT of averaged SGD dropout]\label{thm_clt_ave_sgd}
If the learning rate $\alpha$ satisfies \eqref{eq_sgd_condition}, then, 
\begin{equation}
    {\sqrt{n}(\bar\bbeta_n^{\sgd}(\alpha) - \breve\bbeta)}
      \Rightarrow \N(0,\breve\Sigma(\alpha)), \quad \text{as} \ n\to \infty,
\end{equation}
with $\breve\Sigma(\alpha):= \sum_{k=-\infty}^{\infty}\EE[(\breve\bbeta_0^{\circ}(\alpha) - \breve\bbeta)(\breve\bbeta_k^{\circ}(\alpha)-\breve\bbeta)^{\top}]$ the long-run covariance matrix of the stationary process $\breve\bbeta_k^{\circ}(\alpha)\sim\breve\pi_{\alpha}.$
\end{theorem}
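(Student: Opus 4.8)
The plan is to decompose the argument into a quenched-to-stationary reduction followed by a central limit theorem for the stationary causal process, both driven by the geometric-moment contraction of Theorem~\ref{thm_gmc_sgd}. First I would couple the iterates $\breve\bbeta_k(\alpha)$ started from the arbitrary fixed $\breve\bbeta_0$ with the stationary sequence $\breve\bbeta_k^{\circ}(\alpha)$ started from $\breve\bbeta_0^{\circ}\sim\breve\pi_{\alpha}$, where both share the same forward randomness $\{\bm{\xi}_k\}_{k\ge1}$ and $\breve\bbeta_0^{\circ}$ is built from the backward block $\{\bm{\xi}_j\}_{j\le0}$ via~(\ref{eq_stationary_sgd}), hence is independent of the forward randomness. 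Conditioning on $\breve\bbeta_0^{\circ}$ and invoking the GMC bound~(\ref{eq_sgd_gmc}) gives $(\EE\|\breve\bbeta_k(\alpha)-\breve\bbeta_k^{\circ}(\alpha)\|_2^q)^{1/q}\le \breve r_{\alpha,q}^k(\EE\|\breve\bbeta_0-\breve\bbeta_0^{\circ}\|_2^q)^{1/q}$, whose right-most factor is finite because $\breve\pi_{\alpha}$ has a finite $q$-th moment by Lemma~\ref{lemma_q_moment_sgd}. Summing the geometric series yields $\EE\|\sum_{k=1}^n(\breve\bbeta_k(\alpha)-\breve\bbeta_k^{\circ}(\alpha))\|_2=O(1)$ uniformly in $n$, so $n^{-1/2}\sum_{k=1}^n(\breve\bbeta_k(\alpha)-\breve\bbeta_k^{\circ}(\alpha))=o_{\PP}(1)$. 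By Slutsky's theorem, it then suffices to prove the CLT for the stationary average $n^{-1/2}\sum_{k=1}^n(\breve\bbeta_k^{\circ}(\alpha)-\breve\bbeta)$, and since the limit no longer depends on $\breve\bbeta_0$, this delivers the quenched conclusion.

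Next I would establish the CLT for the stationary causal process $\breve\bbeta_k^{\circ}(\alpha)=\breve h_{\alpha}(\bm{\xi}_k,\bm{\xi}_{k-1},\ldots)$ via the functional-dependence-measure framework for iterated random functions (\cite{wu_limit_2004}). I first verify the centering: from the stationary recursion~(\ref{eq_dropout_sgd_stationary}), the independence of $\breve A_k(\alpha)$ from $\breve\bbeta_{k-1}^{\circ}(\alpha)$, the identity $\EE[D_k\XX_kD_k]=p\EE[\XX_{k,p}]$, and the defining relation $\EE[\XX_{1,p}]\breve\bbeta=\EE[y_1\bm{x}_1]$ together force $\EE[\breve{\bm{b}}_k(\alpha)]=0$ and hence $\EE[\breve\bbeta_k^{\circ}(\alpha)]=\breve\bbeta$. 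I then bound the dependence measure $\delta_q(k)=\|\breve\bbeta_k^{\circ}(\alpha)-\breve\bbeta_k^{\circ,*}(\alpha)\|_q$, where $\breve\bbeta_k^{\circ,*}$ replaces only the coordinate $\bm{\xi}_0$ by an i.i.d.\ copy. Because the two coupled trajectories differ only at time $0$ and share every $\bm{\xi}_k$ with $k\ge1$, conditioning on the two random time-$0$ states and applying the per-step contraction~(\ref{eq_sgd_gmc}) yields $\delta_q(k)=O(\breve r_{\alpha,q}^k)$, which is geometrically summable. Taking $q=2$, the summability $\sum_{k\ge0}\delta_2(k)<\infty$ is exactly the short-range-dependence condition for the scalar CLT; projecting onto an arbitrary direction $\bm{a}\in\RR^d$ and applying the Cramér--Wold device then upgrades the scalar statement to the vector CLT.

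For the limiting covariance, by stationarity the finite-sample covariance equals $\frac1n\sum_{j,k=1}^n\mathrm{Cov}(\breve\bbeta_j^{\circ}(\alpha),\breve\bbeta_k^{\circ}(\alpha))$, and the geometric decay of the autocovariances—again a consequence of the GMC bound on $\delta_2(k)$—makes them absolutely summable, so this converges to $\sum_{h=-\infty}^{\infty}\mathrm{Cov}(\breve\bbeta_0^{\circ}(\alpha),\breve\bbeta_h^{\circ}(\alpha))=\breve\Sigma(\alpha)$, matching the claimed asymptotic covariance.

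The hard part is Step~2: converting the GMC, which is phrased as a contraction between two \emph{deterministic} initializations, into a bound on the single-coordinate dependence measure $\delta_2(k)$. This requires conditioning on the random time-$0$ coupling gap $\breve\bbeta_0^{\circ}-\breve\bbeta_0^{\circ,*}$ and verifying it lies in $L^2$, which is precisely where Lemma~\ref{lemma_q_moment_sgd} and Assumption~\ref{asm_fclt_asgd} (with $q=2$) enter. Once the summability is in place, the underlying martingale approximation and the martingale CLT are routine, and the covariance bookkeeping in Step~3 is immediate.
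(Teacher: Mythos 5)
Your proposal is correct and takes essentially the same route as the paper's proof: a GMC-based coupling (Theorem~\ref{thm_gmc_sgd}) reducing the quenched statement to the stationary sequence, a geometric bound $O(\breve r_{\alpha,q}^k)$ on the functional dependence measure giving short-range dependence, the Shao--Wu-type CLT with the Cram\'er--Wold device, and the centering identity $\EE[\breve\bbeta_k^{\circ}(\alpha)]=\breve\bbeta$ via $\EE[\breve{\bm{b}}_k(\alpha)]=0$ and invertibility of $\EE[\XX_{1,p}]$. The only cosmetic differences are that you bound the dependence measure by a direct single-coordinate coupling conditioned on the time-$0$ gap (the paper instead uses a triangle inequality through the replace-all-past coupling, both resting on Lemma~\ref{lemma_q_moment_sgd} for the $L^q$ gap), and that you center at $\breve\bbeta$ from the outset rather than at $\breve\bbeta_{\infty}(\alpha)$ with a final recentering step.
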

As discussed above Corollary~\ref{cor_clt_agd_parallel}, one can also choose different learning rates $\alpha_1,\ldots,\alpha_s$ and then run the SGD dropout sequences $\breve\bbeta_k(\alpha_1),\ldots,\breve\bbeta_k(\alpha_s)$ in parallel. For $d$-dimensional vectors $\bm{u}_1,\ldots,\bm{u}_s,$ recall that $\mathrm{vec}(\bm{u}_1,\ldots,\bm{u}_s):=(\bm{u}_1^{\top},\ldots,\bm{u}_s^{\top})^{\top}$ is the $ds$-dimensional concatenation. 

\begin{corollary}[Quenched CLT of parallel averaged SGD dropout]\label{cor_clt_asgd_parallel}
    Let $s\ge1$. Consider constant learning rates $\alpha_1,\ldots,\alpha_s$ satisfying the condition in (\ref{eq_sgd_condition}). Then, for any initial vectors $\breve\bbeta_0(\alpha_1),\ldots,\breve\bbeta_0(\alpha_s)$,
    \begin{align}
        \sqrt{n}\cdot\mathrm{vec}\Big(\bar\bbeta_n^{\sgd}(\alpha_1)-\breve\bbeta,\ldots,\bar\bbeta_n^{\sgd}(\alpha_s)-\breve\bbeta\Big) \Rightarrow \N\big(0,\breve\Sigma^{\mathrm{vec}}\big),
    \end{align}
    with the long-run covariance matrix
    $$\breve\Sigma^{\mathrm{vec}}=\sum_{k=-\infty}^{\infty}\mathrm{Cov}\big(\mathrm{vec}\big(\breve\bbeta_0^{\circ}(\alpha_1),\ldots,\breve\bbeta_0^{\circ}(\alpha_s)\big),\mathrm{vec}\big(\breve\bbeta_k^{\circ}(\alpha_1),\ldots,\breve\bbeta_k^{\circ}(\alpha_s)\big)\big).$$
\end{corollary}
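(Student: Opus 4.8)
The plan is to reduce the joint statement to the single-rate result (Theorem~\ref{thm_clt_ave_sgd}) by lifting the entire argument to the stacked $ds$-dimensional process. Because all $s$ runs are driven by the common innovation sequence $\bm{\xi}_k=(D_k,(y_k,\bm{x}_k))$ from (\ref{eq_sgd_random_pair}), the concatenation
\[
\bm{\Theta}_k := \mathrm{vec}\big(\breve\bbeta_k^{\circ}(\alpha_1)-\breve\bbeta,\ldots,\breve\bbeta_k^{\circ}(\alpha_s)-\breve\bbeta\big)\in\RR^{ds}
\]
is again a stationary causal process: by (\ref{eq_stationary_sgd}) its $j$-th block equals $\breve h_{\alpha_j}(\bm{\xi}_k,\bm{\xi}_{k-1},\ldots)-\breve\bbeta$, so $\bm{\Theta}_k$ is a single measurable function of $(\bm{\xi}_k,\bm{\xi}_{k-1},\ldots)$. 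Everything then becomes a one-dimensional-in-spirit computation performed in $\RR^{ds}$.

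First I would establish the geometric-moment contraction for $\bm{\Theta}_k$. Coupling the innovations $\bm{\xi}_i$, $i\le0$, with i.i.d.\ copies as in Definition~\ref{def_gmc} and writing $\breve\bbeta_k^{\circ\prime}(\alpha_j)$ for the resulting coupled stationary sequences, the $j$-th block of $\bm{\Theta}_k-\bm{\Theta}_k'$ is exactly $\breve\bbeta_k^{\circ}(\alpha_j)-\breve\bbeta_k^{\circ\prime}(\alpha_j)$, whose $q$-th moment decays at rate $\breve r_{\alpha_j,q}^k$ by Theorem~\ref{thm_gmc_sgd}. Since for $q\ge2$ the power-mean inequality gives $\|\bm{\Theta}_k-\bm{\Theta}_k'\|_2^q\le s^{q/2-1}\sum_{j=1}^s\|\breve\bbeta_k^{\circ}(\alpha_j)-\breve\bbeta_k^{\circ\prime}(\alpha_j)\|_2^q$, the stacked process satisfies GMC with rate $\max_{1\le j\le s}\breve r_{\alpha_j,q}<1$. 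In particular the autocovariances $\mathrm{Cov}(\bm{\Theta}_0,\bm{\Theta}_k)$ are absolutely summable, so $\breve\Sigma^{\mathrm{vec}}=\sum_{k\in\ZZ}\mathrm{Cov}(\bm{\Theta}_0,\bm{\Theta}_k)$ is well defined (taking $q=2$ already suffices for the CLT).

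With GMC for the $ds$-dimensional process in hand, I would invoke the same CLT engine that proves Theorem~\ref{thm_clt_ave_sgd}, now applied verbatim to $\bm{\Theta}_k$ in place of a single $\breve\bbeta_k^{\circ}(\alpha)-\breve\bbeta$, to obtain $n^{-1/2}\sum_{k=1}^n\bm{\Theta}_k\Rightarrow\N(0,\breve\Sigma^{\mathrm{vec}})$. For the quenched part, that is, arbitrary deterministic initializations $\breve\bbeta_0(\alpha_j)$ rather than stationary ones, I would apply the contraction bound (\ref{eq_sgd_gmc}) blockwise: the partial-sum discrepancy $n^{-1/2}\sum_{k=1}^n\big(\breve\bbeta_k(\alpha_j)-\breve\bbeta_k^{\circ}(\alpha_j)\big)$ is dominated in $L^q$ by $n^{-1/2}\sum_{k=1}^n\breve r_{\alpha_j,q}^k\|\breve\bbeta_0(\alpha_j)-\breve\bbeta_0^{\circ}(\alpha_j)\|_2$, a convergent geometric series that is $O_{\PP}(n^{-1/2})=o_{\PP}(1)$; hence replacing the stationary start by any fixed start does not alter the limiting law, and Slutsky's theorem closes the argument.

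I expect the only genuine content beyond Theorem~\ref{thm_clt_ave_sgd} to be the observation that the stack remains a single causal process under the common innovation filtration; once this is noted, the GMC rate is simply the worst of the $s$ individual rates and every remaining step is mechanical. The one point to handle carefully is whether the dropout matrices are shared across the parallel runs, so that $\bm{\xi}_k$ is genuinely a common innovation. If instead independent dropout noise is used per run, one enlarges the innovation to $(D_k^{(1)},\ldots,D_k^{(s)},(y_k,\bm{x}_k))$ and the identical argument applies, with the off-diagonal blocks of $\breve\Sigma^{\mathrm{vec}}$ then reflecting only the shared-data dependence.
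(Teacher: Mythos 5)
Your proposal is correct and follows essentially the same route as the paper: the paper's (omitted) proof also treats the stacked iterates as a single causal process of the common innovations $\bm{\xi}_k$, verifies short-range dependence via the geometric-moment contraction of Theorem~\ref{thm_gmc_sgd}, applies the scalar CLT of Lemma~\ref{lemma_thm21_shao_wu_2007} together with the Cram\'er--Wold device, and removes the stationary initialization blockwise exactly as in (\ref{eq_thm_agd_clt_diff})--(\ref{eq_thm_agd_clt_diff_rate}). Your explicit GMC bound for the stack with rate $\max_{1\le j\le s}\breve r_{\alpha_j,q}$ and your remark on shared versus independent dropout matrices simply make precise details the paper leaves implicit.
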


\begin{theorem}[Quenched invariance principle of averaged SGD dropout]\label{thm_fclt_asgd}
    Suppose that Assumption~\ref{asm_fclt_asgd} holds for some $q>2$ and the learning rate $\alpha$ satisfies \eqref{eq_sgd_condition}. Define a partial sum process $(\breve S_i^{\circ}(\alpha))_{1\le i\le n}$ with
    \begin{equation}
        \label{eq_asgd_partial_sum_stationary}
        \breve S_i^{\circ}(\alpha) = \sum_{k=1}^i(\breve\bbeta_k^{\circ}(\alpha) - \breve\bbeta).
    \end{equation}
    Then, there exists a (richer) probability space $(\breve\Omega^{\star},\breve\A^{\star},\breve\PP^{\star})$ on which one can define $d$-dimensional random vectors $\breve\bbeta_k^{\star}$, the associated partial sum process $\breve S_i^{\star}(\alpha)=\sum_{k=1}^i(\breve\bbeta_k^{\star}(\alpha) - \breve\bbeta)$, and a Gaussian process $\breve G_i^{\star}=\sum_{k=1}^i\breve{\bm{z}}_k^{\star}$, with independent Gaussian random vectors $\breve{\bm{z}}_k\sim \N(0,I_d)$, such that $(\breve S_i^{\circ})_{1\le i\le n}\overset{\D}{=}(\breve S_i^{\star})_{1\le i\le n}$ and
    \begin{equation}
        \max_{1\le i\le n}\big\|\breve S_i^{\star} - \breve\Sigma^{1/2}(\alpha)\breve G_i^{\star}\big\|_2 = o_{\PP}(n^{1/q}), \quad \text{in }(\breve\Omega^{\star},\breve\A^{\star},\breve\PP^{\star}),
    \end{equation}
    where $\breve\Sigma(\alpha)$ is the long-run covariance matrix defined in Theorem \ref{thm_clt_ave_sgd}. In addition, this approximation holds for all $(\breve S_i^{\breve\bbeta_0}(\alpha))_{1\le i\le n}$ given any arbitrary initialization $\breve\bbeta_0\in\RR^d$, where
    \begin{equation}
        \breve S_i^{\breve\bbeta_0}(\alpha) = \sum_{k=1}^i(\breve\bbeta_k(\alpha) - \breve\bbeta).
    \end{equation}
\end{theorem}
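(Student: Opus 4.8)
The plan is to reduce the statement to a Komlós–Major–Tusnády (KMT) strong approximation for the \emph{stationary} process and then transfer the result to an arbitrary initialization via the geometric-moment contraction of Theorem~\ref{thm_gmc_sgd}. Write $U_k := \breve\bbeta_k^{\circ}(\alpha) - \breve\bbeta = \breve h_{\alpha}(\bm{\xi}_k,\bm{\xi}_{k-1},\ldots) - \breve\bbeta$ for the centered stationary sequence, a causal Bernoulli shift of the i.i.d.\ innovations $\bm{\xi}_k=(D_k,(y_k,\bm{x}_k))$ from~(\ref{eq_sgd_random_pair}). Taking expectations in the stationary recursion~(\ref{eq_dropout_sgd_stationary}) and using $\EE[\breve A_k(\alpha)]=I_d-\alpha p\,\EE[\XX_{k,p}]$ together with $\EE[\breve{\bm{b}}_k(\alpha)]=\alpha p(\EE[y_1\bm{x}_1]-\EE[\XX_{1,p}]\breve\bbeta)=0$ and the positive definiteness of $\EE[\XX_{1,p}]$ (guaranteed by the reduced-form assumption $\min_i(\EE[\bm{x}_1\bm{x}_1^{\top}])_{ii}>0$), one sees $\EE[U_k]=0$; and Lemma~\ref{lemma_q_moment_sgd} gives $\EE\|U_k\|_2^q=O(\alpha^{q/2})<\infty$ for the $q>2$ of Assumption~\ref{asm_fclt_asgd}. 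Thus $\breve S_i^{\circ}(\alpha)=\sum_{k=1}^i U_k$ is the partial-sum process of a mean-zero, stationary, causal sequence with a finite $q$-th moment.

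The key structural input I would establish next is the geometric decay of the functional (physical) dependence measure of $U_k$. Let $\bm{\xi}_0'$ be an i.i.d.\ copy of $\bm{\xi}_0$, and set $\breve\bbeta_k^{\circ,*}:=\breve h_{\alpha}(\bm{\xi}_k,\ldots,\bm{\xi}_1,\bm{\xi}_0',\bm{\xi}_{-1},\ldots)$ with $U_k^{*}:=\breve\bbeta_k^{\circ,*}-\breve\bbeta$, so that $U_k$ and $U_k^{*}$ share the innovations $\bm{\xi}_1,\ldots,\bm{\xi}_k$ but differ in the single coordinate $\bm{\xi}_0$. Viewing the two coupled chains as stationary recursions that agree from time $1$ onward but are initialized at time $0$ by $\breve h_{\alpha}(\bm{\xi}_0,\bm{\xi}_{-1},\ldots)$ and $\breve h_{\alpha}(\bm{\xi}_0',\bm{\xi}_{-1},\ldots)$ respectively, the contraction bound~(\ref{eq_sgd_gmc}) of Theorem~\ref{thm_gmc_sgd} yields
\begin{equation*}
    \big(\EE\|U_k-U_k^{*}\|_2^q\big)^{1/q} \le \breve r_{\alpha,q}^k\,\big(\EE\|\breve\bbeta_0^{\circ}-\breve\bbeta_0^{\circ,*}\|_2^q\big)^{1/q} = O\big(\sqrt{\alpha}\,\breve r_{\alpha,q}^k\big),
\end{equation*}
where the last bound uses Lemma~\ref{lemma_q_moment_sgd} for both stationary copies. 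Since $\breve r_{\alpha,q}<1$, the dependence measure decays geometrically, so it is summable even against any polynomial weight, and the long-run covariance $\breve\Sigma(\alpha)=\sum_{k\in\ZZ}\EE[U_0U_k^{\top}]$ of Theorem~\ref{thm_clt_ave_sgd} converges absolutely.

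With these two ingredients in hand, I would invoke the multivariate KMT strong approximation for stationary processes under functional dependence, in the form of \textcite{karmakar_optimal_2020} (see also \textcite{berkes_kmt_2014}). The hypotheses there---a finite $q$-th moment with $q>2$ and a sufficiently fast decaying functional dependence measure---are met by the previous two paragraphs (geometric decay is far stronger than needed), so on a richer probability space one obtains i.i.d.\ vectors $\breve{\bm{z}}_k^{\star}\sim\N(0,I_d)$ and a distributional copy $\breve S_i^{\star}$ of $\breve S_i^{\circ}$ with $\max_{1\le i\le n}\|\breve S_i^{\star}-\breve\Sigma^{1/2}(\alpha)\breve G_i^{\star}\|_2=o_{\PP}(n^{1/q})$, the covariance of the approximating Gaussian process being exactly the absolutely summable $\breve\Sigma(\alpha)$. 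Finally, I would transfer this to an arbitrary initialization $\breve\bbeta_0$: coupling the non-stationary chain $\breve\bbeta_k(\alpha)$ and the stationary chain $\breve\bbeta_k^{\circ}(\alpha)$ through the same innovations, Theorem~\ref{thm_gmc_sgd} gives $(\EE\|\breve\bbeta_k-\breve\bbeta_k^{\circ}\|_2^q)^{1/q}\le\breve r_{\alpha,q}^k\,(\EE\|\breve\bbeta_0-\breve\bbeta_0^{\circ}\|_2^q)^{1/q}$, hence $\EE\max_{1\le i\le n}\|\breve S_i^{\breve\bbeta_0}-\breve S_i^{\circ}\|_2\le\sum_{k\ge1}(\EE\|\breve\bbeta_k-\breve\bbeta_k^{\circ}\|_2^2)^{1/2}=O(1)=o_{\PP}(n^{1/q})$, and the triangle inequality extends the approximation to every initialization.

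I expect the main obstacle to be matching our GMC-based single-coordinate coupling bound to the precise ``dependence-adjusted norm'' conditions required by the cited KMT theorem in the multivariate setting, and ensuring the approximating Gaussian process carries exactly the long-run covariance $\breve\Sigma(\alpha)$ rather than merely a proportional one. Establishing the mean-zero property and the geometric decay above is routine, whereas correctly importing the optimal $o_{\PP}(n^{1/q})$ rate---known to be sharp for fixed-dimensional time series by \textcite{karmakar_optimal_2020}---is where the technical care lies.
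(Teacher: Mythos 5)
Your proposal is correct and follows essentially the same route as the paper's proof: verifying the mean-zero property and finite $q$-th moment of the stationary sequence via the recursion and Lemma~\ref{lemma_q_moment_sgd}, establishing geometric decay of the functional dependence measure through the GMC of Theorem~\ref{thm_gmc_sgd}, invoking the Karmakar--Wu strong approximation (the paper's Lemma~\ref{lemma_thm2_karmakar_wu_2020}), and transferring to arbitrary initializations by the same coupling bound $O(1)=o_{\PP}(n^{1/q})$. Your only cosmetic deviations---centering directly at $\breve\bbeta$ rather than through the intermediate $\breve\bbeta_{\infty}(\alpha)$ (which the paper shows coincides with $\breve\bbeta$ in expectation), and deriving the single-coordinate coupling bound directly from GMC rather than via the paper's triangle inequality through the full-past replacement---do not change the substance of the argument.
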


\section{Online Inference for SGD with Dropout}\label{sec_online_inference}

The long-run covariance matrix $\breve\Sigma(\alpha)$ of the averaged SGD dropouts is usually unknown and needs to be estimated. We now propose an online estimation method for $\breve\Sigma(\alpha)$, and establish theoretical guarantees. 

The key idea is to adopt the non-overlapping batched means (NBM) method (\cite{lahiri_theoretical_1999,lahiri_resampling_2003,xiao_single-pass_2011}), which resamples blocks of observations to estimate the long-run covariance of dependent data. Essentially, a sequences of non-overlapping blocks are pre-specified. When the block sizes are large enough, usually increasing as the the sample size grows, the block sums shall behave similar to independent observations and therefore can be used to estimate the long-run covariance. In this paper, to facilitate the online inference of the dependent SGD dropout iterates $\{\breve\bbeta_k(\alpha)\}_{k\in\NN}$, we shall extend the offline NBM estimators to online versions by only including the past SGD dropout iterates in each batch. The overlapped batch-means (OBM) methods are also investigated in literature; see for example \textcite{xiao_single-pass_2011,zhu_online_2023}. We shall only focus on the NBM estimates in this study given its simpler structure.

Let $\eta_1, \eta_2, \ldots$ be a strictly increasing integer-valued sequence satisfying $\eta_1=1$ and $\eta_{m+1}-\eta_m\rightarrow\infty$ as $m\rightarrow\infty$. For each $m$, we let $B_m$ denote the block
\begin{equation}
    \label{eq_block}
    B_m = \{\eta_m,\eta_m+1,\ldots,\eta_{m+1}-1\}.
\end{equation}
For the $n$-th SGD dropout iteration, denote by $\psi(n)$ the largest index $m$ such that $\eta_m\le n$. For any $d$-dimensional vector $\bm{v}=(v_1,\ldots,v_d)^{\top},$ the Kronecker product is the $d\times d$ matrix $\bm{v}^{\otimes 2}=(v_iv_j)_{i,j=1}^d.$ Based on the non-overlapping blocks $\{B_m\}_{m\in\NN}$, for the $n$-th iteration, we can estimate the long-run covariance matrix $\breve\Sigma(\alpha)$ in Theorem \ref{thm_clt_ave_sgd} by
\begin{equation}
    \label{eq_longrun_est}
    \hat\Sigma_n(\alpha) = \frac{1}{n} \sum_{m = 1}^{\psi(n) - 1} \Big(\sum_{k \in B_{m}} \big[\breve\bbeta_k(\alpha) - \bar\bbeta_{n}^{\sgd}(\alpha)\big]\Big)^{\otimes 2} + \frac 1n \Big(\sum_{k = \eta_{\psi(n)}}^{n} \big[\breve\bbeta_k(\alpha) - \bar\bbeta_{n}^{\sgd}(\alpha)\big]\Big)^{\otimes 2}.
\end{equation}
The estimator $\hat\Sigma_n(\alpha)$ is composed of two parts. The first part takes the sum within each block and then estimates the sample covariances of these centered block sums. The second part accounts for the remaining observations, which can be viewed as the estimated covariance of the tail block.

For the recursive computation of $\hat\Sigma_n(\alpha)$, we need to rewrite (\ref{eq_longrun_est}) such that, in the $n$-th iteration, we can update $\hat\Sigma_n(\alpha)$ based on the information from the $(n-1)$-st step and the latest iterate $\breve\bbeta_n(\alpha)$. To this end, we denote the number of iterates included in the tail part (i.e., the second part) in (\ref{eq_longrun_est}) by 
\begin{equation}
    \delta_{\eta}(n) = n - \eta_{\psi(n)} + 1,
\end{equation}
and define two partial sums
\begin{align}
    \label{eq_longrun_twoparts}
    \mathcal{S}_{m}(\alpha) = \sum_{k \in B_{m}} \breve\bbeta_k(\alpha) \enspace \quad\mathrm{and}\quad \enspace \mathcal{R}_{n}(\alpha) = \sum_{k = \eta_{\psi(n)}}^{n} \breve\bbeta_k(\alpha). 
\end{align}
Then, we notice that the estimator $\hat\Sigma_n(\alpha)$ in (\ref{eq_longrun_est}) can be rewritten as follows, 
\begin{align}
    \label{eq_longrun_online}
    \hat\Sigma_n(\alpha) & = \frac{1}{n}\Bigg[\Big(\sum_{m = 1}^{\psi(n) - 1} \mathcal{S}_{m}(\alpha)^{\otimes 2} + \mathcal{R}_{n}(\alpha)^{\otimes 2}\Big) + \Big(\sum_{m = 1}^{\psi(n) - 1} |B_{m}|^{2} + |\delta_{\eta}(n)|^{2}\Big) \bar\bbeta_{n}^{\sgd}(\alpha)^{\otimes 2} \nonumber \\
    & \quad - \Big(\sum_{m = 1}^{\psi(n) - 1} |B_{m}| \mathcal{S}_{m}(\alpha) + \delta_{\eta}(n) \mathcal{R}_{n}(\alpha)\Big) \bar\bbeta_{n}^{\sgd}(\alpha)^{\top} \nonumber \\
    & \quad - \bar\bbeta_{n}^{\sgd}(\alpha) \Big(\sum_{m = 1}^{\psi(n) - 1} |B_{m}| \mathcal{S}_{m}(\alpha) + \delta_{\eta}(n) \mathcal{R}_{n}(\alpha)\Big)^{\top}\Bigg] \nonumber \\
    & =: \frac{1}{n}\Big[\mathcal{V}_{n}(\alpha) + K_{n} \bar\bbeta_{n}^{\sgd}(\alpha)^{\otimes 2} - H_{n}(\alpha) \bar\bbeta_{n}^{\sgd}(\alpha)^{\top} - \bar\bbeta_{n}^{\sgd}(\alpha) H_{n}(\alpha)^{\top}\Big]. 
\end{align}
As such, the estimation of $\breve\Sigma(\alpha)$ reduces to recursively computing $\{\mathcal{V}_{n}(\alpha), K_{n}, H_{n}(\alpha), \bar\bbeta_{n}^{\sgd}(\alpha)\}$ with respect to $n.$ We provided the pseudo codes of the recursion in Algorithm~\ref{algo_longrun}. We shall further establish the convergence rate of the proposed online estimator $\hat\Sigma_n(\alpha)$ in Theorem~\ref{thm_longrun_precision_old}. 

\begin{algorithm}[hbt!]
   \setstretch{1.25}
   \caption{Online estimation of the long-run covariance matrix of ASGD dropout}
   \label{algo_longrun}
   \KwData{Sequential random samples $(y_1,\bm{x}_1),\ldots,(y_n,\bm{x}_n)$; sequential dropout matrices $D_1,\ldots,D_n$; constant learning rate $\alpha$; predefined sequences $\{\eta_m\}_{m\in\NN}$}
   \KwResult{ASGD dropout $\bar\bbeta_{n+1}^{\sgd}(\alpha)$; estimated long-run covariance matrix $\tilde{\Sigma}_{n+1}(\alpha)$} 
   \Init{$\breve\bbeta_0(\alpha)=\bar\bbeta_0^{\sgd}(\alpha)=\mathcal{R}_0(\alpha)\gets0$,}{$\psi(0)\gets1$, $\delta_{\eta}(0)\gets1$, $K_0=H_0(\alpha)\gets1$, $\mathcal{V}_0(\alpha)\gets0$
   }
   \For{$n=0,1,2,3,\ldots$}{
   $\breve\bbeta_{n+1}(\alpha) \gets \breve\bbeta_n(\alpha) + \alpha D_n\bm{x}_n\big(y_n - \bm{x}_n^{\top}D_n\breve\bbeta_{n-1}(\alpha)\big)$ \Comment*[r]{SGD dropout}
   $\bar\bbeta_{n + 1}^{\sgd}(\alpha) \gets \{n\bar\bbeta_{n}^{\sgd}(\alpha) + \breve\bbeta_{n+1}(\alpha)\}/(n + 1)$ \Comment*[r]{ASGD dropout}
   \eIf{$n + 1 < \eta_{\psi(n) + 1}$}{
   $\mathcal{R}_{n + 1}(\alpha) \gets \mathcal{R}_{n}(\alpha) + \breve\bbeta_{n + 1}(\alpha)$, $\delta_{\eta}(n + 1) \gets \delta_{\eta}(n) + 1$\;
   $K_{n + 1} \gets K_{n} - \delta_{\eta}^{2}(n) + \delta_{\eta}^{2}(n + 1)$, $\psi(n + 1) \gets \psi(n)$\;
   $H_{n + 1}(\alpha) \gets H_{n}(\alpha) - \delta_{\eta}(n) \mathcal{R}_{n}(\alpha) + \delta_{\eta}(n + 1) \mathcal{R}_{n + 1}(\alpha)$\;
   $\mathcal{V}_{n + 1}(\alpha) \gets \mathcal{V}_{n}(\alpha) - \mathcal{R}_{n}(\alpha)^{\otimes 2} + \mathcal{R}_{n + 1}(\alpha)^{\otimes 2}$\;
   }{
   $\mathcal{R}_{n + 1}(\alpha) \gets \breve\bbeta_{n + 1}(\alpha)$, $\delta_{\eta}(n + 1) \gets 1$\;
   $\psi(n + 1) \gets \psi(n) + 1$\;
   $K_{n + 1} \gets K_{n} + 1$, $H_{n + 1}(\alpha) \gets H_{n}(\alpha) + \mathcal{R}_{n + 1}(\alpha)$\;
   $\mathcal{V}_{n + 1}(\alpha) \gets \mathcal{V}_{n}(\alpha) + \mathcal{R}_{n + 1}(\alpha)^{\otimes 2}$\;
   }
   $\hat{\Sigma}_{n+1}(\alpha) \gets \big[\mathcal{V}_{n + 1}(\alpha) + K_{n + 1} \bar\bbeta^{\sgd}_{n + 1}(\alpha)^{\otimes 2} - H_{n + 1}(\alpha) \bar\bbeta^{\sgd}_{n + 1}(\alpha)^{\top} - \bar\bbeta^{\sgd}_{n + 1}(\alpha) H_{n + 1}(\alpha)^{\top}\big]/(n+1)$\Comment*[r]{Estimated long-run covariance matrix}
   }
\end{algorithm}
The rational behind Algorithm~\ref{algo_longrun} is as follows: if $n + 1 < \eta_{\psi(n) + 1}$, then the index $n + 1$ still belongs to the block $B_{\psi(n)}$ and $\psi(n + 1) = \psi(n)$. Also we have $\mathcal{R}_{n + 1}(\alpha) = \mathcal{R}_{n}(\alpha) + \breve\bbeta_{n+1}(\alpha)$ and $\delta_{\eta}(n + 1) = \delta_{\eta}(n) + 1$. Consequently, $\{K_{n + 1}, \mathcal{V}_{n + 1}(\alpha), H_{n + 1}(\alpha)\}$ can be recursively updated via
\begin{align*}
    & K_{n + 1} = K_{n} - |\delta_{\eta}(n)|^{2} + |\delta_{\eta}(n + 1)|^{2}, \\
    & \mathcal{V}_{n + 1}(\alpha) = \mathcal{V}_{n}(\alpha) - \mathcal{R}_{n}(\alpha)^{\otimes 2} + \mathcal{R}_{n + 1}(\alpha)^{\otimes 2}, \cr
    & H_{n + 1}(\alpha) = H_{n}(\alpha) - \delta_{\eta}(n) \mathcal{R}_{n}(\alpha) + \delta_{\eta}(n + 1) \mathcal{R}_{n + 1}(\alpha).
\end{align*}
Otherwise, if $n + 1 = \eta_{\psi(n)}$, we have $\psi(n + 1) = \psi(n) + 1$. Hence $\mathcal{R}_{n + 1}(\alpha) = \breve\bbeta_{n+1}(\alpha)$ and $\delta_{\eta}(n + 1) = 1$. In this case, $\{K_{n + 1}, \mathcal{V}_{n + 1}(\alpha), H_{n + 1}(\alpha)\}$ can be recursively updated as follows,
\begin{align*}
    & K_{n + 1} = K_{n} + 1,\\
    & \mathcal{V}_{n + 1}(\alpha) = \mathcal{V}_{n}(\alpha) + \mathcal{R}_{n + 1}(\alpha)^{\otimes 2},\\
    & H_{n + 1}(\alpha) = H_{n}(\alpha) + \mathcal{R}_{n + 1}(\alpha).
\end{align*}
As such, given $\breve\bbeta_1(\alpha), \ldots, \breve\bbeta_n(\alpha)$, the estimator $\hat\Sigma_n(\alpha)$ for the long-run covariance matrix $\breve\Sigma(\alpha)$ can be updated in an online manner, requiring only $O(1)$ memory storage.

\begin{theorem}[Precision of $\hat\Sigma_n(\alpha)$]
\label{thm_longrun_precision_old}
Let $\eta_{m} = \lfloor c m^{\zeta} \rfloor$ for some $c > 0$ and $\zeta > 1$. Let conditions in Theorem~\ref{thm_gmc_sgd} hold with some $q\ge4$. Then, we have
\begin{align*}
    \mathbb{E} \|\hat{\Sigma}_{n}(\alpha) - \breve\Sigma(\alpha)\| \lesssim n^{(1/\zeta - 1) \vee (-1/(2\zeta))},
\end{align*}
where $\|\cdot\|$ denotes the operator norm, and the constant in $\lesssim$ depends on $c,q$ and $d$.
\end{theorem}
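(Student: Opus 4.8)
The plan is to split the error $\hat\Sigma_n(\alpha)-\breve\Sigma(\alpha)$ into four pieces: the non-stationarity coming from the fixed start $\breve\bbeta_0$, the replacement of the stationary mean by the running average $\bar\bbeta_n^{\sgd}(\alpha)$, a deterministic bias of the non-overlapping batched-means construction, and the stochastic fluctuation of the block quadratics. Throughout I would work with the stationary sequence $\breve\bbeta_k^{\circ}(\alpha)$ of Theorem~\ref{thm_gmc_sgd}, the autocovariances $\gamma_j:=\EE[(\breve\bbeta_0^{\circ}(\alpha)-\breve\bbeta)(\breve\bbeta_j^{\circ}(\alpha)-\breve\bbeta)^{\top}]$, and the centered block sums $\bm{S}_m:=\sum_{k\in B_m}(\breve\bbeta_k^{\circ}(\alpha)-\breve\bbeta)$. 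Two preliminary facts are needed. First, taking expectations in~\eqref{eq_dropout_sgd_stationary} and using $\EE[\breve{\bm{b}}_1(\alpha)]=\bm{0}$ (which holds because $\EE[\XX_{1,p}]\breve\bbeta=\EE[y_1\bm{x}_1]$) together with the invertibility of $\EE[\XX_{1,p}]$ shows that the stationary mean is exactly $\breve\bbeta$, so that $\breve\Sigma(\alpha)=\sum_{j\in\ZZ}\gamma_j$. Second, the GMC bound~\eqref{eq_sgd_gmc} gives $\|\gamma_j\|=O(\breve r_{\alpha,q}^{\,|j|})$, so $\breve\Sigma(\alpha)$ and $\Gamma:=\sum_{j\in\ZZ}|j|\,\gamma_j$ both converge absolutely. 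I also record the geometric bounds $\psi(n)\asymp n^{1/\zeta}$, $\max_m|B_m|\asymp n^{1-1/\zeta}$ and $\sum_{m\le\psi(n)}|B_m|^2\asymp n^{2-1/\zeta}$ forced by $\eta_m=\lfloor cm^{\zeta}\rfloor$.

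First I would stationarize: replacing each $\breve\bbeta_k(\alpha)$ in~\eqref{eq_longrun_est} by $\breve\bbeta_k^{\circ}(\alpha)$ changes $\hat\Sigma_n(\alpha)$ by $O(1/n)$ in expected operator norm, because the coupling error $\breve\bbeta_k(\alpha)-\breve\bbeta_k^{\circ}(\alpha)$ has $L^q$-norm $O(\breve r_{\alpha,q}^{\,k})$ and is thus summable and concentrated in the earliest block; the same estimate controls $\bar\bbeta_n^{\sgd}(\alpha)-n^{-1}\sum_k\breve\bbeta_k^{\circ}(\alpha)$. Next I would remove the random centering. Expanding $(\sum_{k\in B_m}[\breve\bbeta_k^{\circ}(\alpha)-\bar\bbeta_n^{\sgd}(\alpha)])^{\otimes 2}$ produces the true-mean-centered quadratic $\bm{S}_m^{\otimes 2}$, two cross terms weighted by $|B_m|$, and a term weighted by $\sum_m|B_m|^2+\delta_{\eta}(n)^2$. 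Using $\EE\|\bar\bbeta_n^{\sgd}(\alpha)-\breve\bbeta\|_2^2=O(1/n)$ (a consequence of the summable autocovariances), the block bound $\EE\|\bm{S}_m\|_2^2=O(|B_m|)$, and $\sum_m|B_m|^2\asymp n^{2-1/\zeta}$, a Cauchy--Schwarz argument bounds every centering correction by $O(n^{-1/\zeta})$. Since $n^{-1/\zeta}\le n^{-1/(2\zeta)}$, this is below the target, and the problem reduces to the estimator $\hat\Sigma_n^{(0)}$ built from the $\bm{S}_m^{\otimes 2}$ and the analogous tail quadratic.

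For the deterministic bias I would evaluate $\EE[\hat\Sigma_n^{(0)}]$ blockwise. For a block of length $L$, stationarity gives $\EE[(\sum_k(\breve\bbeta_k^{\circ}(\alpha)-\breve\bbeta))^{\otimes 2}]=\sum_{|l|<L}(L-|l|)\gamma_l=L\,\breve\Sigma(\alpha)-\Gamma+R_L$, where $R_L=\sum_{|l|\ge L}(|l|-L)\gamma_l$ satisfies $\|R_L\|=O(L\,\breve r_{\alpha,q}^{\,L})$ and is bounded uniformly by $\sum_j|j|\,\|\gamma_j\|$. Summing over the $\psi(n)-1$ full blocks and the tail block and using $\sum_m|B_m|+\delta_{\eta}(n)=n$ yields $\EE[\hat\Sigma_n^{(0)}]=\breve\Sigma(\alpha)-(\psi(n)/n)\,\Gamma+O(1/n)$, so $\|\EE[\hat\Sigma_n^{(0)}]-\breve\Sigma(\alpha)\|=O(\psi(n)/n)=O(n^{1/\zeta-1})$; the leading term is the accumulation of the $-\Gamma$ correction over the $\psi(n)$ blocks, while the $R_L$ contributions sum to only $O(1/n)$.

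The crux is the fluctuation $\EE\|\hat\Sigma_n^{(0)}-\EE[\hat\Sigma_n^{(0)}]\|^2$. The $2q\ge 8$ moments from Assumption~\ref{asm_fclt_asgd} together with the geometric functional dependence supplied by the GMC yield the fourth-moment bound $\EE\|\bm{S}_m\|_2^4=O(|B_m|^2)$, whence $\mathrm{Var}(\bm{S}_m^{\otimes 2})=O(|B_m|^2)$ entrywise; this is the only place $q\ge 4$ is used. The diagonal contribution is therefore $O(n^{-2}\sum_m|B_m|^2)\asymp n^{-1/\zeta}$. For the off-diagonal terms I would bound $\mathrm{Cov}(\bm{S}_m^{\otimes 2},\bm{S}_{m'}^{\otimes 2})$: for $|m-m'|\ge 2$ the two blocks are separated by a whole intervening block, so by GMC the covariance is geometrically small in its length and sums to $O(1)$; for adjacent blocks the shared boundary makes each covariance $O(1)$ through the bounded cross-covariance $\sum_{i\in B_m,\,j\in B_{m+1}}\gamma_{i-j}$, so all adjacent pairs together contribute $O(n^{-2}\psi(n))=O(n^{1/\zeta-2})$, which is negligible precisely because $\zeta>1$. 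Hence $\EE\|\hat\Sigma_n^{(0)}-\EE[\hat\Sigma_n^{(0)}]\|^2=O(n^{-1/\zeta})$ and the fluctuation is $O(n^{-1/(2\zeta)})$ in expected norm. Collecting the four bounds gives $\EE\|\hat\Sigma_n(\alpha)-\breve\Sigma(\alpha)\|\lesssim n^{-1}+n^{-1/\zeta}+n^{1/\zeta-1}+n^{-1/(2\zeta)}\lesssim n^{(1/\zeta-1)\vee(-1/(2\zeta))}$. I expect the fourth-moment control of the block quadratics and, above all, the summability of the inter-block covariances to be the main obstacle, since both rest on translating the geometric-moment contraction of Theorem~\ref{thm_gmc_sgd} into sharp bounds on the second- and fourth-order joint moments of the stationary sequence across block boundaries.
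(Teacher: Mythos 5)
Your proposal is correct in substance but follows a genuinely different route from the paper. The paper does not re-derive the batched-means rates: it first reduces the operator-norm error to entrywise bounds via Lemma~\ref{lemma_square_matrix}, handles off-diagonal entries by a polarization identity that rewrites $\breve\sigma_{ij}$ as a difference of two scalar long-run variances, and then cites Theorems 1(i) and 2(iii) of \textcite{xiao_single-pass_2011} to get $\EE(v_{ij,n}^{\circ}-n\breve\sigma_{ij})^2\lesssim n^{(2/\zeta)\vee(2-1/\zeta)}$ for the stationary sequence in one stroke; the effect of an arbitrary initialization is isolated as $\EE\|V_n-V_n^{\circ}\|\lesssim n^{1-1/\zeta}$ by GMC coupling applied to the algorithm's components $\mathcal{V}_n$, $H_n$, $K_n$, $\bar\bbeta_n^{\sgd}$. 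You instead prove the stationary part from scratch: your blockwise bias expansion correctly identifies the leading bias as $-(\psi(n)/n)\Gamma$ with $\Gamma=\sum_j|j|\gamma_j$, and your variance computation recovers $n^{-1/\zeta}$ for the squared fluctuation, so your decomposition makes transparent which branch of $n^{(1/\zeta-1)\vee(-1/(2\zeta))}$ is bias and which is noise---information the paper's citation-based argument leaves implicit. Two caveats in your fluctuation step deserve writing out: the bound $\EE\|\bm{S}_m\|_2^4=O(|B_m|^2)$ requires a Rosenthal/Burkholder-type inequality under the geometric functional dependence of Theorem~\ref{thm_gmc_sgd} (this is where $q\ge4$ enters, as you note), and your claim that each adjacent-block covariance $\mathrm{Cov}(\bm{S}_m^{\otimes2},\bm{S}_{m+1}^{\otimes2})$ is $O(1)$ needs fourth-cumulant summability, not just the bounded cross-covariance sum you invoke---though this refinement is unnecessary, since plain Cauchy--Schwarz gives $O(|B_m|^2)$ per adjacent pair and hence a total contribution $n^{-2}\psi(n)\max_m|B_m|^2\asymp n^{-1/\zeta}$, matching the diagonal term anyway. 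Your treatment of the initialization (coupling error $O(1/n)$ plus centering corrections $O(n^{-1/\zeta})$) is a finer split of what the paper bounds in one piece, and both are dominated by the target rate, so the approaches agree in the end; yours buys self-containedness and an explicit bias constant, the paper's buys brevity and a direct link to the optimality discussion following the theorem.
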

In particular, for $\zeta=3/2,$
\begin{align}
    \mathbb{E} \|\hat{\Sigma}_{n}(\alpha) - \breve\Sigma(\alpha)\| \lesssim n^{-1/3},
\end{align}
and this rate is optimal among long-run covariance estimators, even when comparing to offline estimation. See \textcite{xiao_single-pass_2011} for details.

By the estimation procedure summarized in Algorithm~\ref{algo_longrun}, we can asymptotically estimate the long-run covariance matrix of the SGD dropout iterates $\bar\bbeta_n^{\sgd}(\alpha)$ for any arbitrarily fixed initial vector. For some given confidence level $\omega\in(0,1),$ in the $n$-th iteration the online confidence interval for each coordinate $\bbeta_j^*$, $j=1,\ldots,d$, of the unknown parameter $\bbeta^*$ in model (\ref{eq_model_sgd}) is
\begin{equation}
    \label{eq_CI_1d}
    \mathrm{CI}_{\omega,n,j} := \Big[\bar\bbeta_{n,j}^{\sgd}(\alpha) - z_{1-\omega/2}\sqrt{\hat\sigma_{n,jj}(\alpha)/n}, \,\, \bar\bbeta_{n,j}^{\sgd}(\alpha) + z_{1-\omega/2}\sqrt{\hat\sigma_{n,jj}(\alpha)/n}\Big],
\end{equation}
with $z_{1-\omega/2}$ denoting the $(1-\omega/2)$-percentile of the standard normal distribution. Here, $\hat\sigma_{n,jj}(\alpha)$ is the $j$-th diagonal of the proposed online long-run covariance estimator $\hat\Sigma_n(\alpha)$ in (\ref{eq_longrun_est}), and $\bar\bbeta_{n,j}^{\sgd}(\alpha)$ is the $j$-th coordinate of the averaged SGD dropout estimate $\bar\bbeta_n^{\sgd}(\alpha)$. Furthermore, the online joint confidence regions for the vector $\bbeta^*$ is
\begin{equation}
    \label{eq_CI}
    \mathrm{CI}_{\omega,n} := \Big\{\bbeta\in\RR^d: \, n\big(\bar\bbeta_n^{\sgd}(\alpha) - \bbeta\big)^{\top}\hat\Sigma_n^{-1}(\alpha)\big(\bar\bbeta_n^{\sgd}(\alpha) - \bbeta\big)\le \chi^2_{d,1-\omega/2}\Big\},
\end{equation}
where $\chi^2_{d,1-\omega/2}$ is the $(1-\omega/2)$-percentile of the $\chi_d^2$ distribution with $d$ degrees of freedom.
\begin{corollary}[Asymptotic coverage probability]\label{cor_asymp_CI}
Suppose that Assumption~\ref{asm_fclt_asgd} holds and the learning rate $\alpha$ satisfies \eqref{eq_sgd_condition}. Given $\omega\in(0,1)$ and $\eta_{m} = \lfloor c m^{\zeta} \rfloor$ for some $c > 0$ and $\zeta > 1$, $\mathrm{CI}_{\omega,n,j}$ defined in (\ref{eq_CI_1d}), and $\mathrm{CI}_{\omega,n}$ defined in (\ref{eq_CI}) are asymptotic $100(1-\omega)\%$ confidence intervals, that is, $\PP\big(\bbeta_j^* \in \mathrm{CI}_{\omega,n,j}\big) \rightarrow 1-\omega$ for all $j=1,\ldots,d$, and $\PP\big(\bbeta^* \in \mathrm{CI}_{\omega,n}\big) \rightarrow 1-\omega,$ as $n\rightarrow\infty$. More generally, for any $d$-dimensional unit-length vector $\bm{v}$ with $\|\bm{v}\|_2=1$, and $z_{1-\omega/2}$ the $(1-\omega/2)$-quantile of the standard normal distribution,
\begin{equation}
    \label{eq_CI_projection}
    \mathrm{CI}_{\omega,n}^{\mathrm{Proj}} := \Big[\bm{v}^{\top}\bar\bbeta_n^{\sgd}(\alpha) - z_{1-\omega/2}\sqrt{n^{-1}(\bm{v}^{\top}\hat\Sigma_n(\alpha)\bm{v})}, \,\, \bm{v}^{\top}\bar\bbeta_n^{\sgd}(\alpha) + z_{1-\omega/2}\sqrt{n^{-1}(\bm{v}^{\top}\hat\Sigma_n(\alpha)\bm{v})}\Big]
\end{equation}
is an asymptotic $100(1-\omega)\%$ confidence interval for the one-dimensional projection $\bm{v}^{\top}\bbeta^*,$ that is, $\PP\big(\bm{v}^{\top}\bbeta^* \in \mathrm{CI}_{\omega,n}^{\mathrm{Proj}}\big) \rightarrow 1-\omega$, as $n\rightarrow\infty$.
\end{corollary}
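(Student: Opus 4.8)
The plan is to combine the averaged-SGD central limit theorem (Theorem~\ref{thm_clt_ave_sgd}) with the consistency of the online long-run covariance estimator, and then invoke Slutsky's theorem and the continuous mapping theorem for each of the three interval/region constructions. The two analytic inputs I would isolate first are: (i) $\sqrt{n}(\bar\bbeta_n^{\sgd}(\alpha) - \breve\bbeta) \Rightarrow \N(0,\breve\Sigma(\alpha))$, which is exactly Theorem~\ref{thm_clt_ave_sgd}; and (ii) $\hat\Sigma_n(\alpha) \to \breve\Sigma(\alpha)$ in probability in operator norm. Input (ii) follows from Theorem~\ref{thm_longrun_precision_old} (whose moment requirement $q\ge4$ I would carry in Assumption~\ref{asm_fclt_asgd}): with $\eta_m = \lfloor c m^\zeta\rfloor$ and $\zeta > 1$, both exponents $1/\zeta - 1$ and $-1/(2\zeta)$ are strictly negative, so $\EE\|\hat\Sigma_n(\alpha) - \breve\Sigma(\alpha)\| \to 0$, and Markov's inequality upgrades this $L^1$ bound to convergence in probability.

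For the coordinate-wise interval $\mathrm{CI}_{\omega,n,j}$, I would extract the $j$-th coordinate of input (i) to get $\sqrt{n}(\bar\bbeta_{n,j}^{\sgd}(\alpha) - \breve\bbeta_j) \Rightarrow \N(0,\breve\Sigma_{jj}(\alpha))$, and note $\hat\sigma_{n,jj}(\alpha) \to \breve\Sigma_{jj}(\alpha)$ in probability as a continuous coordinate projection of (ii). Provided $\breve\Sigma_{jj}(\alpha) > 0$, Slutsky's theorem yields that the studentized statistic $\sqrt{n}(\bar\bbeta_{n,j}^{\sgd}(\alpha) - \breve\bbeta_j)/\sqrt{\hat\sigma_{n,jj}(\alpha)}$ converges to $\N(0,1)$, whence $\PP(\breve\bbeta_j \in \mathrm{CI}_{\omega,n,j}) \to 1-\omega$. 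The projection interval $\mathrm{CI}_{\omega,n}^{\mathrm{Proj}}$ is handled identically after replacing the coordinate projection by the linear functional $\bm{v}^\top(\cdot)$: this gives $\sqrt{n}\,\bm{v}^\top(\bar\bbeta_n^{\sgd}(\alpha) - \breve\bbeta) \Rightarrow \N(0, \bm{v}^\top\breve\Sigma(\alpha)\bm{v})$ together with $\bm{v}^\top\hat\Sigma_n(\alpha)\bm{v} \to \bm{v}^\top\breve\Sigma(\alpha)\bm{v}$ in probability, so Slutsky again delivers asymptotic standard normality of the studentized functional.

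For the joint region $\mathrm{CI}_{\omega,n}$, I would view the pair $(\sqrt{n}(\bar\bbeta_n^{\sgd}(\alpha) - \breve\bbeta),\, \hat\Sigma_n(\alpha))$ as converging jointly to $(Z, \breve\Sigma(\alpha))$ with $Z \sim \N(0,\breve\Sigma(\alpha))$, joint convergence being automatic because the second coordinate has a constant limit. When $\breve\Sigma(\alpha)$ is nonsingular, the map $(\bm{w}, S) \mapsto \bm{w}^\top S^{-1}\bm{w}$ is continuous at $(Z, \breve\Sigma(\alpha))$, so the continuous mapping theorem gives $n(\bar\bbeta_n^{\sgd}(\alpha) - \breve\bbeta)^\top \hat\Sigma_n^{-1}(\alpha)(\bar\bbeta_n^{\sgd}(\alpha) - \breve\bbeta) \Rightarrow Z^\top\breve\Sigma^{-1}(\alpha)Z \sim \chi^2_d$, and matching quantiles then gives the claimed coverage. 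Equivalently, one can split the quadratic form as $\bm{w}^\top\breve\Sigma^{-1}\bm{w} + \bm{w}^\top(\hat\Sigma_n^{-1}-\breve\Sigma^{-1})\bm{w}$, the remainder being $O_\PP(1)\cdot o_\PP(1) = o_\PP(1)$ by submultiplicativity of the operator norm.

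The main obstacle is twofold. First, every studentization step needs the relevant variance to be strictly positive and, for the joint region, $\breve\Sigma(\alpha)$ to be invertible; so the real work is to verify that the long-run covariance is non-degenerate, which I would establish from its series representation in Theorem~\ref{thm_clt_ave_sgd} together with the non-vanishing implicit dropout noise carried by $\breve{\bm{b}}_k(\alpha)$ and the strong convexity recorded in Remark~\ref{rmk_interpretation_learning_rate}. Second, and more delicately, all three limits center at the $\ell^2$-regularized target $\breve\bbeta = (\EE[\XX_{1,p}])^{-1}\EE[y_1\bm{x}_1]$ rather than at $\bbeta^*$; since $\breve\bbeta = \bbeta^*$ holds in general only when $\EE[\bm{x}_1\bm{x}_1^\top]$ is diagonal (or $p=1$), coverage of $\bbeta^*$ exactly as stated requires this identification, and I would make the orthogonality/identification condition explicit so that the $\breve\bbeta$-coverage transfers verbatim to $\bbeta^*$.
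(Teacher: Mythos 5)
Your proposal is correct and takes essentially the same route as the paper, whose entire proof is the one-sentence remark following the corollary: combine the quenched CLT for $\bar\bbeta_n^{\sgd}(\alpha)$ (Theorem~\ref{thm_clt_ave_sgd}) with the consistency of $\hat\Sigma_n(\alpha)$ from Theorem~\ref{thm_longrun_precision_old} (upgraded from the $L^1$ bound to convergence in probability via Markov, exactly as you do, using that $1/\zeta-1<0$ and $-1/(2\zeta)<0$ for $\zeta>1$), then apply Slutsky. Your coordinate-wise, projection, and quadratic-form details — including the joint convergence of $(\sqrt{n}(\bar\bbeta_n^{\sgd}(\alpha)-\breve\bbeta),\hat\Sigma_n(\alpha))$ and the continuous-mapping step yielding the $\chi_d^2$ limit — are precisely the routine the paper omits.

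Two of your caveats are genuine and go beyond the paper's treatment. You correctly carry the $q\ge4$ moment requirement of Theorem~\ref{thm_longrun_precision_old}, which the corollary's bare hypothesis ``Assumption~\ref{asm_fclt_asgd} holds'' leaves implicit, and you rightly observe that positivity of the diagonal entries of $\breve\Sigma(\alpha)$ and invertibility of $\breve\Sigma(\alpha)$ are nowhere verified — the paper tacitly assumes them (the region (\ref{eq_CI}) presupposes $\hat\Sigma_n^{-1}(\alpha)$ exists). More importantly, the centering mismatch you flag is real: all the limit theorems center at $\breve\bbeta=(\EE[\XX_{1,p}])^{-1}\EE[y_1\bm{x}_1]$, and since $\EE[y_1\bm{x}_1]=\EE[\XX_1]\bbeta^*$, one has $\breve\bbeta-\bbeta^*=-(1-p)(\EE[\XX_{1,p}])^{-1}\EE[\overline{\XX}_1]\bbeta^*$ (this identity appears inside the proof of Lemma~\ref{lemma_true_l2}), which vanishes only under an identification condition such as $\EE[\overline{\XX}_1]\bbeta^*=0$ — for instance a diagonal second-moment matrix $\EE[\bm{x}_1\bm{x}_1^{\top}]$, as in the paper's simulations with i.i.d.\ standard normal covariates. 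As printed, the corollary's coverage claims for $\bbeta^*$ hold only under that extra condition; without it your argument (like the paper's) proves coverage of $\breve\bbeta$, and making the identification explicit, as you propose, is the correct repair. One further slip you could have caught: the region (\ref{eq_CI}) uses the $(1-\omega/2)$-percentile $\chi^2_{d,1-\omega/2}$, whereas your $\chi^2_d$ limit delivers asymptotic coverage $1-\omega$ only with the $(1-\omega)$-quantile; with the quantile as written the joint region has asymptotic coverage $1-\omega/2$.
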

By the quenched CLT of the averaged SGD dropout sequence $\{\bar\bbeta_n^{\sgd}(\alpha)\}_{n\in\NN}$ in Theorem~\ref{thm_fclt_asgd} and the consistency of $\hat\Sigma_n(\alpha)$ in Theorem~\ref{thm_longrun_precision_old}, we can apply Slutsky's theorem and obtain the results in Corollary~\ref{cor_asymp_CI}. In Section~\ref{sec_simulation}, we shall validate the proposed online inference method by examining the estimation accuracy of the proposed online estimator $\hat\Sigma_n(\alpha)$ and the coverage probability of $\mathrm{CI}_{\omega,n}^{\mathrm{Proj}}$ under different settings.

\section{Simulation Studies}\label{sec_simulation}

In this section, we present the results of the numerical experiments to demonstrate the validity of the proposed online inference methodology. The codes for reproducing all results and figures can be found online\footnote{\url{https://github.com/jiaqili97/Dropout_SGD}}.

\subsection{Sharp Range of the Learning Rate}
The GD dropout iterates can be defined via the recursion (\ref{eq_dropout_gd}), $\tilde\bbeta_k(\alpha)-\tilde\bbeta = A_k(\alpha)(\tilde\bbeta_{k-1}(\alpha) - \tilde\bbeta) + \bm{b}_k(\alpha)$, and the derived theory requires the learning rate $\alpha$ to satisfy $\alpha\|\XX\|<2$. Via a simulation study we show that this range is close to sharp to guarantee that the contraction constant
\begin{equation}
    r_{\alpha,2}^2=\sup_{\bm{v}\in\RR^d:\|\bm{v}\|_2=1}\EE\big\|A_1(\alpha)\bm{v}\big\|_2^2 = \lambda_{\max}\big(\EE[A_1^{\top}(\alpha)A_1(\alpha)]\big) <1.
\end{equation}
This then indicates that the condition $\alpha\|\XX\|<2$ in Lemma \ref{lemma_gmc_cond} can likely not be improved. 

For the $n\times d$ full design matrix $X$, we independently generate each entry of $X$ from the standard normal distribution. Since $\XX=X^{\top}X$, the upper bound $2/\lambda_{\max}(\XX)$ of the learning rate $\alpha$ can be computed. Then, we independently generate $N=500$ dropout matrices $D_i$, $i=1,\ldots,N$ with retaining probability $p.$ The simulation study evaluates the empirical contraction constant
\begin{equation}
    \hat r_{\alpha,2}^2 := \lambda_{\max}\Big(N^{-1}\sum_{i=1}^NA_i^{\top}(\alpha)A_i(\alpha)\Big),
\end{equation}
for different sample size $n$, dimension $d$, retaining probability $p,$ and learning rate $\alpha$.
\begin{table}[htbp!]
    \centering
    \begin{tabular}{c c c c c}
        \hline\hline
        $n,d$ & $2/\lambda_{\max}(\XX)$ & $p$ & $\alpha$ & $\hat r_{\alpha,2}^2$ \\
        \hline
        100, 5 & 0.0151 & 0.9 & 0.0150 & 0.97 \\
         &  &  & 0.0154 & 1.02 \\
        100, 50 & 0.0068 & 0.9 & 0.0067 & 0.93 \\
         &  &  & 0.0072 & 1.01 \\
         &  & 0.8 & 0.0068 & 0.90 \\
         &  &  & 0.0075 & 1.02 \\   
        100, 100 & 0.0052 & 0.9 & 0.0050 & 0.93 \\
         &  &  & 0.0057 & 1.15 \\
         &  & 0.5 & 0.0050 & 0.94 \\
         &  &  & 0.0075 & 1.06 \\
        \hline\hline
    \end{tabular}
    \caption{Effects of the learning rate $\alpha$ on the geometric moment contraction of the GD iterates with dropout.}
    \label{table_gmc}
\end{table}
Table~\ref{table_gmc} shows that even if the learning rate $\alpha$ exceeds the upper bound $2/\lambda_{\max}(\XX)$ by a small margin, the contraction will not hold any more since $\hat r_{\alpha,2}^2>1$. This indicates that the condition $\alpha\|\XX\|<2$ is close to sharp.

\subsection{Estimation of Long-Run Covariance Matrix}
In this section, we provide the simulation results of the proposed long-run covariance estimator $\hat\Sigma_n(\alpha)$ defined in (\ref{eq_longrun_est}), and its online version (\ref{eq_longrun_online}).

Figure~\ref{fig:gmc_example} shows the convergence of the GD and SGD iterates with dropout. The coordinates of the true regression vector $\bbeta^*$ are equidistantly spaced between 0 and 1. One can see that the initialization is quickly forgotten in both GD and SGD algorithms.
\begin{figure}[!hbt]
    \centering
    \begin{subfigure}[b]{0.49\textwidth}
        \centering
        \includegraphics[width=1\textwidth]{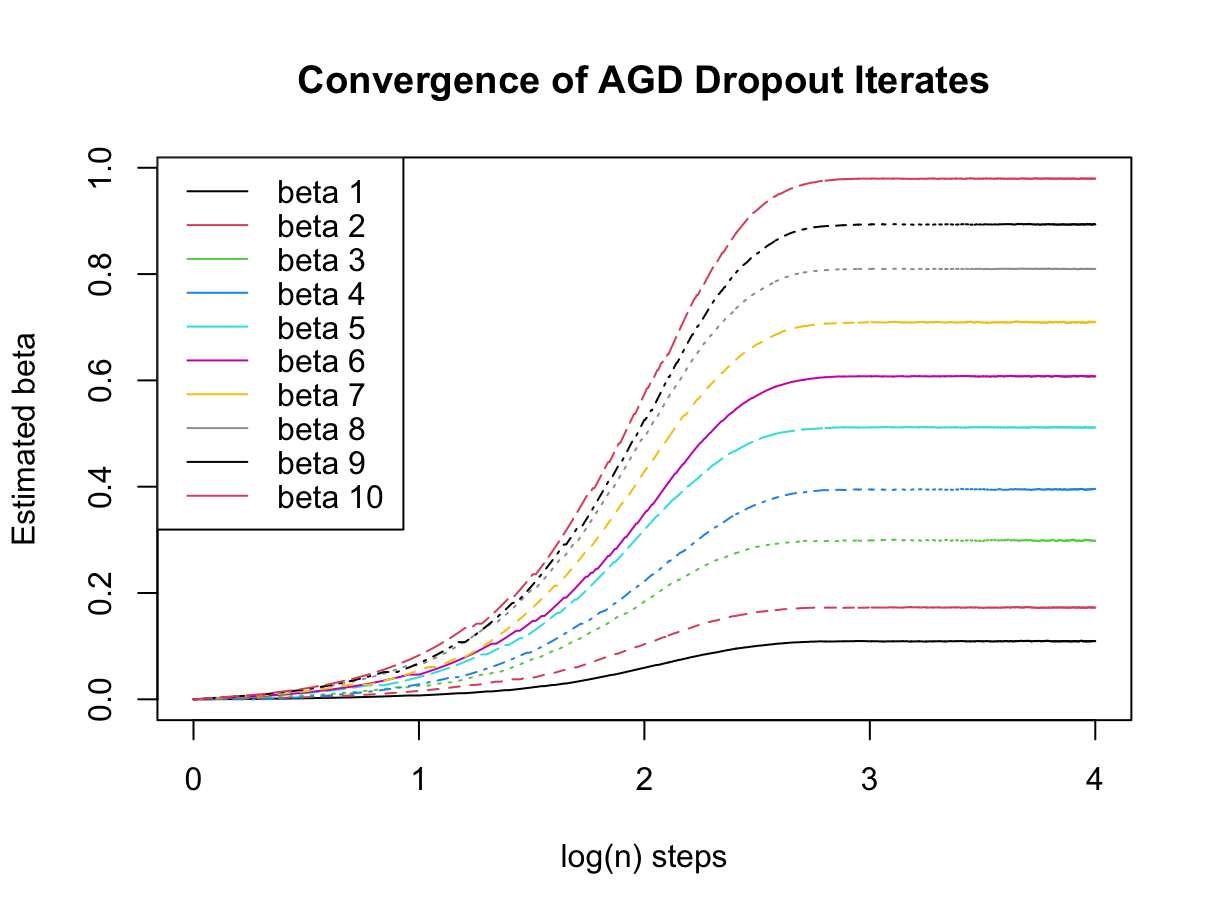}
        \caption{AGD.}
    \end{subfigure}
    \begin{subfigure}[b]{0.49\textwidth}
        \centering
        \includegraphics[width=1\textwidth]{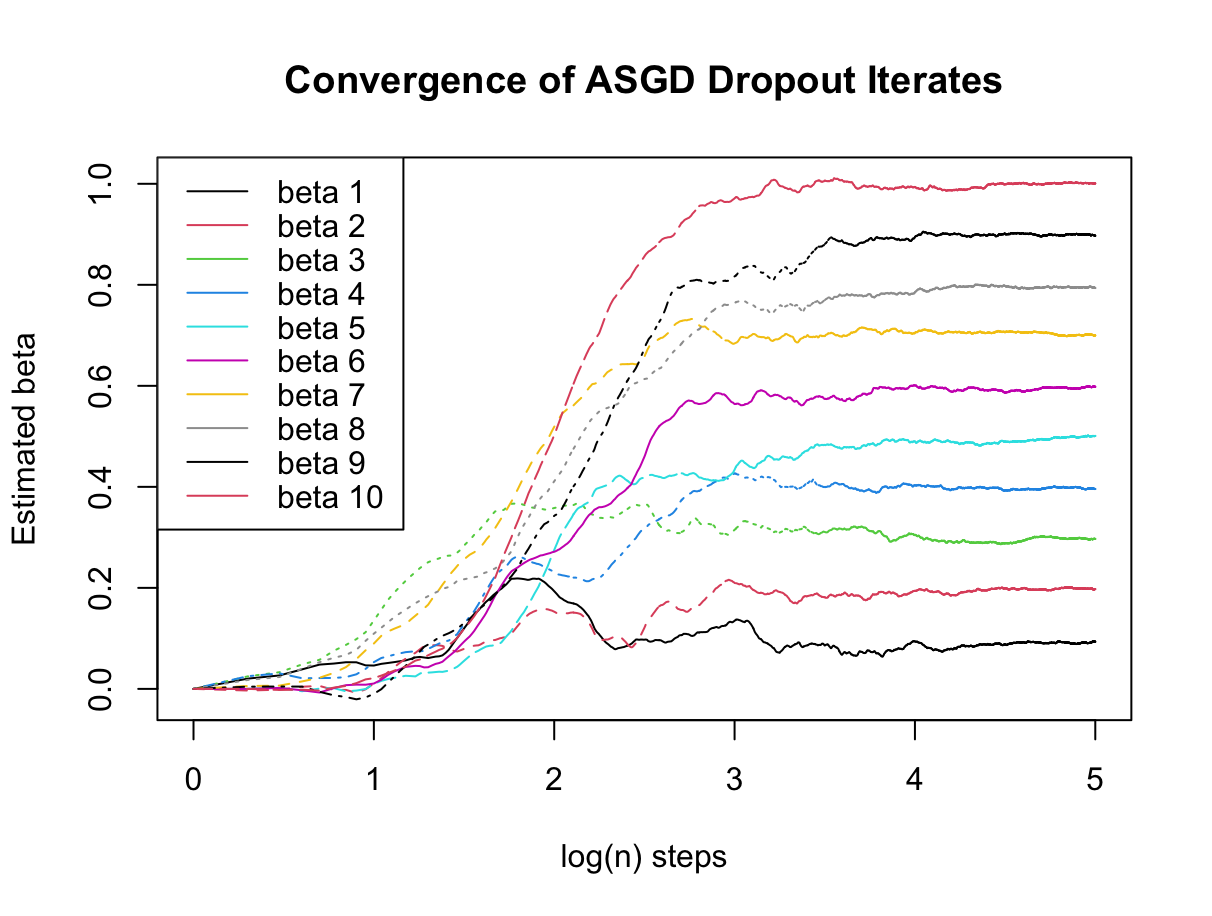}
        \caption{ASGD.}
    \end{subfigure}
    \caption{Convergence traces of AGD and ASGD iterates with dropout regularization based on a single run, with dimension $d=10$ and initialization at zero. The coordinates of the true parameter $\bbeta^*$ are equidistantly spaced between 0 and 1, the learning rate $\alpha=0.01$, and the retaining probability $p=0.9$. Each curve represents the convergence trace of one coordinate.}
    \label{fig:gmc_example}
\end{figure}

Figure~\ref{fig:var_asgd_convergence} evaluates the performance of the online long-run covariance estimator $\hat\Sigma_n(\alpha)$ in the same setting as considered before. As there is no closed-form expression for the true long-run covariance matrices $\breve\Sigma(\alpha)$ defined in Theorem \ref{thm_clt_ave_sgd}, we shall only report the convergence trace of the estimated long-run covariance matrix. For the non-overlapping blocks $B_m=\{\eta_m,\eta_{m+1},\ldots,\eta_{m+1}-1\}$, $m=1,\ldots,M$, defined in (\ref{eq_block}), the number of blocks is $M=\lfloor\sqrt{n}\rfloor$ and $\eta_m=m^2$. In Figure~\ref{fig:var_asgd_convergence}, we can see that the long-run variances of each coordinate of the ASGD dropout iterates $\bar\bbeta_n^{\sgd}(\alpha)$ converges as the number of iterations $n$ grows. The length of the joint confidence interval for the one-dimensional projection $\bm{v}^{\top}\bbeta^*$ is also shown in Figure~\ref{fig:asgd_CI_length}, where we set each coordinate of the unit-length vector $\bm{v}$ to be $1/\sqrt{d}$. In the next section, we shall show that by using these estimated long-run variances, the online confidence intervals achieve asymptotically the nominal coverage probability.
\begin{figure}[!hbt]
    \centering
    \includegraphics[width=0.95\textwidth]{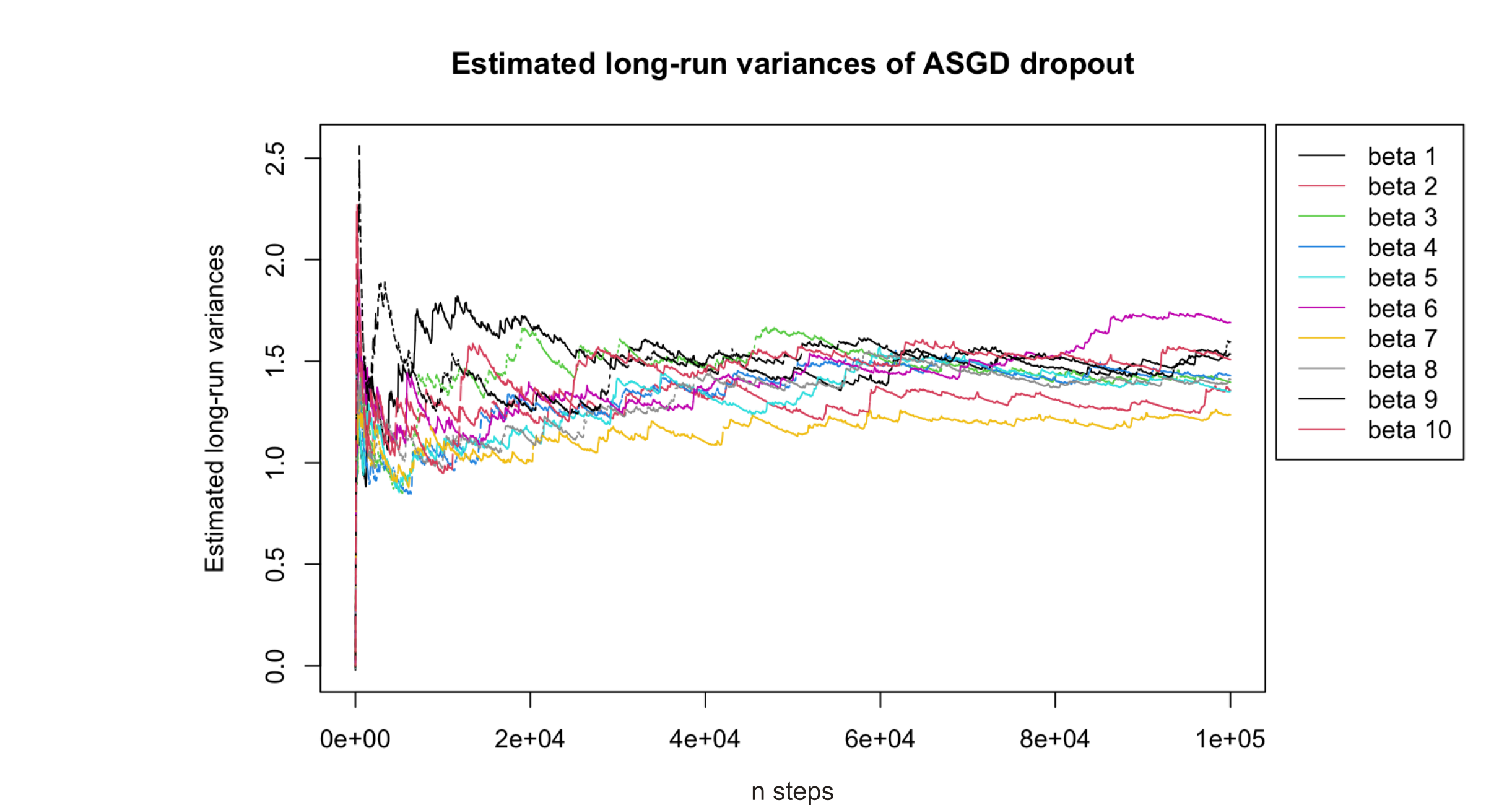}
    \caption{Estimated long-run variances of ASGD dropout iterates, i.e., diagonals of the estimated long-run covariance matrix $\hat\Sigma_n(\alpha)$ for the same setting as in Figure \ref{fig:gmc_example}.}
    \label{fig:var_asgd_convergence}
\end{figure}

\begin{figure}[!hbt]
    \centering
    \includegraphics[width=0.71\textwidth]{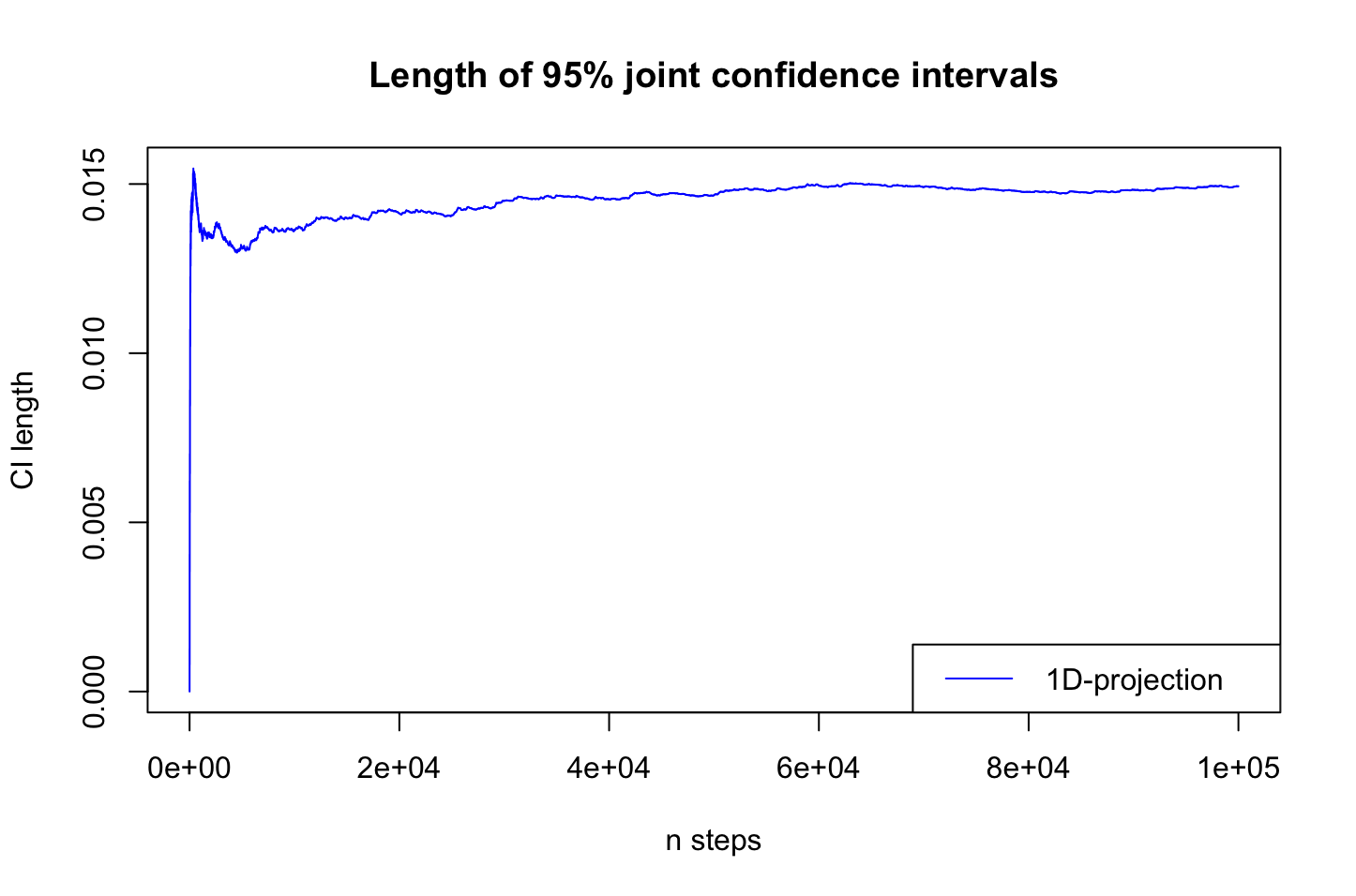}
    \caption{Length of the joint CI for the one-dimensional projection $\bm{v}^{\top}\bbeta^*$ of the ASGD dropout iterates for the same setting as in Figure \ref{fig:gmc_example}.}
    \label{fig:asgd_CI_length}
\end{figure}

\subsection{Online Confidence Intervals of ASGD Dropout Iterates}

Recall the $100(1-\omega)\%$ online confidence interval $\mathrm{CI}_{\omega,n,j}$ in (\ref{eq_CI_1d}) for each coordinate $\bbeta_j^*$ of the true parameter $\bbeta^*$, for $j=1,\ldots,d$. Let dimension $d=10$. We constructed the confidence interval $\mathrm{CI}_{\omega,n,j}$ for each $j=1,\ldots,d$, and averaged the coverage probabilities of $\mathrm{CI}_{\omega,n,j}$ over $j$. As shown in Figure~\ref{fig:asgd_CI_prob_d10}, the averaged coverage probabilities converge to the nominal coverage rate 0.95 as the number of steps $n$ increases. 

Furthermore, recall the $100(1-\omega)\%$ joint online confidence interval $\mathrm{CI}_{\omega,n}^{\mathrm{Proj}}$ in (\ref{eq_CI_projection}) for the one-dimensional projection of the true parameter $\bbeta^*$, i.e., $\bm{v}^{\top}\bbeta^*$. Let dimension $d=50$. A similar performance in convergence of the coverage probability is observed in Figure~\ref{fig:asgd_CI_prob_d50}. In Tables~\ref{table_prob_d3}--\ref{table_prob_d50}, we report the coverage probabilities of the joint confidence intervals $\mathrm{CI}_{\omega,n}^{\mathrm{Proj}}$ under different settings. In particular, we consider the dimensions $d=3,20$ and $50$, the retaining probabilities of the dropout regularization $p=0.9$ and $0.5$, and the constant learning rates $\alpha$ ranging from $0.01$ to $0.1$. All the results demonstrate the effectiveness of our proposed online inference method.

\clearpage

\begin{figure}[!hbt]
    \centering
    \includegraphics[width=0.8\textwidth]{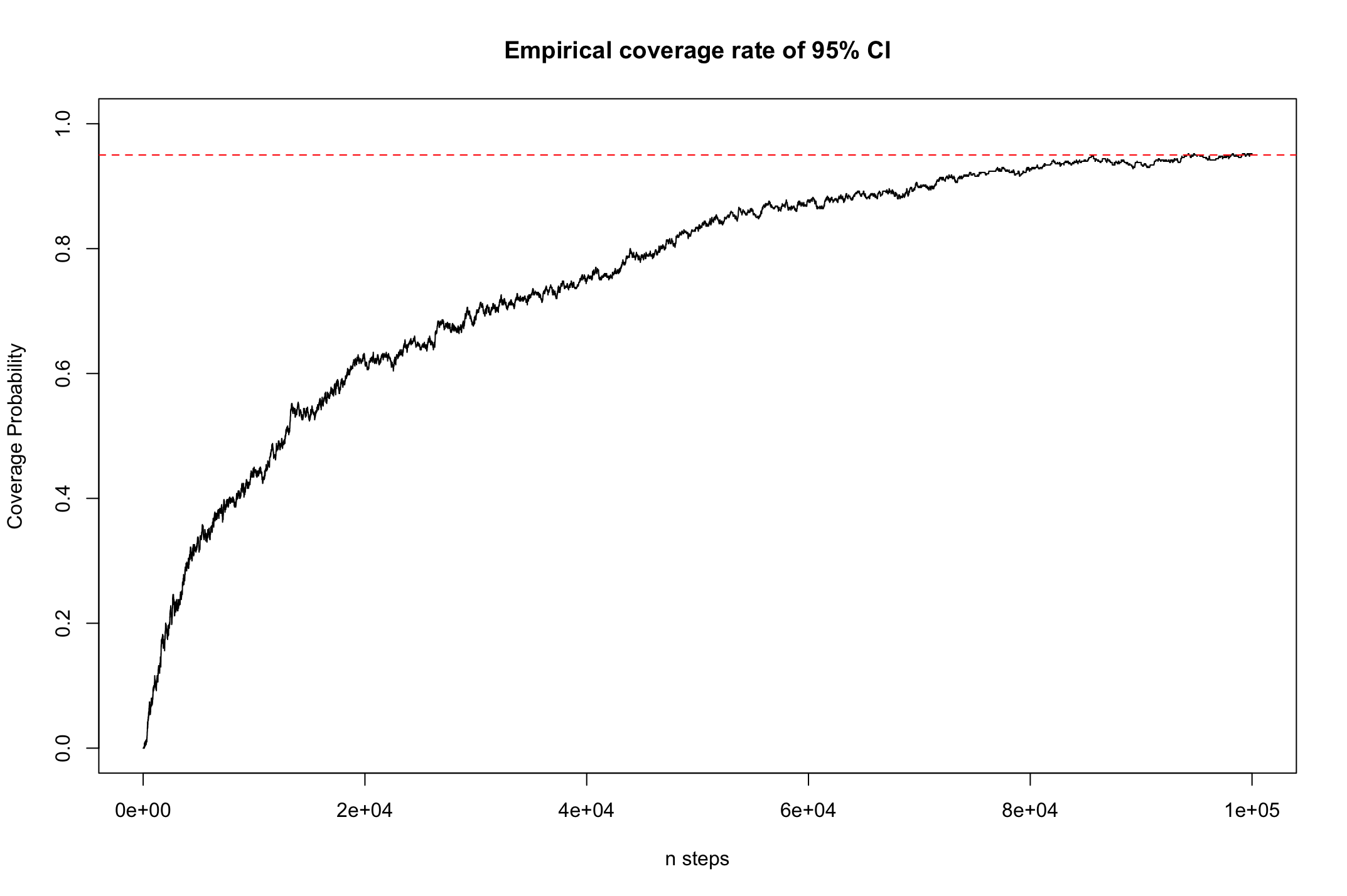}
    \caption{Coverage probabilities of 95\% CI for ASGD dropout iterates \textit{averaged over $d$ coordinates} from 200 independent runs. Red dashed line denotes the nominal coverage rate of 0.95. Dimension $d=10$, $p=0.9$, $\alpha=0.01$, and coordinates of $\bbeta^*$ are equidistantly spaced between 0 and 1 with initializations at zero.}
    \label{fig:asgd_CI_prob_d10}
\end{figure}

\begin{figure}[!hbt]
    \centering
    \includegraphics[width=0.8\textwidth]{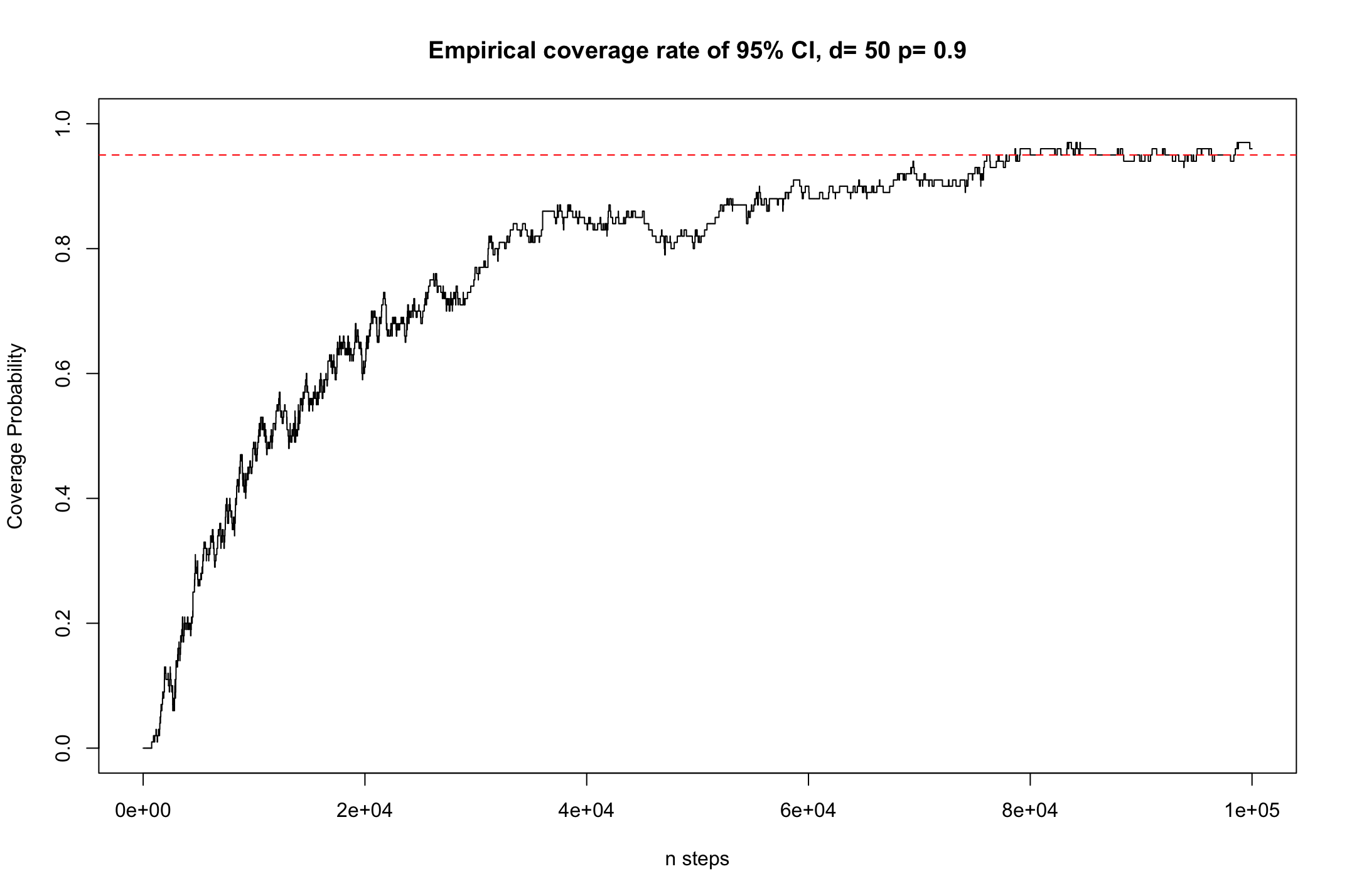}
    \caption{Coverage probabilities of 95\% \textit{joint confidence intervals} for one-dimensional projection of ASGD dropout iterates from 200 independent runs. Red dashed line denotes the nominal coverage rate of 0.95. Dimension $d=50$, $p=0.9$, $\alpha=0.01$, and coordinates of $\bbeta^*$ are equidistantly spaced between 0 and 1 with initializations at zero.}
    \label{fig:asgd_CI_prob_d50}
\end{figure}

\clearpage

\begin{table}[htb!]
    \centering
    \begin{tabular}{ccccccc}
    \hline\hline
         & $\alpha$ & $n=100,000$ & $n=150,000$ & $n=180,000$ & $n=190,000$ & $n=200,000$ \\
         \hline
         & $0.01$ & 0.815 (0.0275) & 0.875 (0.0234) & 0.920 (0.0192) & 0.940 (0.0168) & 0.945 (0.0161) \\
    $p=0.9$ & $0.05$ & 0.800 (0.0283) & 0.895 (0.0217) & 0.940 (0.0168) & 0.925 (0.0186) & 0.930 (0.0180) \\
         & $0.1$ & 0.830 (0.0266) & 0.915 (0.0197) & 0.930 (0.0180) & 0.955 (0.0146) & 0.960 (0.0138) \\
    \hline\hline
         & $\alpha$ & $n=200,000$ & $n=250,000$ & $n=280,000$ & $n=290,000$ & $n=300,000$ \\
         \hline
         & $0.01$  & 0.850 (0.0253) & 0.900 (0.0212) & 0.910 (0.0202) & 0.915 (0.0197) & 0.925 (0.0186) \\
    $p=0.5$ & $0.05$  & 0.860 (0.0245) & 0.890 (0.0221) & 0.920 (0.0192) & 0.925 (0.0186) & 0.945 (0.0161) \\
         & $0.1$  & 0.910 (0.0202) & 0.900 (0.0212) & 0.940 (0.0168) & 0.950 (0.0154) & 0.950 (0.0154) \\
        \hline\hline
    \end{tabular}
    \caption{Empirical coverage probability of $95\%$ confidence intervals from 200 independent runs (with standard errors in the brackets). Dimension $d=3$.}
    \label{table_prob_d3}
\end{table}

\begin{table}[htb!]
    \centering
    \begin{tabular}{ccccccc}
    \hline\hline
         & $\alpha$ & $n=100,000$ & $n=150,000$ & $n=180,000$ & $n=190,000$ & $n=200,000$ \\
         \hline
         & $0.01$ & 0.855 (0.0249) & 0.885 (0.0226) & 0.935 (0.0174) & 0.935 (0.0174) & 0.945 (0.0161) \\
    $p=0.9$ & $\alpha=0.025$ & 0.810 (0.0278) & 0.895 (0.0217) & 0.935 (0.0174) & 0.950 (0.0154) & 0.960 (0.0138) \\
         & $0.05$ & 0.875 (0.0234) & 0.895 (0.0217) & 0.905 (0.0207) & 0.900 (0.0212) & 0.930 (0.0180) \\
    \hline\hline
         & $\alpha$ & $n=200,000$ & $n=250,000$ & $n=280,000$ & $n=290,000$ & $n=300,000$ \\
         \hline
         & $0.01$  & 0.845 (0.0256) & 0.890 (0.0221) & 0.925 (0.0186) & 0.930 (0.0180) & 0.935 (0.0174) \\
    $p=0.5$ & $0.025$  & 0.900 (0.0212) & 0.950 (0.0154) & 0.945 (0.0161) & 0.955 (0.0146) & 0.955 (0.0146) \\
         & $0.05$  & 0.880 (0.0230) & 0.950 (0.0154) & 0.955 (0.0146) & 0.960 (0.0138) & 0.960 (0.0138) \\
        \hline\hline
    \end{tabular}
    \caption{Empirical coverage probability of $95\%$ confidence intervals from 200 independent runs (with standard errors in the brackets). Dimension $d=20$.}
    \label{table_prob_d20}
\end{table}

\begin{table}[htb!]
    \centering
    \begin{tabular}{ccccccc}
    \hline\hline
         & $\alpha$ & $n=100,000$ & $n=150,000$ & $n=180,000$ & $n=190,000$ & $n=200,000$ \\
         \hline
         & $0.01$ & 0.850 (0.0253) & 0.935 (0.0174) & 0.945 (0.0161) & 0.925 (0.0186) & 0.950 (0.0154) \\
    $p=0.9$ & $0.0125$ & 0.840 (0.0259) & 0.945 (0.0161) & 0.940 (0.0168) & 0.940 (0.0168) & 0.945 (0.0161) \\
         & $0.02$ & 0.750 (0.0306) & 0.910 (0.0202) & 0.925 (0.0186) & 0.925 (0.0186) & 0.945 (0.0161) \\
    \hline\hline
         & $\alpha$ & $n=200,000$ & $n=250,000$ & $n=280,000$ & $n=290,000$ & $n=300,000$ \\
         \hline
         & $0.01$  & 0.805 (0.0280) & 0.875 (0.0234) & 0.895 (0.0217) & 0.920 (0.0192) & 0.910 (0.0202) \\
    $p=0.5$ & $0.0125$  & 0.735 (0.0230) & 0.930 (0.0180) & 0.920 (0.0192) & 0.930 (0.0180) & 0.925 (0.0186) \\
         & $0.02$  & 0.870 (0.0238) & 0.910 (0.0202) & 0.920 (0.0192) & 0.915 (0.0197) & 0.930 (0.0180) \\
        \hline\hline
    \end{tabular}
    \caption{Empirical coverage probability of $95\%$ confidence intervals from 200 independent runs (with standard errors in the brackets). Dimension $d=50$.}
    \label{table_prob_d50}
\end{table}

\section*{Acknowledgements}
Wei Biao Wu’s research is partially supported by the NSF (Grants NSF/DMS-2311249, NSF/DMS-2027723). Johannes Schmidt-Hieber has received funding from the Dutch Research Council (NWO) via the Vidi
grant VI.Vidi.192.021.

\printbibliography

\newpage

\begin{appendix}

\section{Some Useful Lemmas}

\begin{lemma}[\cite{burkholder1988,rio_moment_2009}]
    \label{lemma_burkholder}
    Let $q>1,q'=\min\{q,2\}$, and $M_T=\sum_{t=1}^{T}\xi_t$, where $\xi_t$ are martingale differences with a finite $q$-th moment. Then 
    $$\big(\EE\|M_T\|_2^q\big)^{q'/q}\le K_{q}^{q'}\sum_{t=1}^{T}\big(\EE\| \xi_T\|_2^q\big)^{q'/q},\quad\text{where } K_q=\max\{(q-1)^{-1},\sqrt{q-1}\}.$$
\end{lemma}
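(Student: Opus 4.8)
The plan is to treat the two regimes $1<q\le 2$ and $q\ge 2$ separately, since $q'=\min\{q,2\}$ and $K_q=\max\{(q-1)^{-1},\sqrt{q-1}\}$ both switch behaviour at $q=2$ (indeed $\sqrt{q-1}\ge(q-1)^{-1}$ exactly when $q\ge2$). In both regimes the backbone is an induction on $T$ driven by a single \emph{one-step} inequality that uses only the martingale-difference property $\EE[\xi_t\mid\F_{t-1}]=0$. Writing $\|Z\|_q:=(\EE\|Z\|_2^q)^{1/q}$ and $S_t=\sum_{s\le t}\xi_s$, I would reduce the whole statement to a contraction estimate for the running quantity $a_t:=\|S_t\|_q^{q'}=a_{t-1}+K_q^{q'}\|\xi_t\|_q^{q'}$, which telescopes to the claimed bound.

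For $q\ge2$ (so $q'=2$ and $K_q^2=q-1$) the target one-step inequality is
\[
\|S_{t-1}+\xi_t\|_q^2 \le \|S_{t-1}\|_q^2 + (q-1)\,\|\xi_t\|_q^2 .
\]
To prove it I would start from a pointwise convexity estimate for the map $u\mapsto\|u\|_2^q$ on $\RR^d$: for all $x,y$,
\[
\|x+y\|_2^q \le \|x\|_2^q + q\,\|x\|_2^{q-2}\langle x,y\rangle + c_q\big(\|x\|_2^{q-2}+\|y\|_2^{q-2}\big)\|y\|_2^2 ,
\]
obtained from Taylor's theorem with the Hessian of $\|\cdot\|_2^q$. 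Setting $x=S_{t-1}$ and $y=\xi_t$ and conditioning on $\F_{t-1}$ annihilates the linear term $q\|S_{t-1}\|_2^{q-2}\langle S_{t-1},\xi_t\rangle$ because $\EE[\xi_t\mid\F_{t-1}]=0$; Hölder's inequality applied to the surviving quadratic term, together with the bookkeeping that recasts everything in $\|\cdot\|_q$, is what produces the factor $q-1$. This is the route of \textcite{rio_moment_2009}, where the $L^2$-type aggregation of $L^q$ increments appears most transparently.

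For $1<q\le2$ (so $q'=q$ and $K_q=(q-1)^{-1}$) I would run the analogous induction on $a_t=\|S_t\|_q^q$, now exploiting the weaker convexity of $\|\cdot\|_2^q$ in this range through a pointwise estimate of the form $\|x+y\|_2^q\le\|x\|_2^q+q\|x\|_2^{q-2}\langle x,y\rangle+C_q\|y\|_2^q$; again the cross term vanishes under conditioning and one is left with subadditivity of the $q$-th moments. The cleanest uniform way to pin down the \emph{optimal} constant in both regimes is Burkholder's extremal-function method: one builds a special $U(x,y)$ dominating the target difference $\|x\|_2^q-K_q^{q'}(\cdots)$ and forming a supermartingale along the iterates, which is precisely what \textcite{burkholder1988} carries out. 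Note that, unlike the independent von Bahr--Esseen setting, the martingale constant genuinely blows up as $q\downarrow1$, consistent with $(q-1)^{-1}$.

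The hard part will not be the induction, which is routine once the one-step inequality is available, but establishing that one-step inequality \emph{with the exact constant} $K_q$. Crude Taylor bounds deliver the inequality with suboptimal constants effortlessly; recovering the sharp $\sqrt{q-1}$ (resp.\ $(q-1)^{-1}$) requires the precise pointwise lemmas of \textcite{rio_moment_2009} or Burkholder's extremal construction. I would therefore import those pointwise estimates directly and verify only that they transfer from the scalar case to the Euclidean norm on $\RR^d$, which holds since $\|\cdot\|_2^q$ retains the needed convexity/Hessian control in the vector setting.
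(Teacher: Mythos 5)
The paper does not prove this lemma at all: it is imported verbatim, with citations, from \textcite{burkholder1988} and \textcite{rio_moment_2009}, so there is no in-paper argument to compare against. Your outline is a faithful reconstruction of how the cited sources actually proceed: for $q\ge 2$ the one-step contraction $\|S_{t-1}+\xi_t\|_q^2\le\|S_{t-1}\|_q^2+(q-1)\|\xi_t\|_q^2$, with the cross term annihilated by conditioning on $\F_{t-1}$ (legitimate, since $\|S_{t-1}\|_2^{q-2}S_{t-1}$ is $\F_{t-1}$-measurable) and the factor $q-1$ extracted via the triangle inequality in $L^{q/2}$, is precisely Rio's route; for $1<q\le 2$ deferring the sharp constant to Burkholder's special-function method is the honest move, and since the lemma is itself a quoted result, importing those pointwise inequalities is no weaker than what the paper does. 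Two small corrections. First, your telescoping display $a_t:=\|S_t\|_q^{q'}=a_{t-1}+K_q^{q'}\|\xi_t\|_q^{q'}$ should read ``$\le$'', not ``$=$''; as written it asserts an identity that is false in general (at $q=2$ it would force exact orthogonality with equality of norms, which holds, but for $q\neq2$ it does not). Second, your closing remark that ``the martingale constant genuinely blows up as $q\downarrow 1$'' is incorrect: the von Bahr--Esseen inequality $\EE\|S_T\|_2^q\le 2\sum_{t\le T}\EE\|\xi_t\|_2^q$ holds for martingale differences for all $1\le q\le 2$ with the absolute constant $2$, so the factor $(q-1)^{-1}$ in $K_q$ is merely the convenient (suboptimal near $q=1$, sharper near $q=2$) form used in this strand of the literature, not a necessary blow-up. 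Neither slip affects the validity of your plan.
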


\begin{lemma}[\cite{clara_dropout_2023}]\label{lemma_clara}
    For any matrices $A$ and $B$ in $\RR^{d\times d}$, $p\in(0,1)$, and a diagonal matrix $D\in\RR^{d\times d}$, the following results hold:\\ 
    \indent (i) $\overline{AD} = \overline{A}D$, $\overline{DA}=D\overline{A}$, and $\overline{A}_p = p\overline{A}=\overline{A_p}$; \\ 
    If in addition, the diagonal matrix $D$ is random and independent of $A$ and $B$, with the diagonal entries satisfying $D_{ii}\overset{\mathrm{i.i.d.}}{\sim}\mathrm{Bernoulli}(p)$, $1\le i\le d$, then, \\
    \indent (ii) $\EE[DAD]=pA_p$, where $A_p=pA+(1-p)\mathrm{Diag}(A)$;\\
    \indent (iii) $\EE[DADBD]=pA_pB_p + p^2(1-p)\mathrm{Diag}(\overline{A}B)$, where $\overline{A}=A-\mathrm{Diag}(A)$; \\
    \indent (iv) $\EE[DADBDCD]=pA_pB_pC_p + p^2(1-p)\big[\mathrm{Diag}(\overline{A}B_p\overline{C}) + A_p\mathrm{\Diag}(\overline{B}C) + \mathrm{\Diag}(A\overline{B})C_p + (1-p)A\odot\overline{B}^{\top}\odot C\big]$, where $\odot$ denotes the Hadamard product.
\end{lemma}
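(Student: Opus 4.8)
The plan is to reduce all four identities to one scalar fact about the Bernoulli diagonal $D$: for any finite list of diagonal positions $i_1,\dots,i_m$, $\EE[D_{i_1i_1}\cdots D_{i_mi_m}]=p^{|\{i_1,\dots,i_m\}|}$, since the entries $D_{ii}$ are independent across coordinates and idempotent ($D_{ii}^2=D_{ii}$), so repeated indices add no power of $p$. Each statement then follows by writing the relevant $(i,\ell)$-entry of the product as a sum over internal indices, factoring out this expectation, and grouping by the coincidence pattern of the indices. Part (i) carries no randomness and is checked entrywise: since $D$ is diagonal, $(AD)_{i\ell}=A_{i\ell}D_{\ell\ell}$ and $(DA)_{i\ell}=D_{ii}A_{i\ell}$, so deleting the main diagonal commutes with diagonal multiplication and $\overline{AD}=\overline{A}D$, $\overline{DA}=D\overline{A}$; and since $\Diag(A_p)=\Diag(A)$ one gets $\overline{A_p}=A_p-\Diag(A)=p\overline{A}=(\overline{A})_p$ (using $\Diag(\overline{A})=0$). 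These commutation rules are the main repackaging device in the later parts.

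For (ii), $\EE[(DAD)_{i\ell}]=\EE[D_{ii}D_{\ell\ell}]A_{i\ell}$ is $p^2A_{i\ell}$ when $i\ne\ell$ and $pA_{ii}$ when $i=\ell$, so collecting the cases gives $p^2\overline{A}+p\Diag(A)=pA_p$. For (iii), $\EE[(DADBD)_{i\ell}]=\sum_j\EE[D_{ii}D_{jj}D_{\ell\ell}]A_{ij}B_{j\ell}$; I split the sum into $j=i$, $j=\ell$, $j\notin\{i,\ell\}$ and treat $i\ne\ell$ versus $i=\ell$ separately. For $i\ne\ell$ every off-diagonal factor is scaled by $p$ and the pieces reassemble into $p(A_pB_p)_{i\ell}$, while for $i=\ell$ the generic part gives $p(A_pB_p)_{ii}$ and the residual over $j\ne i$ is exactly $p^2(1-p)(\overline{A}B)_{ii}$, i.e.\ the $\Diag(\overline{A}B)$ correction.

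The same scheme proves (iv), and this is where the real work lies. Writing $\EE[(DADBDCD)_{i\ell}]=\sum_{j,k}\EE[D_{ii}D_{jj}D_{kk}D_{\ell\ell}]A_{ij}B_{jk}C_{k\ell}$, one enumerates all equality patterns among $i,j,k,\ell$ (the Bell-number many set-partitions of a four-label set), each carrying a factor $p^{\#\text{distinct}}$. The fully generic contributions reassemble into $pA_pB_pC_p$; patterns where exactly one internal index collides with a free index produce the three diagonal terms $\Diag(\overline{A}B_p\overline{C})$, $A_p\Diag(\overline{B}C)$, $\Diag(A\overline{B})C_p$; and the single off-diagonal pattern $j=\ell,\ k=i$ with $i\ne\ell$ yields the Hadamard term $(1-p)A\odot\overline{B}^{\top}\odot C$. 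The obstacle is purely the combinatorial bookkeeping: the right-hand terms overlap on monomials, so for instance $A_{i\ell}B_{\ell i}C_{i\ell}$ (with $i\ne\ell$) receives contributions $p^4$ from $pA_pB_pC_p$, $p^3-p^4$ from each of $A_p\Diag(\overline{B}C)$ and $\Diag(A\overline{B})C_p$, and $p^2-2p^3+p^4$ from the Hadamard term, which must be shown to telescope to the left-hand value $p^2$ corresponding to the two distinct indices $\{i,\ell\}$. The Hadamard term is precisely the correction reconciling the over- and under-counting of the $A_pB_pC_p$ and $\Diag$ pieces. I would therefore fix $(i,\ell)$, tabulate each partition's power of $p$ against its diagonal/off-diagonal restrictions, and verify term by term that every monomial matches and none is double counted.
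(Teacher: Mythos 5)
Your proposal is correct, but note that the paper itself offers no proof of this lemma: it is imported wholesale by citation to \textcite{clara_dropout_2023}, so your self-contained argument is compared against the standard computation in that reference rather than anything in this paper. Your route is the natural one and it is sound: the single moment identity $\EE[D_{i_1i_1}\cdots D_{i_mi_m}]=p^{\#\{i_1,\dots,i_m\}}$ (from independence plus idempotence of Bernoulli variables) does reduce everything to bookkeeping over coincidence patterns of indices. Your part (i) is airtight, using $\Diag(A_p)=\Diag(A)$ and $\Diag(\overline A)=0$; parts (ii) and (iii) check out exactly as you describe — for instance, at $i=\ell$ the residual $p^2\sum_{j\neq i}A_{ij}B_{ji}-p^3\sum_{j\neq i}A_{ij}B_{ji}=p^2(1-p)(\overline AB)_{ii}$ is precisely the $\Diag(\overline AB)$ correction. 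Your spot-check of the hardest overlap in (iv) is also arithmetically correct: for the pattern $j=\ell$, $k=i$, $i\neq\ell$, the contributions $p^4+2(p^3-p^4)+(p^2-2p^3+p^4)=p^2$ match the left-hand coefficient for two distinct indices, and you correctly exploit that $A_p\Diag(\overline BC)$ and $\Diag(A\overline B)C_p$ are \emph{not} diagonal matrices, which is where the subtle off-diagonal overlaps live.

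Two small caveats, neither fatal. First, for (iv) you present a scheme plus one verified monomial rather than the complete tabulation; since there are only finitely many set-partition patterns of $\{i,j,k,\ell\}$ (split by whether $i=\ell$), the remaining verification is mechanical, and having checked the unique pattern where four right-hand terms overlap you have done the only genuinely delicate case — but a referee would want the table written out (e.g., the diagonal pattern $i=\ell$, $j=k\neq i$ gives $p^3$ from $pA_pB_pC_p$ plus $p^2(1-p)$ from $\Diag(\overline AB_p\overline C)$, summing to $p^2$, which also corrects your slightly loose attribution of which patterns feed which correction term: internal-internal collisions, not only internal-free ones, feed $\Diag(\overline AB_p\overline C)$). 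Second, your phrase ``Bell-number many'' overcounts slightly, since only partitions compatible with fixed $(i,\ell)$ and summed $(j,k)$ occur; this is cosmetic. What your approach buys over the paper's citation is transparency: the same entrywise method immediately generalizes to higher products $\EE[DA_1D\cdots A_mD]$ and to non-identical retention probabilities, at the cost of combinatorial growth in correction terms.
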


\begin{lemma}[Properties of operator norm]\label{lemma_operator_norm}
    Let $A=(a_{ij})_{1\le i,j\le d}$ be a real $d\times d$ matrix. View $A$ as a linear map $\RR^d\mapsto\RR^d$ and denote its operator norm by $\|A\|$.\\
    \indent (i) (Inequalities for variants of $A$). $\|\mathrm{Diag}(A)\|\le\|A\|$, $\|A_p\|\le\|A\|$, and if in addition, $A$ is positive semi-definite, then also $\|\overline{A}\|\le \|A\|$; \\
    \indent (ii) (Frobenius norm). $\|A\|=\sup_{\bm{v}\in\RR^d,\|\bm{v}\|_2=1}\|A \bm{v}\|_2 \le \|A\|_F$, where $\|A\|_F$ denotes the Frobenius norm, i.e., $\|A\|_F=\big(\sum_{i,j=1}^d|a_{ij}|^2\big)^{1/2}$; \\
    \indent (iii) (Largest magnitude of eigenvalues). For a symmetric $d\times d$ matrix $A,$ $\max_{1\le i\le d}|\lambda_i(A)| = \|A\|$, where $\lambda_i(A)$ denotes the $i$-th largest eigenvalue of $A$. If in addition, $A$ is positive semi-definite, then also $\lambda_{\max}(A)=\|A\|$.
\end{lemma}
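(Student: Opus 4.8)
The plan is to handle the three parts in the order (ii), (iii), (i), because the spectral characterization established in (iii) is exactly the tool needed for the positive semi-definite inequality in (i). Parts (ii) and (iii) are standard and serve as preparation; the genuinely delicate claim is $\|\overline A\|\le\|A\|$ in (i), which I single out as the main obstacle.

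For (ii), I would argue directly from the definition $\|A\|=\sup_{\|\bm v\|_2=1}\|A\bm v\|_2$. Applying the Cauchy--Schwarz inequality row by row gives $\|A\bm v\|_2^2=\sum_i\big(\sum_j a_{ij}v_j\big)^2\le\sum_i\big(\sum_j a_{ij}^2\big)\|\bm v\|_2^2=\|A\|_F^2$, and taking the supremum over unit vectors yields $\|A\|\le\|A\|_F$. (Equivalently, one may note $\|A\|^2=\lambda_{\max}(A^\top A)\le\operatorname{tr}(A^\top A)=\|A\|_F^2$, since $A^\top A$ is positive semi-definite.) For (iii), I would invoke the spectral theorem to write a symmetric $A$ as $A=Q\Lambda Q^\top$ with $Q$ orthogonal and $\Lambda=\Diag(\lambda_1,\ldots,\lambda_d)$. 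Orthogonal invariance of the Euclidean norm gives $\|A\bm v\|_2=\|\Lambda Q^\top\bm v\|_2$, and writing $\bm w=Q^\top\bm v$ (again a unit vector), $\|\Lambda\bm w\|_2^2=\sum_i\lambda_i^2 w_i^2\le\max_i\lambda_i^2$, with equality attained at the eigenvector of the extremal eigenvalue. Hence $\|A\|=\max_i|\lambda_i(A)|$, which reduces to $\lambda_{\max}(A)$ when $A$ is positive semi-definite, as all eigenvalues are then nonnegative.

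For (i), the first two inequalities are elementary. The bound $|a_{ii}|=|\bm e_i^\top A\bm e_i|\le\|A\|$, combined with the fact that the operator norm of a diagonal matrix equals its largest absolute diagonal entry, gives $\|\Diag(A)\|=\max_i|a_{ii}|\le\|A\|$. The triangle inequality then yields $\|A_p\|\le p\|A\|+(1-p)\|\Diag(A)\|\le\|A\|$, using $p\in(0,1)$. The hard part is $\|\overline A\|\le\|A\|$ for positive semi-definite $A$: this genuinely requires positive semi-definiteness, since the inequality fails for general matrices (the off-diagonal part can have strictly larger spectral norm). My plan is to bound the Rayleigh quotient of the symmetric matrix $\overline A=A-\Diag(A)$ two-sidedly. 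For any unit vector $\bm v$, $\bm v^\top\overline A\bm v=\bm v^\top A\bm v-\sum_i a_{ii}v_i^2$. Positive semi-definiteness forces both $0\le\bm v^\top A\bm v\le\|A\|$ and $0\le\sum_i a_{ii}v_i^2\le\max_i a_{ii}\le\|A\|$ (the diagonal entries of a positive semi-definite matrix are nonnegative and bounded by $\|A\|$). Subtracting these ranges gives $-\|A\|\le\bm v^\top\overline A\bm v\le\|A\|$. Since $\overline A$ is symmetric, part (iii) applied to $\overline A$ identifies $\|\overline A\|=\max_{\|\bm v\|_2=1}|\bm v^\top\overline A\bm v|$, and the two-sided bound then delivers $\|\overline A\|\le\|A\|$, completing the proof.
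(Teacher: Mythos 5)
Your proposal is correct. For parts (ii) and (iii) you follow essentially the same route as the paper: the paper proves (ii) by expanding a unit vector in an orthonormal basis and applying Cauchy--Schwarz to the coefficients, which is a repackaging of your row-wise Cauchy--Schwarz bound (both amount to $\|A\|^2\le\operatorname{tr}(A^\top A)$), and its proof of (iii) is the same eigenvector-evaluation plus diagonalization argument. The genuine difference is in part (i): the paper does not prove it at all, instead citing Lemma 19 of \textcite{clara_dropout_2023}, whereas you give a self-contained argument. Your treatment of the delicate inequality $\|\overline A\|\le\|A\|$ via a two-sided Rayleigh-quotient bound --- using that positive semi-definiteness forces both $\bm v^\top A\bm v\in[0,\|A\|]$ and $\sum_i a_{ii}v_i^2\in[0,\|A\|]$, so their difference lies in $[-\|A\|,\|A\|]$ --- is valid, and your ordering (establishing (iii) first so that $\|\overline A\|=\max_{\|\bm v\|_2=1}|\bm v^\top\overline A\bm v|$ is available) is non-circular; you correctly identify that positive semi-definiteness is essential, since for $d\ge3$ one can have symmetric $A$ with $\|\overline A\|>\|A\|$ (e.g.\ $A=I_d-\tfrac{2}{d}\bm 1\bm 1^\top$, where $\|A\|=1$ but $\|\overline A\|=2-\tfrac{2}{d}$). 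Two cosmetic points only: the step $\|\overline A\|=\max_{\|\bm v\|_2=1}|\bm v^\top\overline A\bm v|$ deserves the one-line remark that for symmetric matrices the Rayleigh quotient attains both extreme eigenvalues, so its maximum modulus equals $\max_i|\lambda_i|$, which is $\|\overline A\|$ by (iii); and symmetry of $\overline A$ rests on symmetry of $A$, which is implicit in the paper's convention that positive semi-definite matrices are symmetric (all instances there are Gram matrices). What your route buys is a proof that stands on its own rather than deferring to an external reference; what the paper's citation buys is brevity.
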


\begin{proof}[Proof of Lemma~\ref{lemma_operator_norm}]
    The inequalities in (i) follow directly from Lemma 19 in \textcite{clara_dropout_2023}. For (ii), we notice that for any unit vector $\bm{v}\in\RR^d$, one can find a basis $\{\bm{e}_1,\ldots,\bm{e}_d\}$ and write $\bm{v}$ into $\bm{v}=\sum_{j=1}^dc_j\bm{e}_j$, with $\bm{e}_j\in\RR^d$, and the real coefficients $c_j$ satisfying $\sum_{j=1}^dc_j^2=1$. Then, it follows from the orthogonality of $\bm{e}_j$ and the Cauchy-Schwarz inequality that
    \begin{align}
        \|A\bm{v}\|_2^2 = \Big\|\sum_{j=1}^dc_jA\bm{e}_j\Big\|_2^2 \le \Big(\sum_{j=1}^d|c_j|\|A\bm{e}_j\|_2\Big)^2 \le \Big(\sum_{j=1}^d|c_j|^2\Big)\Big(\sum_{j=1}^d\|A\bm{e}_j\|_2^2\Big)= \sum_{j=1}^d\|A\bm{e}_j\|_2^2 = \|A\|_F^2.
    \end{align}
    Since this result holds for any unit vector $\bm{v}\in\RR^d$, the desired result in (ii) is achieved.

    To see (iii), for any eigenvalue $\lambda_i(A)$, we denote its associated unit eigenvector by $\bm{v}$. Then, $\|A\bm{v}\|_2=|\lambda_i(A)|\|\bm{v}\|_2=|\lambda_i(A)|$, which further yields $|\lambda_i(A)| \le \sup_{\bm{v}\in\RR^d,\|\bm{v}\|_2=1}\|A\bm{v}\|_2 = \|A\|$, uniformly over $i$. Hence, the inequality can be obtained. If in addition, $A$ is symmetric, then $A$ can be diagonalized by an orthogonal matrix $Q$ and a diagonal matrix $\Lambda$ such that $A=Q^{\top}\Lambda Q$. Therefore, $\sup_{\bm{v}\in\RR^d,\|\bm{v}\|_2=1}\|A\bm{v}\|_2 = \max_{1\le i\le d}|\lambda_i(A)|$, which completes the proof.
\end{proof}

\section{Proofs in Section~\ref{subsec_gmc}}

This section is devoted to the proofs of the geometric-moment contraction (GMC) for the dropout iterates with gradient descent (GD), i.e., $\{\tilde\bbeta_k(\alpha)-\bbeta\}_{k\in\NN}$. We first extend the results in \textcite{wu_limit_2004} to the cases where the inputs of
iterated random functions are i.i.d.\ random matrices. Then, we present the proof for the sufficient condition of the GMC in terms of the constant learning rate $\alpha$ in Lemma~\ref{lemma_gmc_cond}, and showcase the GMC of $\{\tilde\bbeta_k(\alpha)-\bbeta\}_{k\in\NN}$ in Theorem~\ref{thm_gmc_gd}.

\subsection{GMC -- Random Matrix Version}\label{subsec_gmc_mat}

Let $(\mathcal{Y},\rho)$ be a complete and separable metric space, endowed with its Borel sets $\mathbb{Y}$. Consider an iterated random function on the state space $\mathcal{Y}\subset\RR^d$, for some fixed $d\ge1$, with the form
\begin{equation}
    \label{eq_iterated_function}
    \bm{y}_k = f(\bm{y}_{k-1},X_k) = f_{X_k}(\bm{y}_{k-1}), \quad k\in\NN,
\end{equation}
where $f(\bm{y},\cdot)$ is the $\bm{y}$-section of a jointly measurable function $f:\,\mathcal{Y}\times \mathcal{X}\mapsto\mathcal{Y}$; the random matrices $X_k$, $k\in\NN$, take values in a second measurable space $\mathcal{X}\subset\RR^{d\times d}$, and are independently distributed with identical marginal distribution $H$. The initial point $\bm{y}_0\in\mathcal{Y}$ is independent of all $X_k$. 

We are interested in the sufficient conditions on $f_X(\bm{y})$ such that there is a unique stationary probability $\pi$ on $\mathcal{Y}$ with $\bm{y}_k\Rightarrow\pi$ as $k\rightarrow\infty$. To this end, define a composite function
\begin{equation}
    \bm{y}_k(\bm{y}) = f_{X_k}\circ f_{X_{k-1}}\circ \cdots \circ f_{X_1}(\bm{y}), \quad \text{for }\bm{y}\in\mathcal{Y}.
\end{equation}
We say that $\bm{y}_k$ is \textit{geometric-moment contracting} if for any two independent random vectors $\bm{y}\sim\pi$ and $\bm{y}'\sim\pi$ in $\mathcal{Y}$, there exist some $q>0$, $C_q>0$ and $r_q\in(0,1)$, such that for all $k\in\NN$,
\begin{equation}
    \label{eq_gmc_mat}
    \EE\big[\rho^q\big(\bm{y}_k(\bm{y}), \bm{y}_k(\bm{y}')\big)\big] \le C_q r_q^k.
\end{equation}
\textcite{wu_limit_2004} provided the sufficient conditions for~(\ref{eq_gmc_mat}) when $X_k$ are random variables and $\bm{y}_k$ and $\bm{y}$ are one-dimensional. Their results can be directly extended to the random matrix version and we state them here for the completeness of this paper.

\begin{assumption}[Finite moment]\label{asm_finite_moment_mat}
    Assume that there exists a fixed vector $\bm{y}^*\in\mathcal{Y}$ and some $q>0$ such that
    $$I(q,\bm{y}^*):= \EE_{X\sim H}\big[\rho^q\big(\bm{y}^*,f_X(\bm{y}^*)\big)\big] = \int_{\mathcal{X}}\rho^q\big(\bm{y}^*,f_X(\bm{y}^*)\big)H(dX) < \infty.$$
\end{assumption}

\begin{assumption}[Stochastic Lipschitz continuity]\label{asm_Lip_mat}
    Assume that there exists some $q>0$ and some $\bm{y}_0\in\mathcal{Y}$ such that
    $$L_q:=\sup_{\bm{y}_0'\in\mathcal{Y},\,\bm{y}_0\neq\bm{y}_0'}\frac{\EE_{X\sim H}\big[\rho^q\big(f_X(\bm{y}_0),f_X(\bm{y}_0')\big)\big]}{\rho^q(\bm{y}_0,\bm{y}_0')} <1,$$
    where $L_q=L_q(\bm{y}_0)$ is a local Lipschitz constant.
\end{assumption}

\begin{corollary}[GMC -- random matrix version]\label{cor_gmc_mat}
    Suppose that Assumptions~\ref{asm_finite_moment_mat}~and~\ref{asm_Lip_mat} hold. Define a backward iteration process 
    \begin{equation}
        \label{eq_backward}
        \bm{z}_k(\bm{y}) = \bm{z}_{k-1}(f_{X_k}(\bm{y}))=f_{X_1}\circ f_{X_2}\circ \cdots \circ f_{X_k}(\bm{y}), \quad \text{for }\bm{y}\in\mathcal{Y}.
    \end{equation}
    Then, $\bm{z}_k(\bm{y})\overset{\mathcal{D}}{=}\bm{y}_k(\bm{y})$, and there exists a random vector $\bm{z}_{\infty}\in\mathcal{Y}$ such that for any $\bm{y}\in\mathcal{Y}$, 
    $$\bm{z}_k(\bm{y}) \overset{a.s.}{\rightarrow} \bm{z}_{\infty}.$$
    The limit $\bm{z}_{\infty}$ is measurable with respect to the $\sigma$-algebra $\sigma(X_1,X_2,\ldots)$ and does not depend on $\bm{y}$. In addition,
    \begin{equation}
        \label{eq_gd_result_z}
        \EE\big[\rho^q\big(\bm{z}_k(\bm{y}), \bm{z}_{\infty}\big)\big] \le Cr_q^k, \quad k\in\NN,
    \end{equation}
    where the constant $C>0$ only depends on $q$ and $\bm{y}^*$ in Assumption~\ref{asm_finite_moment_mat}, $L_q$ in Assumption~\ref{asm_Lip_mat} and $\bm{y}_0$. Consequently, (\ref{eq_gmc_mat}) holds.
\end{corollary}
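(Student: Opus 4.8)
The plan is to run the classical backward-iteration argument (in the spirit of Diaconis--Freedman and \textcite{wu_limit_2004}), now driven by i.i.d.\ random matrices. The key structural fact is that although the forward process $\bm{y}_k(\bm{y})$ typically keeps moving and does not converge pathwise, the backward process $\bm{z}_k(\bm{y})$ does converge almost surely, and the two have the same law because $(X_1,\ldots,X_k)\overset{\D}{=}(X_k,\ldots,X_1)$. I would first record this distributional identity, noting that more generally the pair $(\bm{z}_k(\bm{y}),\bm{z}_k(\bm{y}'))$ has the same law as $(\bm{y}_k(\bm{y}),\bm{y}_k(\bm{y}'))$, since reversing i.i.d.\ inputs leaves joint distributions unchanged.

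Next I would lift the one-step stochastic Lipschitz bound of Assumption~\ref{asm_Lip_mat} to a $k$-step contraction. Writing $f_{X_1:k}:=f_{X_1}\circ\cdots\circ f_{X_k}$ and conditioning successively on the blocks of matrices (using the mutual independence of the $X_i$), I obtain for arbitrary $\bm{u},\bm{u}'$ that
\[
    \EE\big[\rho^q\big(f_{X_1:k}(\bm{u}),f_{X_1:k}(\bm{u}')\big)\big]\le L_q^{\,k}\,\rho^q(\bm{u},\bm{u}').
\]
The heart of the proof is then a telescoping estimate. With $\bm{y}^*$ from Assumption~\ref{asm_finite_moment_mat} and $\Delta_j:=\rho(\bm{z}_{j+1}(\bm{y}^*),\bm{z}_j(\bm{y}^*))$, the identities $\bm{z}_{j+1}(\bm{y}^*)=f_{X_1:j}(f_{X_{j+1}}(\bm{y}^*))$ and $\bm{z}_j(\bm{y}^*)=f_{X_1:j}(\bm{y}^*)$, together with conditioning on $X_{j+1}$ and the displayed $j$-step contraction, yield $\EE[\Delta_j^q]\le L_q^{\,j}\,I(q,\bm{y}^*)$. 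Summing the increments---Minkowski's inequality when $q\ge1$ and subadditivity of $t\mapsto t^q$ when $0<q<1$---shows $(\bm{z}_k(\bm{y}^*))_k$ is $L^q$-Cauchy with geometric tail; since $\sum_j\Delta_j<\infty$ almost surely, it also converges a.s.\ to a limit $\bm{z}_\infty$. This $\bm{z}_\infty$ is $\sigma(X_1,X_2,\ldots)$-measurable and satisfies $\EE[\rho^q(\bm{z}_k(\bm{y}^*),\bm{z}_\infty)]\le C r_q^k$ with $r_q=L_q$.

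To finish, I would remove the dependence on the initialization and transfer the bound back to the forward process. The $k$-step contraction gives $\EE[\rho^q(\bm{z}_k(\bm{y}),\bm{z}_k(\bm{y}^*))]\le L_q^{\,k}\rho^q(\bm{y},\bm{y}^*)\to0$, so $\bm{z}_k(\bm{y})\to\bm{z}_\infty$ for every $\bm{y}$, proving that $\bm{z}_\infty$ is independent of the starting point and establishing (\ref{eq_gd_result_z}). Finally, for independent $\bm{y},\bm{y}'\sim\pi$ with finite $q$-th moment, the distributional identity and the contraction give $\EE[\rho^q(\bm{y}_k(\bm{y}),\bm{y}_k(\bm{y}'))]=\EE[\rho^q(\bm{z}_k(\bm{y}),\bm{z}_k(\bm{y}'))]\le L_q^{\,k}\EE[\rho^q(\bm{y},\bm{y}')]$, which is exactly (\ref{eq_gmc_mat}).

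The hard part is the passage from distributional to almost-sure convergence: the forward iterates do not stabilize pathwise, and one must exploit the reversal symmetry and careful conditioning---independence of the contracting outer block $f_{X_1:j}$ from the fresh increment $f_{X_{j+1}}(\bm{y}^*)$---to convert the per-step Lipschitz factor into a summable, telescoping bound. A secondary subtlety is that the contraction must be applied at arbitrary pairs along the trajectory rather than only at the fixed base point $\bm{y}_0$; in the affine dropout setting this is automatic, since $\rho(f_X(\bm{u}),f_X(\bm{u}'))=\|A_X(\bm{u}-\bm{u}')\|$ makes the Lipschitz ratio independent of the base point, so the constant $L_q$ computed at any $\bm{y}_0$ serves as a global contraction factor.
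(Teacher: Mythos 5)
Your proposal is correct and follows essentially the same route as the paper's proof: the backward-iteration telescoping bound $\EE\big[\rho^q\big(\bm{z}_{j+1}(\bm{y}^*),\bm{z}_j(\bm{y}^*)\big)\big]\le L_q^{\,j}\,I(q,\bm{y}^*)$, almost-sure convergence from summability of the increments, removal of the dependence on the initialization via the $k$-step contraction, and transfer to the forward process through exchangeability of the i.i.d.\ inputs. The differences are cosmetic — the paper first reduces to $q\le 1$ via H\"older so that $\rho^q$ is subadditive and then uses Markov plus Borel--Cantelli where you use Minkowski and direct $L^q$-summability — and your closing remark that $L_q$ must serve as a global (not merely base-point) contraction factor is precisely the reading that the paper's own iterated application of Assumption~\ref{asm_Lip_mat} implicitly relies on.
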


\begin{remark}[Backward iteration]
We shall comment on the intuition for defining the backward iteration $\bm{z}_k$ in~(\ref{eq_backward}). Recall the i.i.d.\ random samples $X_1,\ldots,X_n$. Clearly, for any fixed initial point $\bm{y}_0\in\mathcal{Y}$, for all $k\in\NN$, we have the relations
\begin{align*}
    \bm{y}_{k+1}(\bm{y}_0) & = f_{X_{k+1}} \big( \bm{y}_k(\bm{y}_0)\big), \nonumber \\
    \bm{z}_{k+1}(\bm{y}_0) & = \bm{z}_k \big(f_{X_{k+1}}(\bm{y}_0)\big).
\end{align*}
To prove the existence of the limit for $\bm{y}_k = f_{X_k}\circ f_{X_{k-1}}\circ \cdots \circ f_{X_1}(\bm{y}_0)$, we need to make use of the contracting property of the function $f_X(\cdot)$ stated in Assumption~\ref{asm_Lip_mat}. However, we cannot directly apply it to the forward iteration, because by the Markov property, given the present position of the chain, the conditional distribution of the future does not depend on the past. This indicates
\begin{align}
    \EE\big[\rho^q\big(\bm{y}_{k+1}(\bm{y}^*), \bm{y}_k(\bm{y}^*)\big)\big] & = \EE\big[\EE\big[\rho^q\big(f_{X_{k+1}} \big(\bm{y}_k(\bm{y}^*)\big), \bm{y}_k(\bm{y}^*)\big) \mid X_{k+1}\big]\big],
\end{align}
where the two parts inside of $\rho(\cdot,\cdot)$ are operated by two different functions, which are $f_{X_{k+1}}(\cdot)$ and $f_{X_k}\circ f_{X_{k-1}}\circ \cdots \circ f_{X_1}(\cdot)$ respectively. In fact, as pointed out by \textcite{diaconis_iterated_1999}, the forward iteration $\bm{y}_k$ moves ergodically through $\mathcal{Y}$, which behaves quite differently from the backward iteration $\bm{z}_k(\cdot)=f_{X_1}\circ f_{X_2}\circ\cdots\circ f_{X_k}(\cdot)$ in~(\ref{eq_backward}), which does converge to a limit. To see this, we note that by Assumptions~\ref{asm_finite_moment_mat}~and~\ref{asm_Lip_mat}, there exists some $\bm{y}^*\in\mathcal{Y}$ such that
\begin{align}
    \label{eq_backward_onestep}
    \EE\big[\rho^q\big(\bm{z}_{k+1}(\bm{y}^*), \bm{z}_k(\bm{y}^*)\big)\big] & = \EE\big[\EE\big[\rho^q\big(\bm{z}_k \big(f_{X_{k+1}}(\bm{y}^*)\big), \bm{z}_k(\bm{y}^*)\big) \mid X_{k+1}\big]\big] \nonumber \\
    & \le L_q^k\EE\big[\rho^q\big(f_{X_{k+1}}(\bm{y}^*),\bm{y}^*\big)\big] \nonumber \\
    & = L_q^k I(q,\bm{y}^*),
\end{align}
which is summable over $k$ by Assumption~\ref{asm_Lip_mat}. Since $\mathcal{Y}$ is a complete space, we can mimic the idea of a Cauchy sequence to prove the existence of the limit $\bm{z}_{\infty}$ and further show $\bm{z}_k\overset{a.s.}{\rightarrow}\bm{z}_{\infty}$, by applying the Borel-Cantelli lemma. Since $X_1,\ldots,X_n$ are i.i.d.\ and thus exchangeable, we have $\bm{z}_k(\bm{y}_0)\overset{\mathcal{D}}{=}\bm{y}_k(\bm{y}_0)$. Hence, we can show that $\bm{y}_k$ also converges to $\bm{z}_{\infty}$ in distribution. 
\end{remark}

\begin{proof}[Proof of Corollary \ref{cor_gmc_mat}]
Let $q\in(0,1]$ such that both Assumptions~\ref{asm_finite_moment_mat}~and~\ref{asm_Lip_mat} hold. We will only show the desired results for this choice of $q$, since if Assumptions~\ref{asm_finite_moment_mat}~and~\ref{asm_Lip_mat} are satisfied for some $q>1$, then they are also valid for all $q\le1$ by Hölder's inequality (\cite{wu_limit_2004}). Recall the definition of integral $I(q,\bm{y}^*)$ in Assumption~\ref{asm_finite_moment_mat}. Let $\bm{y}_0\in\mathcal{Y}$ satisfy Assumption~\ref{asm_Lip_mat}. Then,
\begin{align*}
    I(q,\bm{y}_0) & = \EE\big[\rho^q\big(\bm{y}_0,f_X(\bm{y}_0)\big)\big] \nonumber \\
    & \le \EE\big[\rho(\bm{y}_0,\bm{y}^*) + \rho\big(\bm{y}^*,f_X(\bm{y}^*)\big) + \rho\big(f_X(\bm{y}^*),f_X(\bm{y}_0)\big)\big]^q \nonumber \\
    & \le \rho^q(\bm{y}_0,\bm{y}^*) + I(q,\bm{y}^*) + \EE\big[\rho^q\big(f_X(\bm{y}^*),f_X(\bm{y}_0)\big)\big] \nonumber \\
    & \le \rho^q(\bm{y}_0,\bm{y}^*) + I(q,\bm{y}^*) + L_q\rho^q(\bm{y}^*,\bm{y}_0) <\infty,
\end{align*}
where the first inequality follows from the triangle inequality, the second one is by Assumption~\ref{asm_Lip_mat} and Jensen's inequality, and the last one is due to Assumption~\ref{asm_finite_moment_mat}. A similar argument as in~(\ref{eq_backward_onestep}) yields
\begin{align}
    \EE\big[\rho^q\big(\bm{z}_{k+1}(\bm{y}_0), \bm{z}_k(\bm{y}_0)\big)\big] \le L_q^k I(q,\bm{y}_0) =:\delta_k,
\end{align}
where $\delta_k=\delta_k(q,\bm{y}_0)$ solely depends on $k$, $q$, $L_q$ and $\bm{y}_0$. By Markov's inequality, we have
\begin{align}
    \PP\Big(\rho\big(\bm{z}_{k+1}(\bm{y}_0),\bm{z}_k(\bm{y}_0)\big)\ge \delta_k^{1/(2q)}\Big) \le \delta_k^{1/2}.
\end{align}
Since $\sum_{k=1}^{\infty}\delta_k^{1/2}<\infty$, it follows from the first Borel-Cantelli lemma that
\begin{align}
    \PP\Big(\rho\big(\bm{z}_{k+1}(\bm{y}_0),\bm{z}_k(\bm{y}_0)\big)\ge \delta_k^{1/(2q)} \,\text{for infinitely many }k\Big) = 0.
\end{align}
Again, since $\delta_k^{1/2}$ is summable, $\bm{z}_k$ is a Cauchy sequence in space $\mathcal{Y}$, which together with the completeness of $\mathcal{Y}$ gives that almost surely, there exists a random vector $\bm{z}_{\infty}\in\mathcal{Y}$ such that
\begin{align*}
    \bm{z}_k(\bm{y}_0) \overset{a.s.}{\rightarrow} \bm{z}_{\infty}, \quad \text{as }k\rightarrow\infty,
\end{align*}
where $\bm{z}_{\infty}$ is $\sigma(X_1,X_2,\ldots)$-measurable. Let $\pi$ be the probability distribution of $\bm{z}_{\infty}$.

Furthermore, it follows from the triangle inequality and Jensen's inequality that for any fixed $\bm{y}_0\in\mathcal{Y}$, 
\begin{align}
    \EE\big[\rho^q\big(\bm{z}_k(\bm{y}_0),\bm{z}_{\infty}\big)\big] & \le \EE\Big[\sum_{l=0}^{\infty}\rho\big(\bm{z}_{k+1+l}(\bm{y}_0),\bm{z}_{k+l}(\bm{y}_0)\big)\Big]^q \nonumber \\
    & \le \sum_{l=0}^{\infty}\EE\big[\rho^q\big(\bm{z}_{k+1+l}(\bm{y}_0),\bm{z}_{k+l}(\bm{y}_0)\big)\big] \nonumber \\
    & \le \delta_k/(1-L_q),
\end{align}
For any $\bm{y}\in\mathcal{Y}$, by Assumption~\ref{asm_Lip_mat} and triangle inequality,
\begin{align}
    \EE\big[\rho^q\big(\bm{z}_k(\bm{y}),\bm{z}_{\infty}\big)\big] & \le \EE\big[\rho^q\big(\bm{z}_k(\bm{y}),\bm{z}_k(\bm{y}_0)\big)\big]  + \EE\big[\rho^q\big(\bm{z}_k(\bm{y}_0),\bm{z}_{\infty}\big)\big] \nonumber \\
    & \le L_q^k\rho^q(\bm{y}_0,\bm{y}) + \delta_k/(1-L_q).
\end{align}
Recall that $\delta_k=L_q^kI(q,\bm{y}_0)$. Let $C= I(q,\bm{y}_0)/(1-L_q) + \rho^q(\bm{y}_0,\bm{y})$ and we have shown result~(\ref{eq_gd_result_z}) with $r_q=L_q$. Since $Cr_q^k$ in~(\ref{eq_gd_result_z}) is summable over $k$, it again follows from Borel-Cantelli lemma that for any $\bm{y}\in\mathcal{Y}$,
\begin{align*}
    \bm{z}_k(\bm{y}) \overset{a.s.}{\rightarrow} \bm{z}_{\infty}, \quad \text{as }k\rightarrow\infty,
\end{align*}
and therefore, the limit
\begin{equation}
    \label{eq_operator_v}
    \bm{v}_k(\bm{y}) = \lim_{m\rightarrow\infty}f_{X_{k+1}}\circ f_{X_{k+2}}\circ \cdots \circ f_{X_{k+m}}(\bm{y})
\end{equation}
exists almost surely. 

Finally, we notice that for any two independent random vectors $\bm{y}\sim\pi$ and $\bm{y}'\sim\pi$,
\begin{align}
    \EE\big[\rho^q\big(\bm{y}_k(\bm{y}),\bm{y}_k(\bm{y}')\big)\big] & \le \EE\big[\rho^q\big(\bm{y}_k(\bm{y}),\bm{y}_k(\bm{y}_0)\big)\big] + \EE\big[\rho^q\big(\bm{y}_k(\bm{y}_0),\bm{y}_k(\bm{y}')\big)\big] \nonumber \\
    & = 2\EE\big[\rho^q\big(\bm{z}_k(\bm{v}_k),\bm{z}_k(\bm{y}_0)\big)\big] \nonumber \\
    & = 2\EE\big[\rho^q\big(\bm{z}_{\infty},\bm{z}_k(\bm{y}_0)\big)\big] \le 2\delta_k/(1-L_q),
\end{align}
where the first equation follows from the observation that $\bm{v}_k$ has the identical distribution as $\bm{z}_{\infty}=\bm{z}_k(\bm{v}_k(\bm{y}))\sim\pi$ and is independent of i.i.d.\ random matrices $X_1,\ldots,X_k$ because $\bm{v}_k$ as defined in~(\ref{eq_operator_v}) only depends on $X_i$ for large $i\ge k+1$. The desired result in~(\ref{eq_gmc_mat}) has been achieved.
\end{proof}

The recursion $\bm{y}_k=f(\bm{y}_{k-1},X_k)$ is only defined for positive integers $k$. Nevertheless, Corollary~\ref{cor_gmc_mat} guarantees that for $k=0,-1,\ldots$, the relation $\bm{y}_k=f(\bm{y}_{k-1},X_k)$ also holds. See Remark 2 in \textcite{wu_limit_2004} for a simple way to define $\bm{y}_k$ when $k=0,-1,\ldots$ in the one-dimensional case. The vector versions can be similarly constructed.

\subsection{Proof of Lemma~\ref{lemma_gmc_cond}}

\begin{proof}[Proof of Lemma~\ref{lemma_gmc_cond}]
Let $D$ be a dropout matrix with the same distribution as $D_1.$ Since $\XX=X^\top X$ is positive semi-definite and by assumption $\alpha \|\XX\|<2$, we have $-I_d< I_d-\alpha D\XX D\leq I_d$ and consequently $\|I_d-\alpha D\XX D\|\leq 1.$ Thus for a unit vector $\bm{v},$ $\|(I_d-\alpha D_k\XX D_k)\bm{v}\|_2\leq 1.$ This means that for $q\geq 2,$ we can use $\|\cdot\|_2^q=\|\cdot\|_2^2 \|\cdot\|_2^{q-2}$ to bound 
\begin{align}
    r_{\alpha,q}^q\leq \sup_{\bm{v}\in\RR^d:\|\bm{v}\|_2=1}\EE\Big\|\big(I_d-\alpha D\XX D\big)\bm{v}\Big\|_2^2
    = 
    \sup_{\bm{v}\in\RR^d:\|\bm{v}\|_2=1} \bm{v}^\top 
  \EE\Big[\big(I_d-\alpha D\XX D\big)^2\Big]\bm{v} 
  = \Big\|\EE\Big[\big(I_d-\alpha D\XX D\big)^2\Big]\Big\|.
  \label{eq.iduncs}
\end{align}
For a $d\times d$ and positive semi-definite matrix $A$, we have $A^2\leq \|A\| A.$ To see this, let $\bm{v}_j$ be the eigenvectors of $A$ with corresponding eigenvalues $\lambda_j.$ Any vector $\bm{w}$ can be written as $\bm{w}=\gamma_1 \bm{v}_1+\ldots+\gamma_d \bm{v}_d$ with coefficients $\gamma_1,\ldots,\gamma_d.$ Now $\bm{w}^\top A^2\bm{w} = \gamma_1^2 \lambda_1^2+\ldots + \gamma_d^2 \lambda_d^2\leq (\max_j\lambda_j) (\gamma_1^2 \lambda_1+\ldots + \gamma_d^2 \lambda_d)= \bm{w}^\top \|A\| A\bm{w}.$ Since $\bm{w}$ was arbitrary, this proves $A^2\leq \|A\| A.$ Moreover, recall that $D_k$ is a diagonal matrix with diagonal entries $0$ and $1$. Thus $D_k^2=D_k\leq I_d.$ Because $\XX$ is positive semi-definite and by assumption $\Delta:=2-\alpha \|X\|>0,$ we have
$\alpha^2 D_1\XX D_1^2\XX D_1\leq \alpha^2 D_1\XX^2 D_1\leq \alpha^2 D_1\|\XX\|\XX D_1 \leq (2-\Delta)\alpha D_1\XX D_1.$ Thus,
\begin{align*}
    (I_d-\alpha D_1\XX D_1)^2
    &=I_d-2\alpha D_1\XX D_1+\alpha^2 D_1\XX D_1^2\XX D_1 \leq I_d-\Delta \alpha D_1\XX D_1.
\end{align*}
Taking expectation and using Lemma \ref{lemma_clara} (ii) yields $\EE\big[(I_d-\alpha D_1\XX D_1)^2\big]\leq I_d-\Delta\alpha p \XX_p.$ The fact that $\EE\big[(I_d-\alpha D_1\XX D_1)^2\big]$ is positive semi-definite implies that $\|\EE\big[(I_d-\alpha D_1\XX D_1)^2\big]\|$ is bounded by the largest eigenvalue of $I_d-\Delta\alpha p \XX_p.$ By definition, $\XX_p=p\XX+(1-p)\Diag(\XX)\geq (1-p)\min_j \XX_{jj} I_d.$ By assumption the design is in reduced form which implies that $\min_j \XX_{jj}>0.$ This shows that $\XX_p$ is positive definite and the largest eigenvalue of $I_d-\Delta\alpha p \XX_p$ must be strictly smaller than $1.$ This implies $\|\EE[(I_d-\alpha D_1\XX D_1)^2]\|<1.$ Combined with \eqref{eq.iduncs} this proves $r_{\alpha,q}<1.$

If Lemma~\ref{lemma_gmc_cond} holds for some $q\ge2$, then by Hölder's inequality, it also holds for all $1<q<2$. To see this, consider a unit vector $\bm{v}\in\RR^d$ and set $r(q) := \big(\EE\|(I_d-\alpha D_1\XX D_1)\bm{v}\|_2^q\big)^{1/q}.$ Then for any $1<q'<q$, it follows from Hölder's inequality that
\begin{align}
    r(q')^{q'} = \EE\Big\|\big(I_d-\alpha D_1\XX D_1\big)\bm{v}\Big\|_2^{q'} \le \Big(\EE\Big\|\big(I_d-\alpha D_1\XX D_1\big)\bm{v}\Big\|_2^q\Big)^{q'/q}= r(q)^{q'} <1.
\end{align}
\end{proof}

\subsection{Proof of Theorem~\ref{thm_gmc_gd}}

\begin{proof}[Proof of Theorem~\ref{thm_gmc_gd}]
Recall the recursive estimator $\tilde\bbeta_k$ defined in (\ref{eq_dropout_gd}). Write $A_k=A_k(\alpha)=I_d-\alpha D_k\XX D_k.$  We consider arbitrary $d$-dimensional initialization vectors $\tilde\bbeta_0,\,\tilde\bbeta_0'\in\RR^d$ and write $\tilde\bbeta_k$ and $\tilde\bbeta_k'$ for the respective iterates (sharing the same dropout matrices). Now
$\tilde\bbeta_k-\tilde\bbeta_k' = A_k(\tilde\bbeta_{k-1}-\tilde\bbeta_{k-1}') =: A_k\Delta_{k-1},$ with independent $A_k$ and $\Delta_{k-1}$. By Lemma \ref{lemma_gmc_cond}, $r :=  \sup_{\bm{v}\in\RR^d,\|\bm{v}\|_2=1}\big(\EE\|A_k\bm{v}\|_2^q\big)^{1/q} <1,$ and thus, for any fixed vector $\bm{v}$, $\EE\|A_k\bm{v}\|_2^q \le r^q\|\bm{v}\|_2^q.$
Due to the independence between $A_k$ and $\Delta_{k-1}$, it follows from the tower rule and the condition above that
\begin{align*}
    \EE\|\tilde\bbeta_k-\tilde\bbeta_k'\|_2^q &= \EE\|A_k\Delta_{k-1}\|_2^q 
    = \EE\big[\EE[\|A_k\Delta_{k-1}\|_2^q \mid \Delta_{k-1}]\big]
    \le \EE\big[r^q\|\Delta_{k-1}\|_2^q\big]
    = r^q\EE\|\Delta_{k-1}\|_2^q.
\end{align*}
Since $A_k(\alpha)$ are i.i.d.\ random matrices induction on $k$ yields the claimed geometric-moment contraction $\big(\EE\|\tilde\bbeta_k(\alpha) - \tilde\bbeta_k'(\alpha)\|_2^q \big)^{1/q} \le r_{\alpha,q}^k\|\tilde\bbeta_0 - \tilde\bbeta_0'\|_2.$

Finally, by Corollary~\ref{cor_gmc_mat},  this geometric-moment contraction implies the existence of a unique stationary distribution $\tilde\pi_{\alpha}$ of the GD dropout sequence $\tilde\bbeta_k(\alpha)$. This completes the proof.
\end{proof}

\section{Proofs in Section~\ref{subsec_iter_dropout}}

\subsection{Proof of Lemma~\ref{lemma_affine_approx}}

\begin{proof}[Proof of Lemma~\ref{lemma_affine_approx}]
Since $\tilde\bbeta_k^{\circ}(\alpha)$ is stationary and $\EE[\bm{b}_k]=0$ by (\ref{eq_bk_mean0_gd}), it follows that $\tilde\bbeta_k^{\circ}(\alpha)-\tilde\bbeta$ and $\bbeta_k^{\dagger}(\alpha)-\tilde\bbeta$ both have zero mean, and thus $\EE[\bm{\delta}_k(\alpha)]=0.$

To prove the second claim, we first note that
\begin{align}
    \bm{\delta}_k(\alpha) & = (I_d-\alpha D_k\XX D_k)(\tilde\bbeta_{k-1}^{\circ}(\alpha)-\tilde\bbeta) + \bm{b}_k(\alpha) - \big[(I_d-\alpha p\XX_p)(\bbeta_{k-1}^{\dagger}(\alpha)-\tilde\bbeta) + \bm{b}_k(\alpha)\big]\nonumber \\
    & = (I_d-\alpha p\XX_p)\bm{\delta}_{k-1} + \alpha(p\XX_p - D_k\XX D_k)(\tilde\bbeta_{k-1}^{\circ} - \tilde\bbeta)
\end{align}
is a stationary sequence. By induction on $k$, we can write $\bm{\delta}_k(\alpha)$ into
\begin{align}
    \bm{\delta}_k(\alpha) & = \alpha\Big[(p\XX_p - D_k\XX D_k)(\tilde\bbeta_{k-1}^{\circ}-\tilde\bbeta) + \cdots + (p\XX_p - D_1\XX D_1)(I_d-\alpha p\XX_p)^{k-1}(\tilde\bbeta_0^{\circ}-\tilde\bbeta) + \cdots\Big] \nonumber \\
    & = \alpha\sum_{i=1}^{\infty}(p\XX_p - D_{k-i+1}\XX D_{k-i+1})(I_d-\alpha p\XX_p)^{i-1}(\tilde\bbeta_{k-i}^{\circ}-\tilde\bbeta) \nonumber \\
    & =: \alpha\sum_{i=1}^{\infty}\M_{k-i}(\alpha).
\end{align}
For any $k\in\NN$, $\{\M_{k-i}(\alpha)\}_{i\ge1}$ is a sequence of martingale differences with respect to the filtration $\F_{k-i}=\sigma(\ldots,D_{k-i-1},D_{k-i})$, since the dropout matrix $D_k$ is independent of $\tilde\bbeta_{k-1}^{\circ}$ and $\tilde\bbeta$. Therefore, we can apply Burkholder's inequality in Lemma~\ref{lemma_burkholder} to $\sum_{i=1}^{\infty}\M_{k-i}(\alpha)$, and obtain, for $q\ge2$,
\begin{align*}
    \bigg(\EE\Big\|\sum_{i=1}^{\infty}\mathcal{M}_{k-i}(\alpha)\Big\|_2^q\bigg)^{1/q} & = \Big(\EE\Big\|\sum_{i=1}^{\infty}(I_d-\alpha p\XX_p)^{i-1}(p\XX_p - D_{k-i+1}\XX D_{k-i+1})(\tilde\bbeta_{k-i}^{\circ}-\tilde\bbeta)\Big\|_2^q\Big)^{1/q} \nonumber \\
    & \lesssim \Big[\sum_{i=1}^{\infty}\big(\EE\big\|(I_d-\alpha p\XX_p)^{i-1}(p\XX_p - D_{k-i+1}\XX D_{k-i+1})(\tilde\bbeta_{k-i}^{\circ}-\tilde\bbeta)\big\|_2^q\big)^{2/q}\Big]^{1/2} \nonumber \\
    & \le \Big[\sum_{i=1}^{\infty}\|I_d-\alpha p \XX_p\|^{2(i-1)}\big(\EE\big\|(p\XX_p - D_{k-i+1}\XX D_{k-i+1})(\tilde\bbeta_{k-i}^{\circ}-\tilde\bbeta)\big\|_2^q\big)^{2/q}\Big]^{1/2},
\end{align*}
where the constant in $\lesssim$ here and the rest of the proof only depends on $q$ unless it is additionally specified. 

We shall proceed the proof with two main steps. First, we show the bound $\|I_d-\alpha p\XX_p\|<1$ for the operator norm  and thus $\sum_{i=1}^{\infty}\|I_d-\alpha p \XX_p\|^{2(i-1)}<\infty$. Second, we provide a bound for $\EE\big\|(p\XX_p - D_k\XX D_k)(\tilde\bbeta_{k-1}^{\circ}-\tilde\bbeta)\big\|_2^q$ uniformly over $k$. 

\textbf{Step 1.} 
Since $\alpha\|\XX\|<2$, it follows from Lemma~\ref{lemma_operator_norm}~(i) that $\alpha\|\XX_p\|\le \alpha\|\XX\|<2$. Moreover, the assumption that the design matrix $X$ has no zero columns guarantees that all diagonal entries of $\XX$ are positive and thus $\Diag(\XX)>0.$ Together with $p<1$, this lead to $\XX_p=p\XX+(1-p)\Diag(\XX)\geq (1-p) \Diag(\XX) > 0.$ We thus have $-I_d< I_d-\alpha p\XX_p < I_d.$ Consequently, $\|I_d-\alpha p\XX_p\|<1$ and 
\begin{align}
    \label{eq_thm_bili_geo_series}
    \sum_{i=1}^{\infty}\|I_d-\alpha p\XX_p\|^{2(i-1)} = \frac{1}{1-\|I_d-\alpha p\XX_p\|^2}=O\big(\alpha^{-1}\big).
\end{align}

\textbf{Step 2.} Next, we shall bound the term $\EE\big\|(p\XX_p - D_i\XX D_i)(\tilde\bbeta_{i-1}^{\circ}- \tilde\bbeta)\big\|_2^q$. We first consider the case $q=2$. Denote $\mathbb{M}_i=p\XX_p - D_i\XX D_i$. Using that $\EE[D_i\XX D_i]=p\XX_p$, we find $\EE[\mathbb{M}_i]=0$ and by the tower rule,
\begin{align}
    \EE\big\|(p\XX_p - D_i\XX D_i)(\tilde\bbeta_{i-1}^{\circ}- \tilde\bbeta)\big\|_2^2 
    & = \EE[(\tilde\bbeta_{i-1}^{\circ}- \tilde\bbeta)^{\top}\mathbb{M}_i^{\top}\mathbb{M}_i(\tilde\bbeta_{i-1}^{\circ}- \tilde\bbeta)] \nonumber \\
    & = \EE\big[\EE\big[(\tilde\bbeta_{i-1}^{\circ}- \tilde\bbeta)^{\top}\mathbb{M}_i^{\top}\mathbb{M}_i(\tilde\bbeta_{i-1}^{\circ}- \tilde\bbeta) \mid \F_{i-1}\big] \big] \nonumber \\
    & \le \|\EE[\mathbb{M}_i^{\top}\mathbb{M}_i]\| \cdot \EE\|\tilde\bbeta_{i-1}^{\circ}- \tilde\bbeta\|_2^2.
\end{align}
By Lemma~\ref{lemma_q_moment_gd}, we have $\EE\|\tilde\bbeta_{i-1}^{\circ}- \tilde\bbeta\|_2^2 = O(\alpha)$. We only need to bound the operator norm $\|\EE[\mathbb{M}_i^{\top}\mathbb{M}_i]\|$. To this end, we use again $\EE[D_i\XX D_i]=p\XX_p$ and moreover $\EE[D_i\XX D_i\XX D_i] = p\XX_p^2+p^2(1-p)\mathrm{Diag}(\overline{\XX}\XX)$, which yields,
\begin{align}
    \EE[\mathbb{M}_i^{\top}\mathbb{M}_i] & = \EE[(p\XX_p - D_i\XX D_i)^{\top}(p\XX_p - D_i\XX D_i)] \nonumber \\
    & = p^2\XX_p^2 - 2p^2\XX_p^2 + p\XX_p^2+p^2(1-p)\mathrm{Diag}(\overline{\XX}\XX) \nonumber \\
    & = p^2(1-p)\mathrm{Diag}(\overline{\XX}\XX).
\end{align}
Recall that $\XX=X^{\top}X$, where $X$ is the fixed design matrix. Then, by Lemma~\ref{lemma_operator_norm}~(i) and the sub-multiplicativity of the operator norm, we have $\|\mathrm{Diag}(\overline{\XX}\XX)\| \le \|\overline{\XX}\XX\| \le \|\overline{\XX}\|\|\XX\|\le \|\XX\|^2.$ As a direct consequence, $\|\EE[\mathbb{M}_i^{\top}\mathbb{M}_i]\|\le p^2(1-p)\|\XX\|^2<\infty$, which together with Lemma~\ref{lemma_q_moment_gd} and (\ref{eq_thm_bili_geo_series}) gives
\begin{align*}
    \EE\|\bm{\delta}_k(\alpha)\|_2 = \Big(\EE\Big\|\alpha\sum_{i=1}^{\infty}\mathcal{M}_{k-i}(\alpha)\Big\|_2^2\Big)^{1/2} \lesssim \alpha\Big(\sum_{i=1}^{\infty}\|I_d-\alpha p \XX_p\|^{2(i-1)}\alpha\Big)^{1/2} = O(\alpha),
\end{align*}
uniformly over $k$. For the case with $q>2$, we can similarly apply the tower rule and obtain
\begin{align}
    & \quad \EE\big\|(p\XX_p - D_i\XX D_i)(\tilde\bbeta_{i-1}^{\circ}- \tilde\bbeta)\big\|_2^q \nonumber \\
    & = \EE\Big[\EE\big[\|(p\XX_p - D_i\XX D_i)(\tilde\bbeta_{i-1}^{\circ}- \tilde\bbeta)\|_2^q \mid \F_{i-1}\big] \Big] \nonumber \\
    & \le \sup_{\bm{v}\in\RR^d,\|\bm{v}\|_2=1}\EE\|(p\XX_p - D_i\XX D_i)\bm{v}\|_2^q\cdot \EE\|\tilde\bbeta_{i-1}^{\circ}- \tilde\bbeta\|_2^q,
\end{align}
where the last inequality can be achieved by writing $\tilde\bbeta_{i-1}^{\circ} - \tilde\bbeta = \|\tilde\bbeta_{i-1}^{\circ} - \tilde\bbeta \|_2\bm{v}$. Here $\bm{v}$ is the unit vector $(\tilde\bbeta_{i-1}^{\circ} - \tilde\bbeta)/\|\tilde\bbeta_{i-1}^{\circ} - \tilde\bbeta\|_2$ with $\|\bm{v}\|_2=1$. In addition, recall the Frobenius norm denoted by $\|\cdot\|_F$. It follows from Lemma~\ref{lemma_operator_norm}~(i) and (ii) that
\begin{align*}
    \sup_{\bm{v}\in\RR^d,\|\bm{v}\|_2=1}\EE\|(p\XX_p-D_i\XX D_i)\bm{v}\|_2^q 
    & \le \EE\|p\XX_p-D_i\XX D_i\|_F^q  \nonumber \\
    & \lesssim \EE\big(\|p\XX_p\|_F^q + \|D_i\XX D_i\|_F^q\big) \nonumber \\
    & \le (p^q+1)\|\XX\|^q <\infty,
\end{align*}
where the constant in $\lesssim$ only depends on $q$. Combining this with the inequality (\ref{eq_thm_bili_geo_series}), we obtain $(\EE\|\bm{\delta}_k(\alpha)\|_2^q)^{1/q}=O(\alpha)$, completing the proof.
\end{proof}

\subsection{Proof of Lemma~\ref{lemma_q_moment_gd}}

\begin{proof}[Proof of Lemma~\ref{lemma_q_moment_gd}]
Recall that by applying induction on $k$ to Equation (\ref{eq_dropout_gd}), we can rewrite the GD dropout iterates $\tilde\bbeta_k(\alpha)$ into
\begin{align*}
    \tilde\bbeta_k(\alpha) - \tilde\bbeta & = A_k(\alpha)(\tilde\bbeta_{k-1}(\alpha) - \tilde\bbeta) + \bm{b}_k(\alpha) \nonumber \\
    & = \sum_{i=0}^{k-1}\Big(\prod_{j=k-i+1}^kA_j(\alpha)\Big)\bm{b}_{k-i}(\alpha) + \Big(\prod_{j=1}^kA_j(\alpha)\Big)(\tilde\bbeta_0(\alpha)-\tilde\bbeta),
\end{align*}
where we set $\prod_{j=k+1}^kA_j(\alpha)=I_d$. Following \textcite{brandt_stochastic_1986}, since both $A_k$ and $\bm{b}_k$ are i.i.d.\ random coefficients, the stationary solution $\{\tilde\bbeta_k^{\circ}(\alpha)-\tilde\bbeta\}_{k\in\NN}$ of this recursion can be written into
\begin{align}
    \label{eq_iter_dropout_stationary}
    \tilde\bbeta_k^{\circ}(\alpha) - \tilde\bbeta & = A_k(\alpha)(\tilde\bbeta_{k-1}^{\circ}(\alpha) - \tilde\bbeta) + \bm{b}_k(\alpha) \nonumber \\
    & = \sum_{i=0}^{\infty}\Big(\prod_{j=k-i+1}^kA_j(\alpha)\Big)\bm{b}_{k-i}(\alpha) \nonumber \\
    & = \alpha\sum_{i=0}^{\infty}\Big[\prod_{j=k-i+1}^k(I_d - \alpha D_j\XX D_j)\Big]D_{k-i}\overline{\XX}(pI_d-D_{k-i})\tilde\bbeta \nonumber \\
    & =: \alpha\sum_{i=0}^{\infty}\tilde\M_{i,k}(\alpha).
\end{align}
We observe that, for any $k\in\NN$, $\{\tilde\M_{i,k}(\alpha)\}_{i\in\NN}$ is a sequence of martingale differences with respect to the filtration $\F_{k-i}=\sigma(D_{k-i},D_{k-i-1},\ldots)$. Hence, it follows from Burkholder's inequality in Lemma~\ref{lemma_burkholder} that, for $q\ge2$,
\begin{align}
    \big(\EE\|\tilde\bbeta_k^{\circ}(\alpha)-\tilde\bbeta\|_2^q\big)^{1/q} & = \alpha\Big(\EE\Big\|\sum_{i=0}^{\infty}\tilde\M_{i,k}(\alpha)\Big\|_2^q\Big)^{1/q} \nonumber \\
    & \lesssim \alpha\Big(\sum_{i=0}^{\infty}\big(\EE\|\tilde\M_{i,k}(\alpha)\|_2^q\big)^{2/q}\Big)^{1/2},
\end{align}
where the constant in $\lesssim$ only depends on $q$. Recall $H_k$ defined in (\ref{eq_H_k}), and we define a $d\times d$ matrix $B_{i,k}$ by
\begin{align}
    B_{i,k} = \Big[\prod_{j=k-i+1}^k(I_d - \alpha D_j\XX D_j)\Big]D_{k-i}\overline{\XX}(pI_d-D_{k-i})= \Big(\prod_{j=k-i+1}^kA_j\Big)H_{k-i}.
\end{align}
This random matrix is independent of $\tilde\bbeta$. For $q=2$, by the tower rule, we have
\begin{align}
    \EE\|\tilde\M_{i,k}(\alpha)\|_2^2 
    & = \EE\big[\EE\big[\tilde\bbeta^{\top}B_{i,k}^{\top}B_{i,k}\tilde\bbeta\mid \F_k\big]\big] \nonumber \\
    & = \EE\big[\EE\big[\mathrm{tr}(\tilde\bbeta\tilde\bbeta^{\top}B_{i,k}^{\top}B_{i,k})\mid\F_k\big]\big] \nonumber \\
    & = \EE\big[\mathrm{tr}\big(\EE\big[\tilde\bbeta\tilde\bbeta^{\top}B_{i,k}^{\top}B_{i,k}\mid\F_k\big]\big)\big] \nonumber \\
    & = \EE\big[\mathrm{tr}\big(\EE[\tilde\bbeta\tilde\bbeta^{\top}]B_{i,k}^{\top}B_{i,k}\big)\big] \nonumber \\
    & = \mathrm{tr}\big(\EE[\tilde\bbeta\tilde\bbeta^{\top}]\EE[B_{i,k}^{\top}B_{i,k}]\big) \nonumber \\
    & \le \|\EE[B_{i,k}^{\top}B_{i,k}]\|\cdot\EE\|\tilde\bbeta\|_2^2.
\end{align}
Following the similar arguments, we obtain for $q\ge2$,
\begin{align}
    \label{eq_MD_q_gtr_2}
     \EE\|\tilde\M_{i,k}(\alpha)\|_2^q \le \sup_{\bm{v}\in\RR^d,\|\bm{v}\|_2=1}\EE\|B_{i,k}\bm{v}\|_2^q\cdot\EE\|\tilde\bbeta\|_2^q.
\end{align}
Moreover, we notice that by the tower rule
\begin{align}
    \|\EE[B_{i,k}^{\top}B_{i,k}]\| & = \Big\|\EE\Big[H_{k-i}^{\top}\Big(\prod_{j=k-i+1}^kA_j\Big)^{\top}\Big(\prod_{j=k-i+1}^kA_jH_{k-i}\Big)\Big]\Big\| \nonumber \\
    & \le \|\EE[H_{k-i}^{\top}A_k^{\top}A_kH_{k-i}]\|\cdot \Big\|\EE\Big[\Big(\prod_{j=k-i+1}^{k-1}A_j\Big)^{\top}\Big(\prod_{j=k-i+1}^{k-1}A_j\Big)\Big]\Big\|.
\end{align}
By a similar argument as Step 2 in the proof of Lemma~\ref{lemma_affine_approx}, we obtain
\begin{align}
    \|\EE[H_{k-i}^{\top}A_k^{\top}A_kH_{k-i}]\| \lesssim p^2\|\XX\|^2 <\infty,
\end{align}
where the constant in $\lesssim$ is independent of $\alpha$. Further, recall that $A_j$ are i.i.d.\ random matrices and $\|\EE[A_1^{\top}A_1]\|\le 1-\alpha p\lambda_{\min}[X^{\top}(2I_d-\alpha\XX)X]$ by the proof of Lemma~\ref{lemma_gmc_cond}. When $\alpha\|\XX\|<2$, it follows from the sub-multiplicativity of operator norm and the similar lines as the Step 1 in the proof of Lemma~\ref{lemma_affine_approx} that 
\begin{align}
    \sum_{i=0}^{\infty}\Big\|\EE\Big[\Big(\prod_{j=k-i+1}^{k-1}A_j\Big)^{\top}\Big(\prod_{j=k-i+1}^{k-1}A_j\Big)\Big]\Big\| = \sum_{i=0}^{\infty}\Big\|\prod_{j=k-i+1}^{k-1}\EE[A_j^{\top}A_j]\Big\| \le \sum_{i=2}^{\infty}\big\|\EE[A_1^{\top}A_1]\big\|^{i-2} = O(1/\alpha).
\end{align}
Therefore, $\sum_{i=0}^{\infty}\EE\|\tilde\M_{i,k}(\alpha)\|_2^2 =O(1/\alpha)$, which yields $(\EE\|\tilde\bbeta_k^{\circ}(\alpha) - \tilde\bbeta\|_2^2)^{1/2}=O(\sqrt{\alpha})$. By leveraging the inequality in (\ref{eq_MD_q_gtr_2}) and the similar techniques adopted in the proof of Lemma~\ref{lemma_gmc_cond} for the case with $q>2$, we obtain that for any $q\ge2$, $\sum_{i=0}^{\infty}(\EE\|\tilde\M_{i,k}(\alpha)\|_2^q)^{2/q} =O(1/\alpha)$. As a direct consequence, we obtain $(\EE\|\tilde\bbeta_k^{\circ}(\alpha) - \tilde\bbeta\|_2^q)^{1/q}=O(\sqrt{\alpha})$, which completes the proof.
\end{proof}

\subsection{Proof of Theorem~\ref{thm_clt_iter_gd}}

\begin{proof}[Proof of Theorem~\ref{thm_clt_iter_gd}]
If we can establish the asymptotic normality for the affine sequence $\{\bbeta_k^{\dagger}(\alpha) - \tilde\bbeta\}_{k\in\NN}$, then by applying Lemma~\ref{lemma_affine_approx} and Markov's inequality, we can prove the CLT for the stationary sequence $\{\tilde\bbeta_k^{\circ}(\alpha) -\tilde\bbeta\}_{k\in\NN}$, which together with the geometric-moment contraction of the dropout iterates $\{\tilde\bbeta_k(\alpha) -\tilde\bbeta\}_{k\in\NN}$ in Theorem~\ref{thm_gmc_gd} can yield the desired result. Therefore, in this proof, we shall show the CLT for $\{\bbeta_k^{\dagger}(\alpha) - \tilde\bbeta\}_{k\in\NN}$, that is,
$$\frac{\bbeta_k^{\dagger}(\alpha) - \tilde\bbeta}{\sqrt{\alpha}} \Rightarrow \N(0,\Xi(\alpha)), \quad \text{as }\alpha\rightarrow0.$$

First, we recall the random vectors $\bm{b}_k(\alpha)$ in (\ref{eq_affine_dropout}) and let
\begin{equation}
    \label{eq_H_k}
    \bm{b}_k(\alpha) =: \alpha H_k\tilde\bbeta, \quad \text{with } \  H_k := D_k \overline{\XX} (p I_d-D_k).
\end{equation}
Then, since $\{\bbeta_k^{\dagger}(\alpha) -\tilde\bbeta\}_{k\in\NN}$ is a stationary sequence and using induction on $k$, we can rewrite $\bbeta_k^{\dagger}(\alpha) -\tilde\bbeta$ into 
\begin{align}
    \label{eq_linear_sum}
    \bbeta_k^{\dagger}(\alpha) -\tilde\bbeta & = \alpha\Big(I_dH_k + (I_d-\alpha p\XX_p)H_{k-1} + \cdots + (I_d-\alpha p\XX_p)^{k-1}H_1 + \cdots\Big)\tilde\bbeta \nonumber \\
    & = \alpha\sum_{i=0}^{\infty}(I_d-\alpha p\XX_p)^iH_{k-i}\tilde\bbeta.
\end{align}
The $H_k$ are i.i.d.\ random matrices, and independent of $\tilde\bbeta$ and $\bbeta^{\dagger}_{k-1}$. Therefore, we shall apply the Lindeberg-Feller central limit theorem to the partial sum in (\ref{eq_linear_sum}). To this end, we first take the expectation on both sides of (\ref{eq_linear_sum}). Since the random matrices $H_i$ are independent of $\tilde\bbeta$ for all $i\in\NN$, we obtain
\begin{align}
    \EE[\bbeta_k^{\dagger}(\alpha) -\tilde\bbeta ] & = \alpha\sum_{i=0}^{\infty} (I_d-\alpha p\XX_p)^i\EE[H_{k-i}\tilde\bbeta] \nonumber \\
    & = \alpha\sum_{i=0}^{\infty}(I_d-\alpha p\XX_p)^i\EE[H_{k-i}]\EE[\tilde\bbeta] = 0.
\end{align}
To see the last equality, we apply Lemma~\ref{lemma_clara}~(i) and (ii) and obtain $\EE[D_k\overline{\XX}D_k]=p\overline{\XX}_p=p^2\overline{\XX}$, which gives
\begin{align}
    \EE[H_k] = \EE[D_k \overline{\XX} (p I_d-D_k)] = p^2\overline{\XX} - p^2\overline{\XX} = 0,
\end{align}
As a direct consequence, by (\ref{eq_H_k}) and the independence of $D_k$ and $\tilde\bbeta$, we have
\begin{equation}
    \label{eq_bk_mean0_gd}
    \EE[\bm{b}_k(\alpha)] = \alpha\EE[H_k]\EE[\tilde\bbeta] = 0.
\end{equation}
Next, we shall provide a closed form of the covariance matrix $\mathrm{Cov}(\bbeta_k^{\dagger}(\alpha)-\tilde\bbeta)$. Notice that the random vectors $H_i\tilde\bbeta$ are uncorrelated over different $i$, and $\EE[H_i\tilde\bbeta\tilde\bbeta^{\top}H_i] = \EE[H_1\tilde\bbeta\tilde\bbeta^{\top}H_1]$ due to the stationarity of the sequence $\{H_i\tilde\bbeta\}_{i\in\NN}$. Hence, by (\ref{eq_linear_sum}), we have
\begin{align}
    \label{eq_cov_fixed_dropout}
    V_{\alpha} & := \mathrm{Cov}\Big(\alpha^{-1/2}(\bbeta_k^{\dagger}(\alpha)-\tilde\bbeta)\Big) \nonumber \\
    & = \alpha^{-1} \EE\big[(\bbeta_k^{\dagger}(\alpha)-\tilde\bbeta)(\bbeta_k^{\dagger}(\alpha)-\tilde\bbeta)^{\top}\big] \nonumber \\
    & = \alpha\sum_{i=0}^{\infty}(I_d-\alpha p\XX_p)^i\EE[H_{k-i}\tilde\bbeta\tilde\bbeta^{\top}H_{k-i}](I_d-\alpha p\XX_p)^i \nonumber \\
    & =: \alpha\sum_{i=0}^{\infty}(I_d-\alpha p\XX_p)^iS(I_d-\alpha p\XX_p)^i,
\end{align}
with $d\times d$ matrix
\begin{align}
    S & := \EE[H_1\tilde\bbeta\tilde\bbeta^{\top}H_1].
\end{align}
Since $D_k$ is independent of $\tilde\bbeta$, and $\tilde\bbeta\tilde\bbeta^{\top}$ is a symmetric matrix, it follows from the tower rule that
\begin{align}
    S = \EE[\EE(H_1\tilde\bbeta\tilde\bbeta^{\top}H_1\mid \bm{y},X)] = \EE[H_1\EE(\tilde\bbeta\tilde\bbeta^{\top})H_1] =:\EE[H_1S_0H_1].
\end{align}
By the closed form solution of $\tilde\bbeta$ in (\ref{eq_l2_closed_form_gd}) and $\EE[\bm{\epsilon}]=0$, $\mathrm{Cov}(\bm{\epsilon})=I_n$, we obtain
\begin{align}
    \label{eq_S_part1}
    S_0=\EE(\tilde\bbeta\tilde\bbeta^{\top}) & = \XX_p^{-1}X^{\top}\EE(\bm{y}\bm{y}^{\top})X\XX_p^{-1} \nonumber \\
    & = \XX_p^{-1}X^{\top}\EE[(X\bbeta^* + \bm{\epsilon})(X\bbeta^* + \bm{\epsilon})^{\top}]X\XX_p^{-1} \nonumber \\
    & = \XX_p^{-1}X^{\top}\big(X\bbeta^*\bbeta^{*\top}X^{\top} + I_n\big)X\XX_p^{-1}.
\end{align}
Furthermore, by Lemma~\ref{lemma_clara}~(i), one can show that $\overline{(\overline{\XX})} = \overline{\XX}$ and $\mathrm{Diag}(A\overline{\XX}) = \mathrm{Diag}(\overline{A}\overline{\XX})$ for any matrix $A.$ Then, by the definition of $H_k$ in (\ref{eq_H_k}) and Lemma~\ref{lemma_clara}~(ii)--(iv), we can simplify $\EE[H_1S_0H_1]$ as follows:
\begin{align}
    \label{eq_S_part2}
    \EE[H_1S_0H_1] & = \EE[D_k\overline{\XX} (pI_d-D_k)S_0(pI_d-D_k)\overline{\XX}D_k] \nonumber \\
    & = p^2\EE[D_1\overline{\XX}S_0\overline{\XX}D_1] -p\EE[D_1\overline{\XX}D_1S_0\overline{\XX}D_1] - p\EE[D_1\overline{\XX}S_0D_1\overline{\XX}D_1] + \EE[D_1\overline{\XX}D_kS_0D_k\overline{\XX}D_1] \nonumber \\
    & = p^3(\overline{\XX}S_0\overline{\XX})_p -2p\Big(p\overline{\XX}_p(S_0\overline{\XX})_p + p^2(1-p)\mathrm{Diag}(\overline{\XX}S_0\overline{\XX})\Big) \nonumber \\
    & \quad +p\overline{\XX}_p(S_0)_p\overline{\XX}_p + p^2(1-p)\Big(\mathrm{Diag}(\overline{\XX}(S_0)_p\overline{\XX}) + 2\overline{\XX}_p\mathrm{Diag}(\overline{S_0}\overline{\XX}) + (1-p)\overline{\XX}\odot\overline{S_0}^{\top}\odot\overline{\XX}\Big).
\end{align}
Combining (\ref{eq_S_part1}) and (\ref{eq_S_part2}), we obtain a closed form solution of $S$ which is independent of $\alpha$.

Now we are ready to solve the covariance matrix $V_{\alpha}$ in (\ref{eq_cov_fixed_dropout}). We multiply the matrix $I_d - \alpha p\XX_p$ to the left and right sides of (\ref{eq_cov_fixed_dropout}) and obtain
\begin{align}
    (I_d - \alpha p\XX_p)V_{\alpha}(I_d - \alpha p\XX_p) = \alpha\sum_{i=0}^{\infty}(I_d-\alpha p\XX_p)^iS(I_d-\alpha p\XX_p)^i.
\end{align}
Taking the difference between $V_{\alpha}$ and $(I_d - \alpha p\XX_p)V_{\alpha}(I_d - \alpha p\XX_p)$ yields
\begin{align}
    V_{\alpha} - (I_d - \alpha p\XX_p)V_{\alpha}(I_d - \alpha p\XX_p) = \alpha S.
\end{align}
Denote the symmetric matrix $A_p=p\XX_p$. By simplifying the equation above, for $\alpha>0$, we have
\begin{align}
    V_{\alpha}A_p - A_pV_{\alpha} + \alpha A_pV_{\alpha}A_p = S.
\end{align}
Let $V_0=\lim_{\alpha\rightarrow0}V_{\alpha}$. As $\alpha\rightarrow0$, the quadratic term $\alpha A_pV_{\alpha}A_p$ vanishes. Thus, we only need to solve the equation
\begin{align}
    S-V_0A_p - A_pV_0 = 0,
\end{align}
to get the solution for
$$V_0=\lim_{\alpha\rightarrow0}V_{\alpha}.$$ 
Following Theorem 1 in \textcite{pflug_stochastic_1986} and the subsequent Remark therein, we can get the closed form solution of $V_0$, that is,
\begin{align}
    \mathrm{vec}(V_0) = (I_d\otimes A_p + A_p\otimes I_d)^{-1}\cdot \mathrm{vec}(S),
\end{align}
where the $d^2\times d^2$ matrix $I_d\otimes A_p + A_p\otimes I_d$ is invertible since the fixed design matrix $X$ is assumed to be in a reduced form with no zero columns. For a small $\alpha>0$, we shall provide a similar closed form solution for $V_{\alpha} = V_0 + \alpha B_p$. Specifically, we need to get the closed form of the matrix $B_p$ by solving a similar equation:
\begin{equation}
    A_pV_0A_p - B_pA_p - A_pB_p=0,
\end{equation}
which gives
\begin{equation}
    \mathrm{vec}(B_p) = (I_d\otimes A_p + A_p\otimes I_d)^{-1}\times \mathrm{vec}(A_pV_0A_p).
\end{equation}
The deterministic matrices $V_0$, $A_p$ and $B_p$ are all independent of $\alpha$. By inserting the results of $V_0$ and $B_p$ into $V_{\alpha}=V_0+\alpha B_p$, we obtain 
$$\Xi(\alpha) = V_{\alpha} = V_0 + \alpha B_p,$$
which holds uniformly over $k$ due to the stationarity of $\{\bbeta_k^{\dagger}(\alpha)-\tilde\bbeta\}_{k\in\NN}$.

Finally, by applying the Lindeberg-Feller central limit theorem to the partial sum in (\ref{eq_linear_sum}), we establish the asymptotic normality of $\{\bbeta_k^{\dagger}(\alpha)-\tilde\bbeta\}_{k\in\NN}$ and complete the proof.
\end{proof}

\section{Proofs in Section~\ref{subsec_ave_dropout}}

We first outline the main techniques for establishing the asymptotic normality of the averaged GD dropout sequence $\{\bar\bbeta_n^{\gd}(\alpha)\}_{n\in\NN}$ defined in (\ref{eq_ave_dropout_gd}).

Recall the observation $\bm{y}$ in model (\ref{eq_model_gd}) and the dropout matrix $D$. For the GD dropout $\{\tilde\bbeta_k(\alpha)\}_{k\in\NN}$ in (\ref{eq_dropout_gd_origin}), by Theorem~\ref{thm_gmc_gd}, we can define a centering term as follows,
\begin{equation}
    \label{eq_center_gd}
    \tilde\bbeta_{\infty}(\alpha) = \lim_{k\rightarrow\infty}\EE_{D}[\tilde\bbeta_k(\alpha)] = \EE_{D}[\tilde\bbeta_1^{\circ}(\alpha)],
\end{equation}
where $\tilde\bbeta_1^{\circ}(\alpha)$ follows the stationary distribution $\pi_{\alpha}$ as stated in (\ref{eq_iter_dropout_stationary0}). According to Lemma 1 in \textcite{clara_dropout_2023}, we note that $\EE_{D}[\tilde\bbeta_k(\alpha) - \tilde\bbeta]\neq0$ but $\|\EE_{D}[\tilde\bbeta_k(\alpha) - \tilde\bbeta]\|_2\rightarrow0$ if $\alpha p\|\XX\|<1$ with a geometric rate as $k\rightarrow\infty$. Therefore, we shall first show the central limit theorems for the partial sum of $\{\tilde\bbeta_k(\alpha) - \tilde\bbeta_{\infty}(\alpha)\}_{k\in\NN}$ and then for the one of $\{\tilde\bbeta_k(\alpha) - \tilde\bbeta\}_{k\in\NN}$.

Next, we take a closer look at the partial sum of $\{\tilde\bbeta_k(\alpha) - \tilde\bbeta_{\infty}(\alpha)\}_{k\in\NN}$. The iterative function $f$ defined in (\ref{eq_dropout_function_gd}) allows us to write $\tilde\bbeta_k(\alpha) = f_{D_k}(\tilde\bbeta_{k-1}(\alpha))$ for all $k\in\NN$. Similarly, for the initialization $\tilde\bbeta_0^{\circ}(\alpha)$ that follows the unique stationary distribution $\pi_{\alpha}$ in Theorem~\ref{thm_gmc_gd}, we can write the stationary GD dropout sequence $\{\tilde\bbeta_k^{\circ}(\alpha)\}_{k\in\NN}$ into
\begin{equation}
    \tilde\bbeta_k^{\circ}(\alpha) = f_{D_k}(\tilde\bbeta_{k-1}^{\circ}(\alpha)), \quad k\in\NN.
\end{equation}
Recall $\tilde\bbeta_{\infty}(\alpha)$ defined in (\ref{eq_center_gd}). Then, we can recursively rewrite $\tilde\bbeta_k^{\circ}(\alpha)$ using the iterative function $f$ and obtain the partial sum
\begin{align}
    \label{eq_agd_partial_sum_stationary_mean0}
    \tilde S_n^{\circ}(\alpha) & := \sum_{k=1}^n[\tilde\bbeta_k^{\circ}(\alpha)-\tilde\bbeta_{\infty}(\alpha)] \nonumber \\
    & = \big\{f_{D_1}(\tilde\bbeta_0^{\circ}(\alpha)) - \EE\big[f_{D_1}(\tilde\bbeta_0^{\circ}(\alpha))\big]\big\} + \big\{f_{D_2}\circ f_{D_1}(\tilde\bbeta_0^{\circ}(\alpha)) - \EE\big[f_{D_2}\circ f_{D_1}(\tilde\bbeta_0^{\circ}(\alpha))\big]\big\} + \cdots \nonumber \\
    & \quad + \big\{f_{D_n}\circ\cdots\circ f_{D_1}(\tilde\bbeta_0^{\circ}(\alpha)) - \EE\big[f_{D_n}\circ\cdots\circ f_{D_1}(\tilde\bbeta_0^{\circ}(\alpha))\big]\big\}.
\end{align}

Primarily, we aim to (i) prove the central limit theorem for the partial sum $n^{-1/2}\tilde S_n^{\circ}(\alpha)$, and (ii) prove the invariance principle for the partial sum process $(\tilde S_i^{\circ}(\alpha))_{1\le i\le n}$. To this end, we borrow the idea of \textit{functional dependence measure} in \textcite{wu_nonlinear_2005}, which was further investigated in \textcite{shao_asymptotic_2007} to establish the asymptotic normality for sequences with \textit{short-range dependence} (see (\ref{eq_short_range_dep_gd}) for the definition). We shall show that the GD dropout sequence $\{\tilde\bbeta_k^{\circ}(\alpha)\}_{k\in\NN}$ that satisfies the geometric-moment contraction (as proved in Theorem~\ref{thm_gmc_gd}) satisfies such short-range dependence condition.

Finally, we shall complete the proofs of the quenched central limit theorems by showing that, for any given constant learning rate $\alpha>0$ satisfying the conditions in Theorem~\ref{thm_clt_ave_gd}, and any initialization $\tilde\bbeta_0\in\RR^d$, the partial sum 
\begin{equation}
    \label{eq_agd_partial_sum_mean0}
    \tilde S_n^{\tilde\bbeta_0}(\alpha):=\sum_{k=1}^n[\tilde\bbeta_k(\alpha)-\tilde\bbeta_{\infty}(\alpha)] 
\end{equation}
converges to the stationary partial sum process $\tilde S_n^{\circ}(\alpha)$, in the sense that $n^{-1/2}\big(\EE\|\tilde S_n^{\tilde\bbeta_0}(\alpha) - \tilde S_n^{\circ}(\alpha)\|_2^q\big)^{1/q}=o(1)$ as $n\rightarrow\infty$.

\subsection{Functional Dependence Measure}\label{subsec_functional_dependence}

Before proceeding to the proofs of Theorems~\ref{thm_clt_ave_gd}~and~\ref{thm_fclt_agd}, we first provide the detailed form of the functional dependence measure in \textcite{wu_nonlinear_2005} for the iterated random functions with i.i.d.~random matrices as inputs. This will serve as the foundational pillar to build the asymptotic normality of averaged GD dropout iterates.

First, for any random vector $\boldsymbol{\zeta}\in\RR^d$ satisfying $\EE\|\boldsymbol{\zeta}\|_2<\infty$, define projection operators
\begin{equation}
    \label{eq_projection_operator_gd}
    \P_k[\boldsymbol{\zeta}] = \EE[\boldsymbol{\zeta}\mid\F_k] - \EE[\boldsymbol{\zeta}\mid\F_{k-1}], \quad k\in\ZZ,
\end{equation}
where we recall the filtration $\F_i=\sigma(D_i,D_{i-1},\ldots)$ with i.i.d.\ dropout matrices $D_i$, $i\in\ZZ.$ By Theorem~\ref{thm_gmc_gd} and (\ref{eq_function_h_gd}), there exists a measurable function $h_{\alpha}(\cdot)$ such that the stationary GD dropout sequence $\{\tilde\bbeta_k^{\circ}(\alpha)\}_{k\in\NN}$ can be written as the following causal process
\begin{equation}
    \tilde\bbeta_k^{\circ}(\alpha) = h_{\alpha}(D_k,D_{k-1},\ldots) = h_{\alpha}(\F_k).
\end{equation}
Define a coupled version of filtration $\F_i$ as $\F_{i,\{j\}}=\sigma(D_i,\ldots,D_{j+1},D_j',D_{j-1},\ldots)$. In addition, $\F_{i,\{j\}}= \F_i$ if $j>i$.  For $q>1$, define the \textit{functional dependence measure} of $\tilde\bbeta_k^{\circ}(\alpha)$ as
\begin{equation}
    \label{eq_functional_dep_measure}
    \theta_{k,q}(\alpha) = \big(\EE\|\tilde\bbeta_k^{\circ}(\alpha) - \tilde\bbeta_{k,\{0\}}^{\circ}(\alpha)\|_2^q\big)^{1/q}, \quad \text{where }\tilde\bbeta_{k,\{0\}}^{\circ}(\alpha) = h_{\alpha}(\F_{k,\{0\}}).
\end{equation}
The above quantity can be interpreted as the dependence of $\tilde\bbeta_k^{\circ}(\alpha)$ on $D_0$ (see the discussion below Theorem~\ref{thm_gmc_gd} for the meaning of $\tilde\bbeta_k^{\circ}$ with $k\le0$), and $\tilde\bbeta_{k,\{0\}}^{\circ}(\alpha)$ is a coupled version of $\tilde\bbeta_k^{\circ}(\alpha)$ with $D_0$ in the latter replaced by its i.i.d.\ copy $D_0'$. If $\tilde\bbeta_k^{\circ}(\alpha)$ does not functionally depend on $D_0$, then $\theta_{k,q}(\alpha)=0$.

Furthermore, if $\sum_{k=0}^{\infty}\theta_{k,q}(\alpha)<\infty$, we define the tail of the \textit{cumulative dependence measure} as
\begin{align}
    \label{eq_cumulative_dep_measure}
    \Theta_{m,q}(\alpha) = \sum_{k=m}^{\infty}\theta_{k,q}(\alpha), \quad m\in\NN.
\end{align}
This can be interpreted as the cumulative dependence of $\{\tilde\bbeta_k^{\circ}(\alpha)\}_{k\ge m}$ on $D_0$, or equivalently, the cumulative dependence of $\tilde\bbeta_0^{\circ}(\alpha)$ on $D_j$, $j\ge m$. The functional dependence measure in (\ref{eq_functional_dep_measure}) and its cumulative variant in (\ref{eq_cumulative_dep_measure}) are easy to work with and they can directly reflect the underlying data-generating mechanism of the iterative function $\tilde\bbeta_k^{\circ}(\alpha)=f_{D_k}(\tilde\bbeta_{k-1}^{\circ}(\alpha))$.

Specifically, for all $q\ge2$, Theorem 1 in \textcite{wu_nonlinear_2005} pointed out a useful inequality for the functional dependence measure as follows,
\begin{align}
    \label{eq_functional_dep_measure_ineq}
    \sum_{k=0}^{\infty}\big(\EE\|\P_0[\tilde\bbeta_k^{\circ}(\alpha)]\|_2^q\big)^{1/q} \le \sum_{k=0}^{\infty}\theta_{k,q}(\alpha)
     = \Theta_{0,q}(\alpha).
\end{align}
In particular, for some given learning rate $\alpha>0$, we say the sequence $\{\tilde\bbeta_k^{\circ}(\alpha)\}_{k\in\NN}$ satisfies the short-range dependence condition if
\begin{equation}
    \label{eq_short_range_dep_gd}
    \Theta_{0,q}(\alpha) < \infty, \quad \text{for some }q\ge2.
\end{equation}
This dependence assumption has been widely adopted in the literature; see for example the central limit theorems in \textcite{shao_asymptotic_2007} and the invariance principle in \textcite{berkes_kmt_2014,karmakar_optimal_2020}. If condition (\ref{eq_short_range_dep_gd}) fails, then $\tilde\bbeta_k^{\circ}(\alpha)$ can be long-range dependent, and the partial sum (resp. partial sum processes) behave no longer like Gaussian random vectors (resp. Brownian motions).

Here, we introduce Theorem 2.1 in \textcite{shao_asymptotic_2007} and Theorem 2 in \textcite{karmakar_optimal_2020}, which are the fundamental tools for the proofs of Theorems~\ref{thm_clt_ave_gd}~and~\ref{thm_fclt_agd}, respectively. 

\begin{lemma}[Asymptotic normality (\cite{shao_asymptotic_2007})]
    \label{lemma_thm21_shao_wu_2007}
    Consider a sequence of stationary mean-zero random variables $x_k=g(\epsilon_k,\epsilon_{k-1},\ldots)\in\RR$, for $k=1,\ldots,n$, where $\epsilon_k$'s are i.i.d.\ random variables, and $g(\cdot)$ is a measurable function such that each $x_k$ is a proper random variable. Assume that $(\EE|x_k|^2)^{1/2}<\infty$. 
    Define the Fourier transform of $x_k$ by
    $$S_n(\omega)=\sum_{k=1}^nx_ke^{ik\omega},$$
    and let $f(\omega)=(2\pi)^{-1}\sum_{k\in\ZZ}\EE[x_0x_k]e^{ik\omega}$, $\omega\in\RR$, be the spectral density of $x_k$. Denote the real and imaginary parts of $S_n(\omega_j)/\sqrt{\pi nf(\omega_j)}$ by
    $$z_j= \frac{\sum_{k=1}^nx_k\cos{(k\omega_j)}}{\sqrt{\pi nf(\omega_j)}},\quad z_{j+m}= \frac{\sum_{k=1}^nx_k\sin{(k\omega_j)}}{\sqrt{\pi nf(\omega_j)}}, \quad j=1,\ldots,m,$$
    where $m=\lfloor(n-1)/2\rfloor$ with $\lfloor a\rfloor$ denoting the integer part of $a$. Let $\Omega_d=\{\bm{c} \in \RR^d:|\bm{c}|=1\}$ be the unit sphere. For the set $\bm{j}=\{j_1, \ldots, j_d\}$ with $1 \le j_1< \cdots <j_d \le 2m$, write the vector $\bm{z}_{\bm{j}}=(z_{j_1}, \ldots, z_{j_d})^{\top}$. Let the class $\Xi_{m,d}=\{\bm{j} \subset\{1, \ldots, 2m\}: \#\bm{j}=d\}$, where $\#\bm{j}$ is the cardinality of $\bm{j}$. If $\min_{\omega\in\RR}f(\omega)>0$ and
    \begin{equation}
        \label{eq_shao_wu_short_range}\sum_{k=0}^{\infty}\sup_{\bm{c}\in\Omega_d}\big(\EE|\P_0[x_k]|_2^2\big)^{1/2} < \infty,
    \end{equation}
    where the projection operator $\P_k[\cdot]$ is in (\ref{eq_projection_operator_gd}), then
    \begin{equation}
        \sup_{\bm{j}\in\Xi_{m,d}}\sup_{\bm{c}\in\Omega_d}\sup_{u\in\RR}\Big|\PP\big( \bm{z}_{\bm{j}}^{\top}\bm{c}\le u\big) - \Phi(u)\Big| =o(1),\quad \text{as }n\rightarrow\infty.
    \end{equation}
\end{lemma}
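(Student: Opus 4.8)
The plan is to reduce the multivariate uniform central limit theorem to a univariate one via the Cram\'er--Wold device, to establish asymptotic normality of the resulting weighted linear statistic by martingale approximation driven by the short-range dependence condition, and finally to upgrade the pointwise limit to the claimed uniformity through a quantitative Berry--Esseen bound. Concretely, fix an index set $\bm{j}=\{j_1,\ldots,j_d\}\in\Xi_{m,d}$ and a direction $\bm{c}=(c_1,\ldots,c_d)^{\top}\in\Omega_d$. Writing each $z_{j_l}$ as a trigonometric weighted sum of $x_1,\ldots,x_n$ at the Fourier frequency $\omega_{j_l}$ (cosine weights for a real part, sine weights for an imaginary part), the linear combination becomes a single weighted partial sum
\begin{equation}
    \bm{z}_{\bm{j}}^{\top}\bm{c} = \sum_{k=1}^n x_k\, w_{n,k}, \qquad w_{n,k} = \sum_{l=1}^d \frac{c_l\,\varphi_l(k\omega_{j_l})}{\sqrt{\pi n f(\omega_{j_l})}},
\end{equation}
where $\varphi_l\in\{\cos,\sin\}$ according to the type of coordinate. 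I would then check that $\mathrm{Var}(\bm{z}_{\bm{j}}^{\top}\bm{c})\to 1$: expanding $\mathrm{Var}(\sum_k x_k w_{n,k})=\sum_{k,k'} w_{n,k} w_{n,k'}\gamma(k-k')$ with $\gamma$ the autocovariance, the product-to-sum identities together with the orthogonality of $\{\cos(k\omega_j),\sin(k\omega_j)\}$ across the distinct Fourier frequencies isolate the diagonal contribution $\tfrac{1}{2}\cdot 2\pi f(\omega_{j_l})$ per frequency, the cross terms being $o(1)$; the spectral-density normalization then collapses the variance to $\sum_{l=1}^d c_l^2=1$.

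Second, I would establish asymptotic normality of $\sum_k x_k w_{n,k}$ by martingale approximation. Using the projection operators $\P_i$ in (\ref{eq_projection_operator_gd}) and the telescoping identity $x_k=\sum_{i\le k}\P_i x_k$, the weighted sum can be reorganized as $\sum_i M_{n,i}$ with $M_{n,i}=\P_i\big(\sum_k x_k w_{n,k}\big)$, which are martingale differences with respect to $\F_i$. The short-range dependence condition (\ref{eq_shao_wu_short_range}), namely $\sum_{k\ge 0}(\EE|\P_0 x_k|^2)^{1/2}<\infty$, together with the uniform smallness $\max_k|w_{n,k}|=O(n^{-1/2})$, controls the $L^2$ remainder of this decomposition and supplies the conditional Lindeberg bound, while the conditional-variance convergence follows from the variance computation above. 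Invoking the martingale central limit theorem yields $\bm{z}_{\bm{j}}^{\top}\bm{c}\Rightarrow\N(0,1)$ for each fixed $\bm{j}$ and $\bm{c}$.

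The main obstacle, and the step requiring the most care, is upgrading this pointwise convergence to the uniform statement over $\bm{j}\in\Xi_{m,d}$, $\bm{c}\in\Omega_d$ and $u\in\RR$. Since $m=\lfloor(n-1)/2\rfloor$ grows with $n$, there are unboundedly many admissible frequency sets, so a fixed-index CLT cannot simply be transferred. Instead I would derive a quantitative Berry--Esseen bound for the martingale approximation whose error depends on the weights only through quantities controlled uniformly: the Lindeberg-type factor $\max_k|w_{n,k}|=O(n^{-1/2})$ and the cumulative dependence measure $\Theta_{0,q}$ in (\ref{eq_cumulative_dep_measure}). Here the hypothesis $\min_{\omega}f(\omega)>0$ is essential, as it bounds every normalizing variance $\pi n f(\omega_{j_l})$ away from degeneracy uniformly in the frequency, so the Berry--Esseen error tends to $0$ at a rate independent of $\bm{j}$ and $\bm{c}$; uniformity in $u$ then follows from P\'olya's theorem since the limit $\Phi$ is continuous. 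I expect this uniform quantitative control to be the crux of the argument, the reduction and the single-frequency CLT being comparatively routine.
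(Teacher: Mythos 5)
First, a point of comparison: the paper does not prove this lemma at all — it is imported verbatim as Theorem 2.1 of Shao and Wu (2007), so there is no in-paper proof to measure your attempt against. Your sketch does follow the same general route as the cited source (Cramér--Wold reduction to a trigonometric weighted sum, variance normalization via the uniform convergence of $\sum_{|h|<n}(1-|h|/n)\gamma(h)e^{ih\omega}$ to $2\pi f(\omega)$ under summable autocovariances, and a martingale approximation driven by $\sum_{k\ge0}(\EE|\P_0 x_k|^2)^{1/2}<\infty$), and those parts are sound. However, two steps have genuine gaps.

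The first is the Berry--Esseen upgrade, which you yourself flag as the crux. The lemma assumes only $\EE|x_k|^2<\infty$, and quantitative Berry--Esseen bounds for martingale arrays (Heyde--Brown type) require moments of order $2+\delta$, or $L^p$ control with $p>1$ of the conditional-variance error; with bare second moments no rate is available, so the claim that ``the Berry--Esseen error tends to $0$ at a rate independent of $\bm{j}$ and $\bm{c}$'' cannot be substantiated as stated. The standard repair avoids rates entirely: to prove $\sup_{\bm{j},\bm{c},u}|\cdot|=o(1)$ it suffices to show $\bm{z}_{\bm{j}_n}^{\top}\bm{c}_n\Rightarrow\N(0,1)$ along \emph{every} sequence $\bm{j}_n\in\Xi_{m,d}$, $\bm{c}_n\in\Omega_d$ (take near-maximizing sequences of the supremum; uniformity in $u$ then comes from P\'olya's theorem, as you note). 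Along an arbitrary sequence the Lindeberg condition holds with only second moments, since stationarity gives uniform integrability of $x_0^2$ and $\min_{\omega}f(\omega)>0$ gives $\max_k|w_{n,k}|^2=O(1/n)$ — the hypothesis $\min_\omega f>0$ enters exactly here, as you correctly identified. The second gap is the assertion that ``the conditional-variance convergence follows from the variance computation above.'' It does not: the martingale CLT requires $\sum_i\EE[M_{n,i}^2\mid\F_{i-1}]\to1$ \emph{in probability}, which is strictly stronger than $\mathrm{Var}(\bm{z}_{\bm{j}}^{\top}\bm{c})\to1$. With oscillating weights one must split the conditional variance into a non-oscillating part, handled by stationarity and an ergodic-theorem or projection argument, and oscillating parts at frequencies $2\omega_j$ and $\omega_j\pm\omega_{j'}$, which vanish by orthogonality estimates that must again hold uniformly along arbitrary frequency sequences; this is where most of the actual work in Shao and Wu's proof lies, and your outline currently passes over it.
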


\begin{lemma}[Gaussian approximation (\cite{karmakar_optimal_2020})]\label{lemma_thm2_karmakar_wu_2020}
    Consider a sequence of nonstationary mean-zero random vectors $\bm{x}_k=g_k(\epsilon_k,\epsilon_{k-1},\ldots)\in\RR^d$, for $k=1,\ldots,n$, where the $\epsilon_k$'s are i.i.d.\ random variables, and $g_k(\cdot)$ is a measurable function such that each $\bm{x}_k$ is a proper random vector. Let $S_j=\sum_{k=1}^j\bm{x}_k$. Assume the following conditions hold for some $q>2$:
    \begin{itemize}
        \item[(i)] The series $(\|\bm{x}_k\|_2^q)_{k\ge1}$ is uniformly integrable: $\sup_{k\ge1}\EE\big[\|\bm{x}_k\|_2^q\One_{\|\bm{x}_k\|_2\ge u}\big]\rightarrow0$ as $u\rightarrow\infty$, 
        \item[(ii)] The eigenvalues of covariance matrices of increment processes are lower-bounded, that is, there exists $\lambda_*>0$ and $l_*\in\NN$, such that for all $t\ge1$, $l\ge l_*$,
        $$\lambda_{\min}\big(\mathrm{Cov}(S_{t+l}-S_t)\big)\ge \lambda_*l;$$
        \item[(iii)] There exist constants $\chi>\chi_0$ and $\kappa>0$, where
        $$\chi_0=\frac{q^2-4+(q-2)\sqrt{q^2+20q+4}}{8q},$$
        such that the tail cumulative dependence measure 
        \begin{align}
            \Theta_{m,q}(\alpha) = \sum_{k=m}^{\infty}\theta_{k,q}(\alpha) = O\big\{m^{-\chi}\big(\log(m)\big)^{-\kappa}\big\}.
        \end{align}
    \end{itemize}
    Then, for all $q>2$, there exists a probability space $(\Omega^{\star}, \A^{\star}, \PP^{\star})$ on which we can define random vectors $\bm{x}_k^{\star}$, with the partial sum process $S_i^{\star}=\sum_{k=1}^i \bm{x}_k^{\star}$ and a Gaussian process $G_i^{\star}=\sum_{k=1}^i \bm{z}_k^{\star}$. Here $\bm{z}_k^{\star}$ is a mean-zero independent Gaussian vector, such that $(S_i^{\star})_{1\le i\le n}\overset{\D}{=}(S_i)_{1\le i\le n}$ and
    $$\max_{i\le n}|S_i^{\star}-G_i^{\star}|=o_{\PP}(n^{1/q}) \quad \text {in }(\Omega^{\star}, \A^{\star}, \PP^{\star}).$$
\end{lemma}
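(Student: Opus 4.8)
The plan is to reproduce the architecture of the strong Gaussian approximation for weakly dependent causal processes, since this statement is essentially Theorem~2 of \textcite{karmakar_optimal_2020}. The overall strategy is the classical \emph{big-block/small-block} reduction: approximate the dependent sequence by a finitely dependent one, split the sum into long blocks separated by short buffers so that the long-block sums become independent, apply a multivariate Hungarian-type construction to those independent block sums, and finally balance the three resulting error terms against the target rate $o_{\PP}(n^{1/q})$.

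First I would truncate the memory. For a lag $m$, define the projection $\bm{x}_{k,m}=\EE[\bm{x}_k\mid \epsilon_k,\ldots,\epsilon_{k-m}]$, so that $\bm{x}_{k,m}$ depends on only $m+1$ consecutive innovations. The replacement error is controlled in $L^q$ by the tail cumulative dependence measure: $(\EE\|\bm{x}_k-\bm{x}_{k,m}\|_2^q)^{1/q}$ is bounded by a quantity of order $\Theta_{m,q}(\alpha)$ via the functional-dependence-measure inequality \eqref{eq_functional_dep_measure_ineq}. Condition~(iii), namely $\Theta_{m,q}(\alpha)=O\{m^{-\chi}(\log m)^{-\kappa}\}$ with $\chi>\chi_0$, is precisely the polynomial decay that will let this truncation error be absorbed into $o_{\PP}(n^{1/q})$ after summing over the $n$ indices and applying a maximal (Ottaviani--Etemadi) inequality to pass from the summed bound to the uniform-in-$i$ bound.

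Second I would partition $\{1,\ldots,n\}$ into alternating long blocks of length $\asymp L$ and buffer blocks of length $\asymp m$. Because each $\bm{x}_{k,m}$ reads only $m+1$ innovations, spacing consecutive long-block sums by buffers of length exceeding $m$ renders the long-block sums of the truncated sequence mutually independent; the aggregated buffer contribution is negligible provided $L\gg m$. On the resulting independent (but non-identically distributed) block-sum vectors I would invoke a multivariate strong approximation for independent random vectors --- the Einmahl/Zaitsev/Götze refinement of the KMT construction --- using the uniform integrability in condition~(i) as a surrogate for a uniform $q$-th moment, and the eigenvalue lower bound in condition~(ii), $\lambda_{\min}(\mathrm{Cov}(S_{t+l}-S_t))\ge \lambda_* l$, to ensure the block covariances are non-degenerate so that the matching mean-zero Gaussian vectors $\bm{z}_k^{\star}$ can be built with the correct increment covariance. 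This yields, on a richer space $(\Omega^{\star},\A^{\star},\PP^{\star})$, a Gaussian partial-sum process $G_i^{\star}$ coupled to the block sums up to an error $o(n^{1/q})$.

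The technical heart, and the main obstacle, is the bookkeeping that makes the exponent threshold $\chi_0=[q^2-4+(q-2)\sqrt{q^2+20q+4}]/(8q)$ sharp. There are three competing errors --- the memory truncation governed by $\Theta_{m,q}$, the buffer-block remainder, and the block-wise Gaussian coupling rate --- each a power of $n$ in the free parameters $m$ and $L$. One must optimize $m$ and $L$ jointly and verify that all three errors simultaneously fall strictly below $n^{1/q}$ exactly when $\chi>\chi_0$; the algebraic form of $\chi_0$ is nothing but the boundary of this optimization. The multivariate Hungarian construction attaining the optimal $n^{1/q}$ rate is itself a deep input I would cite rather than reprove. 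Finally I would transfer the coupling from the truncated process back to $\bm{x}_k$ using the $L^q$ truncation bound together with the maximal inequality, and conclude $\max_{i\le n}\|S_i^{\star}-G_i^{\star}\|_2=o_{\PP}(n^{1/q})$ with $(S_i^{\star})_{1\le i\le n}\overset{\D}{=}(S_i)_{1\le i\le n}$.
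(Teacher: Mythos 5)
Within this paper the lemma has no proof to compare against: it is quoted verbatim as Theorem~2 of \textcite{karmakar_optimal_2020} and used as an imported tool, so the authors' own ``proof'' is the citation itself. Your sketch instead outlines how the cited literature establishes such results, and the outline is directionally faithful to that literature: $m$-dependence truncation with error controlled by $\Theta_{m,q}$ via the functional dependence measure, blocking to manufacture independence, a multivariate strong coupling for independent non-identically distributed block sums (with condition~(ii) supplying the covariance non-degeneracy that construction needs and condition~(i) the uniform moment control), and a maximal inequality to upgrade the summed bound to the uniform-in-$i$ statement.

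The genuine gap is that the step you defer is the theorem. A single two-scale scheme, with one buffer length $m$ and one long-block length $L$, is known not to attain the optimal rate $o_{\PP}(n^{1/q})$: the optimal-rate constructions of \textcite{berkes_kmt_2014} (stationary scalar case) and \textcite{karmakar_optimal_2020} (multivariate, non-stationary case) require a hierarchy of geometrically growing block sizes with block-dependent truncation levels, together with a regularization of the block covariance matrices before the Einmahl--Zaitsev-type coupling is applied; the sharp threshold $\chi_0=\{q^2-4+(q-2)\sqrt{q^2+20q+4}\}/(8q)$ emerges from optimizing over that entire multi-scale hierarchy, not from a two-parameter $(m,L)$ tradeoff. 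Asserting that your three error terms ``balance exactly when $\chi>\chi_0$'' therefore assumes the conclusion rather than deriving it, and carrying the bookkeeping out with only two scales would land strictly short of $n^{1/q}$. As a self-contained proof the proposal is incomplete at precisely its advertised technical heart; in the context of this paper, the sufficient and honest move is the one the authors make, namely citing the result.
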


As a special case of Lemma~\ref{lemma_thm21_shao_wu_2007}, by taking $\omega_j=0$, one can establish the asymptotic normality of $\sum_{k=1}^nx_k$. We shall leverage this result in the proof of Theorem~\ref{thm_clt_ave_gd}. Moreover, we notice that condition (ii) in Lemma~\ref{lemma_thm2_karmakar_wu_2020} on the non-singularity is required when the sequence $\{\bm{x}_k\}_{k\in\NN}$ is non-stationary. However, if the function $g_k(\cdot)\equiv g(\cdot)$, that is, the sequence $\{\bm{x}_k\}_{k\in\NN}$ is stationary, then the covariance matrix of the increments is allowed to be singular. To see this, consider a stationary partial sum $S_l=(S_{l,1},\ldots,S_{l,d})^{\top}$ with a singular covariance matrix $\Sigma\in\RR^{d\times d}$ and assume $\mathrm{rank}(\Sigma)=d-1$. Then, there exists a unit vector $\bm{v}\in\RR^d$ such that $\Sigma\bm{v}=0$, which indicates that $S_{l,1}$ can be written into a linear combination of $S_{l,2},\ldots,S_{l,d}$, and the covariance matrix of this linear combination is non-singular. Hence, condition (ii) in Lemma~\ref{lemma_thm2_karmakar_wu_2020} is not required for stationary processes.

In addition, the original Theorem 2.1 in \textcite{shao_asymptotic_2007} and Theorem 2 in \textcite{karmakar_optimal_2020} considered a simple case where the i.i.d.\ inputs $\epsilon_k$ are one-dimensional. These two theorems still hold even if the inputs are i.i.d.\ random matrices such as the dropout matrices $D_k$ in our case. In fact, as long as the inputs are i.i.d.\ elements, the functional dependence measure can be similarly computed as the one in one-dimensional case. The essence is that the short-range dependence condition (\ref{eq_short_range_dep_gd}) is satisfied using an appropriate norm (e.g., $L^2$-norm for vectors, operator norm for matrices) by the output $x_k$. For example, \textcite{wu_limit_2004} considered iterated random functions on a general metric space, and \textcite{chen_stability_2016} assumed the $\epsilon_i$'s to be i.i.d.\ random elements to derive asymptotics for $x_k$. We will verify this short-range dependence condition on the GD dropout vector estimates $\{\tilde\bbeta_k^{\circ}(\alpha)\}_{k\in\NN}$ in the proof of Theorem~\ref{thm_clt_ave_gd}.

\subsection{Proof of Theorem~\ref{thm_clt_ave_gd}}

\begin{proof}[Proof of Theorem~\ref{thm_clt_ave_gd}]
We verify the short-range dependence condition for the stationary GD dropout sequence $\{\tilde\bbeta_k^{\circ}(\alpha)\}_{k\in\NN}$. 

First, consider two different initial vectors $\tilde\bbeta_0^{\circ},\,\tilde\bbeta_0^{\circ'}\in\RR^d$ following the unique stationary distribution $\pi_{\alpha}$ in Theorem~\ref{thm_gmc_gd}. Denote the two GD dropout sequences by $\{\tilde\bbeta_k^{\circ}(\alpha)\}_{k\in\NN}$ and $\{\tilde\bbeta_k^{\circ'}(\alpha)\}_{k\in\NN}$ accordingly. By the geometric-moment contraction in Theorem~\ref{thm_gmc_gd}, for all $q\ge2$, we have
\begin{align}
    \sup_{\tilde\bbeta_0^{\circ},\,\tilde\bbeta_0^{\circ'}\in\RR^d,\,\tilde\bbeta_0^{\circ}\neq\tilde\bbeta_0^{\circ'}}\frac{\big(\EE\|\tilde\bbeta_k^{\circ}(\alpha) - \tilde\bbeta_k^{\circ'}(\alpha)\|_2^q\big)^{1/q}}{\|\tilde\bbeta_0^{\circ} - \tilde\bbeta_0^{\circ'}\|_2} \le r_{\alpha,q}^k, \quad k\in\NN,
\end{align}
for some constant $r_{\alpha,q}\in(0,1)$. Equivalently, it can be rewritten in terms of the iterative function $f$ defined in (\ref{eq_dropout_function_gd}) and $h_{\alpha}(\cdot)$ defined in (\ref{eq_function_h_gd}). That is, for all $\tilde\bbeta_0^{\circ},\,\tilde\bbeta_0^{\circ'}\in\RR^d$, such that $\tilde\bbeta_0^{\circ}\neq\tilde\bbeta_0^{\circ'}$, we have
\begin{align}
    \label{eq_thm_agd_clt_function}
    & \quad \big(\EE\|f_{D_k}\circ\cdots\circ f_{D_1}(\tilde\bbeta_0^{\circ}) - f_{D_k}\circ\cdots\circ f_{D_1}(\tilde\bbeta_0^{\circ'})\|_2^q\big)^{1/q}\nonumber \\
    & = \big(\EE\|h_{\alpha}(D_k,\ldots,D_1,D_0,D_{-1},\ldots) - h_{\alpha}(D_k,\ldots,D_1,D_0',D_{-1}',\ldots)\|_2^q\big)^{1/q} \nonumber \\
    & = \big(\EE\|h_{\alpha}(\F_k) - h_{\alpha}(\F_{k,\{0,-1,\ldots\}})\|_2^q\big)^{1/q} \nonumber \\
    & \le c_qr_{\alpha,q}^k,
\end{align}
where we recall the filtration $\F_{i,\{j\}}=\sigma(D_i,\ldots,D_{j+1},D_j',D_{j-1},\ldots)$, and $c_q>0$ is some constant independent of $k$. Moreover, since $h_{\alpha}(\F_k)$ is stationary over $k$ and $D_i$ and $D_j'$ are i.i.d.\ random matrices, for all $i,j\in\ZZ$, it follows that
\begin{align}
    \label{eq_thm_agd_clt_function2}
    & \quad \EE\|h_{\alpha}(\F_{k,\{0\}}) - h_{\alpha}(\F_{k,\{0,-1,\ldots\}})\|_2^q  \nonumber \\
    & = \EE\|h_{\alpha}(\F_k) - h_{\alpha}(\F_{k,\{-1,-2\ldots\}})\|_2^q \nonumber \\
    & = \EE\|h_{\alpha}(\F_{k+1}) - h_{\alpha}(\F_{k+1,\{0,-1,\ldots\}})\|_2^q \le c_q'r_{\alpha,q}^k,
\end{align}
where the constant $c_q'>0$ is also independent of $k$. Hence, by (\ref{eq_thm_agd_clt_function})~and~(\ref{eq_thm_agd_clt_function2}), we can bound the functional dependence measure defined in (\ref{eq_functional_dep_measure}) as follows
\begin{align}
    \label{eq_functional_dep_measure_gd}
    \theta_{k,q}(\alpha) & = \big(\EE\|h_{\alpha}(\F_k) - h_{\alpha}(\F_{k,\{0\}})\|_2^q\big)^{1/q} \nonumber \\
    & \le \big(\EE\|h_{\alpha}(\F_k) - h_{\alpha}(\F_{k,\{0,-1,\ldots\}})\|_2^q\big)^{1/q} + \big(\EE\|h_{\alpha}(\F_{k,\{0,-1,\ldots\}}) - h_{\alpha}(\F_{k,\{0\}})\|_2^q\big)^{1/q}\nonumber \\
    & \le (c_q+c_q')r_{\alpha,q}^k.
\end{align}
As a direct result, we have finite cumulative dependence measure defined in (\ref{eq_cumulative_dep_measure}), i.e., 
\begin{equation}
    \label{eq_cumulative_dep_measure_gd}
    \Theta_{m,q}(\alpha)=\sum_{k=m}^{\infty}\theta_{k,q}(\alpha)=O(r_{\alpha,q}^m)<\infty.
\end{equation}
Therefore, for the constant learning rate $\alpha>0$ satisfying the assumptions in Theorem~\ref{thm_gmc_gd}, the stationary GD dropout sequence $\{\tilde\bbeta_k^{\circ}(\alpha)\}_{k\in\NN}$ meets the short-range dependence requirement in (\ref{eq_short_range_dep_gd}). Consequently, the condition (\ref{eq_shao_wu_short_range}) in Lemma~\ref{lemma_thm21_shao_wu_2007} is satisfied, which along with the Cramér-Wold device yields the central limit theorem for $\tilde S_n^{\circ}(\alpha)$ defined in (\ref{eq_agd_partial_sum_stationary_mean0}), that is,
\begin{equation}
    n^{-1/2}\tilde S_n^{\circ}(\alpha) \Rightarrow \N(0,\Sigma(\alpha)),
\end{equation}
where the long-run covariance matrix $\Sigma(\alpha)$ is defined in Theorem \ref{thm_clt_ave_gd}.

Next, we bound the difference between $\tilde S_n^{\circ}(\alpha)$ and $\tilde S_n^{\tilde\bbeta_0}(\alpha)$ for any arbitrarily fixed $\tilde\bbeta_0\in\RR^d$ in the $q$-th moment, for all $q\ge2$. For the constant learning rate $\alpha>0$ satisfying $\alpha\|\XX\|<2$, applying Theorem~\ref{thm_gmc_gd} yields
\begin{align}
    \label{eq_thm_agd_clt_diff}
    & \quad \Big(\EE\big\|\tilde S_n^{\circ}(\alpha) - \tilde S_n^{\tilde\bbeta_0}(\alpha)\big\|_2^q\Big)^{1/q} \nonumber \\
    & = \Big(\EE\big\|\big[f_{D_1}(\tilde\bbeta_0^{\circ}(\alpha)) + f_{D_2}\circ f_{D_1}(\tilde\bbeta_0^{\circ}(\alpha)) + \cdots + f_{D_n}\circ\cdots\circ f_{D_1}(\tilde\bbeta_0^{\circ}(\alpha))\big] \nonumber \\
    & \quad - \big[f_{D_1}(\tilde\bbeta_0(\alpha)) + f_{D_2}\circ f_{D_1}(\tilde\bbeta_0(\alpha)) + \cdots + f_{D_n}\circ\cdots\circ f_{D_1}(\tilde\bbeta_0(\alpha))\big]\big\|_2^q\Big)^{1/q} \nonumber \\
    & \le \Big(\sum_{k=1}^nr_{\alpha,q}^k\Big)\|\tilde\bbeta_0^{\circ} - \tilde\bbeta_0\|_2.
\end{align}
Since the contraction constant $r_{\alpha,q}\in(0,1)$, we can derive the limit for the sum of the geometric series $\{r_{\alpha,q}^k\}_{k=1}^n$ as follows
\begin{align}
    \lim_{n\rightarrow\infty}\sum_{k=1}^nr_{\alpha,q}^k = \lim_{n\rightarrow\infty}\frac{r_{\alpha,q}\big(1-r_{\alpha,q}^n\big)}{1-r_{\alpha,q}} = \frac{r_{\alpha,q}}{1-r_{\alpha,q}}.
\end{align}
This, together with (\ref{eq_thm_agd_clt_diff}) gives
\begin{align}
    \label{eq_thm_agd_clt_diff_rate}
    \big(\EE\big\|\tilde S_n^{\circ}(\alpha) - \tilde S_n^{\tilde\bbeta_0}(\alpha)\big\|_2^q\big)^{1/q} = O(1) = o(\sqrt{n}),
\end{align}
which yields the quenched central limit theorem for the partial sum $\tilde S_n^{\tilde\bbeta_0}(\alpha)$ defined in (\ref{eq_agd_partial_sum_mean0}), that is, for any fixed initial point $\tilde\bbeta_0\in\RR^d$,
\begin{equation}
    n^{-1/2}\tilde S_n^{\tilde\bbeta_0}(\alpha) \Rightarrow \N(0,\Sigma(\alpha)).
\end{equation}

Finally, we shall show that $\|\sum_{k=1}^n\EE[\tilde\bbeta_k(\alpha) - \tilde\bbeta]\|_2=o(\sqrt{n})$. 
To see this, we note that given two independently chosen initial vectors $\tilde\bbeta_0$ and $\tilde\bbeta_0^{\circ}$, where $\tilde\bbeta_0^{\circ}$ follows the stationary distribution $\pi_{\alpha}$ while $\tilde\bbeta_0$ is an arbitrary initial point in $\RR^d$, it follows from the triangle inequality that
\begin{align}
    \label{eq_expectation_two_parts_gd}
    \Big\|\sum_{k=1}^n\EE[\tilde\bbeta_k(\alpha) - \tilde\bbeta]\Big\|_2 & = \Big\|\EE\Big[\sum_{k=1}^n\big(\tilde\bbeta_k(\alpha) - \tilde\bbeta_k^{\circ}(\alpha) + \tilde\bbeta_k^{\circ}(\alpha)- \tilde\bbeta\big) \Big]\Big\|_2 \nonumber \\
    & \le \Big\|\EE\Big[\sum_{k=1}^n\big(\tilde\bbeta_k(\alpha) - \tilde\bbeta_k^{\circ}(\alpha)\big) \Big]\Big\|_2 + \Big\|\EE\Big[\sum_{k=1}^n\big(\tilde\bbeta_k^{\circ}(\alpha) - \tilde\bbeta\big)\Big]\Big\|_2 \nonumber \\
    & =: \III_1 + \III_2.
\end{align}
We first show $\III_2=0$. Recall the representation of $\{\tilde\bbeta_k^{\circ}(\alpha)-\tilde\bbeta\}_{k\in\NN}$ in (\ref{eq_iter_dropout_stationary0}). Since $\EE[A_k(\alpha)]=\EE[I_d-\alpha D_k\XX D_k]=I_d-\alpha p\XX_p$ and $\EE[\bm{b}_k(\alpha)]=0$ by (\ref{eq_bk_mean0_gd}), it follows that
\begin{equation}
    \EE[\tilde\bbeta_k^{\circ}(\alpha) - \tilde\bbeta]=(I_d-\alpha p\XX_p)\EE[\tilde\bbeta_{k-1}^{\circ}(\alpha) - \tilde\bbeta].
\end{equation}
Thus, due to the stationarity of $\{\tilde\bbeta_k^{\circ}(\alpha)\}_{k\in\NN}$ and the non-singularity of $\XX_p$, we obtain that uniformly over $k\in\NN$,
\begin{equation}
    \label{eq_l2_stationary_mean0_gd}
    \EE[\tilde\bbeta_k^{\circ}(\alpha) - \tilde\bbeta] = 0.
\end{equation}
As a direct consequence,
\begin{align}
    \III_2 = \Big\|\EE\Big[\sum_{k=1}^n\big(\tilde\bbeta_k^{\circ}(\alpha) - \tilde\bbeta\big)\Big]\Big\|_2 & = \Big\|\sum_{k=1}^n\EE[\tilde\bbeta_k^{\circ}(\alpha) - \tilde\bbeta] \Big\|_2= 0.
\end{align}
In addition, for the part $\III_1$, it follows from Jensen's inequality and (\ref{eq_thm_agd_clt_diff}) that
\begin{align}
    \III_1 & = \Big\|\EE\Big[\sum_{k=1}^n\big(\tilde\bbeta_k(\alpha) - \tilde\bbeta_k^{\circ}(\alpha)\big)\Big]\Big\|_2 \nonumber \\
    & \le \Big(\EE\Big\|\sum_{k=1}^n\big(\tilde\bbeta_k(\alpha)-\tilde\bbeta_k^{\circ}(\alpha)\big)\Big\|_2^2\Big)^{1/2} \nonumber \\
    & \le \Big(\sum_{k=1}^nr_{\alpha,2}^k\Big)\big\|\tilde\bbeta_0 - \tilde\bbeta_0^{\circ}\big\|_2.
\end{align}
By inserting the results of parts $\III_1$ and $\III_2$ back to (\ref{eq_expectation_two_parts_gd}), we obtain
\begin{equation}
    \label{eq_clt_gd_expectation}
    \Big\|\sum_{k=1}^n\EE[\tilde\bbeta_k(\alpha) - \tilde\bbeta]\Big\|_2 \le \Big(\sum_{k=1}^nr_{\alpha,2}^k\Big)\big\|\tilde\bbeta_0 - \tilde\bbeta_0^{\circ}\big\|_2.
\end{equation}
which remains bounded as $n\rightarrow\infty$ when $\alpha\|\XX\|<2$ by Theorem~\ref{thm_gmc_gd}. This completes the proof.
\end{proof}

\subsection{Proof of Corollary~\ref{cor_clt_agd_parallel}}

\begin{proof}[Proof of Corollary~\ref{cor_clt_agd_parallel}]
Recall the stationary GD dropout sequence $\{\tilde\bbeta_k^{\circ}(\alpha)\}_{k\in\NN}$ which follows the unique stationary distribution $\pi_{\alpha}$. Since this sequence satisfies the short-range dependence condition as stated in (\ref{eq_short_range_dep_gd}), it follows from Lemma~\ref{lemma_thm21_shao_wu_2007} and the Cramér-Wold device that any fixed linear combination of the coordinates of $\tilde S_n^{\circ}(\alpha)$ in (\ref{eq_agd_partial_sum_stationary_mean0}) converges to the corresponding linear combination of normal vectors in distribution. Then, the CLT for the averaged GD dropout with multiple learning rates holds by applying the Cramér-Wold device again, that is,
\begin{align}
    n^{-1/2}\mathrm{vec}\big(\tilde S_n^{\circ}(\alpha_1),\ldots,\tilde S_n^{\circ}(\alpha_s)\big) \Rightarrow \N(0,\Sigma^{\mathrm{vec}}).
\end{align}
Then, following the similar arguments in the proof of Theorem~\ref{thm_clt_ave_gd}, we obtain the quenched CLT for $\mathrm{vec}\big(\bar\bbeta_n^{\gd}(\alpha_1)-\tilde\bbeta,\ldots,\bar\bbeta_n^{\gd}(\alpha_s)-\tilde\bbeta\big)$. We omit the details here.
\end{proof}

\subsection{Proof of Theorem~\ref{thm_fclt_agd}}

\begin{proof}[Proof of Theorem~\ref{thm_fclt_agd}]
Recall the stationary GD dropout sequence $\{\tilde\bbeta_k^{\circ}(\alpha)\}_{k\in\NN}$ in (\ref{eq_iter_dropout_stationary0}), where $\tilde\bbeta_k^{\circ}(\alpha)$ follows the stationary distribution $\pi_{\alpha}$ for all $k\in\NN$. Also, recall the centering term $\tilde\bbeta_{\infty}(\alpha)=\EE[\tilde\bbeta_1^{\circ}(\alpha)]$ as defined in (\ref{eq_center_gd}). By (\ref{eq_l2_stationary_mean0_gd}), we have $\EE[\tilde\bbeta_k^{\circ}(\alpha) - \tilde\bbeta] = 0$ uniformly over $k\in\NN$. Hence,
\begin{align}
    \big(\EE\|\tilde\bbeta_{\infty}(\alpha) - \tilde\bbeta\|_2^q\big)^{1/q} = \big(\EE\|\EE[\tilde\bbeta_1^{\circ}(\alpha)-\tilde\bbeta]\|_2^q\big)^{1/q} = 0.
\end{align}
This, along with Assumption~\ref{asm_fclt_agd} and Lemma~\ref{lemma_q_moment_gd} gives, for $q>2$,
\begin{align}
    \label{eq_fclt_gd_order}
    \big(\EE\|\tilde\bbeta_k^{\circ}(\alpha)-\tilde\bbeta_{\infty}(\alpha)\|_2^q\big)^{1/q} & \le  \big(\EE\|\tilde\bbeta_k^{\circ}(\alpha) - \tilde\bbeta\|_2^q\big)^{1/q} + \big(\EE\|\tilde\bbeta_{\infty}(\alpha) - \tilde\bbeta\|_2^q\big)^{1/q} = O(\sqrt{\alpha}).
\end{align}
Moreover, we notice that by Markov's inequality and (\ref{eq_fclt_gd_order}), for any $u\in\RR$ and $\delta>0$, we have
\begin{align}
    \label{eq_fclt_gd_cond1}
    & \quad \sup_{k\ge1}\EE\big[\|\tilde\bbeta_k^{\circ}(\alpha)-\tilde\bbeta_{\infty}(\alpha)\|_2^q\One_{\|\tilde\bbeta_k^{\circ}(\alpha)-\tilde\bbeta_{\infty}(\alpha)\|_2\ge u}\big] \nonumber \\
    & \le \sup_{k\ge1}\EE\Big[\|\tilde\bbeta_k^{\circ}(\alpha)-\tilde\bbeta_{\infty}(\alpha)\|_2^q \cdot \|\tilde\bbeta_k^{\circ}(\alpha)-\tilde\bbeta_{\infty}(\alpha)\|_2^{\delta}/u^{\delta}\Big] \nonumber \\
    & = \sup_{k\ge1}\EE\big[\|\tilde\bbeta_k^{\circ}(\alpha)-\tilde\bbeta_{\infty}(\alpha)\|_2^{q+\delta}\big]/u^{\delta} \nonumber \\
    & = O\big\{\alpha^{(q+\delta)/2}/u^{\delta}\big\},
\end{align}
which converges to 0 as $u\rightarrow\infty$. Therefore, condition (i) in Lemma~\ref{lemma_thm2_karmakar_wu_2020} is satisfied. Since $\{\tilde\bbeta_k^{\circ}(\alpha)\}_{k\in\NN}$ is stationary, following the arguments below Lemma~\ref{lemma_thm2_karmakar_wu_2020}, condition (ii) is not required. Regarding condition (iii), for the constant learning rate $\alpha>0$ satisfying $\alpha\|\XX\|<2$, it follows from Assumption~\ref{asm_fclt_agd} and (\ref{eq_functional_dep_measure_gd}) that the functional dependence measure $\theta_{k,q}(\alpha)\le c\cdot r_{\alpha,q}^k$, for all $q>2$ and $k\in\NN$, where the constant $c>0$ is independent of $k$. Consequently, there exists a constant $\kappa>0$ such that the tail cumulative dependence measure of $\{\tilde\bbeta_k^{\circ}(\alpha)\}_{k\in\NN}$ can be bounded by
\begin{align}
    \Theta_{m,q}(\alpha) = \sum_{k=m}^{\infty}\theta_{k,q}(\alpha) = O\big\{m^{-\chi}\big(\log(m)\big)^{-\kappa}\big\},
\end{align}
where $\chi>0$ is some constant that can be taken to be arbitrarily large. Then, the condition (iii) in Lemma~\ref{lemma_thm2_karmakar_wu_2020} is satisfied. 

Thus, we obtain the invariance principle for the stationary partial sum process $(\tilde S_i^{\circ}(\alpha))_{1\le i\le n}$ defined in (\ref{eq_agd_partial_sum_stationary_mean0}). That is, there exists a (richer) probability space $(\tilde\Omega^{\star},\tilde\A^{\star},\tilde\PP^{\star})$ on which we can define random vectors $\tilde\bbeta_k^{\star}$'s with the partial sum process $\tilde S_i^{\star}=\sum_{k=1}^i(\tilde\bbeta_k^{\star}-\tilde\bbeta_{\infty})$, and a Gaussian process $\tilde G_i^{\star}=\sum_{k=1}^i\tilde{\bm{z}}_k^{\star}$, where $\tilde{\bm{z}}_k^{\star}$'s are independent Gaussian random vectors in $\RR^d$ following $\N(0,I_d)$, such that
\begin{equation}
    (\tilde S_i^{\star})_{1\le i\le n} \overset{\D}{=} (\tilde S_i^{\circ})_{1\le i\le n},
\end{equation}
and
\begin{equation}
    \label{eq_fclt_gd_GS1}
    \max_{1\le i\le n}\big\|\tilde S_i^{\star} - \Sigma^{1/2}(\alpha)\tilde G_i^{\star}\big\|_2 = o_{\PP}(n^{1/q}), \quad \text{in }(\tilde\Omega^{\star},\tilde\A^{\star},\tilde\PP^{\star}),
\end{equation}
where the long-run covariance matrix $\Sigma(\alpha)$ is defined in Theorem \ref{thm_clt_ave_gd}.

Next, recall the partial sum $\tilde S_i^{\tilde\bbeta_0}(\alpha)=\sum_{k=1}^n[\tilde\bbeta_k(\alpha)-\tilde\bbeta_{\infty}(\alpha)] $ as defined in (\ref{eq_agd_partial_sum_mean0}), given an arbitrarily fixed initial point $\tilde\bbeta_0\in\RR^d$. It follows from the triangle inequality that
\begin{align}
    \label{eq_fclt_S_center}
    & \quad \Big(\EE\Big[\max_{1\le i\le n}\big\|\tilde S_i^{\tilde\bbeta_0}(\alpha) - \Sigma^{1/2}(\alpha)\tilde G_i^{\star}\big\|_2\Big]^q\Big)^{1/q} \nonumber \\
    & = \Big(\EE\Big[\max_{1\le i\le n}\big\|\tilde S_i^{\tilde\bbeta_0}(\alpha) - \tilde S_i^{\circ}(\alpha) + \tilde S_i^{\circ}(\alpha) - \Sigma^{1/2}(\alpha)\tilde G_i^{\star}\big\|_2\Big]^q\Big)^{1/q} \nonumber \\
    & \le \Big(\EE\Big[\max_{1\le i\le n}\big\|\tilde S_i^{\tilde\bbeta_0}(\alpha) - \tilde S_i^{\circ}(\alpha)\big\|_2 + \max_{1\le i\le n}\big\|\tilde S_i^{\circ}(\alpha) - \Sigma^{1/2}(\alpha)\tilde G_i^{\star}\big\|_2\Big]^q\Big)^{1/q} \nonumber \\
    & \le \Big(\EE\Big[\max_{1\le i\le n}\big\|\tilde S_i^{\tilde\bbeta_0}(\alpha) - \tilde S_i^{\circ}(\alpha)\big\|_2\Big]^q\Big)^{1/q} + \Big(\EE\Big[\max_{1\le i\le n}\big\|\tilde S_i^{\circ}(\alpha) - \Sigma^{1/2}(\alpha)\tilde G_i^{\star}\big\|_2\Big]^q\Big)^{1/q}.
\end{align}
Therefore, to show the invariance principle for $\big(\tilde S_i^{\tilde\bbeta_0}(\alpha)\big)_{1\le i\le n}$, it suffices to bound the difference part $\max_{1\le i\le n}\|\tilde S_i^{\tilde\bbeta_0}(\alpha) - \tilde S_i^{\circ}(\alpha)\|_2$ in terms of the $q$-th moment. To this end, recall the iterative function $f_D(\bbeta)=\bbeta+\alpha DX^{\top}(\bm{y}-XD\bbeta)$ in (\ref{eq_dropout_function_gd}) that rewrites the GD dropout recursion (\ref{eq_dropout_gd_origin}). We note that
\begin{align}
    \label{eq_fclt_gd_diff}
    & \quad \max_{1\le i\le n}\big\|\tilde S_i^{\tilde\bbeta_0}(\alpha) - \tilde S_i^{\circ}(\alpha)\big\|_2 \nonumber \\
    & = \max_{1\le i\le n}\Big\|\big[f_{D_1}(\tilde\bbeta_0) - f_{D_1}(\tilde\bbeta_0^{\circ})\big] + \cdots + \big[f_{D_i}\circ\cdots\circ f_{D_1}(\tilde\bbeta_0) - f_{D_i}\circ\cdots\circ f_{D_1}(\tilde\bbeta_0^{\circ})\big] \Big\|_2 \nonumber \\
    & \le \max_{1\le i\le n} \Big(\big\|f_{D_1}(\tilde\bbeta_0) - f_{D_1}(\tilde\bbeta_0^{\circ})\big\|_2 + \cdots + \big\|f_{D_i}\circ\cdots\circ f_{D_1}(\tilde\bbeta_0) - f_{D_i}\circ\cdots\circ f_{D_1}(\tilde\bbeta_0^{\circ})\big\|_2\Big) \nonumber \\
    & = \big\|f_{D_1}(\tilde\bbeta_0) - f_{D_1}(\tilde\bbeta_0^{\circ})\big\|_2 + \cdots + \big\|f_{D_n}\circ\cdots\circ f_{D_1}(\tilde\bbeta_0) - f_{D_n}\circ\cdots\circ f_{D_1}(\tilde\bbeta_0^{\circ})\big\|_2.
\end{align}
This, along with the triangle inequality and Theorem~\ref{thm_gmc_gd} yields
\begin{align}
    \label{eq_fclt_gd_gmc_approx}
    & \quad \Big(\EE\Big[\max_{1\le i\le n}\big\|\tilde S_i^{\tilde\bbeta_0}(\alpha) - \tilde S_i^{\circ}(\alpha)\big\|_2\Big]^q\Big)^{1/q} \nonumber \\
    & \le \big(\EE\big\|f_{D_1}(\tilde\bbeta_0) - f_{D_1}(\tilde\bbeta_0^{\circ})\big\|_2^q\big)^{1/q} + \cdots + \big(\EE\big\|f_{D_n}\circ\cdots\circ f_{D_1}(\tilde\bbeta_0) - f_{D_n}\circ\cdots\circ f_{D_1}(\tilde\bbeta_0^{\circ})\big\|_2^q\big)^{1/q} \nonumber \\
    & \le \frac{r_{\alpha,q}(1-r_{\alpha,q}^n)}{1-r_{\alpha,q}}\|\tilde\bbeta_0-\tilde\bbeta_0^{\circ}\|_2 = o(n^{1/q}).
\end{align}
We insert this result back into (\ref{eq_fclt_gd_diff}), which together with (\ref{eq_fclt_S_center}) gives the invariance principle for the partial sum $\tilde S_i^{\tilde\bbeta_0}(\alpha)=\sum_{k=1}^n[\tilde\bbeta_k(\alpha)-\tilde\bbeta_{\infty}(\alpha)] $. 

Finally, let the partial sum $S_i^{\tilde\bbeta_0}(\alpha)=\sum_{k=1}^n[\tilde\bbeta_k(\alpha)-\tilde\bbeta]$ be as defined in Theorem~\ref{thm_fclt_agd}. We shall bound the difference between $S_i^{\tilde\bbeta_0}(\alpha)$ and $\tilde S_i^{\tilde\bbeta_0}(\alpha)$. Since $\tilde\bbeta_{\infty}(\alpha)=\EE_D[\tilde\bbeta_1^{\circ}(\alpha)]$ as defined in (\ref{eq_center_gd}) and $\tilde\bbeta_{\infty}(\alpha)-\tilde\bbeta=\EE_D[\tilde\bbeta_1^{\circ}(\alpha)-\tilde\bbeta]=0$ by (\ref{eq_l2_stationary_mean0_gd}), it follows that
\begin{align}
    \label{eq_fclt_gd_mean_approx}
    \Big(\EE\Big[\max_{1\le i\le n}\big\|\tilde S_i^{\tilde\bbeta_0}(\alpha) - S_i^{\tilde\bbeta_0}(\alpha)\big\|_2\Big]^q\Big)^{1/q}  = \Big(\EE\Big[\max_{1\le i\le n}\big\|\sum_{k=1}^n \tilde\bbeta_{\infty}(\alpha)-\tilde\bbeta\big\|_2\Big]^q\Big)^{1/q} = 0.
\end{align}
Combining this with the invariance principle for $(\tilde S_i^{\tilde\bbeta_0}(\alpha))_{1\le i\le n}$, we obtain the same approximation rate $o_{\PP}(n^{1/q})$ for the partial sum process $(S_i^{\tilde\bbeta_0}(\alpha))_{1\le i\le n}$. This completes the proof.
\end{proof}

\section{Proofs in Section~\ref{subsec_gmc_sgd}}

Recall the SGD dropout sequence $\{\breve\bbeta_k(\alpha)\}_{k\in\NN}$ and the random coefficient $\breve A_k(\alpha)$ in (\ref{eq_dropout_sgd}). To prove that $\sup_{\bm{v}\in\RR^d,\|\bm{v}\|_2=1}\EE\|\breve A_k(\alpha)\bm{v}\|_2^q<1$ is a sufficient condition for the geometric-moment contraction (GMC) of the SGD dropout sequence, we first introduce two useful moment inequalities in Lemma~\ref{lemma_moment_inequality}. 
\subsection{Proof of Lemma~\ref{lemma_gmc_sgd_cond}}

\begin{lemma}[Moment inequality]\label{lemma_moment_inequality}
    Let $q\ge2$. For any two random vectors $\bm{x}$ and $\bm{y}$ in $\RR^d$ with fixed $d\ge1$, the following inequalities holds:
    
    \noindent(i) $\EE\Big|\|\bm{x} + \bm{y}\|_2^q - \|\bm{x}\|_2^q - q\|\bm{x}\|_2^{q-2}\bm{x}^{\top}\bm{y}\Big| \le \EE\big(\|\bm{x}\|_2 + \|\bm{y}\|_2\big)^q - \EE\|\bm{x}\|_2^q - q\EE(\|\bm{x}\|_2^{q-1}\|\bm{y}\|_2)$.

    \noindent(ii) $\EE\Big|\|\bm{x} + \bm{y}\|_2^q - \|\bm{x}\|_2^q - q\|\bm{x}\|_2^{q-2}\bm{x}^{\top}\bm{y}\Big| \le \big[(\EE\|\bm{x}\|_2^q)^{1/q} + (\EE\|\bm{y}\|_2^q)^{1/q}\big]^q - \EE\|\bm{x}\|_2^q - q(\EE\|\bm{x}\|_2^q)^{(q-1)/q}(\EE\|\bm{y}\|_2^q)^{1/q}$.
\end{lemma}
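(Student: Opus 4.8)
The plan is to reduce both inequalities to a single pointwise Taylor--remainder identity, and then to separate the two statements by performing, respectively, a pointwise comparison (for (i)) and an $L^q$ comparison \emph{inside} the remainder integral (for (ii)). For fixed realizations of $\bm x,\bm y$ I would introduce the scalar function $\phi(t)=\|\bm x+t\bm y\|_2^q$, $t\in[0,1]$, and observe that the quantity to be controlled is exactly the first-order Taylor remainder
\[
R:=\|\bm x+\bm y\|_2^q-\|\bm x\|_2^q-q\|\bm x\|_2^{q-2}\bm x^\top\bm y=\phi(1)-\phi(0)-\phi'(0)=\int_0^1(1-t)\,\phi''(t)\,dt.
\]
Since $\bm z\mapsto\|\bm z\|_2^q$ is convex for $q\ge1$, the map $\phi$ is convex, so $\phi''\ge0$ and hence $R\ge0$; this is just the tangent-line inequality, and it removes the absolute value, giving $\EE|R|=\EE R$. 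This is precisely the point where we avoid the hypothesis $\EE[\bm y\mid\bm x]\overset{\mathrm{a.s.}}{=}0$ of \textcite{rio_moment_2009}: rather than cancelling the cross term, we retain it and exploit nonnegativity of the remainder. The engine of the whole argument is a bound on $\phi''$. Writing $w(t)=\|\bm x+t\bm y\|_2^2$ and applying Cauchy--Schwarz to $\big((\bm x+t\bm y)^\top\bm y\big)^2\le w(t)\,\|\bm y\|_2^2$, together with $q\ge2$ (so that the weight $q/2-1$ on the Cauchy--Schwarz term is nonnegative), I would show
\[
\phi''(t)\le q(q-1)\,\|\bm x+t\bm y\|_2^{q-2}\,\|\bm y\|_2^2\le q(q-1)\,(\|\bm x\|_2+t\|\bm y\|_2)^{q-2}\,\|\bm y\|_2^2,
\]
the final step using the triangle inequality and monotonicity of $s\mapsto s^{q-2}$. (A single value $t_0$ with $\bm x+t_0\bm y=0$ is a routine measure-zero technicality, as $\phi''$ remains integrable there for $q>1$.)

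\textbf{Part (i).} I would integrate the displayed bound in $t$. Since $q(q-1)\|\bm y\|_2^2\int_0^1(1-t)(\|\bm x\|_2+t\|\bm y\|_2)^{q-2}\,dt$ is exactly the Taylor-remainder representation of $(\|\bm x\|_2+\|\bm y\|_2)^q-\|\bm x\|_2^q-q\|\bm x\|_2^{q-1}\|\bm y\|_2$, I obtain the pointwise inequality $0\le R\le(\|\bm x\|_2+\|\bm y\|_2)^q-\|\bm x\|_2^q-q\|\bm x\|_2^{q-1}\|\bm y\|_2$. Taking expectations then yields (i) verbatim.

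\textbf{Part (ii) and the main obstacle.} The natural idea of bounding the right-hand side of (i) by its $L^q$ analogue term by term \emph{fails}: Minkowski gives $\EE(\|\bm x\|_2+\|\bm y\|_2)^q\le[(\EE\|\bm x\|_2^q)^{1/q}+(\EE\|\bm y\|_2^q)^{1/q}]^q$, while H\"older gives $\EE(\|\bm x\|_2^{q-1}\|\bm y\|_2)\le(\EE\|\bm x\|_2^q)^{(q-1)/q}(\EE\|\bm y\|_2^q)^{1/q}$, and because the cross term enters with a minus sign these two estimates push in opposite directions and cannot simply be combined. This is the crux of the lemma. The resolution is to keep the remainder in integral form: writing $a=(\EE\|\bm x\|_2^q)^{1/q}$, $b=(\EE\|\bm y\|_2^q)^{1/q}$ and exchanging expectation and integral by Tonelli (the integrand is nonnegative), I would reduce (ii) to the per-$t$ claim
\[
\EE\big[\|\bm y\|_2^2\,(\|\bm x\|_2+t\|\bm y\|_2)^{q-2}\big]\le b^2\,(a+t b)^{q-2},\qquad t\in[0,1].
\]
Here H\"older with conjugate exponents $q/2$ and $q/(q-2)$ bounds the left side by $(\EE\|\bm y\|_2^q)^{2/q}\big(\EE(\|\bm x\|_2+t\|\bm y\|_2)^q\big)^{(q-2)/q}$, and then Minkowski gives $\EE(\|\bm x\|_2+t\|\bm y\|_2)^q\le(a+tb)^q$, which together yield the claim (the case $q=2$ being a trivial equality). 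Crucially, in this integrand form \emph{both} H\"older and Minkowski act in the same, correct direction; the integral representation has absorbed the offending minus sign. Integrating the per-$t$ bound reconstitutes the $L^q$ Taylor remainder $(a+b)^q-a^q-qa^{q-1}b$, which is exactly the right-hand side of (ii), completing the proof.
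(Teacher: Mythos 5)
Your proposal is correct, and it splits into two halves relative to the paper. For part (ii) you follow essentially the paper's own route: the paper also writes the quantity as a Taylor remainder of $\phi(t)=\|\bm{x}+t\bm{y}\|_2^q$ (in double-integral form $\int_0^1\int_0^t\phi''(s)\,ds\,dt$, equivalent to your $(1-t)$-weighted form by Fubini), bounds $\phi''$ via Cauchy--Schwarz by $q(q-1)\|\bm{x}+t\bm{y}\|_2^{q-2}\|\bm{y}\|_2^2$, and then applies exactly your H\"older $(q/2,\,q/(q-2))$ plus Minkowski step inside the integral before reintegrating to reconstitute $(a+b)^q-a^q-qa^{q-1}b$; your diagnosis of why a naive term-by-term $L^q$ upgrade of (i) fails matches the role this integral representation plays in the paper. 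For part (i), however, you take a genuinely different and arguably cleaner route. The paper reduces to a planar configuration, parametrizes the alignment $\bm{x}^\top\bm{y}=\|\bm{x}\|_2\|\bm{y}\|_2\,\delta$, and performs a calculus case analysis of $\varphi(\delta)$ over $\delta\in[-1,1]$ to show that $\sup_{|\delta|\le1}|\varphi(\delta)|=\varphi(1)$, which yields the pointwise bound with the absolute value intact; it then reuses that pointwise bound to control $\EE|\cdot|$ in (ii). You instead observe that the remainder $R=\phi(1)-\phi(0)-\phi'(0)$ is nonnegative pointwise by convexity of $t\mapsto\|\bm{x}+t\bm{y}\|_2^q$, so $\EE|R|=\EE R$, and then (i) follows by integrating the same $\phi''$ bound against the scalar Taylor remainder of $(\|\bm{x}\|_2+t\|\bm{y}\|_2)^q$. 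This buys you three things: the absolute value is dispensed with once and for all (serving both (i) and (ii) uniformly, whereas the paper needs the (i)-machinery again inside (ii)); the argument avoids the case analysis entirely; and it is more robust --- indeed the paper's case analysis contains a sign slip (it reports $\varphi(-r/(2w))=-qw^{q-2}r^2/2<0$, while direct substitution gives $+qw^{q-2}r^2/2>0$, consistent with your convexity observation that $\varphi\ge0$ throughout; the paper's final conclusion survives since the supremum of $|\varphi|$ on $[-1,1]$ is still attained at $\delta=1$). Your handling of the differentiability issue at a zero of $\bm{x}+t\bm{y}$ for $2<q<4$ is also appropriately flagged, a point the paper passes over silently.
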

Lemma~\ref{lemma_moment_inequality}(i) immediately follows if we can prove the inequality
\begin{equation}
    \Big|\|\bm{x} + \bm{y}\|_2^q - \|\bm{x}\|_2^q - q\|\bm{x}\|_2^{q-2}\bm{x}^{\top}\bm{y}\Big| \le \big(\|\bm{x}\|_2 + \|\bm{y}\|_2\big)^q - \|\bm{x}\|_2^q - q(\|\bm{x}\|_2^{q-1}\|\bm{y}\|_2),
\end{equation}
which is of independent interest. The right hand side of the inequality in Lemma~\ref{lemma_moment_inequality}(ii) only depends on expectations of either one of the random vectors $\bm{x}$ or $\bm{y}.$ This makes the inequality particularly useful if $\bm{x}$ and $\bm{y}$ are dependent. Lemma~\ref{lemma_moment_inequality}(i) is more favorable in cases where $\bm{x}$ and $\bm{y}$ are independent, or if one vector is deterministic. 

\begin{proof}[Proof of Lemma~\ref{lemma_moment_inequality}(i)]
We can assume that $\|\bm{x}\|_2>0$ and $\|\bm{y}\|_2>0$ as otherwise, the inequality holds trivially. It is moreover sufficient to assume $\bm{x}=w\bm{e}_1$, for a positive number $w$ and $\bm{e}_1\in\RR^d$ a unit vector. Then, we can find two numbers $u,v$ such that
\begin{equation}
    \bm{y} = u\cdot(w\bm{e}_1) + v\bm{e}_2,
\end{equation}
where $\bm{e}_2\in\RR^d$ is a unit vector orthogonal to $\bm{e}_1$. Let $r=\|\bm{y}\|_2 = \sqrt{(uw)^2 + v^2}>0$. We note that $\bm{x}^{\top}\bm{y}=uw^2$ and $\|\bm{x}+\bm{y}\|_2^2=(1+u)^2w^2 + v^2$, which gives
\begin{align}
    \label{eq_lemma_ineq_analytic}
    & \quad \|\bm{x} + \bm{y}\|_2^q - \|\bm{x}\|_2^q - q\|\bm{x}\|_2^{q-2}\bm{x}^{\top}\bm{y} \nonumber \\
    & = \big[(1+u)^2w^2 + v^2\big]^{q/2} - w^q -qw^{q-2}uw^2 \nonumber \\
    & = (w^2 + 2uw^2 +r^2)^{q/2}-w^q-quw^q.
\end{align}
Since $r=\sqrt{(uw)^2 + v^2}$, we can rewrite $uw=r\delta$ for some scalar $\delta$ with $\delta\in[-\delta^*,1]$, where $\delta^*=(w^2+r^2)/(2wr)$, i.e., $w^2-2wr\delta^*+r^2=0$. Here, $|\delta|$ can be viewed as the projection length of $\bm{y}$ on the direction of $\bm{x}$, and the end point $\delta^*$ falls in $[0,1]$. Then, (\ref{eq_lemma_ineq_analytic}) can be rewritten into
\begin{align}
    \varphi(\delta) := (w^2+2wr\delta+r^2)^{q/2} - w^q-qw^{q-1}r\delta.
\end{align}
Recall that $w=\|\bm{x}\|_2>0$ and $r=\|\bm{y}\|_2>0$. The first order derivative of $\varphi(\delta)$ is
\begin{align}
    \varphi'(\delta) & = \frac{q}{2}2wr(w^2+2wr\delta+r^2)^{(q/2)-1} - qw^{q-1}r \nonumber \\
    & = qwr\big[(w^2+2wr\delta+r^2)^{(q/2)-1} - w^{q-2}\big] \nonumber \\
    & = qw^{q-1}r\Big[\Big(1 + \frac{2r\delta}{w} + \frac{r^2}{w^2}\Big)^{(q/2)-1} - 1\Big].
\end{align}
This indicates that, for $q\ge2$, $\varphi'(\delta)\le0$ when $\delta\in[-\delta^*,r/(2w)]$, and $\varphi'(\delta)>0$ when $\delta\in(-r/(2w),1]$. In particular, by Bernoulli's inequality, we can observe that
\begin{align}
    \varphi(-\delta^*) & = - w^q + qw^{q-1}r\delta^* = \big(q/2-1\big)w^q + qw^{q-2}r^2 >0, \nonumber \\
    \varphi(-r/(2w)) & = -qw^{q-2}r^2/2 <0,\nonumber \\
    \varphi(1) & = (w+r)^q - w^q -qw^{q-1}r >0.
\end{align}
Moreover, regarding $\varphi(-\delta^*)$ on $\delta^*\in[0,1]$, we consider a new function $\tilde\varphi(s)=-w^q-qw^{q-1}rs$, which is decreasing on $s\in[-1,0]$. Note that $\tilde\varphi(-1)=-w^q+qw^{q-1}r$. Thus, by comparison, we have $- \varphi(-r/(2w)) <\varphi(-\delta^*) \le \tilde\varphi(-1) <\varphi(1)$. As a direct result, we obtain
\begin{align}
    \sup_{|\delta|\le1}|\varphi(\delta)| = \max\{\varphi(-\delta^*),-\varphi(-r/(2w)),\varphi(1)\} = \varphi(1).
\end{align}
By inserting $w=\|\bm{x}\|_2$ and $r=\|\bm{y}\|_2$ back to $\varphi(\delta)$, we obtain, for any $\bm{x}$ and $\bm{y}$ in $\RR^d$, 
\begin{align}
    \Big|\|\bm{x} + \bm{y}\|_2^q - \|\bm{x}\|_2^q - q\|\bm{x}\|_2^{q-2}\bm{x}^{\top}\bm{y}\Big| \le (\|\bm{x}\|_2+\|\bm{y}\|_2)^q - \|\bm{x}\|_2^q - q\|\bm{x}\|_2^{q-1}\|\bm{y}\|_2.
\end{align}
The desired result holds by taking the expectation on the both sides.
\end{proof}

\begin{proof}[Proof of Lemma~\ref{lemma_moment_inequality}(ii)]
First, we define a function $\phi(t) = \|\bm{x} + t\bm{y}\|_2^q$ on $t\in[0,\infty)$. The first and second order derivatives of $\phi(t)$ are as follows
\begin{align}
    \label{eq_lemma_ineq_derivative}
    \phi'(t)=\frac{d}{dt}\phi(t) & = q\|\bm{x} + t\bm{y}\|_2^{q-2}\big(t\|\bm{y}\|_2^2 + \bm{x}^{\top}\bm{y}\big), \nonumber \\
    \phi''(t)=\frac{d^2}{dt^2}\phi(t) & = q(q-2)\|\bm{x} + t\bm{y}\|_2^{q-4}\big(t\|\bm{y}\|_2^2 + \bm{x}^{\top}\bm{y}\big)^2 + q\|\bm{x} + t\bm{y}\|_2^{q-2}\|\bm{y}\|_2^2 \nonumber \\
    & = q(q-2)\|\bm{x} + t\bm{y}\|_2^{q-4}\big[(\bm{x} + t\bm{y})^{\top}\bm{y}\big]^2 + q\|\bm{x} + t\bm{y}\|_2^{q-2}\|\bm{y}\|_2^2 \nonumber \\
    & \le q(q-1)\|\bm{x} + t\bm{y}\|_2^{q-2}\|\bm{y}\|_2^2,
\end{align}
where the last inequality follows from the Cauchy-Schwarz inequality. In the previous inequality, equality holds when both random vectors $\bm{x}$ and $\bm{y}$ are scalars. Note that $\phi(1)=\|\bm{x} + \bm{y}\|_2^q$, $\phi(0)=\|\bm{x}\|_2^q$, and $\phi'(0)=q\|\bm{x}\|_2^{q-2}\bm{x}^{\top}\bm{y}$. Since
\begin{align}
    \phi(1)-\phi(0) - \phi'(0) & = \int_{t=0}^1\phi'(t)dt - \phi'(0) \nonumber \\
    & = \int_{t=0}^1\Big(\int_{s=0}^t\phi''(s)ds + \phi'(0)\Big)dt - \phi'(0) \nonumber \\
    & = \int_{t=0}^1\int_{s=0}^t\phi''(s)dsdt,
\end{align}
it follows from the upper bound of $\phi''(s)$ in (\ref{eq_lemma_ineq_derivative}) that
\begin{align}
    \|\bm{x} + \bm{y}\|_2^q - \|\bm{x}\|_2^q - q\|\bm{x}\|_2^{q-2}\bm{x}^{\top}\bm{y} \le q(q-1)\int_{t=0}^1\int_{s=0}^t\|\bm{x} + s\bm{y}\|_2^{q-2}\|\bm{y}\|_2^2dsdt.
\end{align}
Taking the expectation yields
\begin{align}
    \label{eq_lemma_ineq_expectation1}
    \EE\Big(\|\bm{x} + \bm{y}\|_2^q - \|\bm{x}\|_2^q - q\|\bm{x}\|_2^{q-2}\bm{x}^{\top}\bm{y}\Big) \le  q(q-1)\int_{t=0}^1\int_{s=0}^t\EE\big(\|\bm{x} + s\bm{y}\|_2^{q-2}\|\bm{y}\|_2^2\big)dsdt.
\end{align}
Furthermore, by Hölder's inequality and the triangle inequality, we obtain
\begin{align}
    \EE\big(\|\bm{x} + s\bm{y}\|_2^{q-2}\|\bm{y}\|_2^2\big) & \le \big(\EE\|\bm{x} + s\bm{y}\|_2^q\big)^{(q-2)/q} \big(\EE\|\bm{y}\|_2^q\big)^{2/q} \nonumber \\
    & \le \Big[\big(\EE\|\bm{x}\|_2^q\big)^{1/q} + s\big(\EE\|\bm{y}\|_2^q\big)^{1/q}\Big]^{q-2}\big(\EE\|\bm{y}\|_2^q\big)^{2/q},
\end{align}
which together with (\ref{eq_lemma_ineq_expectation1}) gives
\begin{align}
    \label{eq_lemma_ineq_gd_result1}
    & \quad \EE\Big(\|\bm{x} + \bm{y}\|_2^q - \|\bm{x}\|_2^q - q\|\bm{x}\|_2^{q-2}\bm{x}^{\top}\bm{y}\Big) \nonumber \\
    & \le q(q-1)\int_{t=0}^1\int_{s=0}^t\Big[\big(\EE\|\bm{x}\|_2^q\big)^{1/q} + s\big(\EE\|\bm{y}\|_2^q\big)^{1/q}\Big]^{q-2}\big(\EE\|\bm{y}\|_2^q\big)^{2/q} dsdt \nonumber \\
    & = q\int_{t=0}^1\Big\{\Big[\big(\EE\|\bm{x}\|_2^q\big)^{1/q} + t\big(\EE\|\bm{y}\|_2^q\big)^{1/q}\Big]^{q-1} - \big(\EE\|\bm{x}\|_2^q\big)^{(q-1)/q}\Big\}\big(\EE\|\bm{y}\|_2^q\big)^{1/q} dt \nonumber \\
    & = \Big[\big(\EE\|\bm{x}\|_2^q\big)^{1/q} + \big(\EE\|\bm{y}\|_2^q\big)^{1/q}\Big]^{q} - \EE\|\bm{x}\|_2^q - q\big(\EE\|\bm{x}\|_2^q\big)^{(q-1)/q}\big(\EE\|\bm{y}\|_2^q\big)^{1/q}.
\end{align}
In addition, recall that by the proof of Lemma~\ref{lemma_moment_inequality}(i), we have
\begin{align*}
    \Big|\|\bm{x} + \bm{y}\|_2^q - \|\bm{x}\|_2^q - q\|\bm{x}\|_2^{q-2}\bm{x}^{\top}\bm{y}\Big| & \le (\|\bm{x}\|_2+\|\bm{y}\|_2)^q - \|\bm{x}\|_2^q - q\|\bm{x}\|_2^{q-1}\|\bm{y}\|_2 \nonumber \\
    & = q(q-1)\int_{t=0}^1\int_{s=0}^t(\|\bm{x}\|_2 + s\|\bm{y}\|_2)^{q-2}\|\bm{y}\|_2^2dsdt.
\end{align*}
Taking expectation on both sides, we obtain
\begin{align}
    \EE\Big|\|\bm{x} + \bm{y}\|_2^q - \|\bm{x}\|_2^q - q\|\bm{x}\|_2^{q-2}\bm{x}^{\top}\bm{y}\Big| & \le q(q-1)\int_0^1\int_0^t\EE\big[(\|\bm{x}\|_2 + s\|\bm{y}\|_2)^{q-2}\|\bm{y}\|_2^2\big] \, dsdt.
\end{align}
It follows from Hölder's inequality and the triangle inequality that
\begin{align}
    \EE\big[(\|\bm{x}\|_2 + s\|\bm{y}\|_2)^{q-2}\|\bm{y}\|_2^2\big] & \le \big[\EE\big(\|\bm{x}\|_2 +s\|\bm{y}\|_2\big)^q\big]^{(q-2)/q}\big(\EE\|\bm{y}\|_2^q\big)^{2/q}  \nonumber \\
    & \le \Big[\big(\EE\|\bm{x}\|_2^q\big)^{1/q} + s\big(\EE\|\bm{y}\|_2^q\big)^{1/q}\Big]^{q-2}\big(\EE\|\bm{y}\|_2^q\big)^{2/q}.
\end{align}
Evaluating the double integral as in  (\ref{eq_lemma_ineq_gd_result1}) yields
\begin{align*}
    & \quad \EE\Big|\|\bm{x} + \bm{y}\|_2^q - \|\bm{x}\|_2^q - q\|\bm{x}\|_2^{q-2}\bm{x}^{\top}\bm{y}\Big| \nonumber \\
    & \le \Big[\big(\EE\|\bm{x}\|_2^q\big)^{1/q} + \big(\EE\|\bm{y}\|_2^q\big)^{1/q}\Big]^{q} - \EE\|\bm{x}\|_2^q - q\big(\EE\|\bm{x}\|_2^q\big)^{(q-1)/q}\big(\EE\|\bm{y}\|_2^q\big)^{1/q}.
\end{align*}
This completes the proof.
\end{proof}

\begin{proof}[Proof of Lemma~\ref{lemma_gmc_sgd_cond}]
Since a dropout matrix $D_k$ is a diagonal matrix with values $0$ and $1$ on the diagonal, $D_k^2=D_k$ and
\begin{align}
    \big\|(I_d-\alpha D_k\XX_k D_k)\bm{v}\big\|_2^2 & = \bm{v}^{\top}\big(I_d - 2\alpha D_k\XX_k D_k + \alpha^2 D_k\XX_k D_k\XX_k D_k\big)\bm{v} \nonumber \\
    & = 1 - 2\alpha \bm{v}^{\top}D_k\XX_k D_k\bm{v} + \alpha^2\bm{v}^{\top}D_k\XX_k D_k^2\XX_k D_k\bm{v} \nonumber \\
    & = 1 - \alpha\bm{v}^{\top}\big[2D_k\XX_k D_k - \alpha D_k\XX_kD_k\XX_kD_k\big]\bm{v} \nonumber \\
    & =: 1-\alpha \bm{v}^{\top}M_k\bm{v},
\end{align}
with
\begin{equation}
    \label{eq_M_k}
    M_k(\alpha) = 2D_k\XX_k D_k - \alpha D_k\XX_kD_k\XX_kD_k.
\end{equation}
Recall the condition on the learning rate $\alpha$ in (\ref{eq_sgd_condtion_q2}), it follows that $\EE[M_k]$ is positive definite (p.d.), which further implies the lower bound $\EE(\alpha \bm{v}^{\top}M_k\bm{v}) \ge \alpha \lambda_{\min}(\EE [M_k]) >0,$ that holds uniformly over all unit vectors $\bm{v}$. As a direct consequence,
\begin{align}
    \label{eq_gmc_sgd_constant_q2}
    \sup_{\bm{v}\in\RR^d,\|\bm{v}\|_2=1}\EE\big\|(I_d-\alpha D_k\XX_k D_k)\bm{v}\big\|_2^2 \le \sup_{\bm{v}\in\RR^d,\|\bm{v}\|_2=1}\big(1- \EE(\alpha \bm{v}^{\top}M_k\bm{v})\big) <1,
\end{align}
proving the result in the case $q=2$.

Next, we shall show that for all $q>2$, we also have 
$$\sup_{\bm{v}\in\RR^d,\|\bm{v}\|_2=1}\EE\|(I_d-\alpha D_k\XX_kD_k)\bm{v}\|_2^q<1.$$
In this case, the techniques in the proof of Lemma~\ref{lemma_gmc_cond} cannot be directly applied due to the randomness of $\XX_k$, and we need to leverage the moment inequalities in Lemma~\ref{lemma_moment_inequality} instead. Specifically, let $\bm{x}$ and $\bm{y}$ in Lemma~\ref{lemma_moment_inequality} be
\begin{equation}
    \label{eq_x_y}
    \bm{x}=\bm{x}(\bm{v})=\bm{v}, \quad \bm{y}=\bm{y}(\bm{v})=D_k\XX_kD_k\bm{v},
\end{equation}
respectively, for $\bm{v}$ a deterministic $d$-dimensional unit vector. It remains to show that for any $q>2$, $\EE\|\bm{x}-\alpha\bm{y}\|_2^q <1$ holds for any arbitrary unit vector $\bm{v}$. By Lemma~\ref{lemma_moment_inequality},
\begin{align}
    \EE\|\bm{x}-\alpha\bm{y}\|_2^q - \|\bm{x}\|_2^q - q\|\bm{x}\|_2^{q-2}\EE(-\bm{x}^{\top}\alpha\bm{y}) \le \EE\big(\|\bm{x}\|_2 + \|\alpha \bm{y}\|_2\big)^q - \|\bm{x}\|_2^q - q\|\bm{x}\|_2^{q-1}\EE\|\alpha \bm{y}\|_2,
\end{align}
which along with $\|\bm{x}\|_2=\|\bm{v}\|_2=1$ further yields, 
\begin{align}
    \label{eq_gmc_sgd_constant1}
    \EE\|\bm{x}-\alpha\bm{y}\|_2^q - 1 + q\alpha\EE(\bm{x}^{\top}\bm{y}) \le \EE\big(1 + \alpha\|\bm{y}\|_2\big)^q - 1 - q\alpha\EE\|\bm{y}\|_2.
\end{align}
Therefore, to prove $\sup_{\bm{v}\in\RR^d,\|\bm{v}\|_2=1}\EE\|\bm{x}-\alpha\bm{y}\|_2^q <1$, it suffices to show
\begin{align}
    \EE\big(1 + \alpha\|\bm{y}\|_2\big)^q - 1 - q\alpha\EE\|\bm{y}\|_2 < q\alpha\EE(\bm{x}^{\top}\bm{y}).
\end{align}
By applying Lemma~\ref{lemma_moment_inequality} again, we have
\begin{equation}
    \label{eq_gmc_sgd_constant2}
    \EE\big(1 + \alpha\|\bm{y}\|_2\big)^q - 1 - q\alpha\EE\|\bm{y}\|_2 \le \big(1 + \alpha(\EE\|\bm{y}\|_2^q)^{1/q}\big)^q - 1 - q\alpha(\EE\|\bm{y}\|_2^q)^{1/q}.
\end{equation}
Thus, we only need to show that for any $d$-dimensional vector  $\bm{v}$,
\begin{align}
    \big(1 + \alpha(\EE\|\bm{y}\|_2^q)^{1/q}\big)^q - 1 - q\alpha(\EE\|\bm{y}\|_2^q)^{1/q} < q\alpha\EE(\bm{x}^{\top}\bm{y}).
\end{align}
Recall the definitions of $\bm{x}$ and $\bm{y}$ in (\ref{eq_x_y}). By Lemma~\ref{lemma_clara}~(i), it follows that $\EE(\bm{x}^{\top}\bm{y}) = p\bm{v}^{\top}\EE[\XX_{k,p}]\bm{v}$, where $\XX_{k,p} = p\XX_k + (1-p)\mathrm{Diag}(\XX_k)$. With $\mu_q=\mu_q(\bm{v}) = (\EE\|\bm{y}(\bm{v})\|_2^q)^{1/q} = (\EE\|D_k\XX_kD_k\bm{v}\|_2^q)^{1/q},$ it suffices to show that
\begin{align}
    \sup_{\bm{v}\in\RR^d,\|\bm{v}\|_2=1}\frac{\big(1 + \alpha\mu_q\big)^q -1-q\alpha\mu_q}{p\bm{v}^{\top}\EE[\XX_{k,p}]\bm{v}} < q\alpha.
    \label{eq.cjdsbh}
\end{align}
Let $q>2$. Consider a function $f:\,\RR_+\mapsto\RR$ with $f(t)=(1+t)^q -1-qt$, which is strictly increasing on $\RR_+$. Note that $f''(t) = q(q-1)(1+t)^{q-2}$. Since $q>2$, it follows that $f(t)=\int_{s=0}^{t}\int_{u=0}^sq(q-1)(1+u)^{q-2}duds$. Therefore,
\begin{align}
    f(t)=(1+t)^q -1-qt \le \frac{q(q-1)}{2}(1+t)^{q-2}t^2.
\end{align}
Thus, the condition \eqref{eq.cjdsbh} is satisfied if
\begin{align}
    \frac{q(q-1)}{2}\alpha^2\sup_{\bm{v}\in\RR^d,\|\bm{v}\|_2=1}\frac{\big(1 + \alpha\mu_q\big)^{q-2}\mu_q^2}{p\bm{v}^{\top}\EE[\XX_{k,p}]\bm{v}} <q\alpha.
\end{align}
As this is true by assumption the proof is complete. 
\end{proof}

\subsection{Proof of Theorem~\ref{thm_gmc_sgd}}

\begin{proof}[Proof of Theorem~\ref{thm_gmc_sgd}]
Let the random coefficient matrix $\breve A_k(\alpha)=I_d-\alpha D_k\XX_kD_k$ be as defined in (\ref{eq_dropout_sgd}). We write $\breve A_k(\alpha)=\breve A_k$ exchangeably in this proof. 

First, we study the case with $q=2$. Consider two SGD dropout sequences $\{\breve\bbeta_k(\alpha)\}_{k\in\NN}$ and $\{\breve\bbeta_k'(\alpha)\}_{k\in\NN}$, given two arbitrarily fixed initial vectors $\breve\bbeta_0,\,\breve\bbeta_0'$. Let $\breve{\bm{\delta}} = \breve\bbeta_0-\breve\bbeta_0'$. Then, it follows from the tower rule that
\begin{align}
    \EE\|\breve\bbeta_k(\alpha) - \breve\bbeta_k'(\alpha)\|_2^2 & = \EE\big[\breve{\bm{\delta}}^{\top}\breve A_1^{\top}\cdots\breve A_k^{\top}\breve A_k\cdots\breve A_1\breve{\bm{\delta}}\big] \nonumber \\
    & = \EE\big[ \EE\big[\breve{\bm{\delta}}^{\top}\breve A_1^{\top}\cdots\breve A_k^{\top}\breve A_k\cdots\breve A_1\breve{\bm{\delta}} \mid \breve A_1,\ldots,\breve A_{k-1} \big] \big] \nonumber \\
    & = \EE\big[ \breve{\bm{\delta}}^{\top}\breve A_1^{\top}\cdots\breve A_{k-1}^{\top}\EE(\breve A_k^{\top}\breve A_k)\breve A_{k-1}\cdots\breve A_1\breve{\bm{\delta}}\big] \nonumber \\
    & \le \|\EE(\breve A_k^{\top}\breve A_k)\| \cdot  \EE\big[ \breve{\bm{\delta}}^{\top}\breve A_1^{\top}\cdots\breve A_{k-1}^{\top}\breve A_{k-1}\cdots\breve A_1\breve{\bm{\delta}}\big] \nonumber \\
    & \le \prod_{i=1}^k\|\EE(\breve A_i^{\top}\breve A_i)\|\cdot\|\breve{\bm{\delta}}\|_2^2.
\end{align}
Recall that for the constant learning rate $\alpha>0$ satisfying the conditions in Lemma~\ref{lemma_gmc_sgd_cond}, we have $\|\EE(\breve A_i^{\top}\breve A_i)\|<1$ uniformly over $i\in\NN$. Thus,
$\prod_{i=1}^k\|\EE(\breve A_i^{\top}\breve A_i)\|<1$. Since the dropout matrices $D_k$'s are i.i.d.\ and are independent of the i.i.d.\ observations $\bm{x}_k$'s, it follows that $\prod_{i=1}^k\|\EE(\breve A_i^{\top}\breve A_i)\| = \|\EE(\breve A_1^{\top}\breve A_1)\|^k <1.$ This gives the desired result for the case with $q=2$.

For $q>2$, we note that
\begin{align}
    \EE\|\breve\bbeta_k(\alpha) - \breve\bbeta_k'(\alpha)\|_2^q  = \EE\|\breve A_k\cdots\breve A_1\breve{\bm{\delta}}\|_2^q = \EE\big(\|\breve A_k\cdots\breve A_1\breve{\bm{\delta}}\|_2^2\big)^{q/2} = \EE\big(\breve{\bm{\delta}}^{\top}\breve A_1^{\top}\cdots\breve A_k^{\top}\breve A_k\cdots\breve A_1\breve{\bm{\delta}}\big)^{q/2}.
\end{align}
Similarly, it follows from the tower rule that
\begin{align}
    \EE\big(\breve{\bm{\delta}}^{\top}\breve A_1^{\top}\cdots\breve A_k^{\top}\breve A_k\cdots\breve A_1\breve{\bm{\delta}}\big)^{q/2} & = \EE\big[ \EE\big[\big(\breve{\bm{\delta}}^{\top}\breve A_1^{\top}\cdots\breve A_k^{\top}\breve A_k\cdots\breve A_1\breve{\bm{\delta}}\big)^{q/2} \mid \breve A_1,\ldots, \breve A_{k-1}\big] \big] \nonumber \\
    & \le \sup_{\bm{v}\in\RR^d,\|\bm{v}\|_2=1}\EE\|\breve A_k\bm{v}\|_2^q \cdot \EE\big(\breve{\bm{\delta}}^{\top}\breve A_1^{\top}\cdots\breve A_{k-1}^{\top}\breve A_{k-1}\cdots\breve A_1\breve{\bm{\delta}}\big)^{q/2} \nonumber \\
    & \le \prod_{i=1}^k\sup_{\bm{v}\in\RR^d,\|\bm{v}\|_2=1}\EE\|\breve A_i\bm{v}\|_2^q\cdot\|\breve{\bm{\delta}}\|_2^q.
\end{align}
Since $\sup_{\bm{v}\in\RR^d,\|\bm{v}\|_2=1}\EE\|\breve A_i\bm{v}\|_2^q<1$ holds uniformly over $i\in\NN$, we obtain the geometric-moment contraction in (\ref{eq_sgd_gmc}) for $q>2$.

Finally, by Corollary~\ref{cor_gmc_mat},  the geometric-moment contraction in (\ref{eq_sgd_gmc}) implies the existence of a unique stationary distribution $\breve\pi_{\alpha}$ of the SGD dropout $\{\breve\bbeta_k(\alpha)\}_{k\in\NN}$. This completes the proof.
\end{proof}

\subsection{Proofs of Lemmas~\ref{lemma_sgd_l2_closed_form}--\ref{lemma_true_l2}}
\begin{lemma}[Closed-form solution of the $\ell^2$ minimizer]
\label{lemma_sgd_l2_closed_form}
    Assume that model (\ref{eq_model_sgd}) is in a reduced form, i.e., $\min_i\EE[\bm{x}_1\bm{x}_1^{\top}]_{ii}>0$. Then, for the minimizer of the $\ell^2$-regularized least-squares loss $\breve\bbeta:=\arg\min_{\bbeta\in\RR^d}\EE\big[(y - \bm{x}^{\top}D\bbeta)^2/2\big]$ as defined in (\ref{eq_goal_sgd}), we have the closed form solution
\begin{equation*}
    \breve\bbeta = (\EE[\XX_{1,p}])^{-1}\EE[y_1\bm{x}_1].
\end{equation*}
\end{lemma}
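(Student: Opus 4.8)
The plan is to exploit that the objective $L(\bbeta):=\EE[(y-\bm{x}^{\top} D\bbeta)^2]/2$ is a convex quadratic in $\bbeta$, so that its unique minimizer is pinned down by the first-order condition. Writing $(y_1,\bm{x}_1)$ for the generic sample and $\XX_1=\bm{x}_1\bm{x}_1^{\top}$, I would first expand the square and integrate out all randomness to obtain the explicit quadratic form
\begin{equation*}
    L(\bbeta) = \tfrac12\EE[y_1^2] - \EE[y_1\,\bm{x}_1^{\top} D]\,\bbeta + \tfrac12\,\bbeta^{\top}\EE[D\XX_1 D]\,\bbeta.
\end{equation*}
Using that the dropout matrix $D$ is independent of $(y_1,\bm{x}_1)$ together with $\EE[D]=pI_d$ gives $\EE[y_1\bm{x}_1^{\top}D]=p\,\EE[y_1\bm{x}_1]^{\top}$, while conditioning on $\bm{x}_1$ and applying Lemma~\ref{lemma_clara}(ii) yields $\EE[D\XX_1 D]=p\,\EE[\XX_{1,p}]$. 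Hence
\begin{equation*}
    L(\bbeta) = \tfrac12\EE[y_1^2] - p\,\EE[y_1\bm{x}_1]^{\top}\bbeta + \tfrac{p}{2}\,\bbeta^{\top}\EE[\XX_{1,p}]\,\bbeta,
\end{equation*}
where finiteness of the relevant second moments, needed for $L$ to be well defined, holds under the standing moment assumptions of the model.

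Next I would set the gradient $\nabla L(\bbeta) = -p\,\EE[y_1\bm{x}_1] + p\,\EE[\XX_{1,p}]\,\bbeta$ equal to zero, which gives the normal equation $\EE[\XX_{1,p}]\,\breve\bbeta = \EE[y_1\bm{x}_1]$. Once invertibility of $\EE[\XX_{1,p}]$ is established, this rearranges to the claimed identity $\breve\bbeta=(\EE[\XX_{1,p}])^{-1}\EE[y_1\bm{x}_1]$.

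The one point requiring care — and the main obstacle — is verifying that $\EE[\XX_{1,p}]$ is invertible, which simultaneously guarantees strict convexity and hence uniqueness of the minimizer. Writing $\EE[\XX_{1,p}] = p\,\EE[\bm{x}_1\bm{x}_1^{\top}] + (1-p)\,\Diag\!\big(\EE[\bm{x}_1\bm{x}_1^{\top}]\big)$, the first summand is positive semi-definite, while the reduced-form assumption $\min_i(\EE[\bm{x}_1\bm{x}_1^{\top}])_{ii}>0$ makes the diagonal summand (weighted by $1-p>0$) strictly positive definite; their sum is therefore positive definite and invertible. The same positive definiteness shows that the Hessian $p\,\EE[\XX_{1,p}]$ of $L$ is positive definite, so $L$ is strictly convex and the unique stationary point is its global minimizer, completing the proof.
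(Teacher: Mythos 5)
Your proof is correct and follows essentially the same route as the paper's: expand the quadratic loss, marginalize over $D$ via $\EE[D\XX_1 D]=p\,\EE[\XX_{1,p}]$ (Lemma~\ref{lemma_clara}(ii)), set the gradient to zero, and invert $\EE[\XX_{1,p}]$ using the reduced-form assumption. Your only addition is spelling out the positive-definiteness argument for $\EE[\XX_{1,p}]$ (and hence strict convexity and uniqueness), which the paper leaves implicit; this is a welcome bit of extra rigor but not a different approach.
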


\begin{proof}[Proof of Lemma~\ref{lemma_sgd_l2_closed_form}]
Recall the $d\times d$ Gram matrix $\XX_k = \bm{x}_k\bm{x}_k^{\top}$ and
\begin{equation*}
    \overline{\XX}_k = \XX_k - \Diag(\XX_k), \quad \XX_{k,p} = p\XX_k + (1-p)\Diag(\XX_k).
\end{equation*}
The closed form solution can be obtained by first computing the gradient of the $\ell^2$-regularized least-squares loss,
\begin{align*}
    (y-\bm{x}^{\top}\bbeta)^2 & = y^2 - 2y\bm{x}^{\top}D\bbeta + \bbeta^{\top}D(\bm{x}\bm{x}^{\top})D\bbeta, \\
    \EE_D[(y-\bm{x}^{\top}\bbeta)^2] & = y^2 - 2py\bm{x}^{\top}\bbeta + p^2\bbeta^{\top}(\bm{x}\bm{x}^{\top})\bbeta + p(1-p)\bbeta^{\top}\mathrm{Diag}(\bm{x}\bm{x}^{\top})\bbeta, \\
    \EE_{(y,\bm{x})}\EE_D[(y-\bm{x}^{\top}\bbeta)^2] & = \EE[y^2] - 2p\EE[y\bm{x}^{\top}]\bbeta + p^2\bbeta^{\top}\EE[\bm{x}\bm{x}^{\top}]\bbeta + p(1-p)\bbeta^{\top}\mathrm{Diag}(\EE[\bm{x}\bm{x}^{\top}])\bbeta, \\
    \nabla_{\bbeta}\EE_{(y,\bm{x})}\EE_D[(y-\bm{x}^{\top}\bbeta)^2] & = - 2p\EE[y\bm{x}] + 2\Big(p^2\EE[\bm{x}\bm{x}^{\top}] + p(1-p)\mathrm{Diag}(\EE[\bm{x}\bm{x}^{\top}])\Big)\bbeta.
\end{align*}
Recall that the i.i.d.\ random noise $\epsilon_k$ is independent of the i.i.d.\ random covariates $\bm{x}_k$. Since model (\ref{eq_model_sgd}) is assumed to be in a reduced form, i.e., $\min_i\EE[\bm{x}_1\bm{x}_1^{\top}]_{ii}>0$, the closed form solution of $\breve\bbeta$ is
\begin{equation*}
    \breve\bbeta = p\Big(p^2\EE[\bm{x}_1\bm{x}_1^{\top}] + p(1-p)\mathrm{Diag}(\EE[\bm{x}_1\bm{x}_1^{\top}])\Big)^{-1}\EE[y_1\bm{x}_1] = (\EE[\XX_{1,p}])^{-1}\EE[y_1\bm{x}_1].
\end{equation*}
\end{proof}

Recall that for any $d\times d$ matrix $A$, $A_p:=pA+(1-p)\mathrm{Diag}(A)$.

\begin{lemma}\label{lemma_L2_convergence}
If the $d\times d$ matrix $\EE[2\XX_k-\alpha\XX_k^2]_p$ is positive definite, then the condition on the learning rate $\alpha$ in (\ref{eq_sgd_condition}) holds for $q=2$.
\end{lemma}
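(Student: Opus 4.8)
The plan is to reduce the claim to a single positive‑definiteness statement and then exploit the rank‑one structure of $\XX_k=\bm{x}_k\bm{x}_k^{\top}$. Recall from the proof of Lemma~\ref{lemma_gmc_sgd_cond} that, for $q=2$ and any unit vector $\bm{v}$,
\begin{equation*}
    \big\|(I_d-\alpha D_k\XX_k D_k)\bm{v}\big\|_2^2 = 1-\alpha\,\bm{v}^{\top}M_k(\alpha)\bm{v},\qquad M_k(\alpha)=2D_k\XX_k D_k-\alpha D_k\XX_kD_k\XX_kD_k,
\end{equation*}
with $M_k(\alpha)$ as in \eqref{eq_M_k}. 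Since $D_k^2=D_k$ gives $\bm{v}^{\top}D_k\XX_kD_k\XX_kD_k\bm{v}=\|D_k\XX_kD_k\bm{v}\|_2^2$ and $\bm{v}^{\top}\EE[D_k\XX_kD_k]\bm{v}=p\,\bm{v}^{\top}\EE[\XX_{k,p}]\bm{v}$ by Lemma~\ref{lemma_clara}(ii), the $q=2$ instance of \eqref{eq_sgd_condition} (equivalently \eqref{eq_sgd_condtion_q2}) holds if and only if $\EE[\bm{v}^{\top}M_k(\alpha)\bm{v}]>0$ for every $\bm{v}\neq 0$, i.e.\ if and only if $\EE[M_k(\alpha)]$ is positive definite. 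Thus it suffices to show that positive definiteness of $\EE[2\XX_k-\alpha\XX_k^2]_p$ forces $\EE[M_k(\alpha)]\succ0$.

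The key step is a pointwise positive‑semidefinite domination. Because $\XX_k$ is rank one we have $\XX_k^2=\|\bm{x}_k\|_2^2\,\XX_k$, and writing $D_k\XX_kD_k=(D_k\bm{x}_k)(D_k\bm{x}_k)^{\top}$ one checks that
\begin{equation*}
    D_k\XX_kD_k\XX_kD_k=(D_k\XX_kD_k)^2=\|D_k\bm{x}_k\|_2^2\,D_k\XX_kD_k .
\end{equation*}
Since a dropout matrix is a coordinate projection, $\|D_k\bm{x}_k\|_2\le\|\bm{x}_k\|_2$, and since $D_k\XX_kD_k\succeq0$, this yields, for every realization of $(D_k,\bm{x}_k)$,
\begin{equation*}
    D_k\XX_kD_k\XX_kD_k\preceq\|\bm{x}_k\|_2^2\,D_k\XX_kD_k=D_k\XX_k^2D_k,
\end{equation*}
and therefore $M_k(\alpha)\succeq D_k\big(2\XX_k-\alpha\XX_k^2\big)D_k$ (using $\alpha>0$ to preserve the direction of the inequality after multiplying by $-\alpha$).

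It remains to take expectations. Applying Lemma~\ref{lemma_clara}(ii) with $A=2\XX_k-\alpha\XX_k^2$ (valid since $D_k$ is independent of $\bm{x}_k$) gives $\EE_D\big[D_k(2\XX_k-\alpha\XX_k^2)D_k\big]=p\,(2\XX_k-\alpha\XX_k^2)_p$, and averaging over $\bm{x}_k$ while using that $A\mapsto A_p=pA+(1-p)\Diag(A)$ is linear and commutes with expectation yields
\begin{equation*}
    \EE[M_k(\alpha)]\succeq p\,\EE\big[2\XX_k-\alpha\XX_k^2\big]_p .
\end{equation*}
Hence, if $\EE[2\XX_k-\alpha\XX_k^2]_p$ is positive definite then, as $p>0$, so is $\EE[M_k(\alpha)]$, which by the first paragraph is exactly the $q=2$ condition \eqref{eq_sgd_condition}. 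The main obstacle is the domination $D_k\XX_kD_k\XX_kD_k\preceq D_k\XX_k^2D_k$: it is precisely the rank‑one identity $\XX_k^2=\|\bm{x}_k\|_2^2\XX_k$ together with the contractivity $\|D_k\bm{x}_k\|_2\le\|\bm{x}_k\|_2$ that lets us bypass the exact but unwieldy expansion of $\EE_D[D_k\XX_kD_k\XX_kD_k]$ from Lemma~\ref{lemma_clara}(iii), which does not visibly reduce to a quantity of the form $p\,\EE[\cdots]_p$; one must also verify that the inequality points in the direction needed for positive definiteness of the bound to transfer to $\EE[M_k(\alpha)]$.
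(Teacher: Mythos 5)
Your proposal is correct and takes essentially the same route as the paper: reduce the $q=2$ case of (\ref{eq_sgd_condition}) to positive definiteness of $\EE[M_k(\alpha)]$, dominate $D_k\XX_kD_k\XX_kD_k \preceq D_k\XX_k^2D_k$ pointwise so that $M_k(\alpha)\succeq D_k(2\XX_k-\alpha\XX_k^2)D_k$, and apply Lemma~\ref{lemma_clara}(ii) to conclude $\EE[M_k(\alpha)]\succeq p\,\EE[2\XX_k-\alpha\XX_k^2]_p$. The only difference is in how the key domination is justified: you use the rank-one identity $\XX_k^2=\|\bm{x}_k\|_2^2\,\XX_k$ together with $\|D_k\bm{x}_k\|_2\le\|\bm{x}_k\|_2$, whereas the paper gets it for an arbitrary positive semi-definite $\XX_k$ directly from $D_k=D_k^2\preceq I_d$ (via $\XX_kD_k\XX_k\preceq\XX_k^2$ and conjugation by $D_k$), so, contrary to your closing remark, the rank-one structure is not actually needed to bypass the expansion in Lemma~\ref{lemma_clara}(iii).
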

\begin{proof}
By rewriting the condition (\ref{eq_sgd_condition}) with $q=2$, for all the unit vector $\bm{v}\in\RR^d,$ $\|\bm{v}\|_2=1$, we aim to show
\begin{align}
    0 < \alpha < \frac{2\bm{v}^{\top}\EE(D_k\XX_kD_k)\bm{v}}{\EE\|D_k\XX_kD_k\bm{v}\|_2^2}.
\end{align}
Since $D_k^2=D_k\le I_d$, it follows from Lemma~\ref{lemma_clara}(ii) that
\begin{align*}
    & \quad 2\bm{v}^{\top}\EE(D_k\XX_kD_k)\bm{v} - \alpha\bm{v}^{\top}\EE[D_k\XX_kD_k\XX_kD_k]\bm{v} \nonumber \\
    & \ge 2\bm{v}^{\top}\EE(D_k\XX_kD_k)\bm{v} - \alpha\bm{v}^{\top}\EE[D_k\XX_k^2D_k]\bm{v} \nonumber \\
    & = \bm{v}^{\top}\EE[D_k(2\XX_k-\alpha\XX_k^2)D_k]\bm{v} \nonumber \\
    & = p\bm{v}^{\top}\EE[2\XX_k-\alpha\XX_k^2]_p\bm{v} >0.
\end{align*}
As the unit vector $\bm{v}\in\RR^d$ was arbitrary, condition (\ref{eq_sgd_condition}) holds for $q=2$. 
\end{proof}

\begin{lemma}[$\ell^2$-minimizer $\breve\bbeta$ and true parameter $\bbeta^*$]\label{lemma_true_l2}
Assume that $\EE[|\epsilon|^{2q}]+\|\bm{x}\|_2^{2q}]<\infty$. Then, the $q$-th moment of the gradient in (\ref{eq_dropout_sgd_origin}) exists at the true parameter $\bbeta^*$ in model (\ref{eq_model_sgd}), for some $q\ge2$, that is, 
$$\Big(\EE\Big\|\nabla_{\bbeta^*}\frac{1}{2}\big(y-\bm{x}^{\top} D\bbeta^*\big)^2\Big\|_2^q\Big)^{1/q} = \Big(\EE\big\|D\bm{x}\big(y-\bm{x}^{\top}D\bbeta^*\big)\big\|_2^q\Big)^{1/q}<\infty,$$
which further implies the finite $q$-th moment of the stochastic gradient at the $\ell^2$-minimizer $\breve\bbeta$ defined in (\ref{eq_goal_sgd}), that is
$$\Big(\EE\Big\|\nabla_{\breve\bbeta}\frac{1}{2}\big(y-\bm{x}^{\top} D\breve\bbeta\big)^2\Big\|_2^q\Big)^{1/q} = \Big(\EE\big\|D\bm{x}\big(y-\bm{x}^{\top}D\breve\bbeta\big)\big\|_2^q\Big)^{1/q}<\infty,$$
\end{lemma}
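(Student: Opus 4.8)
The plan is to prove both moment bounds by decomposing the stochastic gradient into a part that is polynomial in the covariate $\bm{x}$ and a part carrying the noise $\epsilon$, and then controlling each piece via the $2q$-th moment hypothesis. Throughout I would use only that a dropout matrix $D$ is diagonal with entries in $\{0,1\}$, so that $\|D\|\le 1$ and $\|I_d - D\|\le 1$, together with Minkowski's inequality to split sums and Cauchy--Schwarz to handle the cross term.

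First, for the gradient at the true parameter $\bbeta^*$, I would substitute $y = \bm{x}^{\top}\bbeta^* + \epsilon$ from model (\ref{eq_model_sgd}) to obtain
\[
D\bm{x}(y - \bm{x}^{\top}D\bbeta^*) = D\bm{x}\bm{x}^{\top}(I_d - D)\bbeta^* + D\bm{x}\,\epsilon.
\]
Applying Minkowski's inequality reduces the task to bounding the two summands separately. For the first, the operator-norm estimates $\|D\|\le 1$ and $\|I_d-D\|\le 1$ yield $\|D\bm{x}\bm{x}^{\top}(I_d - D)\bbeta^*\|_2 \le \|\bm{x}\|_2^2\,\|\bbeta^*\|_2$, so its $q$-th moment is at most $\|\bbeta^*\|_2^q\,\EE\|\bm{x}\|_2^{2q}$, which is finite by Assumption~\ref{asm_fclt_asgd}. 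For the noise term, $\|D\bm{x}\,\epsilon\|_2 \le |\epsilon|\,\|\bm{x}\|_2$, and a Cauchy--Schwarz split (Hölder with exponents $2,2$) gives $\EE[|\epsilon|^q\|\bm{x}\|_2^q] \le (\EE|\epsilon|^{2q})^{1/2}(\EE\|\bm{x}\|_2^{2q})^{1/2}<\infty$, again by the assumption. This establishes the first claim.

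For the gradient at the $\ell^2$-minimizer $\breve\bbeta$, I would first invoke Lemma~\ref{lemma_sgd_l2_closed_form}, which exhibits $\breve\bbeta = (\EE[\XX_{1,p}])^{-1}\EE[y_1\bm{x}_1]$ as a fixed, finite deterministic vector (the inverse exists by the reduced-form assumption, and the relevant expectations are finite since $2q\ge 4$). Writing $y - \bm{x}^{\top}D\breve\bbeta = (y - \bm{x}^{\top}D\bbeta^*) + \bm{x}^{\top}D(\bbeta^* - \breve\bbeta)$ decomposes the gradient as
\[
D\bm{x}(y - \bm{x}^{\top}D\breve\bbeta) = D\bm{x}(y - \bm{x}^{\top}D\bbeta^*) + D\bm{x}\bm{x}^{\top}D(\bbeta^* - \breve\bbeta).
\]
The first summand has finite $q$-th moment by the first claim, while the second is bounded in norm by $\|\bm{x}\|_2^2\,\|\bbeta^* - \breve\bbeta\|_2$ and hence has $q$-th moment controlled by $\EE\|\bm{x}\|_2^{2q}$. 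A final Minkowski step gives the second claim.

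There is no genuine obstacle here; the argument is a routine moment estimate. The only point requiring a little care is the cross term $D\bm{x}\,\epsilon$: one must exploit that the assumption controls the $2q$-th (rather than the $q$-th) moments of both $|\epsilon|$ and $\|\bm{x}\|_2$, so that Cauchy--Schwarz closes the bound. Independence of $\epsilon$ and $\bm{x}$ from the model is not strictly needed for finiteness, although it is available and could instead be used to factor the expectation of the cross term.
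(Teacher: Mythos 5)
Your proof is correct, and while it shares the paper's skeleton (substitute the model equation at $\bbeta^*$, split by Minkowski, then pass from $\bbeta^*$ to $\breve\bbeta$ via the identity $y-\bm{x}^{\top}D\breve\bbeta = (y-\bm{x}^{\top}D\bbeta^*) + \bm{x}^{\top}D(\bbeta^*-\breve\bbeta)$), it differs from the paper's argument at two points, both in the direction of being more elementary. First, for the noise term the paper bounds $(\EE\|D\bm{x}\epsilon\|_2^q)^{1/q}\le(\EE\|\bm{x}\|_2^q)^{1/q}(\EE|\epsilon|^q)^{1/q}$ by factoring the expectation through the independence of $\bm{x}$ and $\epsilon$, whereas you close the bound by Cauchy--Schwarz against the assumed $2q$-th moments, $\EE[|\epsilon|^q\|\bm{x}\|_2^q]\le(\EE|\epsilon|^{2q})^{1/2}(\EE\|\bm{x}\|_2^{2q})^{1/2}$; as you correctly observe, this makes the first claim independence-free, a (mild) strengthening that the paper does not exploit. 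Second, and more significantly, for the term $D\bm{x}\bm{x}^{\top}D(\bbeta^*-\breve\bbeta)$ the paper substitutes the closed form of $\breve\bbeta$ and computes the difference explicitly, $\breve\bbeta-\bbeta^* = -(1-p)(\EE[\XX_{1,p}])^{-1}\EE[\overline{\XX}_1]\bbeta^*$, then controls the operator-norm ratio $\lambda_{\max}(\EE[\overline{\XX}_1])/\lambda_{\min}(\EE[\XX_{1,p}])$ via the reduced-form condition and invokes $\sup_{\bm{v},\|\bm{v}\|_2=1}\EE\|D_1\XX_1D_1\bm{v}\|_2^q<\infty$; you instead use only that $\breve\bbeta$ is a finite deterministic vector (existence of the inverse from the reduced form, finiteness of $\EE[y_1\bm{x}_1]$ from the moment hypothesis) together with the crude bound $\|D\bm{x}\bm{x}^{\top}D(\bbeta^*-\breve\bbeta)\|_2\le\|\bm{x}\|_2^2\|\bbeta^*-\breve\bbeta\|_2$, which suffices for mere finiteness. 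What the paper's longer computation buys is an explicit, quantitative handle on the gap: the $(1-p)^q$ prefactor shows the discrepancy between $\breve\bbeta$ and $\bbeta^*$ scales with the dropout intensity and vanishes as $p\to1$, information your route discards; what your route buys is brevity and minimal hypotheses. One pedantic note: finiteness of $\EE[y_1\bm{x}_1]$ needs only second moments, so your aside ``since $2q\ge4$'' is stronger than necessary, though harmless.
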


\begin{proof}[Proof of Lemma~\ref{lemma_true_l2}]
First, it follows from the triangle inequality that
\begin{align}
    \label{eq_stochastic_gradient_3parts}
    \Big(\EE\big\|D\bm{x}\big(y-\bm{x}^{\top}D\bbeta^*\big)\big\|_2^q\Big)^{1/q} & = \Big(\EE\big\|D\bm{x}\big(\bm{x}^{\top}\bbeta^* + \epsilon -\bm{x}^{\top}D\bbeta^*\big)\big\|_2^q\Big)^{1/q} \nonumber \\
    & \le \big(\EE\big\|D\bm{x}\bm{x}^{\top}\bbeta^*\big\|_2^q\big)^{1/q} + \big(\EE\big\|D\bm{x}\epsilon\big\|_2^q\big)^{1/q} + \big(\EE\big\|D\bm{x}\bm{x}^{\top}D\bbeta^*\big\|_2^q\big)^{1/q}.
\end{align}
By Assumption~\ref{asm_fclt_asgd}, since the dimension of $\bbeta^*$ is fixed, we have 
$$\big(\EE\big\|D\bm{x}\bm{x}^{\top}\bbeta^*\big\|_2^q\big)^{1/q} \le \big(\EE\|\bm{x}\|_2^{2q}\big)^{1/q}\|\bbeta^*\|_2<\infty.$$ 
Due the independence between $\bm{x}$ and $\epsilon$, Assumption~\ref{asm_fclt_asgd} gives
$\big(\EE\big\|D\bm{x}\epsilon\big\|_2^q\big)^{1/q} \le (\EE\|\bm{x}\|_2^q)^{1/q}(\EE\|\epsilon\|_2^q)^{1/q}<\infty.$ Moreover, we obtain
\begin{align}
    \big(\EE\big\|D\bm{x}\bm{x}^{\top}D\bbeta^*\big\|_2^q\big)^{1/q} = \big(\EE\big\|D\bm{x}\big\|_2^{2q}\big)^{1/q}\|\bbeta^*\|_2 \le \big(\EE\big\|\bm{x}\big\|_2^{2q}\big)^{1/q}\|\bbeta^*\|_2 <\infty.
\end{align}
Inserting the inequalities into (\ref{eq_stochastic_gradient_3parts}), we obtain the finite $q$-th moment at the true parameter $\bbeta^*$.

Next, we show that the finite $q$-th moment of the stochastic gradient at $\bbeta^*$ can also imply the finite $q$-th moment at $\breve\bbeta$. Note that
\begin{align}
    \Big(\EE\big\|D_1\bm{x}_1\big(y_1-\bm{x}_1^{\top}D_1\breve\bbeta\big)\big\|_2^q\Big)^{1/q} & \le \Big(\EE\big\|D_1\bm{x}_1\big(y_1-\bm{x}_1^{\top}D_1\bbeta^*\big)\big\|_2^q\Big)^{1/q} + \Big(\EE\big\|D_1\XX_1D_1(\breve\bbeta-\bbeta^*\big)\big\|_2^q\Big)^{1/q}.
\end{align}
We only need to show that the second term is bounded. Since $\XX_{1,p}=p\XX_1 + (1-p)\mathrm{Diag}(\XX_1)$, $\overline{\XX}_1 = \XX_1 - \mathrm{Diag}(\XX_1)$, and $\breve\bbeta=(\EE[\XX_{1,p}])^{-1}\EE[y_1\bm{x}_1]$, it follows that
\begin{align}
    \label{eq_diff_l2_and_true}
    \EE\big\|D_1\XX_1D_1(\breve\bbeta-\bbeta^*\big)\big\|_2^q & = \EE\big\|D_1\XX_1D_1\big((\EE[\XX_{1,p}])^{-1}\EE[y_1\bm{x}_1]-\bbeta^*\big)\big\|_2^q \nonumber \\
    & = \EE\big\|D_1\XX_1D_1\big((\EE[\XX_{1,p}])^{-1}\EE[(\bm{x}_1^{\top}\bbeta^* + \epsilon_1)\bm{x}_1]-\bbeta^*\big)\big\|_2^q \nonumber \\
    & = \EE\big\|D_1\XX_1D_1\big((\EE[\XX_{1,p}])^{-1}\EE[\XX_1]\bbeta^* -\bbeta^*\big)\big\|_2^q \nonumber \\
    & = (1-p)^q\EE\big\|D_1\XX_1D_1(\EE[\XX_{1,p}])^{-1}\EE[\overline{\XX}_1]\bbeta^*\big\|_2^q \nonumber \\
    & \le (1-p)^q\big\|(\EE[\XX_{1,p}])^{-1}\EE[\overline{\XX}_1]\big\|^q \sup_{\bm{v}\in\RR^d,\|\bm{v}\|_2=1} \EE\big\|D_1\XX_1D_1\bm{v}\big\|_2^q\cdot\|\bbeta^*\|_2^q.
\end{align}
The sub-multiplicativity of the operator norm yields
\begin{align}
    \big\|(\EE[\XX_{1,p}])^{-1}\EE[\overline{\XX}_1]\big\| \le \frac{\lambda_{\max}(\EE[\overline{\XX}_1])}{\lambda_{\min}(\EE[\XX_{1,p}])} <\infty,
\end{align}
since $\lambda_{\min}(\EE[\XX_{1,p}])\ge (1-p)\lambda_{\min}(\EE[\mathrm{Diag}(\XX_1)]) = (1-p)\min_i\EE[\XX_1]_{ii} >0$, and $\lambda_{\max}(\EE[\overline{\XX}_1])\le\lambda_{\max}(\EE[\XX_1])<\infty$. Moreover, $\|\bbeta^*\|_2<\infty.$ As we assumed that $\sup_{\bm{v}\in\RR^d,\|\bm{v}\|_2=1}\EE\|D_1\XX_1D_1\bm{v}\|_2^q<\infty$ in Lemma~\ref{lemma_gmc_sgd_cond}, also (\ref{eq_diff_l2_and_true}) is bounded.
\end{proof}

\section{Proofs in Section~\ref{subsec_asymp_sgd}}

\subsection{Proof of Lemma~\ref{lemma_q_moment_sgd}}
\begin{proof}[Proof of Lemma~\ref{lemma_q_moment_sgd}]
The recursion in (\ref{eq_dropout_sgd}) is $\breve\bbeta_k(\alpha) - \breve\bbeta = \breve A_k(\alpha)(\breve\bbeta_{k-1}(\alpha) - \breve\bbeta) + \breve{\bm{b}}_k(\alpha)$, with random matrix $\breve A_k(\alpha) = I_d - \alpha D_k\XX_kD_k$, and random vector $\breve{\bm{b}}_k(\alpha) = \alpha D_k\bm{x}_k(y_k - \bm{x}_k^{\top}D_k\breve\bbeta)$. Recall $\breve\bbeta=\arg\min_{\bbeta\in\RR^d}\EE[(y - \bm{x}^{\top}D\bbeta)^2/2]$ in (\ref{eq_goal_sgd}), where the expectation is taken over both $(y,\bm{x})$ and $D$. By Lemma~\ref{lemma_clara}~(ii), 
\begin{align}
    \label{eq_bk_mean0_sgd}
    \EE_{(y,\bm{x})}\EE_D[\breve{\bm{b}}_k(\alpha)] & = \EE_{(y,\bm{x})}\EE_D[\alpha D_k\bm{x}_k(y_k - \bm{x}_k^{\top}D_k\breve\bbeta)] \nonumber \\
    & = \EE_{(y,\bm{x})}[\alpha pI_dy_k\bm{x}_k - \alpha p\XX_{k,p}\breve\bbeta] \nonumber \\
    & = \alpha p\EE[\XX_{1,p}]\breve\bbeta - \alpha p \EE[\XX_{1,p}]\breve\bbeta \nonumber \\
    & = 0.
\end{align}
Similar to (\ref{eq_iter_dropout_stationary}), we can rewrite the stationary SGD dropout sequence $\breve\bbeta_k^{\circ}(\alpha)$ into
\begin{align}
    \breve\bbeta_k^{\circ}(\alpha) - \breve\bbeta & = \sum_{i=0}^{\infty}\Big(\prod_{j=k-i+1}^k\breve A_j(\alpha)\Big)\breve{\bm{b}}_{k-i}(\alpha) \nonumber \\
    & = \alpha\sum_{i=0}^{\infty}\Big(\prod_{j=k-i+1}^k(I_d-\alpha D_j\XX_jD_j)\Big)D_{k-i}\bm{x}_{k-i}(y_{k-i}-\bm{x}_{k-i}^{\top}D_{k-i}\breve\bbeta) \nonumber \\
    & =: \alpha\sum_{i=0}^{\infty}\breve\M_{i,k}(\alpha).
\end{align}
Recall the filtration $\breve\F_i=\sigma(\bm{\xi}_i,\bm{\xi}_{i-1},\ldots)$ in (\ref{eq_filtration_sgd}) for $i\in\ZZ$, where $\bm{\xi}_i=(y_i,\bm{x}_i,D_i)$. Notice that  $\EE[\breve{\bm{b}}_k(\alpha)]=0$ by (\ref{eq_bk_mean0_sgd}), and therefore we have
\begin{align}
    \EE[\breve\M_{i,k}(\alpha) \mid \breve\F_{k-i+1}] & = \EE\Big[\prod_{j=k-i+1}^k(I_d-\alpha D_j\XX_jD_j) \Bmid \breve\F_{k-i+1}\Big]  \EE\Big[D_{k-i}\bm{x}_{k-i}(y_{k-i}-\bm{x}_{k-i}^{\top}D_{k-i}\breve\bbeta)\Big] \nonumber \\
    & = \EE\Big[\prod_{j=k-i+1}^k(I_d-\alpha D_j\XX_jD_j) \Bmid \breve\F_{k-i+1}\Big]\cdot\EE[\breve{\bm{b}}_{k-i}(\alpha)] \nonumber \\
    & = 0.
\end{align}
Hence, for any $k\in\NN$, $\{\breve\M_{i,k}(\alpha)\}_{i\in\NN}$ is a sequence of martingale differences with respect to the filtration $\breve\F_{k-i}$. Let $t=k-i$. By applying Burkholder's inequality in Lemma~\ref{lemma_burkholder}, we have,
\begin{align}
    \label{eq_sgd_rate_sum_MD}
    \big(\EE\|\breve\bbeta_k^{\circ}(\alpha) - \breve\bbeta\|_2^q\big)^{1/q} & = \alpha\Big(\EE\Big\|\sum_{t=-\infty}^k\breve\M_{k-t,k}(\alpha)\Big\|_2^q\Big)^{1/q} \nonumber \\
    & \lesssim \alpha\Big(\sum_{t=-\infty}^k\big(\EE\|\breve\M_{k-t,k}(\alpha)\|_2^q\big)^{2/q}\Big)^{1/2},
\end{align}
where the constant in $\lesssim$ only depends on $q$. Denote the vector $\breve{\bm{s}}_t=D_t\bm{x}_t(y_t-\bm{x}_t^{\top}D_t\breve\bbeta)$ and the matrix product $\breve A_{(t+1):k}=\breve A_{(t+1):k}(\alpha)=\prod_{j=t+1}^k\breve A_j(\alpha)$ for simplicity. Then, we can write
\begin{align}
    \breve{\bm{b}}_t(\alpha) = \alpha\breve{\bm{s}}_t \quad \text{and} \quad \breve\M_{k-t,k}(\alpha)= \breve A_{(t+1):k}(\alpha)\breve{\bm{s}}_t.
\end{align}

For the case with $q=2$, notice that $\breve A_{(t+1):k}$ is independent of $\breve{\bm{s}}_t$, and by the tower rule, we have
\begin{align}
    \label{eq_sgd_rate_MD_q2}
    \EE\|\breve\M_{k-t,k}(\alpha)\|_2^2 & = \EE\|\breve A_{(t+1):k}\breve{\bm{s}}_t\|_2^2 \nonumber \\
    & = \EE\big[\EE\big[\breve{\bm{s}}_t^{\top}\breve A_{(t+1):k}^{\top}\breve A_{(t+1):k}\breve{\bm{s}}_t \mid \breve\F_t \big] \big] \nonumber \\
    & = \EE\big[\EE\big[\mathrm{tr}(\breve{\bm{s}}_t\breve{\bm{s}}_t^{\top}\breve A_{(t+1):k}^{\top}\breve A_{(t+1):k}) \mid \breve\F_t\big]\big] \nonumber \\
    & = \EE\big[\mathrm{tr}\big(\EE[\breve{\bm{s}}_t\breve{\bm{s}}_t^{\top}\breve A_{(t+1):k}^{\top}\breve A_{(t+1):k} \mid \breve\F_t]\big) \big] \nonumber \\
    & = \EE\big[\mathrm{tr}\big(\EE[\breve{\bm{s}}_t\breve{\bm{s}}_t^{\top}]\breve A_{(t+1):k}^{\top}\breve A_{(t+1):k}\big) \big] \nonumber \\
    & = \mathrm{tr}\big( \EE[\breve{\bm{s}}_t\breve{\bm{s}}_t^{\top}]\EE[\breve A_{(t+1):k}^{\top}\breve A_{(t+1):k}] \big) \nonumber \\
    & \le \big\|\EE[\breve A_{(t+1):k}^{\top}\breve A_{(t+1):k}]\big\| \cdot \EE\|\breve{\bm{s}}_t\|_2^2.
\end{align}
Next, we shall bound the parts $\big\|\EE[\breve A_{(t+1):k}^{\top}\breve A_{(t+1):k}]\big\|$ and $\EE\|\breve{\bm{s}}_t\|_2^2$ separately. First, recall $\breve A_j(\alpha)=I_d-\alpha D_j\XX_jD_j$ in (\ref{eq_dropout_sgd}), which are i.i.d.\ over $j$. By the tower rule with the induction over $j=t+1,t+2,\ldots,k$, we have
\begin{align}
    \big\|\EE[\breve A_{(t+1):k}^{\top}\breve A_{(t+1):k}]\big\| & = \big\|\EE\big[ \EE[\breve A_{(t+1):k}^{\top}\breve A_{(t+1):k} \mid \breve A_{t+1},\breve A_{t+2},\ldots,\breve A_{k-1}] \big]\big\| \nonumber \\
    & \le \big\|\EE[\breve A_k^{\top}(\alpha)\breve A_k(\alpha)]\big\| \cdot \big\|\EE[\breve A_{(t+1):(k-1)}^{\top}\breve A_{(t+1):(k-1)}]\big\| \nonumber \\
    & \le \prod_{j=t+1}^k\big\|\EE[\breve A_j^{\top}(\alpha)\breve A_j(\alpha)]\big\| \nonumber \\
    & = \big\|\EE[\breve A_1^{\top}(\alpha)\breve A_1(\alpha)]\big\|^{k-t}.
\end{align}
Moreover, recall the random matrix $M_k(\alpha)=2D_k\XX_kD_k-\alpha D_k\XX_kD_k\XX_kD_k$ as defined in (\ref{eq_M_k}). For any unit vector $\bm{v}\in\RR^d$, by (\ref{eq_gmc_sgd_constant_q2}) in the proof of Lemma~\ref{lemma_gmc_sgd_cond}, we have
\begin{align}
    \label{eq_sgd_rate_MD_part1_q2}
    \bm{v}^{\top}\EE[\breve A_1^{\top}(\alpha)\breve A_1(\alpha)]\bm{v} & \le 1-\alpha \bm{v}^{\top}\EE[M_1(\alpha)]\bm{v} \nonumber \\
    & \le 1-\alpha \lambda_{\min}\big(\EE[M_1(\alpha)]\big) <1,
\end{align}
as the constant learning rate $\alpha$ satisfies condition~(\ref{eq_sgd_condition}). In fact, condition (\ref{eq_sgd_condition}) also implies that $\EE[M_k(\alpha)]$ is positive definite for each $k\in\NN$, which can be seen by (\ref{eq_sgd_condtion_q2}).

We shall show that the term $\EE\|\breve{\bm{s}}_k\|_2^2 = \EE\|D_k\bm{x}_k(y_k-\bm{x}_k^{\top}D_k\breve\bbeta)\|_2^2$ remains bounded as $k\rightarrow\infty$. In Assumption~\ref{asm_fclt_asgd}, we have assumed that the stochastic gradient in the SGD dropout recursion  $\nabla_{\bbeta}(y-\bm{x}^{\top}D\bbeta)^2/2 = D\bm{x}(y-\bm{x}^{\top}D\bbeta)$, has finite $q$-th moment when $\bbeta=\bbeta^*$ for some $q\ge2$. By Lemma~\ref{lemma_true_l2}, Assumption~\ref{asm_fclt_asgd} also implies the bounded $q$-th moment when $\bbeta=\breve\bbeta$. As a direct consequence, $\EE\|\breve{\bm{s}}_k\|_2^2$ is bounded as $k\rightarrow\infty$. This, along with (\ref{eq_sgd_rate_sum_MD}), (\ref{eq_sgd_rate_MD_q2}) and (\ref{eq_sgd_rate_MD_part1_q2}), yields
\begin{align}
    \big(\EE\|\breve\bbeta_k^{\circ}(\alpha) - \breve\bbeta\|_2^2\big)^{1/2} & \lesssim \alpha\Big(\sum_{t=-\infty}^k\EE\|\breve\M_{k-t,k}(\alpha)\|_2^2\Big)^{1/2} \nonumber \\
    & \le \alpha\Big(\sum_{t=-\infty}^k\big\|\EE[\breve A_{(t+1):k}^{\top}\breve A_{(t+1):k}]\big\| \cdot \EE\|\breve{\bm{s}}_t\|_2^2\Big)^{1/2} \nonumber \\
    & \lesssim \alpha\Big(\sum_{t=-\infty}^k(1-\alpha \lambda_*)^{k-t}\Big)^{1/2} \nonumber \\
    & = \alpha\Big(\sum_{i=0}^{\infty}(1-\alpha \lambda_*)^i\Big)^{1/2} \nonumber \\
    & = O(\sqrt{\alpha}),
\end{align}
where the last equation holds since $\sum_{i=0}^{\infty}(1-\alpha\lambda_*)^i=\frac{1}{1-(1-\alpha\lambda_*)}=O(1/\alpha)$. Here, the constants in $\lesssim$ are independent of $k$ and $\alpha$, and $\lambda_*=\lambda_{\min}\big(\EE[M_1(\alpha)]\big)$ is bounded away from zero since $\EE[M_1(\alpha)]=\EE[2D_1\XX_1D_1-\alpha D_1\XX_1D_1\XX_1D_1]$ is positive definite by condition (\ref{eq_sgd_condtion_q2}).

For the case $q>2$, following similar arguments as in (\ref{eq_sgd_rate_MD_q2}), we obtain
\begin{align}
    \label{eq_sgd_rate_MD}
    \EE\|\breve\M_{k-t,k}(\alpha)\|_2^q & = \EE(\|\breve A_{(t+1):k}\breve{\bm{s}}_t\|_2^2)^{q/2} \nonumber \\
    & = \EE\big(\breve{\bm{s}}_t^{\top}\breve A_{(t+1):k}^{\top}\breve A_{(t+1):k}\breve{\bm{s}}_t\big)^{q/2} \nonumber \\
    & \le \sup_{\bm{v}\in\RR^d,\|\bm{v}\|_2=1}\EE\|\breve A_{(t+1):k}\bm{v}\|_2^q \cdot \EE\|\breve{\bm{s}}_t\|_2^q \nonumber \\
    & \le \big(\sup_{\bm{v}\in\RR^d,\|\bm{v}\|_2=1}\EE\|\breve A_1\bm{v}\|_2^q\big)^{k-t}\cdot \EE\|\breve{\bm{s}}_t\|_2^q.
\end{align}
With $\mu_q(\bm{v})=(\EE\|D_1\XX_1D_1\bm{v}\|_2^q)^{1/q}<\infty$ as defined in Lemma~\ref{lemma_gmc_sgd_cond} and the Equations (\ref{eq_gmc_sgd_constant1}) and (\ref{eq_gmc_sgd_constant2}) in the proof of Lemma~\ref{lemma_gmc_sgd_cond}, we have
\begin{align}
    \EE\|\breve A_1\bm{v}\|_2^q & \le (1+\alpha\mu_q(\bm{v}))^q - q\alpha\mu_q(\bm{v}) - q\alpha p\bm{v}^{\top}\EE[\XX_{1,p}]\bm{v}.
\end{align}
This, together with Taylor expansion around $\alpha=0$ and the inequalities (\ref{eq_sgd_rate_sum_MD}) and (\ref{eq_sgd_rate_MD}) gives finally $\max_k \big(\EE\|\breve\bbeta_k^{\circ}(\alpha) - \breve\bbeta\|_2^q\big)^{1/q}=O(\sqrt{\alpha})$.
\end{proof}

\subsection{Proof of Theorem~\ref{thm_clt_ave_sgd}}

The proofs of the quenched CLT and the invariance principle for averaged SGD dropout follow similar arguments as the ones for averaged GD dropout. The key differences lie in the functional dependence measures, because for SGD settings, both the dropout matrix $D_k$ and the sequential observation $\bm{x}_k$ are random. We shall first introduce some necessary definitions and then proceed with the rigorous proofs.

Recall the generic dropout matrix $D\in\RR^{d\times d}$ and random sample $(y,\bm{x})\in\RR\times\RR^d$. For the SGD dropout sequence $\{\breve\bbeta_k(\alpha)\}_{k\in\NN}$, we define the centering term
\begin{equation}
    \label{eq_center_sgd}
    \breve\bbeta_{\infty}(\alpha) = \lim_{k\rightarrow\infty}\EE[\breve\bbeta_k(\alpha)] = \EE[\breve\bbeta_1^{\circ}(\alpha)],
\end{equation}
where the expectation is taken over both $(y,\bm{x})$ and $D$, and $\breve\bbeta_1^{\circ}(\alpha)$ defined in (\ref{eq_stationary_sgd}) follows the unique stationary distribution $\breve\pi_{\alpha}$. We first use Lemma~\ref{lemma_thm21_shao_wu_2007} to prove the CLT for the partial sum of the stationary sequence $\{\breve\bbeta_k^{\circ}(\alpha)-\breve\bbeta_{\infty}(\alpha)\}_{k\in\NN}$, and then apply the geometric-moment contraction in Theorem~\ref{thm_gmc_sgd} to show the quenched CLT for the partial sum of the non-stationary one $\{\breve\bbeta_k(\alpha)-\breve\bbeta_{\infty}(\alpha)\}_{k\in\NN}$. Finally, we extend the quenched CLT to the partial sum of $\{\breve\bbeta_k(\alpha)-\breve\bbeta_{\infty}(\alpha)\}_{k\in\NN}$ by providing the upper bound of $\breve\bbeta_{\infty}(\alpha)-\breve\bbeta$ in terms of the $q$-th moment for some $q\ge2$.

Similar to Section~\ref{subsec_functional_dependence}, we introduce the \textit{functional dependence measure} in \textcite{wu_nonlinear_2005} for the stationary SGD dropout sequence $\{\breve\bbeta_k^{\circ}(\alpha)\}_{k\in\NN}$. However, the randomness in $\breve\bbeta_k^{\circ}(\alpha)$ is induced from both the dropout matrix $D_k$ and the random sample $(y_k,\bm{x}_k)$. Therefore, we define a new filtration $\breve\F_k$ by
\begin{equation}
    \label{eq_filtration_sgd}
    \breve\F_k = \sigma(\bm{\xi}_k,\bm{\xi}_{k-1},\ldots), \quad k\in\ZZ,
\end{equation}
where the i.i.d. random elements $\bm{\xi}_k=(D_k,(y_k,\bm{x}_k))$, $k\in\ZZ$, are defined in (\ref{eq_sgd_random_pair}). For any random vector $\bm{\zeta}\in\RR^d$ satisfying $\EE\|\bm{\zeta}\|_2<\infty$, define projection operators
\begin{equation}
    \label{eq_projection_operator_sgd}
    \breve\P_k[\bm{\zeta}] = \EE[\bm{\zeta} \mid \breve\F_k] - \EE[\bm{\zeta} \mid \breve\F_{k-1}], \quad k\in\ZZ.
\end{equation}
By Theorem~\ref{thm_gmc_sgd} and (\ref{eq_stationary_sgd}), there exists a measurable function $\breve h_{\alpha}(\cdot)$ such that the stationary SGD dropout sequence $\{\breve\bbeta_k^{\circ}(\alpha)\}_{k\in\NN}$ can be represented by a causal process
\begin{equation}
    \breve\bbeta_k^{\circ}(\alpha) = \breve h_{\alpha}(\bm{\xi}_k,\bm{\xi}_{k-1},\ldots) = \breve h_{\alpha}(\breve\F_k).
\end{equation}
We denote the coupled version of $\breve\F_i$ by 
\begin{equation}
    \label{eq_filtration_couple_sgd}
    \breve\F_{i,\{j\}}=\sigma(\bm{\xi}_i, \ldots, \bm{\xi}_{j+1}, \bm{\xi}_j', \bm{\xi}_{j-1}\ldots),
\end{equation}
and let $\breve\F_{i,\{j\}}=\breve\F_i$ if $j>i$, where $\bm{\xi}_j'$ is an i.i.d.\ copy of $\bm{\xi}_i$. For $q>1$, define the \textit{functional dependence measure} of $\breve\bbeta_k(\alpha)$ as
\begin{equation}
    \label{eq_functional_dep_measure_sgd}
    \breve\theta_{k,q}(\alpha) = \big(\EE\|\breve\bbeta_k^{\circ}(\alpha) - \breve\bbeta_{k,\{0\}}^{\circ}(\alpha)\|_2^q\big)^{1/q}, \quad \text{where }\breve\bbeta_{k,\{0\}}^{\circ}(\alpha) = \breve h_{\alpha}(\breve\F_{k,\{0\}}).
\end{equation}
In addition, if $\sum_{k=0}^{\infty}\breve\theta_{k,q}(\alpha)<\infty$, we define the tail of \textit{cumulative dependence measure} as
\begin{equation}
    \label{eq_cumulative_dep_measure_sgd}
    \breve\Theta_{m,q}(\alpha) = \sum_{k=m}^{\infty}\breve\theta_{k,q}(\alpha), \quad m\in\NN.
\end{equation}
Both $\breve\theta_{k,q}(\alpha)$ and $\breve\Theta_{k,q}(\alpha)$ are useful to study the dependence structure of the stationary SGD dropout iteration $\breve\bbeta_k^{\circ}(\alpha)=f_{D_k,(y_k,\bm{x}_k)}(\breve\bbeta_{k-1}^{\circ}(\alpha))$. To apply Lemma~\ref{lemma_thm21_shao_wu_2007}, we only need to show that the stationary SGD dropout sequence $\{\breve\bbeta_k^{\circ}(\alpha)\}_{k\in\NN}$ is short-range dependent in the sense that  $\breve\Theta_{0,q}(\alpha)<\infty$, for some $q\ge2$.

\begin{proof}[Proof of Theorem~\ref{thm_clt_ave_sgd}]
Consider two initial vectors $\breve\bbeta_0^{\circ},\breve\bbeta_0^{\circ'}$ following the stationary distribution $\breve\pi_{\alpha}$ defined in Theorem~\ref{thm_gmc_sgd}. By the recursion in (\ref{eq_dropout_sgd}), we obtain two stationary SGD dropout sequences $\{\breve\bbeta_k^{\circ}(\alpha)\}_{k\in\NN}$ and $\{\breve\bbeta_k^{\circ'}(\alpha)\}_{k\in\NN}$. It follows from Theorem~\ref{thm_gmc_sgd} that for all $q\ge2$,
\begin{equation}
    \label{eq_thm_asgd_clt_constant}
    \sup_{\breve\bbeta_0^{\circ},\breve\bbeta_0^{\circ'}\in\RR^d,\breve\bbeta_0^{\circ}\neq\breve\bbeta_0^{\circ'}}\frac{\big(\EE\|\breve\bbeta_k^{\circ}(\alpha) - \breve\bbeta_k^{\circ'}(\alpha)\|_2^q\big)^{1/q}}{\|\breve\bbeta_0^{\circ} - \breve\bbeta_0^{\circ'}\|_2} \le \breve r_{\alpha,q}^k, \quad k\in\NN,
\end{equation}
with $\breve r_{\alpha,q}=\big(\sup_{\bm{v}\in\RR^d:\,\|\bm{v}\|_2=1}\EE\|\breve A_1(\alpha)\bm{v}\|_2^q\big)^{1/q}$ as defined in (\ref{eq_sgd_r}) and random matrix $\breve A_1(\alpha)=I_d-\alpha D_1\XX_1D_1$. When the constant learning rate $\alpha>0$ satisfies the condition in (\ref{eq_sgd_condition}), we have $\breve r_{\alpha,q}\in(0,1)$ as shown in Theorem~\ref{thm_gmc_sgd}. Recall the coupled filtration $\breve\F_{i,\{j\}}=\sigma(\bm{\xi}_i, \ldots, \bm{\xi}_{j+1}, \bm{\xi}_j', \bm{\xi}_{j-1}\ldots)$ as defined in (\ref{eq_filtration_couple_sgd}). Then, (\ref{eq_stationary_sgd}) and (\ref{eq_thm_asgd_clt_constant}) show that
\begin{align}
    \label{eq_thm_asgd_clt_function}
    & \quad \big(\EE\|\breve f_{\bm{\xi}_k}\circ\cdots\circ \breve f_{\bm{\xi}_1}(\breve\bbeta_0^{\circ}) - \breve f_{\bm{\xi}_k}\circ\cdots\circ \breve f_{\bm{\xi}_1}(\breve\bbeta_0^{\circ'})\|_2^q\big)^{1/q}\nonumber \\
    & = \big(\EE\|\breve h_{\alpha}(\bm{\xi}_k,\ldots,\bm{\xi}_1,\bm{\xi}_0,\bm{\xi}_{-1},\ldots) - \breve h_{\alpha}(\bm{\xi}_k,\ldots,\bm{\xi}_1,\bm{\xi}_0',\bm{\xi}_{-1}',\ldots)\|_2^q\big)^{1/q} \nonumber \\
    & = \big(\EE\|\breve h_{\alpha}(\breve\F_k) - \breve h_{\alpha}(\breve\F_{k,\{0,-1,\ldots\}})\|_2^q\big)^{1/q} \nonumber \\
    & \le \breve c_q\breve r_{\alpha,q}^k,
\end{align}
for some constant $\breve c_q>0$ that is independent of $k$. Following a similar argument in (\ref{eq_thm_agd_clt_function2}), for all $q\ge2$ and $k\in\NN$, we can bound the functional dependence measure $\breve\theta_{k,q}(\alpha)$ in (\ref{eq_functional_dep_measure_sgd}) as follows
\begin{align}
    \breve\theta_{k,q}(\alpha) & = \big(\EE\|\breve h_{\alpha}(\breve\F_k) - \breve h_{\alpha}(\breve\F_{k,\{0\}})\|_2^q\big)^{1/q} \nonumber \\
    & \le \big(\EE\|\breve h_{\alpha}(\breve\F_k) - \breve h_{\alpha}(\breve\F_{k,\{0,-1,\ldots\}})\|_2^q\big)^{1/q} + \big(\EE\|\breve h_{\alpha}(\breve\F_{k,\{0,-1,\ldots\}}) - \breve h_{\alpha}(\breve\F_{k,\{0\}})\|_2^q\big)^{1/q}\nonumber \\
    & \le \breve c_q' \breve r_{\alpha,q}^k,
\end{align}
for some constant $\breve c_q'>0$ that is independent of $k$. Consequently, the cumulative dependence measure $\breve\Theta_{m,q}(\alpha)$ in (\ref{eq_cumulative_dep_measure_sgd}) is also bounded for all $q\ge2$ and $m\in\NN$, that is,
\begin{equation}
    \breve\Theta_{m,q}(\alpha) = \sum_{k=m}^{\infty}\breve\theta_{k,q}(\alpha) = O(\breve r_{\alpha,q}^m) <\infty.
\end{equation}
The inequality in (\ref{eq_functional_dep_measure_ineq}) derived by \textcite{wu_nonlinear_2005} holds for a general class of functional dependence measures, as long as the inputs of the functional system (i.e., the measurable function $\breve h(\bm{\xi}_k,\bm{\xi}_{k-1},\ldots)$) are i.i.d. elements. Thus, we can apply (\ref{eq_functional_dep_measure_ineq}) to the projection operator $\breve\P_k[\cdot]$ in (\ref{eq_projection_operator_sgd}) and obtain
\begin{align}
    \label{eq_short_range_dep_sgd}
    \sum_{k=0}^{\infty}\big(\EE\|\breve\P_0[\breve\bbeta_k^{\circ}(\alpha)]\|_2^q\big)^{1/q} \le \sum_{k=0}^{\infty}\breve\theta_{k,q}(\alpha)
     = \breve\Theta_{0,q}(\alpha)<\infty,
\end{align}
implying the short-range dependence of the stationary SGD dropout sequence $\{\breve\bbeta_k^{\circ}(\alpha)\}_{k\in\NN}$. Then, it follows from Lemma~\ref{lemma_thm21_shao_wu_2007} that
\begin{align}
    n^{-1/2}\sum_{k=1}^n\big(\breve\bbeta_k^{\circ}(\alpha) - \breve\bbeta_{\infty}(\alpha)\big) \Rightarrow \N(0,\breve\Sigma(\alpha)),
\end{align}
where the long-run covariance matrix $\breve\Sigma(\alpha)$ is defined in Theorem \ref{thm_clt_ave_sgd}. Following similar arguments as in (\ref{eq_thm_agd_clt_diff})--(\ref{eq_thm_agd_clt_diff_rate}), for any initial vector $\breve\bbeta_0\in\RR^d$, we can leverage the geometric-moment contraction in Theorem~\ref{thm_gmc_sgd} and achieve the quenched CLT for the corresponding SGD dropout sequence $\{\breve\bbeta_k(\alpha)\}_{k\in\NN}$, that is,
\begin{equation}
    \label{eq_thm_asgd_clt_center0}
    n^{-1/2}\sum_{k=1}^n\big(\breve\bbeta_k(\alpha) - \breve\bbeta_{\infty}(\alpha)\big) \Rightarrow \N(0,\breve\Sigma(\alpha)).
\end{equation}

Recall the $\ell^2$-minimizer $\breve\bbeta$ in (\ref{eq_goal_sgd}) and the centering term $\breve\bbeta_{\infty}(\alpha) = \lim_{k\rightarrow\infty}\EE(\breve\bbeta_k(\alpha)) = \EE(\breve\bbeta_1^{\circ}(\alpha))$ in (\ref{eq_center_sgd}). We shall prove $\|\sum_{k=1}^n\EE[\breve\bbeta_k(\alpha)-\breve\bbeta]\|_2=o(\sqrt{n})$. For any two initial vectors $\breve\bbeta_0$ and $\breve\bbeta_0^{\circ}$, where $\breve\bbeta_0^{\circ}$ follows the stationary distribution $\breve\pi_{\alpha}$ in Theorem~\ref{thm_gmc_sgd}, while $\breve\bbeta_0$ is an arbitrary initial vector in $\RR^d$, it follows from the triangle inequality that
\begin{align}
    \label{eq_expectation_two_parts_sgd}
    \Big\|\sum_{k=1}^n\EE[\breve\bbeta_k(\alpha) - \breve\bbeta]\Big\|_2 & = \Big\|\EE\Big[\sum_{k=1}^n\big(\breve\bbeta_k(\alpha) - \breve\bbeta_k^{\circ}(\alpha) + \breve\bbeta_k^{\circ}(\alpha)- \breve\bbeta\big) \Big]\Big\|_2 \nonumber \\
    & \le \Big\|\EE\Big[\sum_{k=1}^n\big(\breve\bbeta_k(\alpha) - \breve\bbeta_k^{\circ}(\alpha)\big) \Big]\Big\|_2 + \Big\|\EE\Big[\sum_{k=1}^n\big(\breve\bbeta_k^{\circ}(\alpha) - \breve\bbeta\big)\Big]\Big\|_2 \nonumber \\
    & =: \breve\III_1 + \breve\III_2.
\end{align}
For the part $\breve\III_1$, Jensen's inequality and a similar argument as for (\ref{eq_thm_agd_clt_diff}) yield
\begin{align}
    \breve\III_1 & = \Big\|\EE\Big[\sum_{k=1}^n\big(\breve\bbeta_k(\alpha) - \breve\bbeta_k^{\circ}(\alpha)\big)\Big]\Big\|_2 \nonumber \\
    & \le \Big(\EE\Big\|\sum_{k=1}^n\big(\breve\bbeta_k(\alpha)-\breve\bbeta_k^{\circ}(\alpha)\big)\Big\|_2^2\Big)^{1/2} \nonumber \\
    & \le \Big(\sum_{k=1}^n\breve r_{\alpha,2}^k\Big)\big\|\breve\bbeta_0 - \breve\bbeta_0^{\circ}\big\|_2.
\end{align}
For the part $\breve\III_2$, we recall the random matrix $\breve A_1(\alpha)=I_d - \alpha D_1\XX_1D_1$ in (\ref{eq_dropout_sgd}) with $\XX_1=\bm{x}_1\bm{x}_1^{\top}$. Recall the notation $\XX_{1,p}=p\XX_1+(1-p)\mathrm{Diag}(\XX_1)$ in (\ref{eq_barX_X_kp}). Notice that by Lemma~\ref{lemma_clara}~(ii), we have $\EE_{(y,\bm{x})}\EE_D[\breve A_1(\alpha)] = \EE_{(y,\bm{x})}\EE_D[I_d - \alpha D_1\XX_1D_1]  = \EE_{(y,\bm{x})}[I_d - \alpha p\XX_{1,p}]  = I_d - \alpha p\EE[\XX_{1,p}],$ which along with $\EE[\breve{\bm{b}}_k]=0$ in (\ref{eq_bk_mean0_sgd}) gives $\EE[\breve\bbeta_k^{\circ}(\alpha) - \breve\bbeta] = \big(I_d - \alpha p\EE[\XX_{1,p}]\big)\EE[\breve\bbeta_{k-1}^{\circ}(\alpha) - \breve\bbeta].$ Since $\{\breve\bbeta_k^{\circ}(\alpha)\}_{k\in\NN}$ is stationary and $\EE[\XX_{1,p}]$ is non-singular by the reduced-form condition $\min_i(\EE[\bm{x}_1\bm{x}_1^{\top}])_{ii}>0$ imposed on model (\ref{eq_model_sgd}), it follows that
\begin{equation}
    \label{eq_l2_stationary_mean0_sgd}
    \EE[\breve\bbeta_k^{\circ}(\alpha) - \breve\bbeta] = 0, \quad \text{for all} \ k\in\NN.
\end{equation}
This further yields
\begin{align}
    \breve\III_2 = \Big\|\EE\Big[\sum_{k=1}^n\big(\breve\bbeta_k^{\circ}(\alpha) - \breve\bbeta\big)\Big]\Big\|_2 = \Big\|\sum_{k=1}^n\EE[\breve\bbeta_k^{\circ}(\alpha) - \breve\bbeta]\Big\|_2 = 0.
\end{align}
By applying the results for $\breve\III_1$ and $\breve\III_2$ to (\ref{eq_expectation_two_parts_sgd}), we obtain
\begin{align}
    \label{eq_thm_asgd_clt_expectation_bd}
    \Big\|\sum_{k=1}^n\EE[\breve\bbeta_k(\alpha) - \breve\bbeta]\Big\|_2 \le \Big(\sum_{k=1}^n\breve r_{\alpha,2}^k\Big)\big\|\breve\bbeta_0 - \breve\bbeta_0^{\circ}\big\|_2.
\end{align}
When the constant learning rate $\alpha>0$ satisfies the condition in (\ref{eq_sgd_condition}), $\breve r_{\alpha,2}\in(0,1)$ by Theorem~\ref{thm_gmc_sgd}, and hence (\ref{eq_thm_asgd_clt_expectation_bd}) remains bounded as $n\rightarrow\infty$. By this result and (\ref{eq_thm_asgd_clt_center0}), the desired quenched CLT for the partial sum $\sum_{k=1}^n(\breve\bbeta_k^{\circ}(\alpha) - \breve\bbeta)$ follows.
\end{proof}

\begin{proof}[Proof of Corollary~\ref{cor_clt_asgd_parallel}]
The proof of Corollary~\ref{cor_clt_asgd_parallel} applies the Cramér-Wold device to Theorem~\ref{thm_clt_ave_sgd} and can be derived in the same way as the proof of Corollary~\ref{cor_clt_agd_parallel}. We omit the details here.
\end{proof}

\subsection{Proof of Theorem~\ref{thm_fclt_asgd}}

\begin{proof}[Proof of Theorem~\ref{thm_fclt_asgd}]
Consider a stationary SGD dropout sequence $\{\breve\bbeta_k^{\circ}(\alpha)\}_{k\in\NN}$ following the stationary distribution $\breve\pi_{\alpha}$ in Theorem~\ref{thm_gmc_sgd}. Define the mean-zero stationary partial sum
\begin{equation}
    \breve S_i^{\circ}(\alpha) = \sum_{k=1}^i\big(\breve\bbeta_k^{\circ}(\alpha) - \breve\bbeta_{\infty}\big), \quad i\in\NN.
\end{equation}
Recall that $\breve\bbeta_{\infty}(\alpha)=\EE[\breve\bbeta_1^{\circ}(\alpha)]$ as defined in (\ref{eq_center_sgd}). Due to the stationarity, we have $\EE[\breve\bbeta_k^{\circ}(\alpha) -\breve\bbeta]=0$ for all $k\in\NN$ by (\ref{eq_l2_stationary_mean0_sgd}). This gives
\begin{align}
    \big(\EE\|\breve\bbeta_{\infty}(\alpha) - \breve\bbeta\|_2^q\big)^{1/q} = \big(\EE\|\EE[\breve\bbeta_1^{\circ}(\alpha) - \breve\bbeta]\|_2^q\big)^{1/q}=0.
\end{align}
Since we supposed in Theorem~\ref{thm_fclt_asgd} that Assumption~\ref{asm_fclt_asgd} holds for some $q>2$, it follows from Lemma~\ref{lemma_q_moment_sgd} that, for all $q>2$,
\begin{align}
    \label{eq_fclt_sgd_order}
    \big(\EE\|\breve\bbeta_k^{\circ}(\alpha)-\breve\bbeta_{\infty}(\alpha)\|_2^q\big)^{1/q} & \le  \big(\EE\|\breve\bbeta_k^{\circ}(\alpha) - \breve\bbeta\|_2^q\big)^{1/q} + \big(\EE\|\breve\bbeta_{\infty}(\alpha) - \breve\bbeta\|_2^q\big)^{1/q} = O(\sqrt{\alpha}).
\end{align}
Following a similar argument as for (\ref{eq_fclt_gd_cond1}), we can show that $\breve\bbeta_k^{\circ}(\alpha)-\breve\bbeta_{\infty}(\alpha)$ satisfies condition (i) on the uniform integrability in Lemma~\ref{lemma_thm2_karmakar_wu_2020}. Due to the stationarity of $\{\breve\bbeta_k^{\circ}(\alpha)\}_{k\in\NN}$, condition (ii) in Lemma~\ref{lemma_thm2_karmakar_wu_2020} is not required (see the discussion below Lemma~\ref{lemma_thm2_karmakar_wu_2020} for details). Condition (iii) is also satisfied, since when the constant learning rate $\alpha>0$ satisfies condition (\ref{eq_sgd_condition}), the stationary SGD dropout sequence $\{\breve\bbeta_k^{\circ}(\alpha)\}_{k\in\NN}$ is shown to be short-range dependent by (\ref{eq_short_range_dep_sgd}), i.e., the tail of cumulative dependence measure $\breve\Theta_{m,q}(\alpha) = \sum_{k=m}^{\infty}\breve\theta_{k,q}(\alpha)<\infty$. 

Hence, by Lemma~\ref{lemma_thm2_karmakar_wu_2020}, there exists a (richer) probability space $(\breve\Omega^{\star},\breve\A^{\star},\breve\PP^{\star})$ on which we can define random vectors $\breve\bbeta_k^{\star}$'s with the partial sum process $\breve S_i^{\star}=\sum_{k=1}^i(\breve\bbeta_k^{\star}-\breve\bbeta_{\infty})$, and a Gaussian process $\breve G_i^{\star}=\sum_{k=1}^i\breve{\bm{z}}_k^{\star}$, where $\breve{\bm{z}}_k^{\star}$'s are independent Gaussian random vectors in $\RR^d$ following $\N(0,I_d)$, such that
\begin{equation}
    (\breve S_i^{\star})_{1\le i\le n} \overset{\D}{=} (\breve S_i^{\circ})_{1\le i\le n},
\end{equation}
and
\begin{equation}
    \label{eq_fclt_sgd_GS1}
    \max_{1\le i\le n}\big\|\breve S_i^{\star} - \breve\Sigma^{1/2}(\alpha)\breve G_i^{\star}\big\|_2 = o_{\PP}(n^{1/q}), \quad \text{in }(\breve\Omega^{\star},\breve\A^{\star},\breve\PP^{\star}),
\end{equation}
where the long-run covariance matrix $\breve\Sigma(\alpha)$ is defined in Theorem \ref{thm_clt_ave_sgd}.

Following similar arguments as for (\ref{eq_fclt_S_center})--(\ref{eq_fclt_gd_gmc_approx}), we can leverage the geometric-moment contraction in Theorem~\ref{thm_gmc_sgd} to show the same Gaussian approximation rate, i.e., $o_{\PP}(n^{1/q})$, for the partial sum sequence $\big(\sum_{k=1}^i(\breve\bbeta_k(\alpha)-\breve\bbeta_{\infty}(\alpha))\big)_{1\le i\le n}$, for any arbitrarily fixed initial vector $\breve\bbeta_0\in\RR^d$. Finally, recall the partial sum process $\breve S_i^{\breve\bbeta_0}(\alpha) = \sum_{k=1}^i(\breve\bbeta_k(\alpha)-\breve\bbeta)$. By a similar argument in (\ref{eq_fclt_gd_mean_approx}), the desired Gaussian approximation result for the partial sum process $(\breve S_i^{\breve\bbeta_0}(\alpha))_{1\le i\le n}$ follows.
\end{proof}

\section{Proofs in Section~\ref{sec_online_inference}}

\subsection{Proof of Theorem~\ref{thm_longrun_precision_old}}

\begin{lemma}\label{lemma_square_matrix}
    For any $d\times d$ symmetric matrix $S$, we have $\EE\|S\| \le \sqrt{\mathrm{tr}\EE(S^2)} \le \sqrt{d\|\EE(S^2)\|}.$
\end{lemma}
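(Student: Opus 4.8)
The plan is to reduce everything to two elementary facts about symmetric matrices: the operator norm is the largest eigenvalue magnitude, and the trace is the sum of eigenvalues. For a fixed (nonrandom) symmetric matrix $S$ with eigenvalues $\lambda_1(S),\ldots,\lambda_d(S)$, Lemma~\ref{lemma_operator_norm}(iii) gives $\|S\|=\max_i|\lambda_i(S)|$, and since $S^2$ has eigenvalues $\lambda_i(S)^2$, we have the pointwise bound
$$\|S\|^2=\max_{1\le i\le d}\lambda_i(S)^2\le \sum_{i=1}^d\lambda_i(S)^2=\mathrm{tr}(S^2).$$
Thus $\|S\|\le\sqrt{\mathrm{tr}(S^2)}$ holds for every realization of the random matrix $S$.

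Next I would take expectations and invoke Jensen's inequality for the concave function $t\mapsto\sqrt{t}$, together with the linearity of the trace (so that $\EE$ and $\mathrm{tr}$ commute). This yields
$$\EE\|S\|\le \EE\sqrt{\mathrm{tr}(S^2)}\le\sqrt{\EE\,\mathrm{tr}(S^2)}=\sqrt{\mathrm{tr}\,\EE(S^2)},$$
which is the first claimed inequality.

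For the second inequality, the key observation is that $\EE(S^2)$ is symmetric and positive semi-definite: for any realization $S^2$ is positive semi-definite (since $\bm{w}^\top S^2\bm{w}=\|S\bm{w}\|_2^2\ge0$ for $S$ symmetric), and positive semi-definiteness is preserved under expectation. Hence its eigenvalues are nonnegative, and $\mathrm{tr}\,\EE(S^2)=\sum_{i=1}^d\lambda_i(\EE(S^2))\le d\,\lambda_{\max}(\EE(S^2))$. Applying Lemma~\ref{lemma_operator_norm}(iii) once more, $\lambda_{\max}(\EE(S^2))=\|\EE(S^2)\|$ for the positive semi-definite matrix $\EE(S^2)$, so $\mathrm{tr}\,\EE(S^2)\le d\|\EE(S^2)\|$, and taking square roots completes the chain.

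There is no substantive obstacle here; the statement is a routine consequence of spectral bounds. The only points requiring care are bookkeeping ones: justifying that $\EE$ and $\mathrm{tr}$ interchange (immediate from linearity and the assumed finiteness of the relevant moments, so that $\EE(S^2)$ is well defined entrywise), and verifying that $\EE(S^2)$ inherits positive semi-definiteness so that Lemma~\ref{lemma_operator_norm}(iii) applies with $\lambda_{\max}=\|\cdot\|$ rather than merely $\max_i|\lambda_i|=\|\cdot\|$.
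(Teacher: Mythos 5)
Your proof is correct and follows essentially the same route as the paper's: the pointwise spectral bound $\|S\|=\max_j|\lambda_j(S)|\le\sqrt{\mathrm{tr}(S^2)}$ (using that $S^2$ is positive semi-definite), then Jensen's inequality for $t\mapsto\sqrt{t}$ combined with linearity of the trace, and finally $\mathrm{tr}\,\EE(S^2)\le d\,\lambda_{\max}(\EE(S^2))=d\|\EE(S^2)\|$ for the positive semi-definite matrix $\EE(S^2)$. Your added care in noting that positive semi-definiteness is preserved under expectation is a nice touch but does not change the argument.
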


\begin{proof}
    If $\lambda$ is an eigenvalue of a symmetric matrix $A$, then $\lambda^2$ is an eigenvalue of $A^2$. Denote the $j$-th largest eigenvalue of $A$ by $\lambda_j(A)$, $j=1,\ldots,d$. Then, $\EE\|S\| = \EE\max_{1\le j\le d}\big|\lambda_j(S)\big| = \EE\sqrt{\max_{1\le j\le d}\big[\lambda_j(S)\big]^2} = \EE\sqrt{\max_{1\le j\le d}\lambda_j(S^2)}.$ Since the quadratic matrix $S^2$ is positive semi-definite, it follows from Jensen's inequality that $\EE\sqrt{\max_{1\le j\le d}\lambda_j(S^2)} \le \EE\sqrt{\mathrm{tr}(S^2)} \le \sqrt{\mathrm{tr}\EE(S^2)} \le \sqrt{d\|\EE(S^2)\|}.$
\end{proof}

\begin{proof}[Proof of Theorem~\ref{thm_longrun_precision_old}]
Recall the SGD dropout sequence $\{\breve\bbeta_k(\alpha)\}_{k\in\NN}$ in (\ref{eq_dropout_sgd}), the long-run covariance matrix $\breve\Sigma(\alpha)$ of the averaged SGD dropout iterates in Theorem \ref{thm_clt_ave_sgd}, and the online estimator $\hat\Sigma_n(\alpha)$ in (\ref{eq_longrun_est}). When there is no ambiguity, we omit the dependence on $\alpha$, e.g., $\hat\Sigma_n=\hat\Sigma_n(\alpha)$ and $\breve\Sigma=\breve\Sigma(\alpha)$. We shall bound $\EE\|\hat\Sigma_n-\breve\Sigma\|$. 

Recall the stationary SGD dropout sequence $\{\breve\bbeta_k^{\circ}(\alpha)\}_{k\in\NN}$ in (\ref{eq_dropout_sgd_stationary}), which follows the stationary distribution $\breve\pi_{\alpha}$ in Theorem~\ref{thm_gmc_sgd}. For simplicity, we define $V_n(\alpha)=n\hat\Sigma_n(\alpha)$. By Equation (\ref{eq_longrun_online}), we can write $V_n(\alpha)$ into
\begin{align}
    \label{eq_longrun_est_V}
    V_n(\alpha) & = \Big(\sum_{m = 1}^{\psi(n) - 1} \mathcal{S}_{m}(\alpha)^{\otimes 2} + \mathcal{R}_{n}(\alpha)^{\otimes 2}\Big) + \Big(\sum_{m = 1}^{\psi(n) - 1} |B_{m}|^{2} + |\delta_{\eta}(n)|^{2}\Big) \bar\bbeta_{n}^{\sgd}(\alpha)^{\otimes 2} \nonumber \\
    & \quad - \Big(\sum_{m = 1}^{\psi(n) - 1} |B_{m}| \mathcal{S}_{m}(\alpha) + \delta_{\eta}(n) \mathcal{R}_{n}(\alpha)\Big) \bar\bbeta_{n}^{\sgd}(\alpha)^{\top} \nonumber \\
    & \quad - \bar\bbeta_{n}^{\sgd}(\alpha) \Big(\sum_{m = 1}^{\psi(n) - 1} |B_{m}| \mathcal{S}_{m}(\alpha) + \delta_{\eta}(n) \mathcal{R}_{n}(\alpha)\Big)^{\top}.
\end{align}
Recall the partial sums $\mathcal{S}_{m}(\alpha) = \sum_{k \in B_{m}} \breve\bbeta_k(\alpha)$ and $\mathcal{R}_{n}(\alpha) = \sum_{k = \eta_{\psi(n)}}^{n} \breve\bbeta_k(\alpha)$ in (\ref{eq_longrun_twoparts}). We similarly define
\begin{align}
    V_n^{\circ}(\alpha) & = \sum_{m=1}^{\psi(n)-1}\Big(\sum_{k\in\B_m}\breve\bbeta_k^{\circ}(\alpha)\Big)^{\otimes2} + \Big(\sum_{k=\eta_{\psi(n)}}^n\breve\bbeta_k^{\circ}(\alpha)\Big)^{\otimes2} =: \sum_{m=1}^{\psi(n)-1}\mathcal{S}_m^{\circ}(\alpha)^{\otimes2} + \mathcal{R}_n^{\circ}(\alpha)^{\otimes2}.
\end{align}
By the triangle inequality, we have
\begin{align}
    \label{eq_longrun_est_Sigma_to_V}
    n\EE\|\hat\Sigma_n-\breve\Sigma\| = \EE\|V_n-n\breve\Sigma\| \le \EE\|V_n-V_n^{\circ}\| + \EE\|V_n^{\circ} - n\breve\Sigma\|.
\end{align}
We shall bound these two terms separately.

First, for the term $\EE\|V_n^{\circ}-n\breve\Sigma\|$, we shall use the results in \textcite{xiao_single-pass_2011}. To this end, we need to verify the assumptions on the weak dependence of SGD dropout iterates $\{\breve\bbeta_k\}_{k\in\NN}$ in (\ref{eq_dropout_sgd}) and the growing sizes of blocks $\{B_m\}_{m\in\NN}$ in (\ref{eq_block}). Denote the elements of $d\times d$ matrices $V_n^{\circ}$ and $\breve\Sigma$ respectively by
\begin{align}
    V_n^{\circ} =:(v_{ij,n}^{\circ})_{1\le i,j\le d} \quad \text{and} \quad \breve\Sigma =: (\breve\sigma_{ij})_{1\le i,j\le d}.
\end{align}
Moreover, we write the stationary SGD dropout iterate $\breve\bbeta_k^{\circ}$ in (\ref{eq_dropout_sgd_stationary}) and the $\ell^2$-minimizer $\breve\bbeta$ in (\ref{eq_goal_sgd}) respectively into
\begin{align}
    \breve\bbeta_k^{\circ} = (\breve\beta_{k1}^{\circ},\ldots,\breve\beta_{kd}^{\circ})^{\top} \quad \text{and} \quad \breve\bbeta = (\breve\beta_1,\ldots,\breve\beta_d)^{\top}.
\end{align}
By the short-range dependence of $\{\breve\bbeta_k^{\circ}\}_{k\in\NN}$ in (\ref{eq_short_range_dep_sgd}), it can be shown that for any $1\le i,j\le d$, $\mathrm{Cov}(\breve\beta_{ki}^{\circ},\breve\beta_{0j}^{\circ}) \le c_{ij}\rho_{ij}^k$ for some constants $c_{ij}>0$ and $0\le\rho_{ij}<1$. For the non-overlapping blocks $\{B_m\}_{m\in\NN}$, since the positive integers $\{\eta_m\}_{m\in\NN}$ satisfy $\eta_{m+1}-\eta_m\rightarrow\infty$, it follows that $\eta_{m+1}/\eta_m\rightarrow1$ as $m\rightarrow\infty$, and therefore,
\begin{equation}
    \sum_{m=1}^M(\eta_{m+1}-\eta_m)^2 \asymp \eta_{M+1}(\eta_{M+1}-\eta_M).
\end{equation}
Then, by Theorems~1(i)~and~2(iii) in \textcite{xiao_single-pass_2011}, we obtain, for each $1\le j\le d$,
\begin{align}
    \label{eq_longrun_est_diag_rate}
    \EE(v_{jj,n}^{\circ} - n\breve\sigma_{jj})^2 & \le \Big\{\big[\EE(v_{jj,n}^{\circ} - \EE v_{jj,n}^{\circ})^2\big]^{1/2} + \big[(\EE v_{jj,n}^{\circ} - n\breve\sigma_{jj})^2\big]^{1/2}\Big\}^2 \nonumber \\
    & \lesssim n^{(2/\zeta)\vee(2-1/\zeta)}.
\end{align}
For $\EE(v_{ij,n}^{\circ} - n\breve\sigma_{ij})^2$ with $i\neq j$, the same rate as in (\ref{eq_longrun_est_diag_rate}) holds. To see this, define two new sequences $\{\beta_k^+\}_{k\in\NN}$ and $\{\beta_k^-\}_{k\in\NN}$ with $\beta_k^+ = (\breve\beta_{ki}^{\circ} + \breve\beta_{0j}^{\circ}) - (\breve\beta_i + \breve\beta_j)$, and $\beta_k^- = (\breve\beta_{ki}^{\circ} - \breve\beta_{0j}^{\circ}) - (\breve\beta_i - \breve\beta_j)$. Notice that
\begin{align}
    \breve\sigma_{ij} & = \sum_{k=-\infty}^{\infty}\EE(\breve\beta_{ki}^{\circ} - \breve\beta_i)(\breve\beta_{0j}^{\circ} - \breve\beta_j) \nonumber \\
    & = \sum_{k=-\infty}^{\infty}\EE\frac{[(\breve\beta_{ki}^{\circ} - \breve\beta_i) + (\breve\beta_{0j}^{\circ} - \breve\beta_j)]^2 - [(\breve\beta_{ki}^{\circ} - \breve\beta_i) - (\breve\beta_{0j}^{\circ} - \breve\beta_j)]^2}{4} \nonumber \\
    & = \frac{1}{4}\sum_{k=-\infty}^{\infty}\EE(\beta_k^+)^2 - \frac{1}{4}\sum_{k=-\infty}^{\infty}\EE(\beta_k^-)^2,
\end{align}
which can be viewed as the long-run variances of the sequences $\{\beta_k^+\}_{k\in\NN}$ and $\{\beta_k^+\}_{k\in\NN}$ as indicated by the last line. A similar decomposition can be applied to $v_{ij,n}^{\circ}$. Since the results in \textcite{xiao_single-pass_2011} hold for any linear combination of weak-dependent sequences, again by the short-range dependence of $\{\breve\bbeta_k^{\circ}\}_{k\in\NN}$ in (\ref{eq_short_range_dep_sgd}), we have
\begin{align}
    \label{eq_longrun_est_offdiag_rate}
    \EE(v_{ij,n}^{\circ} - n\breve\sigma_{ij})^2 \lesssim n^{(2/\zeta)\vee(2-1/\zeta)}, \quad \text{for all }1\le i, j\le d.
\end{align}
Since the dimension $d$ is fixed, it follows from Lemma~\ref{lemma_square_matrix} that
\begin{align}
    \label{eq_longrun_est_multi_to_1d}
    \EE\|V_n^{\circ}- n\breve\Sigma\| & \le \sqrt{\mathrm{tr}\EE(V_n^{\circ}- n\breve\Sigma)^2} \nonumber \\
    & = \sqrt{\sum_{1\le i,j\le d}\EE(v_{ij,n}^{\circ} - n\breve\sigma_{ij})^2} \nonumber \\
    & \le \sqrt{d^2\max_{1\le i,j\le d}\EE(v_{ij,n}^{\circ} - n\breve\sigma_{ij})^2} \nonumber \\
    & \lesssim n^{(2/\zeta)\vee(2-1/\zeta)},
\end{align}
where the constants in $\lesssim$ are independent of $n$.

Next, we bound $\EE\|V_n - V_n^{\circ}\|$. Similar arguments as for (\ref{eq_longrun_est_multi_to_1d}) show
\begin{align}
    \label{eq_longrun_est_part2_dto1}
    \EE\|V_n - V_n^{\circ}\| \le \sqrt{d^2\max_{1\le i,j\le d}\EE(v_{ij,n} - v_{ij,n}^{\circ})^2}.
\end{align}
Thus, we only need to show the bound for the one-dimensional case. Now, consider $V_n$, $V_n^{\circ}$, $\mathcal{V}_n$, $H_n$ and $\bar\bbeta_n^{\sgd}$ as scalars. Note that $\EE\|\cdot\|=(\EE[\cdot]^2)^{1/2}$ for $d=1$. By the decomposition in (\ref{eq_longrun_online}) and applying Jensen's inequality, we have
\begin{align}
    \label{eq_longrun_est_3parts}
    (\EE[V_n - V_n^{\circ}]^2)^{1/2} & \le (\EE[\mathcal{V}_n - V_n^{\circ}]^2)^{1/2} + 2(\EE[H_n\bar\bbeta_n^{\sgd}]^2)^{1/2} + K_n(\EE[\bar\bbeta_n^{\sgd}]^4)^{1/2}.
\end{align}
We shall bound the three term respectively. First, recall the contraction constant $\breve r_{\alpha,q}$ defined in (\ref{eq_sgd_r}). By the GMC in Theorem~\ref{thm_gmc_sgd} and Hölder's inequality, it follows that
\begin{align}
    (\EE[\mathcal{V}_n - V_n^{\circ}]^2)^{1/2} & \le \sum_{m=1}^{\psi(n)-1}\big(\EE[\mathcal{S}_m - \mathcal{S}_m^{\circ}]^4\big)^{1/4}\big(\EE[\mathcal{S}_m + \mathcal{S}_m^{\circ}]^4\big)^{1/4}  + \big(\EE[\mathcal{R}_n - \mathcal{R}_n^{\circ}]^4\big)^{1/4}\big(\EE[\mathcal{R}_n + \mathcal{R}_n^{\circ}]^4\big)^{1/4} \nonumber \\
    & \lesssim \sum_{m=1}^{\psi(n)-1}(\breve r_{\alpha,q})^{\eta_m}(\eta_{m+1}-\eta_m)^{1/2} + (\breve r_{\alpha,q})^{\eta_{\psi(n)}}(n-\eta_{\psi(n)}+1)^{1/2} \nonumber \\
    & = O(1),
\end{align}
where the constants in $\lesssim$ and $O(\cdot)$ are independent of $n$. Similarly, since the dimension $d$ is fixed and $\EE[\breve\bbeta_k^{\circ}]=\breve\bbeta$ by (\ref{eq_l2_stationary_mean0_sgd}) with the $\ell^2$-minimizer $\breve\bbeta$ defined in (\ref{eq_goal_sgd}), we can show that
\begin{align}
    (\EE[\bar\bbeta_n^{\sgd}]^4)^{1/2}
    \lesssim \Big(\EE\Big[\frac{1}{n}\sum_{k=1}^n\breve\bbeta_k^{\circ}\Big]^4\Big)^{1/2} + \Big(\EE\Big[\frac{1}{n}\sum_{k=1}^n(\breve\bbeta_k - \breve\bbeta_k^{\circ})\Big]^4\Big)^{1/2} \asymp \frac 1 n,
\end{align}
and $(\EE[H_n]^4)^{1/2} \lesssim n(n-\eta_{\psi(n)})^2$. Combining these results with the fact that $K_n\asymp n(n-\eta_{\psi(n)})$ yields
\begin{align}
    K_n(\EE[\bar\bbeta_n^{\sgd}]^4)^{1/2} \asymp n-\eta_{\psi(n)} \quad \text{and} \quad (\EE[H_n\bar\bbeta_n^{\sgd}]^2)^{1/2} \lesssim n-\eta_{\psi(n)}.
\end{align}
Inserting all these expressions back into (\ref{eq_longrun_est_3parts}), we obtain
\begin{align}
    (\EE[V_n-V_n^{\circ}]^2)^{1/2} \lesssim n-\eta_{\psi(n)} \asymp n^{1-(1/\zeta)}.
\end{align}
Consequently, by (\ref{eq_longrun_est_part2_dto1}), we have $\EE\|V_n-V_n^{\circ}\| \lesssim n^{1-(1/\zeta)}$ for the multi-dimensional case. Since $\zeta>1$, compared to the rate of $\EE\|V_n^{\circ}-n\breve\Sigma\|\lesssim n^{(2/\zeta)\vee(2-1/\zeta)}$ in (\ref{eq_longrun_est_multi_to_1d}), the latter dominates. This, along with (\ref{eq_longrun_est_Sigma_to_V}) gives the desired result.
\end{proof}

\end{appendix}

\end{document}